\definecolor{light-gray}{gray}{0.85}
\newtheorem{theorem}{Theorem}
\newtheorem{corollary}[theorem]{Corollary}
\newtheorem{lemma}[theorem]{Lemma}
\newtheorem{remark}[theorem]{Remark}
\newtheorem{assumption}[theorem]{Assumption}
\newcommand{\norm}[1]{\left\lVert#1\right\rVert}
\newcommand\inner[2]{\left\langle #1, #2 \right\rangle}
\newcommand{\EE}{\mathbb{E}}
\newcommand{\argmax}{\mathop{\rm argmax}}
\newcommand{\ceil}[1]{\lceil {#1} \rceil}
\newcommand{\ba}{\begin{array}}
\newcommand{\ea}{\end{array}}
\newcommand*{\rom}[1]{\expandafter\@slowromancap\romannumeral #1@}
\title{Provably Efficient Primal-Dual Method for CMDPs with Non-stationary Objectives and Constraints}
\author {
    Yuhao Ding,
    Javad Lavaei 
}
\begin{document}

\maketitle

\begin{abstract}
We consider primal-dual-based reinforcement learning (RL) in episodic constrained Markov decision processes (CMDPs) with non-stationary objectives and constraints, which plays a central role in ensuring the safety of RL in time-varying environments. In this problem, the reward/utility functions and the state transition functions are both allowed to vary arbitrarily over time as long as their cumulative variations do not exceed certain known variation budgets. Designing safe RL algorithms in time-varying environments is particularly challenging because of the need to integrate the constraint violation reduction, safe exploration, and adaptation to the non-stationarity. To this end, we identify two alternative conditions on the time-varying constraints under which we can guarantee the safety in the long run. We also propose the \underline{P}eriodically \underline{R}estarted \underline{O}ptimistic \underline{P}rimal-\underline{D}ual \underline{P}roximal \underline{P}olicy \underline{O}ptimization (PROPD-PPO) algorithm that can coordinate with both two conditions. Furthermore, a dynamic regret bound and a constraint violation bound are established for the proposed algorithm in both the linear kernel CMDP function approximation setting and the tabular CMDP setting under two alternative  conditions. This paper provides the first provably efficient algorithm for non-stationary CMDPs with safe exploration.
\end{abstract}

\section{Introduction}\label{sec:Intro}


Safe reinforcement learning (RL) studies how an agent
learns to maximize its expected total reward by interacting
with an unknown environment over time while dealing with restrictions/constraints arising from real-world problems \cite{amodei2016concrete,dulac2019challenges,garcia2015comprehensive}. A standard approach for modeling the safe RL is based on Constrained Markov Decision Processes (CMDPs) \cite{altman1999constrained}, where one seeks
to maximize the expected total reward under a safety-related constraint
on the expected total utility.

While classical safe RL and CMDPs assume that an agent interacts with a time-invariant (stationary) environment, both the reward/utility functions and transition kernels can be time-varying for many real-world safety-critical applications. For example, in autonomous driving \cite{sallab2017deep}, it is essential to guarantee the safety, such as collision-avoidance and traffic rules, while handling time-varying conditions related to weather and traffic. Similarly, in most safety-critical human-computer interaction applications, e.g., automated medical care, human behavior changes over time. In such scenarios, if the automated system is not adapted to take such changes into account, then the system could quickly violate the safety constraint and incur a severe loss \cite{chandak2020optimizing,moore2014reinforcement}. 
Despite the importance of non-stationary safe RL problems, the literature lacks provably efficient algorithms and theoretical results. 

In this work, we formulate a general non-stationary safe exploration  problem as an episodic CMDP in which the transition model is unknown and non-stationary, the reward/utility feedback after each episode is bandit and non-stationary, and the variation budget is known. The goal is to design an algorithm that can perform a non-stationary safe exploration, that is, to adaptively explore the unknown and time-varying environment and learn to satisfy time-varying constraints in the long run. 

The safe exploration in non-stationary CMDPs is more challenging since the utilities and
dynamics are time-varying and unknown a priori. Thus, it is difficult/impossible to guarantee a small/zero constraint violation without knowing how CMDPs will change. Previous constraint violation analyses \cite{ding2021provably, liu2021learning} strongly rely on the conditions of having the same transition dynamics and rewards over all episodes, which are not applicable to non-stationary CMDPs.
In view of the aforementioned challenges, we propose a new primal-dual method and develop novel techniques to decouple the optimality gap and the constraint violation. Our main contributions are summarized below:

\begin{itemize}[leftmargin=*]
\item We identify two alternative conditions on the time-varying constraints under which we can guarantee the safety in the long run. The first assumption requires the knowledge of the local variation budgets of the constraint for each epoch, while the second assumption needs the strict feasibility of the constraint at each episode and the knowledge of a uniform strict feasibility threshold.

\item We develop a new periodically restarted policy-based primal-dual method, which can coordinate with both two conditions, for  general non-stationary CMDP problems.

\item We study the proposed algorithm under two alternative  conditions that require  different amounts of  knowledge on the constraints. Our results are summarized in Table \ref{table: comparison} and our method is the first provably efficient  algorithm for non-stationary CMDPs with safe exploration.
\end{itemize}

\begin{table*}[ht]
\centering
{\footnotesize
\renewcommand{\arraystretch}{1.5}
 \begin{tabular}{c| c|c |c } 
 \hline
 \textbf{Setting} &  \textbf{Assumption}  & \textbf{Dynamic regret} & \textbf{Constraint violation}  \\\hline\hline
   Tabular  & $B_{g,\mathcal{E}}$, $B_{\mathbb{P},\mathcal{E}}$   & $\widetilde{\mathcal{O}}\left( |\mathcal{S}|^{\frac{2}{3}} |\mathcal{A}|^{\frac{1}{3}} H^{\frac{5}{3}} M^{\frac{1+\rho}{2}} (B_\Delta +B_\ast)^{\frac{1}{3}}\right)$  &   $\widetilde{\mathcal{O}}\left( |\mathcal{S}|^{\frac{2}{3}} |\mathcal{A}|^{\frac{1}{3}} H^{\frac{5}{3}} M^{\frac{2-\rho}{2}}  (B_\Delta +B_\ast)^{\frac{1}{3}}\right)$  \\\hline
    Tabular  & $\gamma$ &   $\widetilde{\mathcal{O}}\left( \gamma^{-1} |\mathcal{S}|^{\frac{2}{3}} |\mathcal{A}|^{\frac{1}{3}}   H^{\frac{5}{2}} M^{\frac{2}{3}} (B_\Delta+B_\star)^{\frac{1}{3}}  \right)$   &  $\widetilde{\mathcal{O}}\left( \gamma^{-1} |\mathcal{S}|^{\frac{2}{3}} |\mathcal{A}|^{\frac{1}{3}}   H^{\frac{5}{2}} M^{\frac{2}{3}} (B_\Delta+B_\star)^{\frac{1}{3}}  \right)$   \\ \hline
    \hline
 Linear kernel   &  $B_{g,\mathcal{E}}$, $B_{\mathbb{P},\mathcal{E}}$  &   $   \widetilde{\mathcal{O}}\left( d^{\frac{9}{8}} H^{\frac{5}{2}} M^{\frac{3}{4}} (\sqrt{d}B_\Delta +B_\ast)^{\frac{1}{3}}\right)$  &  $\widetilde{\mathcal{O}}\left( d^{\frac{9}{8}} H^{\frac{5}{2}} M^{\frac{3}{4}} (\sqrt{d}B_\Delta +B_\ast)^{\frac{1}{3}}\right)$   \\\hline
     Linear kernel  & $\gamma$ &  $\widetilde{\mathcal{O}}\left( \gamma^{-1}d^{\frac{9}{8}} H^{\frac{5}{2}} M^{\frac{3}{4}} (\sqrt{d}B_\Delta +B_\ast)^{\frac{1}{3}}\right)$   &   $\widetilde{\mathcal{O}}\left( \gamma^{-1}d^{\frac{9}{8}} H^{\frac{5}{2}} M^{\frac{3}{4}} (\sqrt{d}B_\Delta +B_\ast)^{\frac{1}{3}}\right)$    \\\hline 
 \end{tabular}
 \caption{We summarize the dynamic regrets and constraint violations obtained in this paper for tabular and linear kernel CMDPs under different assumptions. Here, $B_{g,\mathcal{E}}$ and $B_{\mathbb{P},\mathcal{E}}$ are the local variation budgets for the constraints and are defined in Assumption \ref{ass: local budget}, $\gamma$ is the strict feasibility threshold of the constraints and is defined in Assumption \ref{ass: Feasibility}, $H$ is the horizon of each episode, $M$ is the total number of episodes, $d$ is the dimension of the feature mapping, $|\mathcal{S}|$ and $|\mathcal{A}|$ are the cardinalities of the state and action spaces,
 and $B_\Delta, B_*$ are the variation budgets defined in \eqref{eq: variation budget diamond+P} and \eqref{eq: variation budget star}. There is a trade-off controlled by $\rho\in[\frac{1}{3}, \frac{1}{2}]$ between the dynamic regret and constraint violation for the tabular CMDP under Assumption \ref{ass: local budget}. }
 \label{table: comparison}
 }
\end{table*}

\subsection{Related work}\label{sec:related}
\textbf{Non-stationary RL.}
Non-stationary RL has been mostly studied in the unconstrained setting \cite{jaksch2010near, auer2019adaptively, ortner2020variational, domingues2021kernel,mao2020near,zhou2020nonstationary,touati2020efficient,fei2020dynamic,zhong2021optimistic,cheung2020reinforcement,wei2021non}.
Our work is related to  policy-based methods for non-stationary RL since the optimal solution of CMDP is usually a stochastic policy \cite{altman1999constrained} and thus a policy-based method is preferred. When the variation budget is known a prior,
\cite{fei2020dynamic} propose the first policy-based method for non-stationary RL, but they assume stationary transitions and adversarial full-information rewards in the tabular setting. \cite{zhong2021optimistic} extends the above results to a more  general setting where both the transitions and rewards can vary over episodes.
To eliminate the assumption of having prior knowledge on variation budgets,
\cite{wei2021non} recently outline that an adaptive restart approach can be used to convert any upper-confidence-bound-type stationary RL algorithm to a dynamic-regret-minimizing algorithm. 
Beyond the non-stationary unconstrained RL, \cite{qiu2020upper} consider the online CMDPs where the reward is adversarial but the transition model is fixed and the constraints are stochastic over episodes. In summary, the above papers only consider the non-stationarity in the objective and may not work for the more general safe RL problems where there is also time-varying constraints.


\textbf{CMDP.} The study of RL algorithms for CMDPs has received
considerable attention due to the safety requirement \cite{altman1999constrained, paternain2019safe,yu2019convergent,dulac2019challenges,garcia2015comprehensive}. 
Our work is closely related to Lagrangian-based CMDP algorithms with optimistic
policy evaluations \cite{efroni2020exploration,singh2020learning,ding2021provably,liu2021learning,qiu2020upper}.
In particular, \cite{efroni2020exploration,singh2020learning}
leverage upper confidence bound (UCB) bonus on fixed reward/utility and transition probability to propose sample efficient algorithms for tabular CMDPs. \cite{ding2021provably} generalize the above results to the linear kernel CMDPs. Under some mild conditions and additional computation cost, \cite{liu2021learning} propose two algorithms to learn policies with a zero or bounded constraint violation for CMDPs. Beyond the stationary CMDP, \cite{qiu2020upper}
consider the online CMDPs where only the rewards in objective can vary over episodes.
In contrast, our work focuses on a more general and realistic safe RL setting where the dynamics and rewards/utilities can all change over episodes, and thus we significantly extend the existing results.


Due to space restrictions, we introduce the notations in Section \nameref{sec:notation} of the appendix.

\section{Preliminaries}\label{sec:prelim}
\textbf{Model.}
In this paper, we study safe RL in non-stationary environments via episodic CMDPs with adversarial bandit-information reward/utility feedback and unknown adversarial transition kernels. At each episode $m$, a  CMDP is defined by the state space $\mathcal{S}$, the action space $\mathcal{A}$, the fixed length of each episode $H$, a collection of transition probability measure $\{\mathbb{P}_h^m\}_{h=1}^H$, a collection of reward functions $\{r_h^m\}_{h=1}^H$, a collection of utility functions $\{g_h^m\}_{h=1}^H$ and the  constraint offset $b_m$. 
We assume that $\mathcal{S}$ is a measurable space with a possibly infinite number of elements, and that $\mathcal{A}$ is a finite set.  In addition, we assume $r_h^m: \mathcal{S}\times \mathcal{A} \rightarrow [0,1]$ and $g_h^m: \mathcal{S}\times \mathcal{A} \rightarrow [0,1]$ are deterministic reward and utility functions.
Our analysis readily generalizes to the setting where the reward/utility functions are random.  In this paper, we focus on a bandit setting where the agent only observes the values of reward and utility functions, $r_h^m(x_h^m, a_h^m)$ and $g_h^m(x_h^m, a_h^m)$ at the visited state-action pair $(x_h^m, a_h^m)$.  To avoid triviality, we take $b_m\in(0,H]$ and assume that it is known to the agent.


Let the policy space $\Delta(\mathcal{A}|\mathcal{S}, H)$ be $\{ \{\pi_h(\cdot|\cdot)\}_{h=1}^H: \pi_h(\cdot|s) \in \Delta(\mathcal{A}), \forall x\in \mathcal{S}, h\in[H] \}$, where $\Delta(\mathcal{A})$ denotes a probability simplex over the action space. Let $\pi^m \in \Delta(\mathcal{A}|\mathcal{S}, H)$ be a policy taken by the agent at episode $m$, where  $\pi^m_h(\cdot|x_h^m): \mathcal{S} \rightarrow \mathcal{A}$ is the action that the agent takes at state $x_h^m$. For simplicity, we assume the initial state $x_1^m$ to be fixed as $x_1$ in different episodes. 
The episode terminates at state $x_H^m$ in which no control action is needed and both reward and utility functions are equal to zero.

Given a policy $\pi\in \Delta(\mathcal{A}|\mathcal{S},H)$ and the episode $m$, the value function $V_{r,h}^{\pi,m}$ associated with the reward function $r$ at step $h$ in episode $m$ is the expected value of the total reward, 
$V_{r,h}^{\pi,m}(x)=\EE_{\pi, \mathbb{P}^m} \left[ \sum_{i=h}^H r_i^m(x_i,a_i) |x_h=x\right]$, 
for all $ x\in \mathcal{S}$ and $h\in [H]$,
where the expectation $\EE_{\pi, \mathbb{P}^m}$ is taken over the random state-action sequence $\{(x_i^m, a^m_i)\}_{i=h}^H$, the action $a^m_h$ follows the policy $\pi^m_h(\cdot|x_h^m)$, and the next state $x_{h+1}$ follows the transition dynamics $\mathbb{P}_h^m(\cdot|x_h^m, a_h^m)$. 

The action-value function is defined as
$Q_{r,h}^{\pi,m}(x,a)=\EE_{\pi, \mathbb{P}^m}\left[\sum_{i=h}^H r_i^m(x_i^m,a_i^m) |x_h^m=x, a_h^m=a \right]$,
for all $ x\in \mathcal{S}, a\in \mathcal{A}$ and $h\in [H]$. Similarly, we define the value function $ V_{g,h}^{\pi,m}: \mathcal{S} \rightarrow \mathbb{R}$ and the action-value function $ Q_{g,h}^{\pi,m}: \mathcal{S} \times \mathcal{A} \rightarrow \mathbb{R}$  associated with the utility function $g$. For brevity, we use the symbol $\diamond$ to denote $r \text{ or } g$. 
and take the shorthand $\mathbb{P}_h^m V_{\diamond,h}^{\pi,m}(x,a)\coloneqq \EE_{x^\prime \sim  \mathbb{P}_h^m (\cdot|x,a)} \left[V_{\diamond,h+1}^{\pi,m}(x^\prime) \right]$. The Bellman equation associated with a policy $\pi$ is given by
\begin{subequations}\label{eq: belmman equation}
\begin{eqnarray}
& Q_{\diamond,h}^{\pi,m}(x,a)=(\diamond_h ^m+ \mathbb{P}_h^m V_{\diamond,h+1}^{\pi,m}) (x,a), \\
& V_{\diamond,h}^{\pi,m}(x)= \inner{Q_{\diamond,h}^{\pi,m} (x, \cdot)}{ \pi_h(\cdot|x)}_\mathcal{A},
\end{eqnarray}
\end{subequations}
for all $(x,a) \in \mathcal{S} \times \mathcal{A}$, where $\inner{\cdot}{\cdot}_\mathcal{A}$ denotes the inner product over $\mathcal{A}$ and we will omit the subscript  $\mathcal{A}$ in the sequel when it is clear from the context.

\textbf{Constrained MDP.}
In constrained MDPs, the agent aims to approximate the optimal non-stationary
policy by interacting with the environment. In each episode $m$, the agent aims to maximize the expected total reward while satisfying the constraints on the expected total utility
\begin{align} \label{eq: CMDP at episode m}
\max_{\pi \in \Delta(\mathcal{A}|\mathcal{S},H)} V_{r,1}^{\pi,m} \text{ subject to }  V_{g,1}^{\pi,m} \geq b_m
\end{align}
for all $m=1,2,\ldots$, where the reward/utility functions and the transition kernels are potentially different across the episodes.
The associated Lagrangian of problem \eqref{eq: CMDP at episode m} is given by 
\begin{align} \label{eq: langrangian equation}
    \mathcal{L}^m(\pi, \mu)\coloneqq  V_{r,1}^{\pi,m}+ \mu \left(V_{g,1}^{\pi,m}-b_m \right)
\end{align}
where the policy $\pi$ is the primal variable and $\mu \geq 0$ is the dual variable. We can reformulate the constrained optimization problem \eqref{eq: CMDP at episode m} as the saddle-point problem
$\max_{\pi \in \Delta(\mathcal{A}|\mathcal{S},H)} \min_{\mu \geq 0} \  \mathcal{L}^m(\pi, \mu)$.
Let $\mathcal{D}^m(Y):=$ maximize $_{\pi} \mathcal{L}^m(\pi, \mu)$ be the dual function, $\mu^{\star,m}:=\operatorname{argmin}_{\mu \geq 0} \mathcal{D}^m(\mu)$ be an optimal dual variable and $\pi^{\star, m}$ be a globally optimal solution of \eqref{eq: CMDP at episode m} at episode $m$. 

Unlike the unconstrained MDP, the optimal solution of CMDP is usually a stochastic policy and the best deterministic policy can lose as much as the difference between the respective values of the best and the worst policies \cite{altman1999constrained}. 
As a consequence, RL methods that implicitly rely on the existence of a deterministic optimal policy (e.g., Q learning) may not be suitable for this type of problem. This further inspires the study of randomized policies and take on a policy gradient approach for non-stationary CMDP.



\textbf{Performance metrics.}
%
Suppose that the agent executes policy $\pi^m$ in episode $m$. 
We now define the dynamic regret 
and the constraint violation in the long run as:
\begin{align}\label{eq: d regret}
& \text{DR}(M) \coloneqq \sum_{m=1}^M \left( V_{r,1}^{\pi^{\star,m},m}- V_{r,1}^{\pi^m,m} \right), \\ 
& \text{CV}(M) \coloneqq \left[\sum_{m=1}^M \left(b_m- V_{g,1}^{\pi^m,m} \right)\right]_+.
\end{align}
There are two main reasons for considering the constraint violation in the long run. Firstly, in many applications such as supply chain and energy systems, the requirements of balancing the time-varying and unknown demands with the supply are formulated as some time-varying constraints. As long as the supply and the demand can be balanced in the long run, the policy is considered safe.
Secondly, since the utility function $g^m_h$ is unknown \textit{a priori} and time-varying, the constraint $V_{g,1}^{\pi,m} \geq b_m$ may not be satisfied in every episode $m$. Rather, the agent strives to satisfy the constraints in the long run. In other words, the agent aims to ensure the long-term constraint $\sum_{m=1}^M ( V_{g,1}^{\pi,m}- b_m)\geq 0$ over some given period of episodes $M$.

\textbf{Linear function approximation}
We focus on a class of CMDPs, where transition kernels and reward/utility functions are linear in feature maps.
\begin{assumption}[Linear Kernel CMDP] \label{ass: linear fun approx}
For every $m \in [M]$, the CMDP$(\mathcal{S}, \mathcal{A}, H, \mathbb{P}^m, r^m, g^m)$ satisfies the following conditions:
(1) there exist  a  kernel feature map $\psi: \mathcal{S} \times \mathcal{A} \times \mathcal{S} \rightarrow \mathbb{R}^{d_1}$ and a vector $\theta_h^m \in \mathbb{R}^{d_1}$ with $\norm{\theta_h^m}_2 \leq \sqrt{d_1}$ such that
\begin{align*}
    \mathbb{P}_h^m(x^\prime \mid x,a) = \inner{\psi(x,a,x^\prime)}{\theta_h^m}
\end{align*}
for all $(x,a,x^\prime) \in \mathcal{S} \times \mathcal{A} \times \mathcal{S}$ and $h\in[H]$;
(2) there exist a feature map $\varphi: \mathcal{S} \times \mathcal{A} \rightarrow \mathbb{R}^{d_2}$ and vectors
$\theta_{r,h}^m,\theta_{g,h}^m \in \mathbb{R}^{d_2}$  such that
\begin{align*}
&r_{h}^m(x,a) = \inner{\varphi(x,a)}{\theta_{r,h}^m}\text{ and } g_{h}^m(x,a) = \inner{\varphi(x,a)}{\theta_{g,h}^m}
\end{align*}
for all $(x,a) \in \mathcal{S} \times \mathcal{A}$ and $h\in[H]$,
where $\max\left(\norm{\theta_{r,h}^m}_2,\norm{\theta_{g,h}^m}_2 \right)\leq \sqrt{d_2}$; (3) for every function $V: \mathcal{S}\rightarrow[0,H]$, $\norm{\int_\mathcal{S} \psi(x,a,x^\prime) V(x^\prime) dx^\prime} \leq \sqrt{d_1}H$ for all $(x,a) \in \mathcal{S} \times \mathcal{A}$ and $\max(d_1,d_2)\leq d$.
\end{assumption}
This assumption adapts the definition of linear kernel MDP \cite{ayoub2020model,cai2020provably,zhou2021provably} to CMDP and has also been used in \cite{ding2021provably} for stationary  constrained MDP problems. We refer the reader to Appendix \nameref{appe: linear CMDP} for more discussions on this assumption.

\textbf{Variation budget.}  
Note that the transition function $\mathbb{P}_h^m$ and the reward/utility functions $r_h^m, g_h^m$ are determined by the unknown measures $\{\theta_h^m\}_{h\in[H], m\in[M]}$ and the latent vectors $\{\theta_{\diamond, h}^m\}_{h\in[H], m\in[M]}$ for $\diamond=r \text{ or } g$ which can vary across the indexes $(m,h)\in [M] \times [H]$ in general.  We measure the non-stationarity of the CMDP in terms of its variation in $\theta_h^m, \theta_{r,h}^m$ and $\theta_{g,h}^m$:
\begin{subequations}  \label{eq: variation budget diamond+P}
\begin{eqnarray}
&  B_{\mathbb{P}} \coloneqq \sum_{m=2}^M\sum_{h=1}^H \norm{\theta_h^m-\theta_h^{m-1}}_2, \\
& B_{\diamond} \coloneqq \sum_{m=2}^M\sum_{h=1}^H \norm{\theta_{\diamond,h}^m-\theta_{\diamond,h}^{m-1}}_2, \text{ for } \diamond=r \text{ or } g,  
\end{eqnarray}
\end{subequations}
and denote $B_\Delta=B_{\mathbb{P}}+B_{r}+B_g$.
Note that our definition of variation only imposes restrictions on the summation of non-stationarity across two different episodes, and it does not put any restriction on the difference between two consecutive steps in the same episode.
In addition to the variations defined above, we  introduce the total variation in the optimal policies of adjacent episodes:
\begin{align}
    &B_{\star}\coloneqq \sum_{m=2}^M\sum_{h=1}^H \max_{x\in\mathcal{S}} \norm{\pi_h^{\star,m}(\cdot \mid x)-\pi_h^{\star,m-1}(\cdot \mid x)}_1  \label{eq: variation budget star}.
\end{align}
The notion of $B_{\star}$ is also used for online convex optimization with a dynamic regret criterion \cite{besbes2015non, hall2013dynamical, hall2015online,cao2018online} and for policy-based methods in non-stationary unconstrained MDPs \cite{fei2020dynamic,zhong2021optimistic}. It is worth noting that the variations $(B_{\mathbb{P}}, B_{\diamond})$ and $B_{\star}$ do not imply each other.

A special but important example of the non-stationarity is the system with piece-wise constant dynamics and rewards/utilities where the number of switches is $S$. In this case, all variation budgets $(B_{\mathbb{P}}, B_{\diamond})$ and $B_{\star}$ can be upper bounded by $\mathcal{O}(SH)$. 
As one of the first works to investigate the non-stationary CMDP, we assume that we have access to quantities $B_\Delta$ and $B_{\star}$ or some upper bounds on them via an oracle. 

\section{Assumptions on the time-varying constraints}\label{sec: assumption}
In this paper, we consider two scenarios for the non-stationary CMDPs, each requiring some specific knowledge to enable safe exploration under the non-stationarity.

The first scenario assumes the knowledge of local variation budgets of constraints. We first define local variation budgets of constraints. To adapt the non-stationarity, the restart estimation of the value function is used (see Section \nameref{sec: Optimistic policy evaluation with restart strategy}), which
breaks the $M$ episodes into $\ceil{\frac{M}{L}}$ epochs.
For every $\mathcal{E}\in \left[ \ceil{\frac{M}{L}}\right]$, define $B_{g,\mathcal{E}}$ and $B_{\mathbb{P},\mathcal{E}}$ to be the local variation budgets of the utility function and transitions within epoch $\mathcal{E}$. By definition, we have $\sum_{\mathcal{E}=1}^{\ceil{\frac{M}{L}}} B_{g,\mathcal{E}} \leq B_g$ and $\sum_{\mathcal{E}=1}^{\ceil{\frac{M}{L}}} B_{\mathbb{P},\mathcal{E}} \leq B_\mathbb{P}$.

\begin{assumption}[Local variation budgets of constraints] \label{ass: local budget}
We have access to the local variation budget $B_{g,\mathcal{E}}$ and $B_{\mathbb{P},\mathcal{E}}$ for every $\mathcal{E}\in \left[ \ceil{\frac{M}{L}}\right]$,  and also the constrained optimization problems given in \eqref{eq: CMDP at episode m} are uniformly feasible.
\end{assumption}

The second scenario extends the strict feasibility (also known as Slater condition) for problem \eqref{eq: CMDP at episode m} to non-stationary constrained optimization problems.
\begin{assumption}[Uniformly strict feasibility]\label{ass: Feasibility}
We have access to a sequence of constraint thresholds $\{b_m\}_{m=1}^M$ and a constant $\gamma$ such that the constrained optimization problems in \eqref{eq: CMDP at episode m} are $\gamma-$uniformly strictly feasible, i.e., there exist $\gamma>0$ and $\bar{\pi}^m \in \Delta(\mathcal{A} \mid \mathcal{S}, H)$ such that $V_{g, 1}^{\bar{\pi}^m,m}\left(x_{1}\right) \geq b_m+\gamma$ for all $m=1,\ldots,M$.
\end{assumption}
Under this assumption, one can establish the strong duality and the boundedness of the optimal dual variable. 
\begin{lemma}[Lemma 1 in \cite{ding2021provably}] \label{lemma: lemma 1 in dongsheng}
Under Assumption \ref{ass: Feasibility}, it holds that $V_{r, 1}^{\pi^{\star,m},m}\left(x_{1}\right)=\mathcal{D}^m\left(\mu^{\star,m}\right)$ and $0 \leq \mu^{\star,m} \leq H/\gamma $ for all $m=1,\ldots,M$.
\end{lemma}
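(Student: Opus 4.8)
The plan is to reduce the per-episode problem \eqref{eq: CMDP at episode m} to a linear program over occupancy measures, where strong duality is standard, and then read off the bound on $\mu^{\star,m}$ directly from the Slater point supplied by Assumption \ref{ass: Feasibility}. The main subtlety is that $V_{r,1}^{\pi,m}$ and $V_{g,1}^{\pi,m}$ are \emph{not} concave functions of the policy $\pi$, so Lagrangian duality cannot be invoked in the policy variable. To circumvent this, I would first pass to the state-action occupancy measures $q_h^{\pi,m}(x,a)$ induced by $\pi$ under $\mathbb{P}^m$. The key structural facts are that (i) the set of valid occupancy measures is a convex set carved out by the Bellman flow (normalization) constraints, (ii) every such occupancy measure is realized by some policy in $\Delta(\mathcal{A}\mid\mathcal{S},H)$, and (iii) both value functions are \emph{linear} in $q$, namely $V_{r,1}^{\pi,m}(x_1)=\sum_{h=1}^H\sum_{x,a} q_h^{\pi,m}(x,a)\,r_h^m(x,a)$ (with an integral over $\mathcal{S}$ in the measurable case), and analogously for $g$. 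Consequently \eqref{eq: CMDP at episode m} is equivalent to $\max_{q}\langle q, r^m\rangle$ subject to $\langle q, g^m\rangle \ge b_m$ and $q$ in the flow set, a convex (indeed linear) program.

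Given this reformulation, strong duality follows from convex-programming duality once Slater's condition is checked. Assumption \ref{ass: Feasibility} supplies exactly a strictly feasible point: the occupancy measure of $\bar\pi^m$ satisfies $\langle q^{\bar\pi^m,m}, g^m\rangle = V_{g,1}^{\bar\pi^m,m}(x_1)\ge b_m+\gamma > b_m$. Hence the primal optimum is attained and equals the dual optimum, giving the first claim $V_{r,1}^{\pi^{\star,m},m}(x_1)=\mathcal{D}^m(\mu^{\star,m})$, and the set of dual optimizers is nonempty, which justifies the definition of $\mu^{\star,m}$. The nonnegativity $\mu^{\star,m}\ge 0$ is immediate from the constraint $\mu\ge 0$ in $\operatorname{argmin}_{\mu\ge0}$.

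For the upper bound on $\mu^{\star,m}$, I would evaluate the dual function at the Slater policy. Since $\mathcal{D}^m(\mu^{\star,m})=\max_{\pi}\mathcal{L}^m(\pi,\mu^{\star,m})\ge \mathcal{L}^m(\bar\pi^m,\mu^{\star,m})$, and using $V_{g,1}^{\bar\pi^m,m}(x_1)-b_m\ge\gamma$ together with $\mu^{\star,m}\ge0$ and $V_{r,1}^{\bar\pi^m,m}(x_1)\ge0$, one obtains
\begin{align*}
V_{r,1}^{\pi^{\star,m},m}(x_1)=\mathcal{D}^m(\mu^{\star,m})\ge V_{r,1}^{\bar\pi^m,m}(x_1)+\gamma\,\mu^{\star,m}\ge \gamma\,\mu^{\star,m}.
\end{align*}
Since the per-step rewards lie in $[0,1]$ over a horizon of length $H$, we have $V_{r,1}^{\pi^{\star,m},m}(x_1)\le H$, and rearranging gives $\mu^{\star,m}\le H/\gamma$.

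I would expect the main obstacle to be the careful justification of the occupancy-measure equivalence and the attainment of the dual optimum, in particular handling the possibly infinite (measurable) state space $\mathcal{S}$, where the feasible region is a convex subset of a measure space rather than a finite-dimensional polytope, so one must appeal to an infinite-dimensional convex-duality theorem whose qualification is precisely the strict-feasibility (Slater) condition. Once strong duality and attainment are established, the quantitative bound on $\mu^{\star,m}$ is the one-line consequence displayed above. Alternatively, one could simply cite the corresponding argument in \cite{ding2021provably}, since the statement is quoted verbatim from there.
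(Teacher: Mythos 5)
Your proposal is correct and follows essentially the same route as the paper, which itself defers to Lemma~1 of \cite{ding2021provably} and, in its appendix, re-derives the dual bound via the boundedness of the sublevel sets of $\mathcal{D}^m$ under the Slater condition (yielding $\mu^{\star,m}\le \frac{1}{\gamma}(V_{r,1}^{\pi^{\star,m},m}(x_1)-V_{r,1}^{\bar\pi^m,m}(x_1))\le H/\gamma$, exactly your evaluation of the Lagrangian at the Slater policy $\bar\pi^m$). Your occupancy-measure argument for strong duality is the standard justification underlying the cited results (the concavity of the perturbed value established in \cite{paternain2019safe}), so the two treatments coincide in substance.
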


\begin{remark}
We require either Assumption \ref{ass: local budget} or Assumption \eqref{ass: Feasibility}, and both of them need not hold simultaneously. 
Assumption \ref{ass: local budget} requires the local variation budgets of constraints, but does not enforce  every instance problem \eqref{eq: CMDP at episode m} to be strictly feasible. It is suitable for the case with a forecasting oracle for the constraints. For example, in supply chain or energy systems, the supply is desired to match the time-varying and unknown demands where a forecasting oracle for the demands is usually available. In addition, it is also suitable for the case with only non-stationary rewards such as collision avoidance in a maze with a moving target.
On the other hand, Assumption \ref{ass: Feasibility} needs the knowledge of strict feasible constraint thresholds, but does not require the local variation budgets of constraints. It is suitable for the case with a relatively large feasibility threshold $\gamma$.
\end{remark}


\section{Safe exploration under the non-stationarity}\label{sec:method}
In Algorithm \ref{alg:algoirthm 1}, we develop a new efficient method named \underline{P}eriodically \underline{R}estarted \underline{O}ptimistic \underline{P}rimal-\underline{D}ual \underline{P}roximal \underline{P}olicy \underline{O}ptimization (PROPD-PPO) algorithm.
In each episode, our algorithm consists of three main stages: periodically restarted policy improvement, dual update, and periodically restarted policy evaluation. We first present the high-level idea behind our method.
\vspace{-0.2cm}
\subsection{High-level idea}
\vspace{-0.1cm}
Safe exploration in non-stationary CMDPs is more challenging in that we need to reduce the constraint violation even when the constraints vary over the episodes. To overcome this issue, we develop our method based on some assumed knowledge on the constraints. Under Assumption \ref{ass: local budget}, since the optimal dual variables may not be well-bounded, we need to add a dual regularization to stabilize the dual updates and fully utilize the convexity of the dual function. In addition, the knowledge of local variation of the constraints is needed to obtain an optimistic estimator of constraint functions, so that a large dual variable cannot amplify the estimation error of the constraint functions.
This is different from the dual update that has been used in Lagrangian-based stationary CMDPs under the strict feasible condition \cite{ding2020natural, ding2021provably, ying2021dual, efroni2020exploration, liu2021learning,qiu2020upper}. On the other hand, under Assumption \ref{ass: Feasibility}, the optimal dual variables can be bounded by Lemma \ref{lemma: lemma 1 in dongsheng}.
Then, the dual regularization and an optimistic estimator for the constraint functions are not necessary. Thus, a standard dual update will be enough.
\vspace{-0.2cm}
\subsection{Periodically restarted policy improvement}
\vspace{-0.1cm}
One way to update the policy $\pi^m$ is to solve the Lagrangian-based policy optimization problem
$\max_{\pi \in \Delta(\mathcal{A}|\mathcal{S},H)} \mathcal{L}_\xi^m(\pi,\mu^{m-1})$,
where $\mathcal{L}_\xi^m(\pi,\mu^{m-1})$ is defined in \eqref{eq: langrangian equation regu} and the dual variable $\mu^{m-1}$ is from episode $m-1$. 
Motivated by the policy improvement step in NPG \cite{kakade2001natural}, TRPO \cite{schulman2015trust}, and PPO \cite{schulman2017proximal}, 
 we perform a simple policy update in the online mirror descent fashion by
\begin{align}\label{eq: PPO step}
\nonumber \argmax_{\pi\in \Delta(\mathcal{A}\mid \mathcal{S}, H)}& \sum_{h=1}^H\inner{ \left(Q_{r, h}^{m-1}+ \mu ^{m-1} Q_{g, h}^{m-1}\right) (x_h,\cdot)}{\pi_h-\pi_h^{m-1}}\\
&-\frac{1}{\alpha} \sum_{h=1}^H D\left(\pi_h(\cdot|x_h) \mid {\pi}_h^{m-1}(\cdot|x_h)\right).
\end{align}

\begin{algorithm}[tb]
   \caption{Periodically Restarted Optimistic Primal-Dual Proximal Policy Optimization}
   \label{alg:algoirthm 1}
\begin{algorithmic}[1]
   \STATE {\bfseries Inputs:} Time horizon $M$, restart period $W, L$, $\{Q_{r,h}^0, Q_{g,h}^0\}_{h=1}^H$ and $V_{g,1}^0$ being zero functions, initial policy $\{\pi_h^0\}_{h\in[H]}$ being uniform distributions on $\mathcal{A}$, initial dual variable $\mu^0=0$, dual regularization parameter $\xi$, learning rates $\alpha, \eta>0$, $\chi $.
   \FOR{$m=1, \ldots, M$}
    \STATE Set the initial state $x_1^m=x_1$,  $\ell_\pi^m=(\ceil{\frac{m}{L}}-1)L+1$, $\ell_Q^m=(\ceil{\frac{m}{W}}-1)W+1$.
    \IF{$m=\ell_\pi^m$}
    \STATE Set $\{Q_{r,h}^{m-1}, Q_{g,h}^{m-1}\}_{h=1}^H$ as zero functions and set $\{\pi_{h}^{m-1}\}_{h=1}^H$ as uniform distributions on $\mathcal{A}$.
    \ENDIF
    \FOR{ $h=1,2,\ldots,H$}
    \STATE Update the policy $\pi_{h }^m (\cdot \mid \cdot)\propto $ \\${\pi}_{h}^{m-1}(\cdot \mid \cdot) \exp \left(\alpha \left(Q_{r,h}^{m-1}+\mu^{m-1}Q_{g,h}^{m-1} \right) (\cdot, \cdot)\right)$.
    \STATE Take an action $a_h^m \sim\pi_h^m (\cdot\mid x_h^m)$ and receive reward/utility $r_h(x_h^m,a_h^m),g_h(x_h^m,a_h^m) $.
    \STATE Observe the next state $x_{h+1}^m$.
    \ENDFOR
    \STATE Update the dual variable by $\mu^m=\text{Proj}_{[0,\chi ]} \left( \mu^{m-1}+\eta \left(b_m -V_{g,1}^{m-1}(x_1) -\xi \mu^{m-1} \right)  \right)$.
    \STATE Estimate $\{Q_{r,h}^m, Q_{g,h}^m\}_{h=1}^H$ and  $V_{g,1}^{m}$ via LSTD$\left(\{x_h^\tau, a_h^\tau, r_h^\tau(x_h^\tau,a_h^\tau), g_h^\tau(x_h^\tau, a_h^\tau)\}_{h=1,\tau=\ell_Q^m}^{H,m}\right)$.
   \ENDFOR
\end{algorithmic}
\end{algorithm}

Since the above update is separable over $H$ steps, we can update the policy $\pi^m$ as line 8 in Algorithm \ref{alg:algoirthm 1}, leading to a closed-form solution for each step $h\in[H]$.  Furthermore, in order to guarantee the policy to be exploratory enough in new environments, our policy improvement step also features a periodic restart mechanism, which resets its policy to a uniform distribution over the action space $\mathcal{A}$ every $L$ episodes. 
\begin{remark}
Although policy improvement step \eqref{eq: PPO step} has been used in stationary CMDPs \cite{ding2021provably}, our method differs in the sense that we remove the requirement to mix the policy with a uniform policy at every iteration. This is due to a technical improvement in the analysis by replacing the ``pushback property of KL-divergence lemma'' (Lemma 14 in \cite{ding2021provably}) with the ``one-step descent lemma'' (Lemma \ref{lemma: One-step descent lemma}) for the KL-regularized optimization.
\end{remark}
\vspace{-0.2cm}
\subsection{Dual update}
\vspace{-0.1cm}
We first define the modified Lagrangian of \eqref{eq: langrangian equation} to be
\begin{align}\label{eq: langrangian equation regu}
\mathcal{L}_\xi^m(\pi, \mu)\coloneqq  V_{r,1}^{\pi,m}+ \mu \left(V_{g,1}^{\pi,m}-b_m \right) +\frac{\xi}{2}\norm{\mu}_2^2
\end{align}
where $\xi\geq 0$ is the dual regularization parameter to be determined later. 
Since the value function $V_{g,1}^{\pi,m}$ is unknown, in order to infer the constraint violation for the dual update, we estimate $V_{g,1}^{\pi^m,m} (x_1)$ via an optimistic policy evaluation. We update the Lagrange multiplier $\mu$ by moving $\mu^m$ to the direction of minimizing the estimated Lagrangian $\mathcal{L}(\pi, \mu)$:
\begin{align}\label{eq: estimtaed langrangian equation regu}
\widetilde{\mathcal{L}}_\xi^m(\pi, \mu)\coloneqq  V_{r,1}^{m}+ \mu \left(V_{g,1}^{m}-b_m \right) +\frac{\xi}{2}\norm{\mu}_2^2.
\end{align}
over $\mu\geq 0$ in line 14 of Algorithm \ref{alg:algoirthm 1}, where $\eta>0$ is a stepsize and $\text{Prof}_{[0,\chi ]}$ is a projection onto $[0,\chi ]$ with an upper bound $\chi $ on $\mu^m$. 
The choices of the parameters $\chi $ and $\xi$ depend on the assumption:
\begin{align*}
&\xi>0, \ \chi  =  \infty, \  \text{under Assumption \ref{ass: local budget}}, \\ 
&\xi=0, \ \chi  =\frac{2H}{\gamma}, \text{under Assumption \ref{ass: Feasibility}}.
\end{align*}
Under Assumption \ref{ass: local budget}, since the strictly feasibility may not hold for all episodes (corresponding to $\gamma=0$), we may not have a finite upper bound on the dual variable $\mu$. Thus, a dual regularization with $\xi>0$ is needed to stabilize the dual updates under the non-stationarity. The value of $\xi$ depends on the number of episodes $M$ and the variation budgets ${B}_\mathbb{P}, {B}_g$. On the other hand, under Assumption \ref{ass: Feasibility}, we choose $\chi = \frac{2H}{\gamma} \geq 2 \mu^{*,m}$ similarly as \cite{ding2021provably, efroni2020exploration}, so that the projection interval $[0,\chi ]$ includes all optimal dual variables $\{ \mu^{*,m}\}_{m=1}^M$ in light of Lemma \ref{lemma: lemma 1 in dongsheng}.

\subsection{Periodically restarted optimistic policy evaluation}\label{sec: Optimistic policy evaluation with restart strategy}
To evaluate the policy under the unknown nonstationarity, we take the Least-Squares Temporal Difference (LSTD) \cite{bradtke1996linear,lazaric2010finite} with UCB to properly handle the exploration-exploitation trade-off and apply the restart strategy to adapt to the unknown nonstationarity. In particular, we apply the restart strategy and evaluate the policy $\pi^m$ only based on the previous historical trajectories from the episode $\ell_Q^m$ to the episode $m$ instead of the all previous historical trajectories.
The method is standard and summarized in Algorithm \ref{alg:algoirthm 2} in Section \nameref{sec:appe_PE} of Appendix. 

After obtaining the estimates of $\mathbb{P}_h^m V_{\diamond, h+1}^m$ and $\diamond_h^m(\cdot,\cdot)$ for $\diamond =r \text{ or } g$, we update the estimated action-value function $\left\{ Q_{\diamond,h}^m\right\}_{h=1}^H$ iteratively 
and add UCB bonus terms $\Gamma_h^m(\cdot,\cdot)$, $\Gamma_{\diamond,h}^m(\cdot,\cdot): \mathcal{S}\times\mathcal{A} \rightarrow \mathbb{R}^+$ so that
\begin{align*}
\Omega_{1,\diamond} \coloneqq \left(\varphi^m\right)^\top u_{\diamond,h}^m+ \Gamma_h^m  \ \text{ and } \ \Omega_{2,\diamond} \coloneqq\left(\phi_{\diamond,h}^m\right)^\top w_{\diamond,h}^m + \Gamma_{\diamond,h}^m
\end{align*}
all become upper bounds on $\mathbb{P}_h^m V_{\diamond, h+1}^m$ and $\diamond_h^m(\cdot,\cdot)$ (up to some errors due to the non-stationarity). Here, the  weights $ u_{\diamond,h}^m, w_{\diamond,h}^m$ and the bonus terms $\Gamma_h^m, \Gamma_{\diamond,h}^m$ are defined in Algorithm \nameref{alg:algoirthm 2} in Section \nameref{sec:appe_PE} of Appendix. Moreover,
\begin{align*}
&Q_{r, h}^{m}(\cdot, \cdot)=\min \left(H-h+1, \Omega_{1,r}(\cdot, \cdot)+\Omega_{2,r}(\cdot, \cdot) \right)_+,\\
&Q_{g, h}^{m}(\cdot, \cdot)=\min \left(H-h+1, \Omega_{1,g}(\cdot, \cdot)+\Omega_{2,g}(\cdot, \cdot)+LV\right)_+
\end{align*}
where $LV>0$ depends on the local variation budgets of the constraint $B_{\mathbb{P},\mathcal{E}}$,  $B_{g,\mathcal{E}}$ under Assumption \ref{ass: local budget},  $LV=0$ under Assumption \ref{ass: Feasibility}, and $(x)_+$ denotes the maximum between $x$ and $0$. The reason for introducing a  positive $LV$ term under Assumption \ref{ass: local budget} is to guarantee that the model prediction error in $Q_{g, h}^{m}$ is non-positive when the dual variable $\mu$ is very large.




\section{Main results}\label{sec:main results}
We now present the dynamic regret and the constraint violation bounds for Algorithm \ref{alg:algoirthm 1} under the two alternative assumptions introduced in Section \nameref{sec: assumption}. The choices of the algorithm parameters will depend on the assumption used for the analysis. When both assumptions are satisfied, one can check which one yields a tighter bound, and this depends on the value of the strict feasibility threshold $\gamma$ (and the values of $H, M$ if in the tabular CMDP setting).
\vspace{-0.2cm}
\subsection{Linear kernal CMDP}
\vspace{-0.1cm}
We first present the results for linear Kernal CMDP under each of Assumptions \ref{ass: local budget} and \ref{ass: Feasibility}.
\begin{theorem}[Linear Kernal CMDP + Assumption \ref{ass: local budget}] \label{thm: Linear Kernal MDP under local budget}
Let Assumptions \ref{ass: linear fun approx} and \ref{ass: local budget} hold. Given $p\in(0,1)$, we set $\alpha=  H^{-1} M^{-\frac{1}{2}} (\sqrt{d}B_\Delta+B_\star)^{\frac{1}{3}} $, $L= {M^{\frac{3}{4}}} (\sqrt{d}B_\Delta+B_\star)^{-\frac{2}{3}} $, $\eta=M^{-\frac{1}{2}}$, $\xi=2H (\sqrt{d}B_\Delta+B_\star)^{\frac{1}{3}} M^{-\frac{1}{2}}$,
$W=d^{-\frac{1}{4}} H^{-1} {M}^{\frac{1}{2}} B_\Delta^{-\frac{1}{2}}$,
in Algorithm \ref{alg:algoirthm 1} and set set $\beta=$ $C_{1} \sqrt{d H^{2} \log (d W / p)}$,  $LV=B_{\mathbb{P},\mathcal{E}} H^2 d_1\sqrt{d_1 W} +B_{g,\mathcal{E}}\sqrt{d_2 W}$ in Algorithm \ref{alg:algoirthm 2}. Then, with probability $1-p$, the dynamic regret and the constraint violation satisfy 
\begin{align*}
&\operatorname{DR}(M)\leq \widetilde{\mathcal{O}}\left( d^{\frac{9}{8}} H^{\frac{5}{2}} M^{\frac{3}{4}} (\sqrt{d}B_\Delta +B_\ast)^{\frac{1}{3}}\right), \\
&\operatorname{CV}(M)\leq  \widetilde{\mathcal{O}}\left( d^{\frac{9}{8}} H^{\frac{5}{2}} M^{\frac{3}{4}} (\sqrt{d}B_\Delta +B_\ast)^{\frac{1}{3}}\right).
\end{align*}
\end{theorem}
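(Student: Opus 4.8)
The plan is to pass to the (regularized) Lagrangian, split the dynamic regret into a primal policy-optimization term and a dual-weighted constraint term, control the optimistic estimates through the evaluation window, and then balance all parameters. First I would record the Lagrangian decomposition: for each episode $m$, feasibility of $\pi^{\star,m}$ gives $\mu^{m-1}(V_{g,1}^{\pi^{\star,m},m}-b_m)\ge 0$, so from the definition \eqref{eq: langrangian equation} of $\mathcal{L}^m$,
\[
V_{r,1}^{\pi^{\star,m},m}-V_{r,1}^{\pi^m,m}\le\big[\mathcal{L}^m(\pi^{\star,m},\mu^{m-1})-\mathcal{L}^m(\pi^m,\mu^{m-1})\big]+\mu^{m-1}\big(V_{g,1}^{\pi^m,m}-b_m\big).
\]
Summing over $m$ expresses $\mathrm{DR}(M)$ as a \emph{Lagrangian gap} plus a \emph{dual-weighted slack}; a parallel rearrangement, using that the dual iterate in line~14 is driven by $b_m-V_{g,1}^{m-1}$, bounds $\mathrm{CV}(M)$ through the same two quantities, so both targets reduce to controlling the primal and dual trajectories jointly.

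For the Lagrangian gap I would invoke the one-step descent lemma for the KL-regularized update in line~8 of Algorithm~\ref{alg:algoirthm 1}. Inside each length-$L$ epoch the update is online mirror descent on the composite gain $Q_{r,h}^{m-1}+\mu^{m-1}Q_{g,h}^{m-1}$, whose sup-norm is at most $(1+\mu^{m-1})H$; telescoping the KL potential gives a per-step, per-epoch regret of order $\frac{\log|\mathcal{A}|}{\alpha}+\alpha\sum_m(1+\mu^{m-1})^2H^2$, the restart to the uniform policy resetting the potential at the start of each of the $\lceil M/L\rceil$ epochs, and the moving comparator contributing a drift term of order $\frac{H}{\alpha}B_\star$. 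The performance-difference lemma then converts this bound on the \emph{estimated} $Q$'s into a bound on the true gap $\mathcal{L}^m(\pi^{\star,m},\mu^{m-1})-\mathcal{L}^m(\pi^m,\mu^{m-1})$, leaving a residual equal to the accumulated model-prediction errors of $Q_{r,h}^m$ and $Q_{g,h}^m$.

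Next I would bound those model-prediction errors using the optimistic evaluation subroutine (Algorithm~\ref{alg:algoirthm 2}). The UCB bonuses $\Gamma_h^m,\Gamma_{\diamond,h}^m$ make $\Omega_{1,\diamond}+\Omega_{2,\diamond}$ an upper bound on $\mathbb{P}_h^mV_{\diamond,h+1}^m+\diamond_h^m$ up to a non-stationarity term. Since each evaluation uses only the window $[\ell_Q^m,m]$ of length at most $W$, the statistical part accumulates as an elliptical-potential sum that, after summing over all $M$ episodes and $H$ steps, scales with the confidence width $\beta$ and a factor $M/\sqrt{W}$; the bias from stale transitions and utilities inside the window is controlled by the local budgets $B_{\mathbb{P},\mathcal{E}},B_{g,\mathcal{E}}$ (summing to $B_{\mathbb{P}},B_g$), and the $LV$ term is added precisely to dominate this bias with the correct sign. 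For the dual side I would analyze the projected ascent in line~14: with $\xi>0$ the map $\mu\mapsto\widetilde{\mathcal{L}}_\xi^m$ in \eqref{eq: estimtaed langrangian equation regu} is strongly concave, so a one-step inequality for projected gradient ascent telescopes to bound $\sum_m\mu^{m-1}(b_m-V_{g,1}^{m-1})$ in terms of $\eta$, $\xi$, and the range of the estimated constraint, simultaneously closing the dual-weighted slack and, together with the optimism of $Q_{g,h}^m$, yielding $\mathrm{CV}(M)$.

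Finally I would assemble the four bounds and substitute the stated $\alpha,L,\eta,\xi,W$, each chosen to equalize the competing $M$-, $d$-, and $B$-dependencies, giving the common rate $\widetilde{\mathcal{O}}\big(d^{9/8}H^{5/2}M^{3/4}(\sqrt{d}B_\Delta+B_\star)^{1/3}\big)$. The hard part will be the dual coupling: because Assumption~\ref{ass: local budget} does not guarantee strict feasibility, $\mu^{m-1}$ has no a-priori bound, so any residual estimation error in the utility value is multiplied by a potentially large factor. Taming this requires the dual regularization $\xi>0$ (which both keeps $\mu$ from diverging and supplies the strong concavity that controls the dual regret) together with the one-sided $LV$ correction (which forces the model-prediction error of $Q_{g,h}^m$ to be non-positive, so the amplified product cannot blow up); verifying that these two devices remain mutually compatible under the non-stationary restart schedule is the most delicate step of the argument.
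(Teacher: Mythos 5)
Your outline of the dynamic-regret half matches the paper's proof essentially step for step: the same Lagrangian primal--dual decomposition, the one-step descent lemma for the KL-regularized mirror step with restarts every $L$ episodes, the windowed elliptical-potential bound of order $\beta M W^{-1/2}$ plus the local-variation bias absorbed by the $LV$ shift (whose role is exactly to force $\iota_{g,h}^m\le 0$ so that $\mu^m\iota_{g,h}^m\le 0$ even for unbounded $\mu^m$), and the choice of $\xi$ so that $\alpha H^2+\eta\xi^2-\xi\le 0$ annihilates the $\sum_m(\mu^m)^2$ terms. Two small slips: the regularized Lagrangian $\widetilde{\mathcal{L}}_\xi^m$ is strongly \emph{convex} in $\mu$ (it is minimized over $\mu$, and line 14 is a projected gradient descent step on it), not strongly concave; and the comparator-drift term in the paper comes out as $H^2LB_\star$ by freezing the comparator at $\pi^{\star,\ell_\pi^m}$ within each epoch rather than as $\alpha^{-1}HB_\star$, though either bookkeeping is compatible with the final rate after tuning.

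The one genuine gap is in the constraint-violation half. You bound $\sum_m\mu^{m}\bigl(b_m-V_{g,1}^{m}\bigr)$, i.e.\ the slack weighted by the algorithm's own multipliers, and assert that this ``yields $\operatorname{CV}(M)$''; but that quantity can be small even when the violation is large (for instance if $\mu^m$ stays near zero while the constraint is violated), and under Assumption \ref{ass: local budget} you cannot fall back on a bounded optimal multiplier as in Lemma \ref{lemma: lemma 1 in dongsheng}. The paper's device is to introduce a \emph{free} multiplier $\mu\ge 0$: convexity of $\widetilde{\mathcal{L}}_\xi^m$ in $\mu$ together with the dual update gives $\sum_{m}(\mu-\mu^m)\bigl(b_m-V_{g,1}^m\bigr)-\bigl(\tfrac{\xi M}{2}+\tfrac{1}{2\eta}\bigr)\mu^2\le\eta H^2M$ for every $\mu\ge0$ (Lemma \ref{lemma: Dual step for constraint violation}); adding this to the regret-type bound on $\sum_m\mu^m(b_m-V_{g,1}^m)$ produces $\mu\sum_m\bigl(b_m-V_{g,1}^{\pi^m,m}\bigr)-\bigl(\tfrac{\xi M}{2}+\tfrac{1}{2\eta}\bigr)\mu^2\le\delta_2$ with $\delta_2$ independent of $\mu$, and maximizing the left-hand side over $\mu\ge0$ yields a quadratic inequality in the violation itself, whence $\operatorname{CV}(M)\le\widetilde{\mathcal{O}}\bigl(\sqrt{(\xi M+\eta^{-1})\,\delta_2}\bigr)$, which matches the stated rate after substituting the parameters. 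Without this free-multiplier maximization your argument stalls exactly where the unboundedness of $\mu^m$ bites; the rest of your plan is sound.
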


\begin{theorem}[Linear Kernal CMDP + Assumption \ref{ass: Feasibility}] \label{thm: Linear Kernal MDP under feasible}
Let Assumptions \ref{ass: linear fun approx} and \ref{ass: Feasibility} hold. Given $p\in(0,1)$, we set $\alpha=\gamma H^{-\frac{3}{2}} M^{-\frac{1}{3}} (\sqrt{d}B_\Delta+B_\star)^{\frac{1}{3}} $, $L= {M^{\frac{2}{3}}} (\sqrt{d}B_\Delta+B_\star)^{-\frac{2}{3}} $, $\eta=M^{-\frac{1}{2}}$, $\xi=0$,
$W=d^{-\frac{1}{4}} H^{-1} {M}^{\frac{1}{2}} B_\Delta^{-\frac{1}{2}}$
in Algorithm \ref{alg:algoirthm 1} and set set $\beta=$ $C_{1} \sqrt{d H^{2} \log (d W / p)}$,  $LV=0$ in Algorithm \ref{alg:algoirthm 2}. Then, with probability $1-p$, the dynamic regret and the constraint violation satisfy 
\begin{align*}
&\operatorname{DR}(M)\leq \widetilde{\mathcal{O}}\left( \gamma^{-1}d^{\frac{9}{8}} H^{\frac{5}{2}} M^{\frac{3}{4}} (\sqrt{d}B_\Delta +B_\ast)^{\frac{1}{3}}\right), \\ & \operatorname{CV}(M)\leq  \widetilde{\mathcal{O}}\left( \gamma^{-1}d^{\frac{9}{8}} H^{\frac{5}{2}} M^{\frac{3}{4}} (\sqrt{d}B_\Delta +B_\ast)^{\frac{1}{3}}\right).
\end{align*}
\end{theorem}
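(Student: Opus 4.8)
The plan is to bound $\operatorname{DR}(M)$ and $\operatorname{CV}(M)$ simultaneously through a Lagrangian decomposition that exploits the \emph{bounded} optimal multiplier guaranteed by Lemma~\ref{lemma: lemma 1 in dongsheng}, so that (unlike under Assumption~\ref{ass: local budget}) no dual regularization ($\xi=0$) or constraint bonus ($LV=0$) is needed. First I would rewrite each executed reward value as $V_{r,1}^{\pi^m,m}=\mathcal{L}^m(\pi^m,\mu^{m-1})-\mu^{m-1}\big(V_{g,1}^{\pi^m,m}-b_m\big)$, and use the feasibility of $\pi^{\star,m}$ together with $\mu^{m-1}\ge 0$ to get $V_{r,1}^{\pi^{\star,m},m}\le \mathcal{L}^m(\pi^{\star,m},\mu^{m-1})$. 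This gives the decomposition
\begin{align*}
\operatorname{DR}(M)\le \sum_{m=1}^M\big[\mathcal{L}^m(\pi^{\star,m},\mu^{m-1})-\mathcal{L}^m(\pi^m,\mu^{m-1})\big]+\sum_{m=1}^M \mu^{m-1}\big(V_{g,1}^{\pi^m,m}-b_m\big),
\end{align*}
where the first sum is a \emph{policy-optimization regret} against the time-varying comparators $\pi^{\star,m}$ and the second sum is carried into the dual analysis.

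For the policy-optimization regret I would treat line~8 of Algorithm~\ref{alg:algoirthm 1} as online mirror descent on the Lagrangian surrogate and invoke the one-step descent lemma (Lemma~\ref{lemma: One-step descent lemma}) to telescope the KL divergences within each length-$L$ epoch. Because the restart resets the policy to uniform at the start of every epoch, the initialization cost is $\log|\mathcal{A}|$ per epoch, contributing an $\widetilde{\mathcal{O}}(\ceil{M/L})$ term, while the drift of $\pi^{\star,m}$ across episodes is absorbed by $B_\star$ of \eqref{eq: variation budget star}. Replacing the true action-value functions by the optimistic estimates $Q_{r,h}^{m-1},Q_{g,h}^{m-1}$ introduces the model-prediction error; here I would verify that the UCB bonus $\Gamma$ makes the estimates valid upper bounds up to a non-stationarity bias governed by the window $W$ and $B_\Delta$, and bound the accumulated bonus by a self-normalized/elliptical-potential argument (leading term scaling like $\beta\,M\sqrt{d/W}$ up to $H$-factors) plus a drift term scaling like $B_\Delta W$.

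For the constraint violation and the residual dual term I would analyze the projected update in line~14 (with $\xi=0$, $\chi=2H/\gamma$) as online projected gradient ascent on $\mu\mapsto \mu\big(b_m-V_{g,1}^{m-1}(x_1)\big)$. The nonexpansiveness of the projection yields, for every comparator $\mu\in[0,\chi]$,
\begin{align*}
\sum_{m=1}^M (\mu-\mu^{m-1})\big(b_m-V_{g,1}^{m-1}(x_1)\big)\le \frac{(\mu-\mu^0)^2}{2\eta}+\frac{\eta}{2}\sum_{m=1}^M\big(b_m-V_{g,1}^{m-1}(x_1)\big)^2 .
\end{align*}
Taking $\mu=0$ bounds the residual sum $\sum_m \mu^{m-1}(V_{g,1}^{\pi^m,m}-b_m)$ by an $\widetilde{\mathcal{O}}(\eta M H^2)$ term, after using the optimism $V_{g,1}^{m-1}(x_1)\ge V_{g,1}^{\pi^m,m}$ (up to evaluation error) to pass from the estimate to the true value; comparing against $\mu=\chi$ turns the same inequality, together with the telescoped dual drift and the fact that $\chi=2H/\gamma$ contains every $\mu^{\star,m}\le H/\gamma$ (Lemma~\ref{lemma: lemma 1 in dongsheng}), into a bound on the true cumulative violation $\sum_m (b_m-V_{g,1}^{\pi^m,m})$ and hence on $\operatorname{CV}(M)$. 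This is precisely where the $\gamma^{-1}$ factor enters, inherited from the dual range $\chi$.

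Finally I would assemble the three contributions, $\widetilde{\mathcal{O}}(\ceil{M/L})+B_\star$ from the primal side, the estimation-plus-drift terms governed by $(W,\beta,B_\Delta)$ from the evaluation side, and the $\widetilde{\mathcal{O}}(\chi^2/\eta+\eta M H^2)$ dual terms, and substitute the prescribed $\alpha,L,\eta,W,\beta$ to balance them into the stated $M^{3/4}(\sqrt{d}B_\Delta+B_\star)^{1/3}$ rate, tracking powers of $d$ and $H$ to obtain $d^{9/8}H^{5/2}$. The main obstacle I anticipate is the coupled three-way trade-off in the restart parameters: shrinking $W$ and $L$ lowers the non-stationarity bias ($\propto B_\Delta W$ and the $\pi^{\star,m}$-drift $B_\star$) but inflates both the statistical error ($\propto M/\sqrt{W}$) and the per-epoch restart overhead ($\propto M/L$), and these must be optimized jointly with $\eta$ so that \emph{both} $\operatorname{DR}(M)$ and $\operatorname{CV}(M)$ land at the same order; keeping the violation bound from degrading under the time-varying $b_m$, while the dual is capped only at $2H/\gamma$ and the utility estimate is optimistic rather than pessimistic, is the most delicate step in decoupling optimality from feasibility.
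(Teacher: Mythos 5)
Your proposal is correct and follows essentially the same route as the paper: the same Lagrangian decomposition into a mirror-descent primal regret (handled by the one-step descent lemma with per-epoch KL telescoping and $B_\star$ drift), the same optimistic-evaluation error analysis (elliptical potential plus a $B_\Delta W$ non-stationarity bias), and the same projected dual-ascent inequality compared against $\mu=0$ and $\mu=\chi=2H/\gamma$. The only step you leave implicit is that passing from the $\mu=\chi$ comparison to a bound on $\left[\sum_{m}(b_m-V_{g,1}^{\pi^m,m})\right]_+$ uses the strong-duality argument extending \cite[Proposition 3.60]{beck2017first} (Lemma~\ref{corollary: Constraint Violation under Uniform Slater Condition}), whose hypothesis $\chi\geq 2\mu^{\star,m}$ you do correctly identify via Lemma~\ref{lemma: lemma 1 in dongsheng}.
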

The proofs for Theorems \ref{thm: Linear Kernal MDP under local budget} and \ref{thm: Linear Kernal MDP under feasible} can be found in Appendix \nameref{sec:appe-linear MDP under local budget} and \nameref{sec:appe-linear MDP under feasibility}, respectively.
Our dynamic regret bounds in Theorems \ref{thm: Linear Kernal MDP under local budget} and \ref{thm: Linear Kernal MDP under feasible} have the optimal dependence on the total number of episodes $M$. This matches the existing bounds in the general non-stationary linear kernel MDP setting without any constraints \cite{zhong2021optimistic, zhou2020nonstationary, touati2020efficient}. The dependence on the variation budgets $(B_\Delta, B_\star)$ also matches the existing bound in policy-based method for the non-stationary linear kernel MDP setting \cite{zhong2021optimistic}.
Regarding the long-term safe exploration, we provide the first finite-time constraint violation result in the non-stationary CMDP setting. 

In the linear kernel CMDP setting, the same dynamic regret and constraint violation bounds are obtained under either of Assumptions \ref{ass: local budget} and \ref{ass: Feasibility}, except that the  dynamic regret and constraint violation under Assumption \ref{ass: Feasibility} also depend on the strict feasibility threshold $\gamma$. When $\gamma$ is small, i.e., there exist some episodes for which the CMDP problem \eqref{eq: CMDP at episode m} does not have a large enough strict feasibility threshold, the  dynamic regret and constraint violation bounds in Theorem \ref{thm: Linear Kernal MDP under feasible} may be large.

\subsection{Tabular CMDP}
A special case of the linear kernel CMDP in Assumption \ref{ass: linear fun approx} is the tabular CMDP with $|\mathcal{S}|<\infty$ and $|\mathcal{A}|<\infty$. In the tabular case,  improved results can be obtained by incorporating Algorithm \ref{alg:algoirthm 1} with a variant of the optimistic policy evaluation method in Algorithm \ref{alg:algoirthm 2}. We refer the reads to Algorithm \ref{alg:algoirthm 3} in Section \nameref{sec:appe_PE} for such procedures and state the result below:

\begin{theorem}[Tabular CMDP + Assumption \ref{ass: local budget}] \label{thm: Tabular Case MDP with local budget}
Let Assumption \ref{ass: local budget} hold and consider a tabular CMDP. Given $p\in(0,1)$ and $\rho\in[\frac{1}{3},\frac{1}{2}]$, we set $\alpha=  H^{-\frac{1}{3}} M^{-\rho} (B_\Delta+B_\star)^{\frac{1}{3}} $, $L= H^{-\frac{1}{3}} {M^{\frac{1+\rho}{2}}} (B_\Delta+B_\star)^{-\frac{2}{3}} $, $\eta=H^{-\frac{1}{3}} M^{-\frac{1}{2}}$, $\xi=2H^{\frac{5}{3}} (B_\Delta+B_\star)^{\frac{1}{3}} M^{-\rho}$,
$W= H^{\frac{2}{3}}|\mathcal{S}|^{\frac{2}{3}} |\mathcal{A}|^{\frac{1}{3}} \left(\frac{M}{B_\Delta}\right)^{\frac{2}{3}}$ in Algorithm \ref{alg:algoirthm 1} and $\beta=C_{4} H \sqrt{|\mathcal{S}| \log (|\mathcal{S}||\mathcal{A}| W / p)}$, $LV=B_{\mathbb{P},\mathcal{E}} H  +B_{g,\mathcal{E}}$ in Algorithm \ref{alg:algoirthm 3}. Then, with probability $1-p$, the dynamic regret and the constraint violation 
satisfy 
\begin{align*}
&\operatorname{DR}(M)\leq  \widetilde{\mathcal{O}}\left( |\mathcal{S}|^{\frac{2}{3}} |\mathcal{A}|^{\frac{1}{3}} H^{\frac{5}{3}} M^{\frac{1+\rho}{2}} (B_\Delta +B_\ast)^{\frac{1}{3}}\right), \\
& \operatorname{CV}(M)\leq  \widetilde{\mathcal{O}}\left( |\mathcal{S}|^{\frac{2}{3}} |\mathcal{A}|^{\frac{1}{3}} H^{\frac{5}{3}} M^{\frac{2-\rho}{2}}  (B_\Delta +B_\ast)^{\frac{1}{3}}\right).
\end{align*}
\end{theorem}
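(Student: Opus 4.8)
The plan is to follow the same primal-dual template as the linear kernel case (Theorem~\ref{thm: Linear Kernal MDP under local budget}), specializing the optimistic policy evaluation to the tabular estimator (Algorithm~\ref{alg:algoirthm 3}) so that the feature dimension $d$ is replaced by the count-based complexity in $|\mathcal{S}|$ and $|\mathcal{A}|$. First I would write the Lagrangian-based regret against the per-episode optimal policy, decomposing $\sum_m [\mathcal{L}^m(\pi^{\star,m},\mu^{m-1}) - \mathcal{L}^m(\pi^m,\mu^{m-1})]$ through the extended value-difference lemma into three pieces: the online-mirror-descent term $\sum_m\sum_h \langle (Q_{r,h}^{m-1}+\mu^{m-1}Q_{g,h}^{m-1})(x_h,\cdot),\pi_h^{\star,m}-\pi_h^m\rangle$, the accumulated model prediction errors $\iota_{\diamond,h}^m$, and a bounded martingale-difference term handled by Azuma--Hoeffding.

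The first key step is to control the model prediction error for the tabular restart estimator. Using count-based concentration for the empirical transition $\widehat{\mathbb{P}}_h^m$ built from the trajectories in the window $[\ell_Q^m,m]$ together with the bonus $\beta = C_4 H\sqrt{|\mathcal{S}|\log(|\mathcal{S}||\mathcal{A}|W/p)}$, I would show that on a high-probability event the statistical error is dominated by $\Gamma_h^m$, while the extra deviation caused by the transition and utility kernels drifting within the window is bounded by the local variations $B_{\mathbb{P},\mathcal{E}}$ and $B_{g,\mathcal{E}}$ scaled by $W$. Adding $LV = B_{\mathbb{P},\mathcal{E}} H + B_{g,\mathcal{E}}$ to $Q_{g,h}^m$ then forces the utility-direction prediction error to be non-positive, which is exactly what prevents a large dual variable $\mu^{m-1}$ from amplifying the constraint estimation error; the reward-direction error stays bounded in absolute value by $2\Gamma_h^m$ plus the same window variation. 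Summing the bonuses over episodes with the usual visitation-count (pigeonhole) argument produces the $\widetilde{\mathcal{O}}(|\mathcal{S}|^{2/3}|\mathcal{A}|^{1/3})$-type factor.

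Next I would bound the online-mirror-descent term. Because the policy-improvement step is a KL-regularized update restarted to the uniform policy every $L$ episodes, I would apply the one-step descent lemma for KL-regularized optimization and telescope within each epoch. Since the comparator $\pi^{\star,m}$ itself drifts, the telescoped KL terms do not fully cancel and leave a residual controlled by the policy variation $B_{\star}$; combined with the per-restart initialization cost $\tfrac{H\log|\mathcal{A}|}{\alpha}\lceil M/L\rceil$ and the step-size cost proportional to $\alpha$ times the accumulated squared magnitude of $Q_{r,h}^{m-1}+\mu^{m-1}Q_{g,h}^{m-1}$, this controls the mirror-descent term. In parallel I would analyze the regularized projected dual-ascent step: using the one-step inequality for the regularized objective $\widetilde{\mathcal{L}}_\xi^m$, I would bound $\sum_m (b_m - V_{g,1}^{m-1})(\mu^{m-1}-\mu)$ in terms of $\eta^{-1}$, $\eta M$, and the regularization $\tfrac{\xi}{2}\sum_m (\mu^{m-1})^2$. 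Combining the primal and dual pieces with the optimism from the first step lets me decouple the two quantities: choosing $\xi>0$ so the regularization absorbs the products of $\mu^{m-1}$ with residual constraint errors converts the estimated constraint slack back into the true $\operatorname{CV}(M)$ and simultaneously removes the need for an a priori dual bound, which Assumption~\ref{ass: local budget} does not supply.

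Finally I would substitute the prescribed parameters and optimize. The window $W = H^{2/3}|\mathcal{S}|^{2/3}|\mathcal{A}|^{1/3}(M/B_\Delta)^{2/3}$ balances the statistical error (decreasing in $W$) against the non-stationarity penalty $LV$ accumulated over the horizon (increasing in $W$), while $L$ trades the per-restart KL cost against $B_{\star}$; the joint choice of $\alpha$ and $\xi$, both scaling like $M^{-\rho}$, yields the tunable trade-off $\operatorname{DR}(M)=\widetilde{\mathcal{O}}(M^{(1+\rho)/2})$ versus $\operatorname{CV}(M)=\widetilde{\mathcal{O}}(M^{(2-\rho)/2})$ for $\rho\in[\tfrac13,\tfrac12]$, with the common factor $|\mathcal{S}|^{2/3}|\mathcal{A}|^{1/3}H^{5/3}(B_\Delta+B_\star)^{1/3}$. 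I expect the main obstacle to be exactly this decoupling: under Assumption~\ref{ass: local budget} the dual iterates are not bounded a priori, so the whole argument hinges on selecting $\xi$ and $LV$ so that the dual-regularization term and the enforced one-sided optimism on $Q_{g,h}^m$ jointly dominate every term in which an unbounded $\mu^{m-1}$ multiplies a non-stationarity-induced error, while keeping $\xi$ small enough that the regularization does not inflate the dynamic regret beyond the target rate.
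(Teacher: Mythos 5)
Your proposal is correct and follows essentially the same route as the paper: the same Lagrangian/value-difference decomposition into a mirror-descent term, accumulated model prediction errors, and an Azuma--Hoeffding martingale; the same count-based concentration with the $LV$ shift forcing $\iota_{g,h}^m \le 0$ so an unbounded dual variable cannot amplify constraint estimation error; the same one-step-descent telescoping per restart epoch with a $B_\star$ residual; and the same dual-regularization argument (convexity of $\widetilde{\mathcal{L}}_\xi^m$ in $\mu$, then maximizing over $\mu \ge 0$) to extract the constraint violation. No gaps to flag.
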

\begin{theorem}[Tabular CMDP + Assumption \ref{ass: Feasibility}] \label{thm: Tabular Case MDP under feasible}
Let Assumption \ref{ass: Feasibility} hold and consider a tabular CMDP. Given $p\in(0,1)$, we set $\alpha= \gamma H^{-\frac{3}{2}} M^{-\frac{1}{3}} (B_\Delta+B_\star)^{\frac{1}{3}} $, $L= {M^{\frac{2}{3}}} (B_\Delta+B_\star)^{-\frac{2}{3}} $, $\eta=M^{-\frac{1}{2}}$, $\xi=0$,
$W= |\mathcal{S}|^{\frac{2}{3}} |\mathcal{A}|^{\frac{1}{3}} \left(\frac{M}{B_\Delta}\right)^{\frac{2}{3}}$ in Algorithm \ref{alg:algoirthm 1} and $\beta=C_{4} H \sqrt{|\mathcal{S}| \log (|\mathcal{S}||\mathcal{A}| W / p)}$, $LV=0$ in Algorithm \ref{alg:algoirthm 3}. Then, with probability $1-p$, the dynamic regret and the constraint violation 
satisfy 
\begin{align*}
&\operatorname{DR}(M)\hspace{-0.05cm}\leq \hspace{-0.05cm} \widetilde{\mathcal{O}}\left(\gamma^{-1} |\mathcal{S}|^{\frac{2}{3}} |\mathcal{A}|^{\frac{1}{3}}   H^{\frac{5}{2}} M^{\frac{2}{3}} (B_\Delta+B_\star)^{\frac{1}{3}} \right), \\
& \operatorname{CV}(M) \hspace{-0.05cm} \leq \hspace{-0.05cm} \widetilde{\mathcal{O}}\left(\gamma^{-1} |\mathcal{S}|^{\frac{2}{3}} |\mathcal{A}|^{\frac{1}{3}}   H^{\frac{5}{2}} M^{\frac{2}{3}} (B_\Delta+B_\star)^{\frac{1}{3}} \right).
\end{align*}
\end{theorem}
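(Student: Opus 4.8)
The plan is to mirror the analysis of the linear kernel case under Assumption \ref{ass: Feasibility} (Theorem \ref{thm: Linear Kernal MDP under feasible}), replacing the linear-kernel optimistic evaluation by the tabular subroutine (Algorithm \ref{alg:algoirthm 3}); its sharper, count-based concentration --- captured by $\beta = C_4 H\sqrt{|\mathcal{S}|\log(|\mathcal{S}||\mathcal{A}|W/p)}$ --- is what converts the $d^{9/8}$ dependence into $|\mathcal{S}|^{2/3}|\mathcal{A}|^{1/3}$. I would first invoke strong duality (Lemma \ref{lemma: lemma 1 in dongsheng}): since Assumption \ref{ass: Feasibility} holds, $V_{r,1}^{\pi^{\star,m},m} = \mathcal{D}^m(\mu^{\star,m})$ with $0 \le \mu^{\star,m}\le H/\gamma$, so the projection interval $[0,2H/\gamma]$ contains every $\mu^{\star,m}$. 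Using feasibility of $\pi^{\star,m}$ and $\mu^m \ge 0$, decompose each per-episode gap as
\begin{align*}
V_{r,1}^{\pi^{\star,m},m} - V_{r,1}^{\pi^m,m} \le \bigl(\mathcal{L}^m(\pi^{\star,m},\mu^m) - \mathcal{L}^m(\pi^m,\mu^m)\bigr) + \mu^m\bigl(V_{g,1}^{\pi^m,m} - b_m\bigr),
\end{align*}
so that $\operatorname{DR}(M)$ splits into a primal mirror-descent regret, a dual-update term, and model-prediction errors of the optimistic estimates $Q_{r,h}^m, Q_{g,h}^m, V_{g,1}^m$.

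For the primal term I would apply the one-step descent lemma (Lemma \ref{lemma: One-step descent lemma}) to the KL-regularized update of line 8. Because the policy resets to uniform every $L$ episodes, the KL divergence from each restart point to any comparator is at most $H\log|\mathcal{A}|$, while the drift of the comparators $\pi^{\star,m}$ inside an epoch contributes a term controlled by $B_\star$. Summing over the $\lceil M/L\rceil$ epochs gives a primal bound of order $\tfrac{M}{L}\cdot\tfrac{H\log|\mathcal{A}|}{\alpha} + \tfrac{B_\star}{\alpha} + \alpha M\,G^2$, where the gradient magnitude obeys $\|Q_{r,h}^{m-1} + \mu^{m-1}Q_{g,h}^{m-1}\|_\infty \lesssim H^2/\gamma$ thanks to $\mu^{m-1}\le 2H/\gamma$. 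The $\gamma^{-1}$ prefactor in the final bound arises precisely here: the $H^2/\gamma$ gradient bound combined with the choice $\alpha \propto \gamma$ leaves a net $\gamma^{-1}$ in both primal terms. For the dual term and the violation I would use that $\xi = 0$, so line 14 is standard projected gradient ascent on the estimated Lagrangian over $[0,2H/\gamma]$; telescoping $|\mu^m - \mu|^2$ against a fixed comparator $\mu \in [0,2H/\gamma]$ bounds $\sum_m \mu^m(b_m - V_{g,1}^{\pi^m,m})$, and choosing the comparator to isolate the positive part controls $\operatorname{CV}(M)$, which is why $\operatorname{DR}(M)$ and $\operatorname{CV}(M)$ carry the identical $\gamma^{-1}$ factor.

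The estimation error and the parameter balancing are the main obstacle. With Algorithm \ref{alg:algoirthm 3} restarting every $W$ episodes and using only within-window trajectories, I would split each model-prediction error into a statistical part and a non-stationarity part. The count-based bonus sums, via a pigeonhole argument over each length-$W$ window, to $\widetilde{\mathcal{O}}(\sqrt{W})$ per window and hence $\widetilde{\mathcal{O}}(M/\sqrt{W})$ in total (carrying the $|\mathcal{S}|,|\mathcal{A}|,H$ factors from $\beta$), while the within-window drift contributes a bias proportional to $B_\Delta W$. The delicate point that distinguishes this case from Assumption \ref{ass: local budget} is that here $LV = 0$: there is no explicit optimistic correction for the drift of the utility, so one must argue that the residual error in $V_{g,1}^{m-1}$ is harmless because it is always multiplied by $\mu^{m-1}\le 2H/\gamma$, i.e.\ the bounded dual absorbs it at a cost of only $\gamma^{-1}$. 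Finally, assembling the primal regret ($\propto \tfrac{1}{\alpha} + \alpha$), the dual/violation bound, and the estimation error ($\propto M/\sqrt{W} + B_\Delta W$), and substituting $\alpha = \gamma H^{-3/2}M^{-1/3}(B_\Delta+B_\star)^{1/3}$, $L = M^{2/3}(B_\Delta+B_\star)^{-2/3}$, $\eta = M^{-1/2}$, and $W = |\mathcal{S}|^{2/3}|\mathcal{A}|^{1/3}(M/B_\Delta)^{2/3}$ balances the restart-induced cost against the non-stationarity cost --- with the relation $M/\sqrt{W}=W B_\Delta$ pinning the $M^{2/3}B_\Delta^{1/3}$ scaling --- and yields the claimed bounds for both $\operatorname{DR}(M)$ and $\operatorname{CV}(M)$.
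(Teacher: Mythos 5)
Your dynamic-regret argument is essentially the paper's: reuse the linear-kernel primal--dual decomposition with $\chi=2H/\gamma$ bounding the dual iterates, swap in the count-based tabular evaluation (Lemma~\ref{lemma: Model prediction difference bound tabular}, with the statistical part summing to $\widetilde{\mathcal{O}}(M/\sqrt{W})$ by pigeonhole and the drift part to $B_\Delta W$), and note that the unprotected utility prediction error is absorbed by the bounded multiplier at a cost of $\gamma^{-1}$. Your accounting of where the $\gamma^{-1}$ enters (the $(1+\chi)^2H^2$ gradient bound against $\alpha\propto\gamma$) is also correct.

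The gap is in the constraint violation. Your plan --- telescope $|\mu^m-\mu|^2$ and ``choose the comparator to isolate the positive part'' --- only delivers an inequality of the form $\sum_m\bigl(V_{r,1}^{\pi^{\star,m},m}-V_{r,1}^{\pi^m,m}\bigr)+\chi\bigl[\sum_m(b_m-V_{g,1}^{\pi^m,m})\bigr]_+\leq\delta'$ with $\delta'=\widetilde{\mathcal{O}}(\gamma^{-1}\cdots M^{2/3}\cdots)$. To extract $\operatorname{CV}(M)$ from this you must control the first sum from below, and it can be \emph{negative}: an infeasible policy may collect more reward than $\pi^{\star,m}$. The crude bound $\sum_m(V_{r,1}^{\pi^{\star,m},m}-V_{r,1}^{\pi^m,m})\geq -HM$ gives $\operatorname{CV}(M)\leq(\delta'+HM)/\chi$, whose $HM/\chi=\gamma M/2$ term is linear in $M$; alternatively, the quadratic-maximization trick used under Assumption~\ref{ass: local budget} yields only $M^{3/4}$ here. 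The missing ingredient is the strong-duality perturbation argument of Lemma~\ref{lemma: Constraint Violation under Slater Condition} and Corollary~\ref{corollary: Constraint Violation under Uniform Slater Condition} (the extension of \cite[Proposition 3.60]{beck2017first}): concavity of the perturbation function $v^m(\tau)$ gives $V_{r,1}^{\pi^m,m}-V_{r,1}^{\pi^{\star,m},m}\leq\mu^{\star,m}\bigl(b_m-V_{g,1}^{\pi^m,m}\bigr)$, so with $\chi\geq 2\max_m\mu^{\star,m}$ the combined inequality yields $\bigl[\sum_m(b_m-V_{g,1}^{\pi^m,m})\bigr]_+\leq 2\delta'/\chi$ with no additive $HM$ penalty. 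This is exactly the ``sharper analysis'' the paper credits for the improved $M^{2/3}$ dependence relative to Theorem~\ref{thm: Tabular Case MDP with local budget}; without it your proof does not reach the stated $\operatorname{CV}(M)$ bound.
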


The proofs for Theorems \ref{thm: Tabular Case MDP with local budget} and \ref{thm: Tabular Case MDP under feasible} can be found in Appendix \nameref{sec:appe-tabular MDP under local budget} and \nameref{sec:appe-tabular MDP under Feasibility}, respectively. For the tabular CMDP under Assumption \ref{ass: local budget}, there is a trade-off for the dependence on the total number of episodes $M$ between the dynamic regret and the constraint violation. This trade-off is controlled by the primal update parameter $\alpha$ and the dual regularization parameter $\xi$. Such trade-off does not appear in the linear kernel CMDP setting because the dynamic regret and constraint violation in the linear kernel CMDP are bottlenecked by the error in the non-stationary policy evaluation.

The dynamic regret and constraint violation bounds in Theorem \ref{thm: Tabular Case MDP under feasible} have an improved dependence on the total number of episodes $M$ compared to Theorems \ref{thm: Linear Kernal MDP under local budget} and \ref{thm: Linear Kernal MDP under feasible}. This improvement is due to the improved result of the policy evaluation step in the tabular setting. The dependence on $M$ in Theorem \ref{thm: Tabular Case MDP under feasible} is also better than that of Theorem \ref{thm: Tabular Case MDP with local budget}. This is due to a sharper analysis for the constraint violation under Assumption \ref{ass: Feasibility} based on \cite[Proposition 3.60]{beck2017first}. However, the dynamic regret and constraint violation bounds in Theorem \ref{thm: Tabular Case MDP under feasible} have a worse dependence on the horizon $H$ and are also dependent on the feasibility threshold $\gamma$ compared to Theorem \ref{thm: Tabular Case MDP with local budget}. In addition, the dependence of the dynamic regret on $M$ and $(B_\Delta, B_\star)$ matches the existing bound in the non-stationary tabular MDP setting without any constraints \cite{mao2020near}.

\section{Proof sketch}

The safe exploration in non-stationary CMDP is more challenging since the utilities and dynamics are time-varying and unknown a priori. 
In this section, we outline some of the key ideas behind the proof, especially how to decouple the dynamic regret and constraint violation under the non-stationarity.  We defer the full proof to Appendix.

\subsection{Dynamic regret}

By combining the primal-dual analysis of stationary CMDPs \cite{ding2021provably} and the analysis for the non-stationary MDP \cite{zhong2021optimistic,fei2020dynamic}, we can obtain the following bound on the Lagrangian function:
\begin{align} \label{eq: analysis 1}
 \nonumber &\sum_{m=1}^M \hspace{-0.1cm} \left(V_{r, 1}^{\pi^{\star,m},m}\hspace{-0.1cm}-\hspace{-0.06cm} V_{r, 1}^{\pi^m,m}\hspace{-0.06cm}+\hspace{-0.06cm} \mu^m\left(b_m\hspace{-0.06cm}-\hspace{-0.06cm}V_{g, 1}^{\pi^m,m}\right)\hspace{-0.06cm} \right) \\
 &\leq \delta_1 \hspace{-0.06cm}+\hspace{-0.06cm} \alpha H^2 \hspace{-0.06cm} \sum_{m=1}^M |\mu^m|^2 \hspace{-0.06cm}+\hspace{-0.06cm} \sum_{m=1}^M \hspace{-0.06cm}\sum_{h=1}^H \mu^m  \EE_{{\pi}^{\star,m}, \mathbb{P}^m} \left[ \iota_{g, h}^{m}\right]
\end{align}
where $\delta_1$ contains all terms irrelated to the dual variables $\{\mu^m\}_{m=1}^M$ and  $\iota_{g, h}^{m}$ is the model prediction error of the constraint. Furthermore, with the dual regularization and the dual update, it holds that
\begin{align} \label{eq: analysis 2}
\nonumber &-\sum_{m=1}^{M} \mu^{m}\left(V_{g, 1}^{\pi^{\star,m},m}-V_{g, 1}^{m} \right) \\
 & \leq  {\eta H^{2}(M+1)} +(\eta \xi^2-\xi) \sum_{m=1}^{M}  (\mu^{m})^2.
\end{align}

Combining the  inequalities \eqref{eq: analysis 1} and \eqref{eq: analysis 2} yields 
 \begin{align*}
    \operatorname{DR}(M)\leq& \delta_1+ (\alpha H^2+\eta \xi^2-\xi) \sum_{m=1}^M (\mu^m)^2 \\
    & +\eta H^{2}(M+1) + \sum_{m=1}^M\sum_{h=1}^H \mu^m \EE_{{\pi}^{\star,m}, \mathbb{P}^m} \left[ \iota_{g, h}^{m}\right].
\end{align*}
Under Assumption \ref{ass: local budget}, since $\mu^m$ is not well-bounded (when CMDP is not strictly feasible), a positive dual regularization $\xi$ is needed to guarantee $\alpha H^2+\eta \xi^2-\xi\leq 0$ and the knowledge of the local variation budgets of the constraint $B_{\mathcal{P}, \mathcal{E}}, B_{g, \mathcal{E}}$ are needed to guarantee that $\iota_{g, h}^{m}$ is non-positive. On the other hand, under Assumption \ref{ass: Feasibility}, $\mu^m$ is bounded by $\frac{2H}{\gamma}$ and the dynamic regret can be well-controlled without any additional requirement on $\xi$ and  $\iota_{g, h}^{m}$.
\vspace{-0.2cm}
\subsection{Constraint violation}
\vspace{-0.1cm}
The techniques used for our analyses under Assumptions \ref{ass: local budget} and \ref{ass: Feasibility} are different. We first consider Assumption \ref{ass: local budget}. If we set $\xi\eta\leq \frac{1}{2}$ and $\chi=\infty$ in Algorithm \ref{alg:algoirthm 1}, then from the convexity of the Lagrangian function with respect to the dual variable, we have 
\begin{align*}
 \sum_{m=1}^M (\mu-\mu^m) \left(b_m-V_{g,1}^{m}  \right)-(\frac{\xi M}{2}+\frac{1}{2\eta})\mu^2    \leq \eta  H^2 M
\end{align*}
for every $\mu\geq 0$. By combining the above inequality with the inequality \eqref{eq: analysis 1} and using the fact that $\left|V_{r, 1}^{\pi^{\star,m},m}-V_{r, 1}^{\pi^m,m}\right|\leq H$, it holds that
\begin{align*}
    \mu \sum_{m=1}^M  \left(b_m-V_{g, 1}^{\pi^m,m} \right) -(\frac{\xi M}{2}+\frac{1}{2\eta})\mu^2 \leq \delta_2,
\end{align*}
where $\delta_2$ is irrelated to $\mu$. Then, by maximizing both sides of the above inequality over $\mu\geq 0$, we can obtain the constraint violation under Assumption \ref{ass: local budget}. 
On the other hand, the analysis of the constraint violation under Assumption \ref{ass: Feasibility} relies on the extension of \cite[Proposition 3.60]{beck2017first} or \cite[Lemma 10]{ding2021provably}. In particular, it shows that if there exist $\delta_3$ and $\bar{C}^{\star} \geq 2 \max_{m \in [M]}\mu^{\star,m}$ such that 
$$
\sum_{m=1}^M V_{r, 1}^{\pi^{\star,m},m}-V_{r, 1}^{\pi^m,m}+\bar{C}^{\star} \sum_{m=1}^M \left(b_m-V_{g, 1}^{\pi^m,m}\right) \leq \delta_3,
$$
then the constraint violation can be bounded by
$
\sum_{m=1}^M \left(b_m-V_{g, 1}^{\pi^m,m} \right) \leq \frac{2 \delta_3}{\bar{C}^{\star}}.
$

\section{Conclusion}\label{sec:conclusion}

In this paper, we formulate a general non-stationary safe RL problem as a non-stationary episodic CMDP. To solve this problem, we identify two alternative conditions on the time-varying constraints under which we can guarantee the safety in the long run. We also develop a new algorithm named PROPD-PPO, which consists of three main mechanisms: periodic-restart-based policy improvement, dual update with dual regularization, and periodic-restart-based optimistic policy evaluation. 
We establish the dynamic regret bound and a constraint violation bounds for the proposed algorithm in both the linear kernel CMDP function approximation setting and the tabular CMDP setting under two alternative  assumptions. This paper provides the first provably efficient algorithm for non-stationary CMDPs with safe exploration. An interesting future direction is to relax the assumption on the prior knowledge of the variation budgets and generalize the non-stationarity detection mechanism  in \cite{wei2021non} to our CMDP setting.


\bibliography{main_bib}
\clearpage

\section{Notation}\label{sec:notation}
Let $\left|\mathcal{S}\right|$ denote the cardinality of set $\mathcal{S}$.
When the variable $s$ follows the distribution $\rho$, we write it as $s\sim \rho$.
Let $\mathbb{E}[\cdot]$ and $\mathbb{E}[\cdot\mid \cdot]$ denote the expectation and conditional expectation of a random variable, respectively.
Let $\mathbb{R}$ represent the set of real numbers.
We use the shorthand notation $[n]$ for the set $\{1,2,\dots, n\}$.
For a vector $x$, we use $x^T$ to denote the transpose of $x$,
and use $x_i$ or $(x)_i$ to denote the $i$-th entry of $x$.
When applying a scalar function to a vector $x$, e.g. $\log x$, the operation is understood as entry-wise. 
For vectors $x$ and $y$, we use $x \geq y$ to denote an entry-wise inequality.
We use the standard notations that $\|x\|_1 = \sum_i |x_i|$, $\|x\|_2 = \sqrt{\sum_i x_i^2}$, and $\|x\|_\infty = \max_i |x_i|$.
For a matrix $A$, we use $A_{ij}$ to denote its $(i,j)$-th entry and use $\lambda_{\text{min}}(A)$ to denote its minimum eigenvalue. We use $\norm{v}_A$ to denote the norm induced by a positive definite matrix $A$ for vector $v$, i.e., $\norm{v}_A=\sqrt{v^{\top}Av}$.
Let $I_n$ denote the $n\times n$ identity matrix.
For a function $f(x)$, let $\nabla_x f(x)$ denote its gradient with respect to $x$, and we may omit $x$ in the subscript when it is clear from the context.
Let $\operatorname{arg}\min f(x)$ (resp. $\operatorname{arg}\max f(x)$) denote any arbitrary global minimum (resp. global maximum) of $f(x)$.
We denote $\text{Proj}_{[a,b]}(x)$ as the projection of $x$ onto the interval $[a,b]$ and $(x)_+$ as the maximum between $x$ and $0$.
 Lastly, given a variable $x$, the notation $a=\mathcal{O}(b(x))$ means that $a \leq C \cdot b(x)$ for some constant $C>0$ that is independent of $x$. {Similarly, $a=\widetilde{\mathcal{O}}(b(x))$ indicates that the previous inequality may also depend on the function $\log(x)$, where $C>0$ is again independent of $x$.}  

\section{Linear Kernel CMDP}\label{appe: linear CMDP}
Some examples of linear kernel MDPs 
include tabular MDPs \cite{zhou2021provably}, feature embedded
transition models \cite{yang2020reinforcement}, and linear combinations of base
models \cite{modi2020sample}. The linear kernal MDP defined in Assumption \ref{ass: linear fun approx} is different from linear MDP \cite{yang2019sample,jin2020provably} since they define transition dynamics using the different feature maps, although both of them encapsulate the tabular MDP as the special case. They are not comparable since one cannot
be implied by the other \cite{zhou2021provably}.
\section{Policy update}\label{appe: policy update}
For this purpose, we first use the performance difference lemma \cite{kakade2002approximately} to expand the value function $V_{\diamond,1}^{\pi,m}(x_1)$ at the previously known policy $\pi^{m-1}$ as follows:
\begin{align*}
   & V_{\diamond,1}^{\pi,m}= V_{\diamond,1}^{\pi^{m-1},m}+\\
   &\EE_{\pi^{m-1},\mathbb{P}^{m-1}}\left[ \sum_{h=1}^H \inner{Q_{\diamond, h}^{\pi,m}(x_h,\cdot)}{\pi_h-\pi_h^{m-1} (\cdot|x_h)}\right],
\end{align*}
where $\EE_{\pi^{m-1},\mathbb{P}^{m-1}}$ is taken over the random state-action sequence $\{(x_h,a_h)\}_{h=1}^H$ at episode $m-1$. Then, we introduce an approximation of $V_{\diamond,1}^{\pi,m}$ for any state-action sequence $\{(x_h,a_h)\}_{h=1}^H$ induced by $\pi$:
\begin{align*}
L_{\diamond}^{m-1}(\pi)\coloneqq &V_{\diamond,1}^{m-1}(x_1)+\sum_{h=1}^H \inner{Q_{\diamond, h}^{m-1}(x_h,\cdot)}{\pi_h-\pi_h^{m-1} (\cdot|x_h)}
\end{align*}
where $V_{\diamond,1}^{m-1}$ and $Q_{\diamond, h}^{m-1}$ are the approximations of $V_{\diamond,1}^{\pi,m-1}$ and $Q_{\diamond, h}^{\pi,m-1}$ and can be estimated from an optimistic policy evaluation procedure that will be discussed in Section \nameref{sec: Optimistic policy evaluation with restart strategy}. With this notation, in each episode, we perform a simple policy update in the online mirror descent fashion with the KL-divergence regularization,
\begin{align*}
 \max_{\pi\in\Delta(\mathcal{A}|\mathcal{S}, H)} & L_{r}^{m-1}(\pi)-\mu^{m-1} (b-L_{g}^{m-1}(\pi))\\
 &-\frac{1}{\alpha} \sum_{h=1}^H D\left(\pi_h(\cdot|x_h) \mid \pi_h^{m-1}(\cdot|x_h)\right)
\end{align*}
where the constant $\alpha>0$ is a trade-off parameter and $D(\pi\mid\pi^{m-1})$ is the KL divergence between $\pi$ and $\pi^{m-1}$. 
\section{Policy evaluation algorithm} \label{sec:appe_PE}

\subsection{Policy evaluation algorithm for linear kernel MDP setting}

For episode $m$ and  each step $h\in[H]$,
we estimate $\mathbb{P}_h^m V_{r,h+1}^m$ in the Bellman equation \eqref{eq: belmman equation} by
${\phi_{r,h}^m} ^\top w_{r,h}^m$,  where  $w_{r,h}^m$ is updated by the minimizer of the regularized least-squares problem over $w$,
\begin{align} \label{eq: regression w}
\sum_{\tau=\ell_Q^m}^{m-1} \left( V_{r,h+1}^\tau (x_{h+1}^\tau) - \phi_{r,h}^\tau (x_h^\tau,a_h^\tau) ^\top w\right)^2+\lambda\norm{w}^2
\end{align}
where 
\begin{subequations}
\begin{eqnarray}
&\phi_{r,h}^\tau(\cdot,\cdot)\coloneqq \int_{\mathcal{S}} \psi(\cdot,\cdot,x^\prime) V_{r,h+1}^\tau (x^\prime) dx^\prime \\
& V_{r,h+1}^\tau (\cdot)=\inner{Q_{r,h+1}^\tau (\cdot,\cdot)}{\pi_{h+1^\tau(\cdot\mid\cdot)}}_{\mathcal{A}}
\end{eqnarray}
\end{subequations}
for all $h\in[H-1]$ and $V_{r,H+1}^\tau=0$, and $\lambda>0$ is the regularization parameter. 

Similarly, we estimate $\mathbb{P}^m_h V_{g,h+1}^m$ by $(\phi_{g,h}^m)^\top w_{g,h}^m$. We display the least-squares solution in lines 3-5 of Algorithm \ref{alg:algoirthm 2} where the symbol $\diamond$ denotes $r \text{ or } g$. 
In addition, since we consider the bandit reward/utility feedback in the linear function approximation setting, we also need to estimate $r_h^m(\cdot,\cdot)$ by $\left(\varphi^m(\cdot,\cdot)\right)^\top u_{r,h}^m$, where $u_{r,h}^m$ is updated by the minimizer of another regularized least-squares problem,
\begin{align}\label{eq: regression u}
    \sum_{\tau=\ell_Q^m}^{m-1}\left(r_h^\tau(x_h^\tau, a_h^\tau) - \left(\varphi^\tau(x_h^\tau, a_h^\tau)\right)^\top u\right)^2+\lambda\norm{u}_2^2
\end{align}
where $\lambda$ is the regularization parameter. Similarly, we estimate $g^m_h (\cdot,\cdot)$ by $ \left(\varphi^m(\cdot,\cdot)\right)^\top u_{g,h}^m$. The least-squares solutions
lead to lines 8-9 of Algorithm \ref{alg:algoirthm 2}.

\begin{algorithm}[H]
   \caption{Least-Squares Temporal Difference with UCB exploration (LSTD)}
   \label{alg:algoirthm 2}
\begin{algorithmic}[1]
 \STATE {\bfseries Inputs:} $\{x_h^\tau, a_h^\tau, r_h^\tau(x_h^\tau,a_h^\tau), g_h^\tau(x_h^\tau, a_h^\tau)\}_{h=1,\tau=\ell_Q^m}^{H,m}$, regularization parameter $\lambda$, UCB parameter $\beta$, local variation budgets $B_{\mathbb{P},\mathcal{E}}, B_{g,\mathcal{E}}$.
    \FOR{ $h=H, H-1,\ldots,1$}
    \STATE $\Lambda_{\diamond, h}^{m}=\sum_{\tau=\ell_Q^m}^{m-1} \phi_{\diamond, h}^{\tau}\left(x_{h}^{\tau}, a_{h}^{\tau}\right) \phi_{\diamond, h}^{\tau}\left(x_{h}^{\tau}, a_{h}^{\tau}\right)^{\top}+\lambda I$.
    \STATE $w_{\diamond, h}^{m}=\left(\Lambda_{\diamond, h}^{m}\right)^{-1} \sum_{\tau=\ell_Q^m}^{m-1} \phi_{\diamond, h}^{\tau}\left(x_{h}^{\tau}, a_{h}^{\tau}\right) V_{\diamond, h+1}^{\tau}\left(x_{h+1}^{\tau}\right)$. 
    \STATE $\phi_{\diamond, h}^{m}(\cdot, \cdot)=\int_{\mathcal{S}} \psi\left(\cdot, \cdot, x^{\prime}\right) V_{\diamond, h+1}^{m}\left(x^{\prime}\right) d x^{\prime} $. 
    \STATE $\Gamma_{\diamond, h}^{m}(\cdot, \cdot)=\beta\left(\phi_{\diamond, h}^{m}(\cdot, \cdot)^{\top}\left(\Lambda_{\diamond, h}^{m}\right)^{-1} \phi_{\diamond, h}^{m}(\cdot, \cdot)\right)^{1 / 2}$.
    \STATE 
    $LV=\begin{cases} B_{\mathbb{P},\mathcal{E}} H^2 d_1\sqrt{d_1 W} +B_{g,\mathcal{E}}\sqrt{d_2 W}, \\ \hspace{2cm} \text{Under Assumption \ref{ass: local budget}},\\
    0,  \\
    \hspace{2cm} \text{Under Assumption \ref{ass: Feasibility}.} \end{cases}$
    \STATE $\Lambda_{h}^{m}=\sum_{\tau=\ell_Q^m}^{m-1} \varphi\left(x_{h}^{\tau}, a_{h}^{\tau}\right) \varphi\left(x_{h}^{\tau}, a_{h}^{\tau}\right)^{\top}+\lambda I$ . 
    \STATE $u_{\diamond, h}^{m}=\left(\Lambda_{h}^{m}\right)^{-1} \sum_{\tau=\ell_Q^m}^{m-1} \varphi\left(x_{h}^{\tau}, a_{h}^{\tau}\right) \diamond_{h}^m\left(x_{h}^{\tau}, a_{h}^{\tau}\right) $. 
    \STATE $\Gamma_{h}^{m}(\cdot, \cdot)=\beta\left(\varphi(\cdot, \cdot)^{\top}\left(\Lambda_{h}^{m}\right)^{-1} \varphi(\cdot, \cdot)\right)^{1 / 2}$ . 
    \STATE $Q_{r, h}^{m}(\cdot, \cdot)=\min \left(H-h+1, \varphi(\cdot, \cdot)^{\top} u_{r, h}^{m} + \right.$\\
    $\left. \phi_{r, h}^m(\cdot, \cdot)^\top w_{r, h}^{m} +\left(\Gamma_{h}^{m}+\Gamma_{r, h}^{m}\right)(\cdot, \cdot)\right)_+$,\\
    $Q_{g, h}^{m}(\cdot, \cdot)=\min \left(H-h+1,\varphi(\cdot, \cdot)^{\top} u_{g, h}^{m} + \right.$  \\  $\left. \phi_{g, h}^m(\cdot, \cdot)^\top w_{g, h}^{m} +\left(\Gamma_{h}^{m}+\Gamma_{g, h}^{m}\right)(\cdot, \cdot)+LV\right)_+$. 
    \STATE $V_{\diamond, h}^{m}(\cdot)=\left\langle Q_{\diamond, h}^{m}(\cdot, \cdot), \pi_h^m(\cdot\mid \cdot) \right\rangle^{+}_\mathcal{A}$ .
    \ENDFOR
   \STATE {\bfseries Output:}  $\{Q_{r,h}^m, Q_{g,h}^m\}_{h=1}^H$ and $V_{g,1}^{m}$.
\end{algorithmic}
\end{algorithm}

\subsection{Policy evaluation algorithm for tabular MDP setting}
\subsubsection{Tabular case of Assumption \ref{ass: linear fun approx}}
A special case of the linear CMDP in Assumption \ref{ass: linear fun approx} is the tabular CMDP with $|\mathcal{S}|<\infty$ and $|\mathcal{A}|<\infty$. We take the following feature maps and parameter vectors:
\begin{align*}
   & d_1=|\mathcal{S}|^2|\mathcal{A}|, \ \psi(x,a,x^\prime)=\boldsymbol{e}_{x,a,x'} \in \mathbb{R}^{d_1},  \\ 
   & \theta_h^m =\mathbb{P}_h^m(\cdot,\cdot,\cdot) \in \mathbb{R}^{d_1},  d_2=|\mathcal{S}||\mathcal{A}|, \varphi(x,a)=\boldsymbol{e}_{x,a} \in \mathbb{R}^{d_2}, \\
   & \theta_{r,h}^m =r_h^m(\cdot,\cdot) \in \mathbb{R}^{d_2}, \ \theta_{g,h}^m =g_h^m(\cdot,\cdot) \in \mathbb{R}^{d_2}
\end{align*}
where $\mathbf{e}_{\left(x, a, x^{\prime}\right)}$ is a canonical basis of $\mathbb{R}^{d_{1}}$ associated with $\left(x, a, x^{\prime}\right)$ and the notation $\theta_{h}^m=\mathbb{P}_{h}^m(\cdot, \cdot, \cdot)$ means that the $\left(x, a, x^{\prime}\right)$-th entry of $\theta_{h}$ is $\mathbb{P}^m\left(x^{\prime} \mid x, a\right)$ for every $\left(x, a, x^{\prime}\right) \in \mathcal{S} \times \mathcal{A} \times \mathcal{S}$; Similarly we define $\mathbf{e}_{(x, a)}, \theta^m_{r, h}$, and $\theta^m_{g, h} .$ Thus, 
$$
\begin{aligned}
&\mathbb{P}^m_{h}\left(x^{\prime} \mid x, a\right)=\left\langle\psi\left(x, a, x^{\prime}\right), \theta^m_{h}\right\rangle, \\
& \text { for every }\left(x, a, x^{\prime}\right) \in \mathcal{S} \times \mathcal{A} \times \mathcal{S}; \\
& r^m_{h}(x, a)=\left\langle\varphi(x, a), \theta^m_{r, h}\right\rangle \text { and } g^m_{h}(x, a)=\left\langle\varphi(x, a), \theta^m_{g, h}\right\rangle, \\
&\text { for every }(x, a) \in \mathcal{S} \times \mathcal{A}.
\end{aligned}
$$

One can also verify that
$$
\begin{aligned}
&\left\|\theta_{h}^m\right\|=\left(\sum_{\left(x, a, x^{\prime}\right)}\left|\mathbb{P}^m_{h}\left(x^{\prime} \mid x, a\right)\right|^{2}\right)^{1 / 2} \leq \sqrt{|\mathcal{S}|^{2}|\mathcal{A}|}=\sqrt{d_{1}}, \\
&\left\|\theta_{r, h}^m\right\|=\left(\sum_{(x, a)}\left(r^m_{h}(x, a)\right)^{2}\right)^{1 / 2} \leq \sqrt{|\mathcal{S}||\mathcal{A}|}=\sqrt{d_{2}}, \\
&\left\|\theta_{g, h}^m\right\|=\left(\sum_{(x, a)}\left(g^m_{h}(x, a)\right)^{2}\right)^{1 / 2} \leq \sqrt{|\mathcal{S}||\mathcal{A}|}=\sqrt{d_{2}}
\end{aligned}
$$
and for every $V: \mathcal{S} \rightarrow[0, H]$ and  $(x, a) \in \mathcal{S} \times \mathcal{A}$, we have
\begin{align*}
\left\|\sum_{x^{\prime} \in \mathcal{S}} \psi\left(x, a, x^{\prime}\right) V\left(x^{\prime}\right)\right\|=&\left(\sum_{x^{\prime} \in \mathcal{S}}\left(V\left(x^{\prime}\right)\right)^{2}\right)^{1 / 2} \\
&\leq \sqrt{|\mathcal{S}|} H \leq \sqrt{d_{1}} H .
\end{align*}
Therefore, the tabular CMDP is a special case of Assumption \ref{ass: linear fun approx} with $d:=\max \left(d_{1}, d_{2}\right)=|\mathcal{S}|^{2}|\mathcal{A}|$.

\subsubsection{Tabular Case of Algorithm \ref{alg:algoirthm 1} and Algorithm \ref{alg:algoirthm 2}}

In the tabular case, the improved results can be obtain by incorporating Algorithm \ref{alg:algoirthm 1} with a variant of the optimistic policy evaluation method in Algorithm \ref{alg:algoirthm 2}. We refer the read to Algorithm \ref{alg:algoirthm 3} for such procedures and explain the details below.

We first review some notations for the reader's  convenience. For every $(h, m) \in[H] \times[M]$, every $\left(x, a, x^{\prime}\right) \in \mathcal{S} \times \mathcal{A} \times \mathcal{S}$, $(x, a) \in \mathcal{S} \times \mathcal{A}$, we define two visitation counters $n_{h}^m\left(x, a, x^{\prime}\right)$ and $n_{h}^m(x, a)$ at step $h$ in episode $m$ as follows:
\begin{subequations}
\begin{eqnarray}\label{eq: n h m}
&\hspace{-0.6cm} n_{h}^m\left(x, a, x^{\prime}\right)=\sum_{\tau=\ell^m_Q}^{m-1} 1\left\{\left(x, a, x^{\prime}\right)=\left(x_{h}^{\tau}, a_{h}^{\tau}, a_{h+1}^{\tau}\right)\right\} \\
& n_{h}^m(x, a)=\sum_{\tau=\ell^m_Q}^{m-1} 1\left\{(x, a)=\left(x_{h}^{\tau}, a_{h}^{\tau}\right)\right\} . 
\end{eqnarray}
\end{subequations}
This allows us to estimate the transition kernel $\mathbb{P}_{h}^m$, reward function $r^m$, and utility function $g^m$ for episode $m$ by
\begin{subequations}
\begin{eqnarray}
&\widehat{\mathbb{P}}_{h}^m\left(x^{\prime} \mid x, a\right)=\frac{n_{h}^m\left(x, a, x^{\prime}\right)}{n_{h}^m(x, a)+\lambda}, \\
\nonumber & \hspace{3cm}\text { for all }\left(x, a, x^{\prime}\right) \in \mathcal{S} \times \mathcal{A} \times \mathcal{S}  \label{eq: hat P}\\
&\widehat{\diamond}_{h}^m(x, a)=\frac{\sum_{\tau=\ell^m_Q}^{m-1} 1\left\{(x, a)=\left(x_{h}^{\tau}, a_{h}^{\tau}\right)\right\} \diamond^\tau_{h}\left(x_{h}^{\tau}, a_{h}^{\tau}\right)}{n_{h}^m(x, a)+\lambda}, \\
\nonumber  & \hspace{3cm}\text { for all }(x, a) \in \mathcal{S} \times \mathcal{A},  \diamond = r \text{ or } g \label{eq: hat diamond} 
\end{eqnarray}
\end{subequations}

where $\lambda>0$ is the regularization parameter. Moreover, we introduce the bonus term $\Gamma_{h}^m: \mathcal{S} \times \mathcal{A} \rightarrow \mathbb{R}$ as
\begin{align*}
\Gamma_{h}^m(x, a)&=\beta\left(n_{h}^m(x, a)+\lambda\right)^{-1 / 2},
\end{align*}
which adapts the counter-based bonus terms in the literature \cite{azar2017minimax,jin2018q}, where $\beta>0$ is to be determined later. In addition, the local variation budget term is defined as
$$LV=\begin{cases} B_{\mathbb{P},\mathcal{E}} H  +B_{g,\mathcal{E}}, \quad \text{Under Assumption \ref{ass: local budget}},\\
    0,  \hspace{2.29cm} \text{Under Assumption \ref{ass: Feasibility}.} \end{cases}$$
    
Using the estimated transition kernels $\left\{\widehat{\mathbb{P}}_{h}^m\right\}_{h=1}^{H}$, the estimated reward/utility functions $\left\{\widehat{r}_{h}^m, \widehat{g}_{h}^m\right\}_{h=1}^{H}$, and the bonus terms $\left\{\Gamma_{h}^m\right\}_{h=1}^{H}$, one can estimate the action-value function via
\begin{align*}
&Q_{r, h}^m(x, a)=\min \left(\widehat{r}_{h}^m(x, a)+\sum_{x^{\prime} \in \mathcal{S}} \widehat{\mathbb{P}}_{h}\left(x^{\prime} \mid x, a\right) V_{r, h+1}^m\left(x^{\prime}\right) \right. \\
&\left.+ \Gamma_{h}^m(x, a) +\Gamma_{h,r}^m(x, a) , H-h+1\right)_{+},\\
&Q_{g, h}^m(x, a)=\min \left(\widehat{g}_{h}^m(x, a)+\sum_{x^{\prime} \in \mathcal{S}} \widehat{\mathbb{P}}_{h}\left(x^{\prime} \mid x, a\right) V_{g, h+1}^m\left(x^{\prime}\right)\right.\\
& \left. + \Gamma_{h}^m(x, a) +\Gamma_{h,g}^m(x, a)+LV , H-h+1\right)_{+},\\
\end{align*}
for every $(x, a) \in \mathcal{S} \times \mathcal{A}$, where $\diamond=r$ or $g .$ Thus, $V_{\diamond, h}^m(x)=\left\langle Q_{\diamond, h}^m(x, \cdot), \pi_{h}^m(\cdot \mid x)\right\rangle_{\mathcal{A}} .$ We summarize the above procedure in Algorithm \ref{alg:algoirthm 3}. Using the already estimated values $\left\{Q_{r, h}^m(\cdot, \cdot), Q_{g, h}^m(\cdot, \cdot)\right\}_{h=1}^{H}$ and $V_{g,1}^{m}$, we can execute the policy improvement and the dual update in Algorithm \ref{alg:algoirthm 1}.

\begin{algorithm}[tb]
   \caption{Optimistic Policy Evaluation (OPE)}
   \label{alg:algoirthm 3}
\begin{algorithmic}[1]
 \STATE {\bfseries Inputs:} $\{x_h^\tau, a_h^\tau, r_h^\tau(x_h^\tau,a_h^\tau), g_h^\tau(x_h^\tau, a_h^\tau)\}_{h=1,\tau=\ell_Q^m}^{H,m}$, regularization parameter $\lambda$, UCB parameter $\beta$, local variation budgets $B_{\mathbb{P},\mathcal{E}}, B_{g,\mathcal{E}}$.
\FOR{ $h=H, H-1,\ldots,1$}
\STATE Compute counters $n_{h}^{m}\left(x, a, x^{\prime}\right)$ and $n_{h}^{m}(x, a)$ via \eqref{eq: n h m} for all $\left(x, a, x^{\prime}\right) \in \mathcal{S} \times \mathcal{A} \times \mathcal{S}$ and $(x, a) \in$ $\mathcal{S} \times \mathcal{A}$.
\STATE  Estimate reward/utility functions $\widehat{r}_{h}^{m}, \widehat{g}_{h}^{m}$ via \eqref{eq: hat diamond} for all $(x, a) \in \mathcal{S} \times \mathcal{A}$.
\STATE Estimate transition $\widehat{\mathbb{P}}_{h}^{m}$ via \eqref{eq: hat P} for all $\left(x, a, x^{\prime}\right) \in \mathcal{S} \times \mathcal{A} \times \mathcal{S}$, and take bonus $\Gamma_{h}^{m}=\beta\left(n_{h}^{m}(x, a)+\lambda\right)^{-1 / 2}$ for all $(x, a) \in \mathcal{S} \times \mathcal{A} .$
\STATE 
    $LV=\begin{cases} B_{\mathbb{P},\mathcal{E}} H  +B_{g,\mathcal{E}}, \quad \text{Under Assumption \ref{ass: local budget}},\\
    0,  \hspace{2.29cm} \text{Under Assumption \ref{ass: Feasibility}.} \end{cases}$
\STATE $ Q_{r, h}^{m}(\cdot, \cdot)=\min \left(H-h+1, \widehat{r}_{h}^{m}(\cdot, \cdot)+\right.$
$\left.\sum_{x^{\prime} \in \mathcal{S}} \widehat{\mathbb{P}}_{h}\left(x^{\prime} \mid \cdot, \cdot\right) V_{r, h+1}^{m}\left(x^{\prime}\right)+2\Gamma_{h}^{m}(\cdot, \cdot) \right)_+$, \\
$ Q_{g, h}^{m}(\cdot, \cdot)=\min \left(H-h+1, \widehat{g}_{h}^{m}(\cdot, \cdot)+\right.$
$\left.\sum_{x^{\prime} \in \mathcal{S}} \widehat{\mathbb{P}}_{h}\left(x^{\prime} \mid \cdot, \cdot\right) V_{g, h+1}^{m}\left(x^{\prime}\right)+2\Gamma_{h}^{m}(\cdot, \cdot) +LV\right)_+$.
\STATE $V_{\diamond, h}^{m}(\cdot)=\left\langle Q_{\diamond, h}^{m}(\cdot, \cdot), \pi_{h}^{m}(\cdot \mid \cdot)\right\rangle_{\mathcal{A}} .$
    \ENDFOR
   \STATE {\bfseries Output:}  $\{Q_{r,h}^m, Q_{g,h}^m\}_{h=1}^H$ and $V_{g,1}^{m}$.
\end{algorithmic}
\end{algorithm}
\section{Proof for linear kernel CMDP case under Assumption \ref{ass: local budget}}\label{sec:appe-linear MDP under local budget}

\subsection{Proof of dynamic regret bound in Theorem \ref{thm: Linear Kernal MDP under local budget}}\label{sec:appe-regret bound}
Our analysis for the dynamic regret begins with the decomposition of the regret given in \eqref{eq: d regret}:
\begin{lemma}[Dynamic regret decomposition]
The dynamic regret in \eqref{eq: d regret} can be expanded as
\begin{align*}
&\operatorname{DR}(M)\\
=&\sum_{m=1}^M\sum_{h=1}^H \EE_{{\pi}^{\star,\ell_\pi^m}, \mathbb{P}^{\ell_\pi^m}} \left[ \inner{Q_{r,h}^m(x_h,\cdot)}{{\pi}_h^{\star,m}(\cdot\mid x_h) - \pi_h^m(\cdot\mid x_h)}\right] \\
&+\sum_{m=1}^M\sum_{h=1}^H \left( \EE_{{\pi}^{\star,m}, \mathbb{P}^m} - \EE_{{\pi}^{\star,\ell_\pi^m}, \mathbb{P}^{\ell_\pi^m}}  \right) \\
&\hspace{2cm} \left[ \inner{Q_{r,h}^m(x_h,\cdot)}{{\pi}_h^{\star,m}(\cdot\mid x_h) - \pi_h^m(\cdot\mid x_h)}\right] \\
& +\sum_{m=1}^M\sum_{h=1}^H \EE_{{\pi}^{\star,m}, \mathbb{P}^m} \left[ \iota_{r, h}^{m}(x_h,a_h) \right] \\
&\hspace{2cm} - \sum_{m=1}^{M} \sum_{h=1}^{H} \iota_{r, h}^{m}\left(x_{h}^{m}, a_{h}^{m}\right)+S_{r, H, 2}^{M}
\end{align*}
where $\left\{S_{r, h, k}^{m}\right\}_{(m, h, k) \in[M] \times[H] \times[2]}$ is a martingale.
\end{lemma}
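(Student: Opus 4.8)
The plan is to reduce the stated identity to one application of an extended value difference lemma against the comparator $\pi^{\star,m}$, one telescoping expansion along the executed trajectory, and a final ``add-and-subtract'' of the epoch-start distribution. Write $V_{r,h}^m$, $Q_{r,h}^m$ for the optimistic estimates returned by the policy-evaluation step, so that $V_{r,h}^m(x)=\inner{Q_{r,h}^m(x,\cdot)}{\pi_h^m(\cdot\mid x)}$, and define the reward model prediction error by
\begin{align*}
\iota_{r,h}^m(x,a)\coloneqq r_h^m(x,a)+\mathbb{P}_h^m V_{r,h+1}^m(x,a)-Q_{r,h}^m(x,a).
\end{align*}
For each episode I insert $V_{r,1}^{m}$ and split the per-episode regret as
\begin{align*}
V_{r,1}^{\pi^{\star,m},m}-V_{r,1}^{\pi^m,m}=\left(V_{r,1}^{\pi^{\star,m},m}-V_{r,1}^{m}\right)+\left(V_{r,1}^{m}-V_{r,1}^{\pi^m,m}\right),
\end{align*}
and treat the two brackets by different tools.

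First I would handle the left bracket with the extended value difference lemma (the variant of the performance-difference lemma of \cite{kakade2002approximately} recalled in Appendix \nameref{appe: policy update}; see also \cite{cai2020provably}), taking the trajectory law under $(\pi^{\star,m},\mathbb{P}^m)$ and the estimate built from $\pi^m$. Since $V_{r,h}^m$ is generated by $\pi^m$, this produces exactly the mirror-descent term $\sum_{h}\EE_{\pi^{\star,m},\mathbb{P}^m}\inner{Q_{r,h}^m(x_h,\cdot)}{\pi_h^{\star,m}-\pi_h^m}$ together with the on-policy error $\sum_h\EE_{\pi^{\star,m},\mathbb{P}^m}[\iota_{r,h}^m]$, both kept in expectation form, so no martingale is needed here.

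For the right bracket I would telescope both $V_{r,1}^m$ and $V_{r,1}^{\pi^m,m}$ along the executed trajectory $\{(x_h^m,a_h^m)\}_{h=1}^H$, using the estimated Bellman recursion for $V_{r,h}^m$ and the exact Bellman recursion for the true $V_{r,h}^{\pi^m,m}$. The realized rewards cancel between the two telescopes, leaving $-\sum_{h}\iota_{r,h}^m(x_h^m,a_h^m)$ plus, at each step, two martingale-difference terms: an action-sampling part from replacing $\inner{Q_{r,h}^m(x_h^m,\cdot)-Q_{r,h}^{\pi^m,m}(x_h^m,\cdot)}{\pi_h^m}$ by its value at $a_h^m$, and a transition-sampling part from replacing $\mathbb{P}_h^m(V_{r,h+1}^m-V_{r,h+1}^{\pi^m,m})(x_h^m,a_h^m)$ by $(V_{r,h+1}^m-V_{r,h+1}^{\pi^m,m})(x_{h+1}^m)$. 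These two legs account for the index $k\in[2]$, and their accumulation over $(m,h)$ is precisely $S_{r,H,2}^M$. Summing the two brackets over $m$ gives $\operatorname{DR}(M)=\sum_{m,h}\EE_{\pi^{\star,m},\mathbb{P}^m}\inner{Q_{r,h}^m}{\pi_h^{\star,m}-\pi_h^m}+\sum_{m,h}\EE_{\pi^{\star,m},\mathbb{P}^m}[\iota_{r,h}^m]-\sum_{m,h}\iota_{r,h}^m(x_h^m,a_h^m)+S_{r,H,2}^M$.

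Finally, to obtain the first two lines of the claim I would add and subtract the expectation under the epoch-start law $(\pi^{\star,\ell_\pi^m},\mathbb{P}^{\ell_\pi^m})$ inside the mirror-descent term, splitting it into the fixed-law piece (which is constant across an epoch and will later be bounded by the online-mirror-descent analysis) and the drift piece $\sum_{m,h}(\EE_{\pi^{\star,m},\mathbb{P}^m}-\EE_{\pi^{\star,\ell_\pi^m},\mathbb{P}^{\ell_\pi^m}})\inner{Q_{r,h}^m}{\pi_h^{\star,m}-\pi_h^m}$ (later controlled by the variation budgets). The main obstacle I anticipate is the bookkeeping for the martingale: one must fix the filtration so that $Q_{r,h}^m$ and $V_{r,h+1}^m$ are measurable before $(x_h^m,a_h^m,x_{h+1}^m)$ are drawn --- which holds because the restart window makes them depend only on trajectories from episodes $\ell_Q^m,\dots,m-1$ --- and then verify that each of the two per-step increments has zero conditional mean so that $\{S_{r,h,k}^m\}$ is genuinely a martingale. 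Keeping the two value difference identities with consistent sign conventions is also essential, so that the $\iota_{r,h}^m$ terms appear once in expectation form and once in realized form rather than cancelling.
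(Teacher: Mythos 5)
Your proposal is correct and follows essentially the same route as the paper: the same split via inserting $V_{r,1}^m$, the performance-difference-with-prediction-error expansion against $(\pi^{\star,m},\mathbb{P}^m)$ for the first bracket (Lemma \ref{lemma: expansion of V star m -V m}), the trajectory telescoping with the two per-step martingale increments $D_{r,h,1}^m, D_{r,h,2}^m$ for the second bracket (Lemma \ref{lemma: expansion of V m -V pi m}), and the final add-and-subtract of the epoch-start law $\EE_{\pi^{\star,\ell_\pi^m},\mathbb{P}^{\ell_\pi^m}}$. Your measurability remark about the restart window matches the paper's filtration construction $\{\mathcal{F}_{h,k}^m\}$.
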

\begin{proof}
We have
\begin{align} \label{eq: decomposed d regret}
\operatorname{DR}(M)=&
\underbrace{\sum_{m=1}^M\left(V_{r,1}^{{\pi}^{\star,m},m}(x_1)-V_{r,1}^{m}(x_1) \right)}_\text{(R.\rom{1})}\\ \nonumber &+\underbrace{\sum_{m=1}^M\left(V_{r,1}^{m}(x_1) -V_{r,1}^{\pi^m,m}(x_1)\right)}_\text{(R.\rom{2})},
\end{align}
where the policy $\pi^{\star,m}$ is the best policy in hindsight for the constrained optimization problem \eqref{eq: CMDP at episode m} at episode $m$;
the policy $\pi^m$ is the policy updated in line 10 of Algorithm \ref{alg:algoirthm 1};
$V_{r,1}^{\pi^{\star,m},m}, V_{r,1}^{\pi^m,m}(x_1)$ are the value functions corresponding to the policies $\pi^{\star,m}$ and $\pi^m$, and the value function  $V_{r,1}^{m}(x_1)$ is estimated from an optimistic policy evaluation by Algorithm \ref{alg:algoirthm 2}. To bound the total regret \eqref{eq: decomposed d regret}, we need to analyze the two terms \text{(R.\rom{1})} and \text{(R.\rom{2})} separately.

To analyze the first term (R.\rom{1}), we define the model prediction error for the reward at episode $m$ as 
\begin{align} \label{eq: model prediction error r}
   \iota_{r, h}^{m} \coloneqq r_h^m+ \mathbb{P}_h^m V_{r,h+1}^m - Q_{r,h}^m
\end{align}
for all $(m,h) \in [M] \times [H]$, which describes the error in the Bellman equation \eqref{eq: belmman equation} using $V_{r,h+1}^m$ instead of $V_{r,h+1}^{\pi^m,m}$ and using the sample estimation of $ \mathbb{P}_h^m$. With this notation, we expand the term (R.\rom{1}) in \eqref{eq: decomposed d regret} into 

\begin{align}  \label{eq: R.2}
\nonumber&\sum_{m=1}^M\sum_{h=1}^H \EE_{\pi^{\star,m}, \mathbb{P}^m} \left[ \inner{Q_{r,h}^m(x_h,\cdot)}{\pi_h^{\star,m}(\cdot\mid x_h) - \pi_h^m(\cdot\mid x_h)}\right]\\
&+\sum_{m=1}^M\sum_{h=1}^H \EE_{\pi^{\star,m}, \mathbb{P}^m} \left[ \iota_{r, h}^{m}(x_h,a_h) \right]
\end{align}
where the first double sum is linear in terms of the policy difference and the second one describes the total model prediction errors. The above expansion is proved in Lemma \ref{lemma: expansion of V star m -V m}. We can further decompose the first term in \eqref{eq: R.2} as

\begin{align}
\nonumber &\sum_{m=1}^M\sum_{h=1}^H \EE_{\pi^{\star,m}, \mathbb{P}^m} \left[ \inner{Q_{r,h}^m(x_h,\cdot)}{\pi_h^{\star,m}(\cdot\mid x_h) - \pi_h^m(\cdot\mid x_h)}\right]\\
\nonumber =& \sum_{m=1}^M\sum_{h=1}^H \EE_{\pi^{\star,\ell_\pi^m}, \mathbb{P}^{\ell_\pi^m}} \left[ \inner{Q_{r,h}^m(x_h,\cdot)}{\pi_h^{\star,m}(\cdot\mid x_h) - \pi_h^m(\cdot\mid x_h)}\right] \\
\nonumber&+\sum_{m=1}^M\sum_{h=1}^H \left( \EE_{\pi^{\star,m}, \mathbb{P}^m} - \EE_{\pi^{\star,\ell_\pi^m}, \mathbb{P}^{\ell_\pi^m}}  \right) \cdot \\
&\hspace{1cm}\left[ \inner{Q_{r,h}^m(x_h,\cdot)}{\pi_h^{\star,m}(\cdot\mid x_h) - \pi_h^m(\cdot\mid x_h)}\right] \label{eq: R.2 further decomp}.
\end{align}

To analyze the second term (R.\rom{2}) in \eqref{eq: decomposed d regret}, we will first introduce some notations. 
for every $(m,h)\in [M] \times [H]$, we define $\mathcal{F}_{h,1}^m$ as a $\sigma-$algebra generated by state-action sequences, reward and utility functions,
\begin{align*}
    \left\{ (x_i^\tau,a_i^\tau) \right\}_{(\tau,i)\in[m-1]\times[H]} \bigcup \left\{ (x_i^m,a_i^m) \right\}_{i\in[H]}.
\end{align*}

Similarly, we define $\mathcal{F}_{h,2}^m$ as an $\sigma-$algebra generated by
\begin{align*}
    \left\{ (x_i^\tau,a_i^\tau) \right\}_{(\tau,i)\in[m-1]\times[H]} \bigcup \left\{ (x_i^m,a_i^m) \right\}_{i\in[H]} \bigcup \left\{ x_{h+1}^m \right\}.
\end{align*}

Here, $x_{H+1}^m$ is a null state for every $m\in[M]$. A filtration is a sequence of $\sigma-$algebras $\{\mathcal{F}^m_{h,k}\}_{(k,h,m) \in [K] \times [H] \times [2]}$ in terms of the time index

\begin{align} \label{eq: definition t}
    t(m,h,k)\coloneqq 2(m-1)H+2(h-1)+k
\end{align}

such that ${F}^m_{h,k} \subset {F}^{m'}_{h',k'}$ for every $t(m,h,k) \leq t(m',h',k')$. The estimated reward/utility value functions $V_{r,h}^m, V_{g,h}^m$ and the associated Q-functions $Q_{r,h}^m, Q_{g,h}^m$ are $\mathcal{F}_{1,1}^m$  measurable since they are obtained from previous $m-1$ historical
trajectories. With these notations, we can expand the term (R.\rom{2}) in \eqref{eq: decomposed d regret} into 
\begin{align}  \label{eq: R.3}
-\sum_{m=1}^{M} \sum_{h=1}^{H} \iota_{r, h}^{m}\left(x_{h}^{m}, a_{h}^{m}\right)+S_{r, H, 2}^{M}
\end{align}
where $\left\{S_{r, h, k}^{m}\right\}_{(m, h, k) \in[M] \times[H] \times[2]}$ is a martingale adapted to the filtration $\left\{\mathcal{F}_{h, k}^{m}\right\}_{(m, h, k) \in[M] \times[H] \times[2]}$ in terms of the time index $t$. We define $S_{r, H, 2}^{M}$ and  prove \eqref{eq: R.3} in Lemma \ref{lemma: expansion of V m -V pi m}.
\end{proof}

In the following proofs,  we use the shorthand notation $\left\langle Q_{r, h}^{m-1}+\mu^{m-1} Q_{g, h}^{m-1}, \pi_{h}\right\rangle$ for $\left\langle\left(Q_{r, h}^{m-1}+\mu^{m-1} Q_{g, h}^{m-1}\right)\left(x_{h}, \cdot\right), \pi_{h}\left(\cdot \mid x_{h}\right)\right\rangle$
and the shorthand notation $D\left(\pi_{h} \mid \pi_{h}^{m-1}\right)$ for $D\left(\pi_{h}\left(\cdot \mid x_{h}\right) \mid \pi_{h}^{m-1}\left(\cdot \mid x_{h}\right)\right)$ 
if dependence on the state-action sequence $\left\{x_{h}, a_{h}\right\}_{h=1}^{H}$ is clear from the context.

\begin{lemma}[Primal step for dynamic regret] \label{lemma: policy Improvement: Primal-Dual Mirror Descent Step}
Let Assumption \ref{ass: linear fun approx} hold. For the primal update rule in line 10 of Algorithm \ref{alg:algoirthm 1}, we have
\begin{align}\label{eq: 22}
\nonumber & \sum_{h=1}^{H} \left\langle Q_{r, h}^{m-1}, \pi_{h}^{\star,m-1}-\pi_{h}^{m-1}\right\rangle\\
\nonumber &\leq-\mu^{m-1} \sum_{h=1}^{H}\left\langle Q_{g, h}^{m-1},\pi_{h}^{\star,m-1}-\pi_{h}^{m-1}\right\rangle + \frac{\alpha(1+\mu^{m-1})^{2} H^{2}}{2}\\
&+\frac{1}{\alpha} \sum_{h=1}^{H} \left[D\left(\pi_{h}^{\star,m-1} \mid \pi_{h}^{m-1}\right)-D\left(\pi_{h}^{\star,m-1} \mid \pi_{h}^{m}\right)\right] 
\end{align}
\end{lemma}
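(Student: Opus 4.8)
The plan is to recognize line~8 of Algorithm~\ref{alg:algoirthm 1} as a single step of KL-regularized online mirror descent and then to invoke the one-step descent lemma directly. Writing the composite gain as $c_h \coloneqq Q_{r,h}^{m-1}+\mu^{m-1}Q_{g,h}^{m-1}$, observe that the update $\pi_h^m(\cdot\mid\cdot)\propto \pi_h^{m-1}(\cdot\mid\cdot)\exp(\alpha\, c_h(\cdot,\cdot))$ is exactly the closed-form maximizer of the KL-regularized linear problem $\max_{\pi\in\Delta(\mathcal{A}\mid\mathcal{S},H)} \sum_{h=1}^H \inner{c_h}{\pi_h} - \tfrac1\alpha\sum_{h=1}^H D(\pi_h\mid\pi_h^{m-1})$. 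Since this objective is separable across $h$ and strongly concave in each $\pi_h$ relative to the entropic mirror map, the identification is immediate and lets me treat $\pi^m$ as the mirror-descent iterate with reference point $\pi^{m-1}$.

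First I would apply the one-step descent lemma (Lemma~\ref{lemma: One-step descent lemma}) with the comparator taken to be $\pi^{\star,m-1}$, the episode-$(m-1)$ optimal policy. This yields $\sum_{h=1}^H \inner{c_h}{\pi_h^{\star,m-1}-\pi_h^{m-1}} \le \tfrac1\alpha\sum_{h=1}^H[D(\pi_h^{\star,m-1}\mid\pi_h^{m-1}) - D(\pi_h^{\star,m-1}\mid\pi_h^m)] + \tfrac\alpha2 P$, where $P$ is the quadratic penalty produced by the lemma. The telescoping KL difference is precisely the last line of the claimed bound, so all that remains is to control $P$.

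The key estimate is the magnitude bound on the composite gain. Because the action-value estimates are truncated in the policy-evaluation step, we have $0\le Q_{r,h}^{m-1}, Q_{g,h}^{m-1}\le H-h+1\le H$ pointwise, and since $\mu^{m-1}\ge0$ the triangle inequality gives $\norm{c_h}_\infty \le \norm{Q_{r,h}^{m-1}}_\infty + \mu^{m-1}\norm{Q_{g,h}^{m-1}}_\infty \le (1+\mu^{m-1})H$. Feeding this into the penalty term of the one-step descent lemma produces $\tfrac\alpha2 P = \tfrac{\alpha(1+\mu^{m-1})^2H^2}{2}$. Finally I would split $c_h = Q_{r,h}^{m-1}+\mu^{m-1}Q_{g,h}^{m-1}$ inside the left-hand inner product and move the utility term $\mu^{m-1}\sum_{h=1}^H \inner{Q_{g,h}^{m-1}}{\pi_h^{\star,m-1}-\pi_h^{m-1}}$ to the right-hand side, which reproduces \eqref{eq: 22} verbatim.

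The main obstacle I anticipate is the bookkeeping of the quadratic penalty: a naive per-step application of the descent lemma followed by summation over the $H$ steps would accumulate an extra factor of $H$, yielding $H^3$ rather than the stated $H^2$. The precise form of Lemma~\ref{lemma: One-step descent lemma}, applied jointly over the horizon in the product simplex $\Delta(\mathcal{A}\mid\mathcal{S},H)$ so that the gain magnitude $\max_h\norm{c_h}_\infty\le(1+\mu^{m-1})H$ enters the penalty only once, is what keeps the final constant at $\tfrac{\alpha(1+\mu^{m-1})^2H^2}{2}$; verifying that this aggregated form of the lemma indeed applies to the separable entropic update is the delicate point. A secondary check is that the filtration structure does not interfere here, since \eqref{eq: 22} is a pathwise statement along the realized sequence $\{x_h\}_{h=1}^H$ and carries no expectation.
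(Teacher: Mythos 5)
Your approach is the same as the paper's: the paper's entire proof is one sentence, citing the one-step descent lemma (Lemma~\ref{lemma: One-step descent lemma}) together with the range bound $Q_{r,h}^{m-1}+\mu^{m-1}Q_{g,h}^{m-1}\in[0,(1+\mu^{m-1})H]$ and then splitting the composite gain --- precisely the first three steps of your plan.

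The one place you go beyond the paper is the ``delicate point'' you flag at the end, and you are right to flag it. Lemma~\ref{lemma: One-step descent lemma} is a per-state, per-step statement: for a single simplex and a gain bounded by $C$ it yields the penalty $\tfrac{\alpha}{2}C^2$. Applying it at each $h\in[H]$ with $C=(1+\mu^{m-1})H$ and summing produces $\tfrac{\alpha}{2}(1+\mu^{m-1})^2H^3$, not the $\tfrac{\alpha}{2}(1+\mu^{m-1})^2H^2$ appearing in \eqref{eq: 22}. The ``aggregated over the product simplex'' form of the lemma that you invoke to avoid the extra factor of $H$ is not supplied by the paper and does not follow from the entropic mirror-map geometry: the strong-convexity (Pinsker-type) argument underlying the descent lemma is block-separable, so each of the $H$ blocks contributes its own quadratic penalty and the factor of $H$ coming from the summation over $h$ cannot be absorbed. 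The paper's own proof is exactly the naive per-step application you were worried about, so your derivation reproduces the paper's argument, and the discrepancy you identified concerns the stated constant rather than a gap in your reasoning: from Lemma~\ref{lemma: One-step descent lemma} alone one obtains \eqref{eq: 22} with $H^3$ in place of $H^2$, and the $H^2$ version requires either a different (and unproven) aggregated descent lemma or a correction to the statement.
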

\begin{proof}
This result follows immediately from the "one-step descent" lemma in Lemma \ref{lemma: One-step descent lemma} and the fact $Q_{r, h}^{m-1} + \mu^{m-1} Q_{g, h}^{m-1} \in [0,(1+\mu^{m-1}) H]$.

\end{proof}


\begin{lemma}[Bound for the first term in \eqref{eq: R.2 further decomp}]\label{lemma: Bound for equation  R.2-1}
Let  Assumption \ref{ass: linear fun approx} hold. Then
\begin{align*}
&\sum_{m=1}^M\sum_{h=1}^H \EE_{{\pi}^{\star,\ell_\pi^m}, \mathbb{P}^{\ell_\pi^m}} \left[ \inner{Q_{r,h}^m}{{\pi}_h^{\star,m}- \pi_h^m(\cdot\mid x_h)}\right] \\
&\leq-\sum_{m=1}^M \mu^{m} \sum_{h=1}^{H}  \EE_{{\pi}^{\star,\ell_\pi^m}, \mathbb{P}^{\ell_\pi^m}} \left[\left\langle Q_{g, h}^{m}, {\pi}_{h}^{\star,m}-\pi_{h}^{m}\right\rangle  \right]\\
&+ \alpha H^{2} \sum_{m=1}^M(1+|\mu^{m}|^{2}) + \frac{1}{\alpha} H M L^{-1} \log{|\mathcal{A}|} +H^2L B_{\star}
\end{align*}
\end{lemma}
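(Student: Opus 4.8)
The plan is to invoke the primal step bound of Lemma~\ref{lemma: policy Improvement: Primal-Dual Mirror Descent Step} pointwise \emph{inside} the expectation operator $\EE_{\pi^{\star,\ell_\pi^m},\mathbb{P}^{\ell_\pi^m}}$, taking the comparator to be the dynamic optimal policy $\pi^{\star,m}$. Re-indexing that lemma by $m-1\mapsto m$ (so that it describes the mirror-descent step $\pi^m\mapsto\pi^{m+1}$, i.e. applying the one-step descent Lemma~\ref{lemma: One-step descent lemma} to this update) and peeling off the reward part of the combined action-value function via $Q_{r,h}^m+\mu^m Q_{g,h}^m\in[0,(1+\mu^m)H]$ yields, for each realized state-action sequence,
\begin{align*}
\sum_{h=1}^H \langle Q_{r,h}^m, \pi_h^{\star,m}-\pi_h^m\rangle \le{}& -\mu^m\sum_{h=1}^H\langle Q_{g,h}^m, \pi_h^{\star,m}-\pi_h^m\rangle + \frac{\alpha(1+\mu^m)^2H^2}{2} \\
&+\frac{1}{\alpha}\sum_{h=1}^H\big[D(\pi_h^{\star,m}\mid\pi_h^m)-D(\pi_h^{\star,m}\mid\pi_h^{m+1})\big].
\end{align*}
Taking $\EE_{\pi^{\star,\ell_\pi^m},\mathbb{P}^{\ell_\pi^m}}$, summing over $h$ and $m$, and using $(1+\mu^m)^2\le 2(1+|\mu^m|^2)$ reproduces verbatim the first two terms on the right-hand side of the claim. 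It then remains to show that the telescoping Kullback--Leibler (KL) term, with its $1/\alpha$ prefactor, is bounded by $\frac{1}{\alpha}HML^{-1}\log|\mathcal{A}|+H^2LB_\star$.

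Next I would process the KL term epoch by epoch. The crucial structural fact is that within a single epoch $\mathcal{E}$, i.e. for all $m$ with $\ell_\pi^m=\ell$ fixed, the measure $\EE_{\pi^{\star,\ell},\mathbb{P}^\ell}$ is constant, so all the KL's in that epoch are evaluated under a common state distribution and telescoping under the expectation is legitimate. Reindexing the second half of the telescope gives, for the epoch $[\ell,\ell+L-1]$,
\begin{align*}
\sum_{m=\ell}^{\ell+L-1}\big[D(\pi^{\star,m}\mid\pi^m)-D(\pi^{\star,m}\mid\pi^{m+1})\big]
= D(\pi^{\star,\ell}\mid\pi^\ell)-D(\pi^{\star,\ell+L-1}\mid\pi^{\ell+L})
+\sum_{m=\ell+1}^{\ell+L-1}\big[D(\pi^{\star,m}\mid\pi^m)-D(\pi^{\star,m-1}\mid\pi^m)\big].
\end{align*}
The trailing term is nonnegative and is dropped. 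For the leading term I would use the periodic restart: when $m=\ell_\pi^m$, line~5 of Algorithm~\ref{alg:algoirthm 1} resets $\pi^{m-1}$ to uniform and $Q^{m-1}$ to zero, so $\pi^\ell$ is exactly the uniform policy and hence $D(\pi_h^{\star,\ell}\mid\pi_h^\ell)\le\log|\mathcal{A}|$. Summing this over the $H$ steps and the $\ceil{M/L}$ epochs, and restoring the $1/\alpha$, produces precisely the $\frac{1}{\alpha}HML^{-1}\log|\mathcal{A}|$ term.

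The remaining cross terms $D(\pi^{\star,m}\mid\pi^m)-D(\pi^{\star,m-1}\mid\pi^m)$ are the source of the $B_\star$ bound and constitute the main obstacle. Here I would use the fact that the two divergences share the \emph{same} second argument $\pi^m$: writing $\log\pi_h^m(\cdot\mid x)=\alpha\sum_{\tau=\ell}^{m-1}(Q_{r,h}^\tau+\mu^\tau Q_{g,h}^\tau)(x,\cdot)-\log Z$, the normalizer $\log Z$ cancels because $\pi_h^{\star,m}-\pi_h^{\star,m-1}$ sums to zero over $\mathcal{A}$, leaving a quantity controlled by $\|\pi_h^{\star,m}(\cdot\mid x)-\pi_h^{\star,m-1}(\cdot\mid x)\|_1$ times the \emph{range} over actions of the accumulated exponent. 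This is exactly where the periodic restart is essential: it caps the accumulation at $L$ updates, and the clipping in Algorithm~\ref{alg:algoirthm 2} keeps each clipped action-value contribution of order $H$, so the range of $\log\pi_h^m$ is of order $\alpha L H^2$. With the $1/\alpha$ prefactor each cross term is then $O(H^2L\,\|\pi_h^{\star,m}-\pi_h^{\star,m-1}\|_1)$, and summing over $h$ and $m$ and recognizing the definition \eqref{eq: variation budget star} of $B_\star$ gives the final $H^2LB_\star$ contribution.

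The delicate point I expect to be hardest is precisely the control of the log-ratio range of the mirror-descent iterate $\pi^m$, since the effective reward $Q_{r,h}^\tau+\mu^\tau Q_{g,h}^\tau$ carries the dual variable $\mu^\tau$; one must verify carefully that the restart (limiting the window to $L$ episodes) together with the clipping prevents $\mu$ from inflating this range in a way that would spoil the clean $H^2LB_\star$ scaling. I would isolate this estimate as a standalone bound on $\sum_h|D(\pi_h^{\star,m}\mid\pi_h^m)-D(\pi_h^{\star,m-1}\mid\pi_h^m)|$ before assembling the three contributions.
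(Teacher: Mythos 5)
Your opening step is sound and coincides with the paper's: Lemma~\ref{lemma: policy Improvement: Primal-Dual Mirror Descent Step} (via the one-step descent Lemma~\ref{lemma: One-step descent lemma}) does hold for an arbitrary comparator, and taking $\EE_{\pi^{\star,\ell_\pi^m},\mathbb{P}^{\ell_\pi^m}}$ and summing reproduces the first two terms of the claim. Where you diverge is the treatment of the drifting comparator, and this is where the gap lies. The paper does \emph{not} telescope the KL terms with the moving comparator $\pi^{\star,m}$; it first splits $\pi^{\star,m}-\pi^m=(\pi^{\star,\ell_\pi^m}-\pi^m)+(\pi^{\star,m}-\pi^{\star,\ell_\pi^m})$ at the level of the inner product with $Q_{r,h}^m$. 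The second piece is disposed of by H\"older's inequality and $\|Q_{r,h}^m\|_\infty\le H$, giving $H\sum_{m,h}\max_x\|\pi_h^{\star,m}-\pi_h^{\star,\ell_\pi^m}\|_1\le H^2LB_\star$, and the first piece has an epoch-wise \emph{fixed} comparator so the KL sum telescopes cleanly to $D(\pi_h^{\star,(\mathcal{E}-1)L}\mid\pi_h^{(\mathcal{E}-1)L})\le\log|\mathcal{A}|$, yielding $\frac{1}{\alpha}HML^{-1}\log|\mathcal{A}|$ with no residual terms.

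Your route instead leaves the cross terms $D(\pi_h^{\star,m}\mid\pi_h^m)-D(\pi_h^{\star,m-1}\mid\pi_h^m)$, and the bound you sketch for them does not go through. First, the log-ratio range of $\pi_h^m$ is not $O(\alpha LH^2)$: unrolling line~8 of Algorithm~\ref{alg:algoirthm 1} gives an exponent $\alpha\sum_{\tau=\ell_\pi^m}^{m-1}(Q_{r,h}^\tau+\mu^\tau Q_{g,h}^\tau)$ whose range over actions is $O\bigl(\alpha\sum_\tau(1+\mu^\tau)H\bigr)=O\bigl(\alpha LH(1+\max_\tau\mu^\tau)\bigr)$. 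The present lemma is invoked under Assumption~\ref{ass: local budget}, where $\chi=\infty$ and the dual iterates carry no a priori bound, so the factor $\max_\tau\mu^\tau$ cannot be removed; the downstream regret analysis controls only $\sum_m(\mu^m)^2$ through the $-\xi$ regularization and cannot absorb a term of the form $HLB_\star\max_m\mu^m$. (Even under Assumption~\ref{ass: Feasibility} you would pay an extra $H/\gamma$.) Second, the cross term also contains the negative-entropy difference $\sum_a\bigl[\pi^{\star,m}\log\pi^{\star,m}-\pi^{\star,m-1}\log\pi^{\star,m-1}\bigr]$, which is not Lipschitz in $\|\pi^{\star,m}-\pi^{\star,m-1}\|_1$ (the derivative of $x\log x$ blows up at $0$, and optimal policies may place vanishing mass on some actions), so it is not controlled by $B_\star$ times a constant. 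Both obstructions disappear if you move the comparator drift out of the KL divergence and into the linear term, as the paper does; I would revise the proof to use that decomposition.
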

\begin{proof}
By further decomposing the first term in \eqref{eq: R.2 further decomp}, we obtain
\begin{align} \label{eq: R.2-1 decomposition}
\nonumber &\sum_{m=1}^M\sum_{h=1}^H \EE_{{\pi}^{\star,\ell_\pi^m}, \mathbb{P}^{\ell_\pi^m}} \left[ \inner{Q_{r,h}^m}{{\pi}_h^{\star,m}- \pi_h^m(\cdot\mid x_h)}\right] \\
\nonumber &= \sum_{m=1}^M\sum_{h=1}^H \EE_{{\pi}^{\star,\ell_\pi^m}, \mathbb{P}^{\ell_\pi^m}} \left[ \inner{Q_{r,h}^m}{{\pi}_h^{\star,\ell_\pi^m}- \pi_h^m(\cdot\mid x_h)}\right]\\
&+\sum_{m=1}^M\sum_{h=1}^H \EE_{{\pi}^{\star,\ell_\pi^m}, \mathbb{P}^{\ell_\pi^m}} \left[ \inner{Q_{r,h}^m}{{\pi}_h^{\star,m}- \pi_h^{\star,\ell_\pi^m}(\cdot\mid x_h)}\right]
\end{align}
where $\pi_h^{\star,\ell_\pi^m}$ is the optimal policy at episode $\ell_\pi^m$.

For the first term in \eqref{eq: R.2-1 decomposition}, we have
\begin{align*}
&\sum_{m=1}^M\sum_{h=1}^H \EE_{{\pi}^{\star,\ell_\pi^m}, \mathbb{P}^{\ell_\pi^m}} \left[ \inner{Q_{r,h}^m}{{\pi}_h^{\star,\ell_\pi^m}- \pi_h^m(\cdot\mid x_h)}\right] \\
\leq&-\sum_{m=1}^M \mu^{m} \sum_{h=1}^{H}  \EE_{{\pi}^{\star,\ell_\pi^m}, \mathbb{P}^{\ell_\pi^m}} \left[\left\langle Q_{g, h}^{m}, {\pi}_{h}^{\star,m}-\pi_{h}^{m}\right\rangle  \right]\\
&+ \alpha H^{2} \sum_{m=1}^M(1+|\mu^m|^2)\\
&+\frac{1}{\alpha}  \sum_{h=1}^{H} \sum_{\mathcal{E}=1}^{\ceil{\frac{M}{L}}}  \EE_{{\pi}^{\star,(\mathcal{E}-1)L}, \mathbb{P}^{(\mathcal{E}-1)L}} \left[ \sum_{m=(\mathcal{E}-1)L}^{\mathcal{E}L} D\left({\pi}_{h}^{\star,m} \mid {\pi}_{h}^{m}\right)\right.\\ 
&\left. -D\left({\pi}_{h}^{\star,m} \mid \pi_{h}^{m+1}\right)\right] \\
\leq&-\sum_{m=1}^M \mu^{m} \sum_{h=1}^{H}  \EE_{{\pi}^{\star,\ell_\pi^m}, \mathbb{P}^{\ell_\pi^m}} \left[\left\langle Q_{g, h}^{m}, {\pi}_{h}^{\star,m}-\pi_{h}^{m}\right\rangle  \right]\\
& +  \alpha H^{2} \sum_{m=1}^M(1+|\mu^m|^2)\\
&+\frac{1}{\alpha}  \sum_{h=1}^{H} \sum_{\mathcal{E}=1}^{\ceil{\frac{M}{L}}}  \EE_{{\pi}^{\star,(\mathcal{E}-1)L}, \mathbb{P}^{(\mathcal{E}-1)L}} \left[ D\left({\pi}_{h}^{\star,(\mathcal{E}-1)L} \mid {\pi}_{h}^{(\mathcal{E}-1)L}\right)\right] \\
\leq&-\sum_{m=1}^M \mu^{m} \sum_{h=1}^{H}  \EE_{{\pi}^{\star,\ell_\pi^m}, \mathbb{P}^{\ell_\pi^m}} \left[\left\langle Q_{g, h}^{m}, {\pi}_{h}^{\star,m}-\pi_{h}^{m}\right\rangle  \right]\\
& +  \alpha H^{2} \sum_{m=1}^M(1+|\mu^m|^2)+ \frac{1}{\alpha} H M L^{-1} \log{|\mathcal{A}|}
\end{align*}
where the first inequality follows from Lemma \ref{lemma: policy Improvement: Primal-Dual Mirror Descent Step} and the fact that $(1+|\mu^m|)^2\leq 2+2|\mu^m|^2$, the second inequality results from the telescoping, and the last inequality is due to
\begin{align*}
&D\left({\pi}_{h}^{\star,(\mathcal{E}-1)L} \mid {\pi}_{h}^{(\mathcal{E}-1)L}\right)\\
=& \sum_{a\in \mathcal{A}} {\pi}_{h}^{\star,(\mathcal{E}-1)L}   \cdot \log \left( |\mathcal{A}| \cdot {\pi}_{h}^{\star,(\mathcal{E}-1)L}\right) \leq \log |\mathcal{A}|. 
\end{align*}
For the second term in \eqref{eq: R.2-1 decomposition}, it holds that
\begin{align*}
&\sum_{m=1}^M\sum_{h=1}^H \EE_{{\pi}^{\star,\ell_\pi^m}, \mathbb{P}^{\ell_\pi^m}} \left[ \inner{Q_{r,h}^m}{{\pi}_h^{\star,m}(\cdot\mid x_h)- \pi_h^{\star,\ell_\pi^m}(\cdot\mid x_h)}\right]\\
\leq & \sum_{m=1}^M\sum_{h=1}^H \EE_{{\pi}^{\star,\ell_\pi^m}, \mathbb{P}^{\ell_\pi^m}} \left[ H \norm{ {\pi}_h^{\star,m}(\cdot\mid x_h)- \pi_h^{\star,\ell_\pi^m}(\cdot\mid x_h) }_1\right]\\
\leq & \sum_{m=1}^M\sum_{h=1}^H  H \max_{x_h \in \mathcal{S}} \norm{ {\pi}_h^{\star,m}(\cdot\mid x_h)- \pi_h^{\star,\ell_\pi^m}(\cdot\mid x_h) }_1 \\
\leq & \sum_{\mathcal{E}=1}^{\ceil{\frac{M}{L}}}\sum_{h=1}^H \sum_{m=(\mathcal{E}-1)L+1} ^{\mathcal{E}L} H B_{\star,\mathcal{E}}\\
\leq & H^2 L B_{\star}
\end{align*}
where the first policy holds by Holder's inequality and the fact that $\norm{Q_h^m(s,\cdot)}_\infty \leq H$, the third step  is due to the definition of $B_{\star,\mathcal{E}}=\sum_{m=(\mathcal{E}-1)L+1}^{\mathcal{E}L}\sum_{h=1}^H \norm{\pi_h^{\star,m}-\pi_h^{\star,m-1}}_\infty$ and the last inequality follows from the definition of $B_\star=\sum_{m=1}^M \sum_{h=1}^H \norm{\pi_h^{\star,m}-\pi_h^{\star,m-1}}_\infty$. This completes the proof.
\end{proof}

\begin{lemma}[Bound for the second term in \eqref{eq: R.2 further decomp}] \label{lemma: Bound for equation  R.2-2}
Let Assumption \ref{ass: linear fun approx} hold. Then
\begin{align*}
    & \sum_{m=1}^M\sum_{h=1}^H \left( \EE_{\pi^{\star,m}, \mathbb{P}^m} - \EE_{\pi^{\star,\ell_\pi^m}, \mathbb{P}^{\ell_\pi^m}}  \right) \cdot\\
    &\left[ \inner{Q_{r,h}^m(x_h,\cdot)}{\pi_h^{\star,m}(\cdot\mid x_h) - \pi_h^m(\cdot\mid x_h)}\right]
   \\
   &\leq
    2H^2L \left(\sqrt{d_1} B_\mathbb{P}+ B_\star \right).
\end{align*}
\end{lemma}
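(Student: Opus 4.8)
The plan is to control each summand by the total-variation distance between the two state-occupancy distributions at step $h$, and then to accumulate these distances within each epoch using the variation budgets. Write $f_h^m(x)\coloneqq \inner{Q_{r,h}^m(x,\cdot)}{\pi_h^{\star,m}(\cdot\mid x)-\pi_h^m(\cdot\mid x)}$, and observe that, by H\"older's inequality together with $\norm{Q_{r,h}^m(x,\cdot)}_\infty\leq H$ and $\norm{\pi_h^{\star,m}(\cdot\mid x)-\pi_h^m(\cdot\mid x)}_1\leq 2$, we have $\norm{f_h^m}_\infty\leq 2H$. Crucially, $f_h^m$ is a fixed (data-measurable) function of $x_h$ alone, so the only difference between the two expectations is the law of $x_h$. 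Let $p_h^m$ and $q_h^m$ denote the marginal distributions of $x_h$ induced by $(\pi^{\star,m},\mathbb{P}^m)$ and $(\pi^{\star,\ell_\pi^m},\mathbb{P}^{\ell_\pi^m})$, respectively, started from the fixed initial state $x_1$. The first step is then the bound
\begin{align*}
\left|\left(\EE_{\pi^{\star,m},\mathbb{P}^m}-\EE_{\pi^{\star,\ell_\pi^m},\mathbb{P}^{\ell_\pi^m}}\right)\left[f_h^m(x_h)\right]\right|\leq \norm{f_h^m}_\infty\,\norm{p_h^m-q_h^m}_1\leq 2H\norm{p_h^m-q_h^m}_1.
\end{align*}

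Next I would bound $\norm{p_h^m-q_h^m}_1$ by a telescoping (hybrid) argument over the $h-1$ preceding steps. Using the recursion $p_{h+1}^m(\cdot)=\sum_{x,a}p_h^m(x)\pi_h^{\star,m}(a\mid x)\mathbb{P}_h^m(\cdot\mid x,a)$ and its analogue for $q$, adding and subtracting the two intermediate products isolates a policy-mismatch term and a transition-mismatch term at each step; since applying a stochastic kernel is an $\ell_1$-contraction, an induction on $h$ yields
\begin{align*}
\norm{p_h^m-q_h^m}_1\leq \sum_{h'=1}^{h-1}\Big(\max_{x}\norm{\pi_{h'}^{\star,m}(\cdot\mid x)-\pi_{h'}^{\star,\ell_\pi^m}(\cdot\mid x)}_1+\max_{x,a}\norm{\mathbb{P}_{h'}^m(\cdot\mid x,a)-\mathbb{P}_{h'}^{\ell_\pi^m}(\cdot\mid x,a)}_1\Big).
\end{align*}

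To convert the transition mismatch into the budget $B_\mathbb{P}$, I would exploit the linear-kernel structure: by Assumption \ref{ass: linear fun approx}, $\mathbb{P}_{h'}^m(\cdot\mid x,a)-\mathbb{P}_{h'}^{\ell_\pi^m}(\cdot\mid x,a)=\inner{\psi(x,a,\cdot)}{\theta_{h'}^m-\theta_{h'}^{\ell_\pi^m}}$; integrating its absolute value against its sign, split into positive and negative parts (each valued in $[0,1]\subseteq[0,H]$ so that part (3) of the assumption applies after rescaling), and using Cauchy--Schwarz together with $\int(\mathbb{P}^m-\mathbb{P}^{\ell_\pi^m})\,dx'=0$ gives $\max_{x,a}\norm{\mathbb{P}_{h'}^m(\cdot\mid x,a)-\mathbb{P}_{h'}^{\ell_\pi^m}(\cdot\mid x,a)}_1\leq 2\sqrt{d_1}\norm{\theta_{h'}^m-\theta_{h'}^{\ell_\pi^m}}_2$. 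It then remains to sum. Summing over $h\in[H]$ weights each step-$h'$ term by a factor of at most $H$; summing over $m$ I would invoke the epoch double-counting estimate, namely within each epoch of length $L$ we have $\norm{\theta_{h'}^m-\theta_{h'}^{\ell_\pi^m}}_2\leq\sum_{\tau=\ell_\pi^m+1}^{m}\norm{\theta_{h'}^\tau-\theta_{h'}^{\tau-1}}_2$, so each consecutive difference is counted at most $L$ times, giving $\sum_{m}\sum_{h'}\norm{\theta_{h'}^m-\theta_{h'}^{\ell_\pi^m}}_2\leq L\,B_\mathbb{P}$ and, by the identical argument, $\sum_{m}\sum_{h'}\max_x\norm{\pi_{h'}^{\star,m}(\cdot\mid x)-\pi_{h'}^{\star,\ell_\pi^m}(\cdot\mid x)}_1\leq L\,B_\star$. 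Combining the pieces produces the claimed bound $2H^2L(\sqrt{d_1}B_\mathbb{P}+B_\star)$ up to the harmless absolute constants.

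The step I expect to be the main obstacle is making the hybrid argument airtight while keeping the $H$- and $L$-dependence tight: the telescoping must be arranged so that the policy and transition mismatches enter only through their per-step $\ell_1$ norms (with the kernel contraction absorbing the accumulated error), and the linear-kernel bound on the transition mismatch must be invoked with a test function valued in $[0,H]$ so that Assumption \ref{ass: linear fun approx}(3) genuinely applies. Reconciling the $\sqrt{d_1}$ factor and the constants exactly as stated, rather than off by the factor of two arising from the total-variation/$\ell_1$ conversion, is the most delicate bookkeeping, though it does not affect the order of the bound.
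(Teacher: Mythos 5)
Your proposal is correct and follows essentially the same route as the paper: the paper likewise bounds the integrand by $2H$ in sup norm, telescopes the difference of expectations over consecutive episodes within each length-$L$ epoch (so each consecutive difference is counted at most $L$ times), and controls each consecutive difference by a hybrid argument on the visitation measure that isolates per-step policy mismatches (in $\ell_1$) and transition mismatches (converted to $\sqrt{d_1}\norm{\theta_h^m-\theta_h^{m-1}}_2$ via the linear-kernel structure) --- this is precisely the content of Lemmas \ref{lemma: prob difference in pi} and \ref{lemma: probability difference in expectation}. The only differences are bookkeeping: you telescope over steps before episodes rather than after, and you carry an extra factor of $2$ on the transition term from the total-variation conversion, which you correctly flag as immaterial to the order of the bound.
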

\begin{proof}
We denote by $\mathbb{1}(x_h)$ the indicator function for state $x_h$. It holds that
\begin{align*}
& \sum_{m=1}^M\sum_{h=1}^H \left( \EE_{\pi^{\star,m}, \mathbb{P}^m} - \EE_{\pi^{\star,\ell_\pi^m}, \mathbb{P}^{\ell_\pi^m}}  \right)\cdot\\
&\left[ \inner{Q_{r,h}^m(x_h,\cdot)}{\pi_h^{\star,m}(\cdot\mid x_h) - \pi_h^m(\cdot\mid x_h)}\right] \\
\leq & \sum_{m=1}^M\sum_{h=1}^H \left( \EE_{\pi^{\star,m}, \mathbb{P}^m} - \EE_{\pi^{\star,\ell_\pi^m}, \mathbb{P}^{\ell_\pi^m}}  \right)\left[ 2 H \mathbb{1}(x_h)\right] \\
= & 2 H \sum_{\mathcal{E}=1}^{\ceil{\frac{M}{L}}} \sum_{m=(\mathcal{E}-1)L+1}^{\mathcal{E}L}\sum_{h=1}^H  \sum_{j=(\mathcal{E}-1)L+2}^m \\
&\left( \EE_{\pi^{\star,j}, \mathbb{P}^j} - \EE_{\pi^{\star,j-1}, \mathbb{P}^{j-1}}  \right)\left[  \mathbb{1}(x_h)\right] \\
\leq & 2 HL  \sum_{\mathcal{E}=1}^{\ceil{\frac{M}{L}}} \sum_{h=1}^H  \sum_{j=(\mathcal{E}-1)L+1}^{\mathcal{E}L}\left( \EE_{\pi^{\star,j}, \mathbb{P}^j} - \EE_{\pi^{\star,j-1}, \mathbb{P}^{j-1}}  \right)\left[  \mathbb{1}(x_h)\right] \\
\leq & 2 H^2L \left(\sqrt{d_1} B_\mathbb{P}+ B_\star \right)
\end{align*}
where the first step follows from $\left| \inner{Q_{r,h}^m(x_h,\cdot)}{\pi_h^{\star,m}(\cdot\mid x_h) - \pi_h^m(\cdot\mid x_h)}\right| \leq 2  H  \mathbb{1}(x_h)$, the third steps holds by telescoping, and the last step follows from Lemma \ref{lemma: probability difference in expectation}. This completes the proof.
\end{proof}

\begin{lemma}[Dual step for dynamic regret]  \label{lemma: dual step}
It holds that
\begin{align*}
  & -\sum_{m=1}^{M} \mu^{m}\left(V_{g, 1}^{\pi^{\star,m},m}\left(x_{1}\right)-V_{g, 1}^{m}\left(x_{1}\right)\right)  \\
 \leq &  {\eta H^{2}(M+1)} +\sum_{m=1}^{M+1} (\eta \xi^2-\xi) | \mu^{m-1}|^2.
\end{align*}
\end{lemma}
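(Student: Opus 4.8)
The plan is to read the dual update in line~14 of Algorithm~\ref{alg:algoirthm 1} as a projected online gradient-descent step on the $\mu$-component of the estimated regularized Lagrangian \eqref{eq: estimtaed langrangian equation regu}, and to run the standard one-step descent analysis against the fixed comparator $\mu=0\in[0,\chi]$. Concretely, writing the update as $\mu^m=\mathrm{Proj}_{[0,\chi]}\big(\mu^{m-1}-\eta\,g^m\big)$ with the (stale) dual gradient $g^m=V_{g,1}^{m-1}(x_1)-b_m+\xi\mu^{m-1}$, I would invoke the non-expansiveness of the Euclidean projection onto $[0,\chi]$ together with $0\in[0,\chi]$ to obtain, for each step,
\begin{align*}
g^m\,\mu^{m-1}\le \frac{1}{2\eta}\big((\mu^{m-1})^2-(\mu^m)^2\big)+\frac{\eta}{2}(g^m)^2 .
\end{align*}
Expanding $g^m\mu^{m-1}=\mu^{m-1}\big(V_{g,1}^{m-1}(x_1)-b_m\big)+\xi(\mu^{m-1})^2$ isolates exactly the cross term I ultimately want to control, namely a dual-weighted estimated constraint slack.

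Next I would bound the squared-gradient term using only boundedness of the value and the offset. Since both $V_{g,1}^{m-1}(x_1)$ and $b_m$ lie in $[0,H]$, we have $(V_{g,1}^{m-1}(x_1)-b_m)^2\le H^2$, so
\begin{align*}
(g^m)^2\le 2\big(V_{g,1}^{m-1}(x_1)-b_m\big)^2+2\xi^2(\mu^{m-1})^2\le 2H^2+2\xi^2(\mu^{m-1})^2 ,
\end{align*}
whence $\tfrac{\eta}{2}(g^m)^2\le \eta H^2+\eta\xi^2(\mu^{m-1})^2$. Substituting this back and moving the $\xi(\mu^{m-1})^2$ term from the left to the right, the two quadratic contributions merge into the coefficient $(\eta\xi^2-\xi)$, giving the per-step inequality
\begin{align*}
\mu^{m-1}\big(V_{g,1}^{m-1}(x_1)-b_m\big)\le \frac{1}{2\eta}\big((\mu^{m-1})^2-(\mu^m)^2\big)+(\eta\xi^2-\xi)(\mu^{m-1})^2+\eta H^2 .
\end{align*}

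Finally I would sum this over the dual steps, telescope the $\tfrac{1}{2\eta}\big((\mu^{m-1})^2-(\mu^m)^2\big)$ terms, use $\mu^0=0$ and discard the terminal $-(\mu^{\,\cdot})^2\le 0$; summing $M+1$ terms yields the $\eta H^2(M+1)$ and the $\sum_{m=1}^{M+1}(\eta\xi^2-\xi)(\mu^{m-1})^2$ appearing on the right-hand side. To convert the telescoped left-hand side into the statement's left-hand side I would invoke feasibility of the optimal policy: since $\pi^{\star,m}$ solves \eqref{eq: CMDP at episode m}, it satisfies $V_{g,1}^{\pi^{\star,m},m}(x_1)\ge b_m$, and because $\mu^m\ge0$,
\begin{align*}
-\mu^{m}\big(V_{g,1}^{\pi^{\star,m},m}(x_1)-V_{g,1}^{m}(x_1)\big)=\mu^{m}\big(V_{g,1}^{m}(x_1)-V_{g,1}^{\pi^{\star,m},m}(x_1)\big)\le \mu^{m}\big(V_{g,1}^{m}(x_1)-b_m\big),
\end{align*}
which matches the summed telescoped quantity. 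The main obstacle I anticipate is the index bookkeeping forced by the one-step-delayed feedback: the dual update reacts to the previous episode's estimate $V_{g,1}^{m-1}$, so the quantity that telescopes naturally carries the iterate $\mu^{m-1}$ rather than $\mu^{m}$. Aligning this shifted sum with the $\mu^m$-indexed left-hand side of the lemma (and thereby justifying the comparator range producing the factor $M+1$ and the sum $\sum_{m=1}^{M+1}(\mu^{m-1})^2$) is the delicate step; everything else is the routine projected-gradient one-step estimate combined with the feasibility bound above.
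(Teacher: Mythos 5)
Your proposal is correct and is essentially the paper's own argument: the paper telescopes $0\le(\mu^{M+1})^2=\sum_{m}\bigl((\mu^m)^2-(\mu^{m-1})^2\bigr)$, expands the square of the update, invokes feasibility of $\pi^{\star,m}$ to replace $b_m$ by $V_{g,1}^{\pi^{\star,m},m}(x_1)$, and bounds the squared-gradient term by $2H^2+2\xi^2|\mu^{m-1}|^2$ — which is exactly your one-step projected-gradient inequality against the comparator $\mu=0$, summed and telescoped with $\mu^0=0$. The index shift you flag (the natural pairing is $\mu^{m-1}$ with $V_{g,1}^{m-1}-b_m$, reindexed to match the $\mu^m$-weighted left-hand side using $\mu^0=0$) is handled identically, if somewhat informally, in the paper's own proof.
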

\begin{proof}
By the dual update in line 14 in Algorithm \ref{alg:algoirthm 1} and $\chi=\infty$, we have
$$
\begin{aligned}
0 & \leq\left(\mu^{m+1}\right)^{2} \\
&=\sum_{m=1}^{M+1}\left(\left(\mu^{m}\right)^{2}-\left(\mu^{m-1}\right)^{2}\right) \\
&=\sum_{m=1}^{M+1}\left(\left(\mu^{m-1}+\eta\left(b^m-\xi \mu^{m-1} -V_{g, 1}^{m-1}(x_{1}\right)\right)\right)^{2}-\left(\mu^{m-1}\right)^{2} \\
& \leq \sum_{m=1}^{M+1} 2 \eta \mu^{m-1}\left(V_{g, 1}^{\pi^{\star,m-1}}\left(x_{1}\right)-\xi \mu^{m-1} -V_{g, 1}^{m-1}\left(x_{1}\right)\right)\\
& +\eta^{2}\left(b^m-\xi \mu^{m-1} -V_{g, 1}^{m-1}\left(x_{1}\right)\right)^{2}
\end{aligned}
$$
where we use the feasibility of $\pi^{\star,m-1}$ in the last inequality. Since $\mu^{0}=0$ and $\left|b^m-V_{g, 1}^{m-1}\left(x_{1}\right)\right| \leq H$, the above inequality implies that
\begin{align} \label{eq: 28}
\nonumber& -\sum_{m=1}^{M} \mu^{m-1}\left(V_{g, 1}^{\pi^{\star,m},m}\left(x_{1}\right)-V_{g, 1}^{m}\left(x_{1}\right)\right)\\
\leq & \sum_{m=1}^{M+1} \frac{\eta}{2}\left(b^m-\xi \mu^{m-1} -V_{g, 1}^{m-1}\left(x_{1}\right)\right)^{2}-\sum_{m=1}^{M+1} \xi | \mu^{m-1}|^2 \\
\nonumber \leq & \sum_{m=1}^{M+1} {\eta}\left(b^m -V_{g, 1}^{m-1}\left(x_{1}\right)\right)^{2} +\sum_{m=1}^{M+1} (\eta \xi^2-\xi) | \mu^{m-1}|^2 \\
\leq &  {\eta H^{2}(M+1)} +\sum_{m=1}^{M+1} (\eta \xi^2-\xi) | \mu^{m-1}|^2.
\end{align}
This completes the proof.
\end{proof}

\begin{lemma}[Model prediction error bound for dynamic regret] \label{lemma: Model prediction difference bound with local knowledge}
Let   Assumption \ref{ass: linear fun approx} and \ref{ass: local budget} hold. Fix $p \in(0,1)$ and let $\mathcal{E}$ be the epoch that the episode $m$ belongs to.
If we set $\lambda=1$, $LV=B_{\mathbb{P},\mathcal{E}} H^2 d_1\sqrt{d_1 W} +B_{g,\mathcal{E}}\sqrt{d_2 W},$ and
\begin{align*}
&\Gamma_{h}^{m}(\cdot, \cdot)=\beta\left(\varphi(\cdot, \cdot)^{\top}\left(\Lambda_{h}^{m}\right)^{-1} \varphi(\cdot, \cdot)\right)^{1 / 2}, \\
&\Gamma_{r,h}^m =\beta \left( (\phi_{r,h}^m)^\top (\Lambda_{r, h}^m)^{-1} \phi_{r,h}^m \right)^{1/2}, \\
&\Gamma_{g,h}^m =\beta \left( (\phi_{g,h}^m)^\top (\Lambda_{g,h}^m)^{-1} \phi_{g,h}^m \right)^{1/2}
\end{align*}
with $\beta=$ $C_{1} \sqrt{d H^{2} \log (d W / p)}$ in Algorithm \ref{alg:algoirthm 2}
, then with probability at least $1-p / 2$ it holds that
\begin{align*}
&\sum_{m=1}^{M} \sum_{h=1}^{H}\left(\mathbb{E}_{\pi^{\star,m}, \mathbb{P}^{m}}\left[\iota_{r, h}^{m}\left(x_{h}, a_{h}\right)+\mu^m\iota_{g, h}^{m}\left(x_{h}, a_{h}\right)\right]\right.\\
& \left.-\iota_{r, h}^{m}\left(x_{h}^{m}, a_{h}^{m}\right)\right)\\
\leq & C_2 d H^2 M W^{-\frac{1}{2}}  \sqrt{\log \left(dH^2W+1 \right) \log \left(\frac{dW}{p} \right)} \\
& + B_{\mathbb{P}} H^3 d_1 W\sqrt{d_1 W} +B_{r} HW\sqrt{d_2 W}
\end{align*}
where $C_{1}$ and $C_2$ are absolute constants.
\end{lemma}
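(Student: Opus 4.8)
The plan is to group the three inner sums according to whether they involve the (possibly unbounded) multiplier $\mu^m$. Writing the left-hand side as
\begin{align*}
\underbrace{\sum_{m=1}^M\sum_{h=1}^H\mu^m\,\EE_{\pi^{\star,m},\mathbb{P}^m}\!\left[\iota_{g,h}^m\right]}_{(\mathrm{U})}
+\underbrace{\sum_{m=1}^M\sum_{h=1}^H\EE_{\pi^{\star,m},\mathbb{P}^m}\!\left[\iota_{r,h}^m\right]}_{(\mathrm{R}_1)}
+\underbrace{\left(-\sum_{m=1}^M\sum_{h=1}^H\iota_{r,h}^m(x_h^m,a_h^m)\right)}_{(\mathrm{R}_2)},
\end{align*}
I would show $(\mathrm{U})\le 0$ and bound $(\mathrm{R}_1)+(\mathrm{R}_2)$ by the two terms on the right. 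The only role of the local budgets $B_{\mathbb{P},\mathcal E},B_{g,\mathcal E}$ (entering through $LV$) is to force $(\mathrm{U})\le 0$; since that term is then discarded, the final bound retains only the reward-side budgets $B_{\mathbb{P}},B_r$.

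\textbf{Utility optimism.} The key step is a one-sided pointwise estimate $\iota_{g,h}^m\le 0$, holding with probability at least $1-p/4$. I would expand $\iota_{g,h}^m=(g_h^m-\widehat g)+(\mathbb{P}_h^m V_{g,h+1}^m-\widehat{\mathbb{P}V})-(\Gamma_h^m+\Gamma_{g,h}^m)-LV$, where $\widehat g=(\varphi)^\top u_{g,h}^m$ and $\widehat{\mathbb{P}V}=(\phi_{g,h}^m)^\top w_{g,h}^m$ are the LSTD predictors of Algorithm~\ref{alg:algoirthm 2}. Splitting the estimation error $(g_h^m-\widehat g)+(\mathbb{P}_h^m V_{g,h+1}^m-\widehat{\mathbb{P}V})$ into (i) a mean-zero self-normalized fluctuation and (ii) a non-stationarity bias, one shows (i)~$\le\Gamma_h^m+\Gamma_{g,h}^m$ via a self-normalized concentration bound with $\beta=C_1\sqrt{dH^2\log(dW/p)}$, and (ii)~$\le LV$ because the within-epoch drift of $\theta_{g,h}^\tau,\theta_h^\tau$ over $\tau\in[\ell_Q^m,m-1]$ is exactly what $B_{g,\mathcal E}\sqrt{d_2W}$ and $B_{\mathbb P,\mathcal E}H^2d_1\sqrt{d_1W}$ are chosen to dominate. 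Hence $\iota_{g,h}^m\le 0$, and since $\mu^m\ge 0$ we obtain $(\mathrm{U})\le 0$.

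\textbf{Reward error.} For the reward there is no $LV$, so I would establish the companion two-sided bound $-2(\Gamma_h^m+\Gamma_{r,h}^m)-e_{r,h}^m\le\iota_{r,h}^m\le e_{r,h}^m$ (with probability $\ge 1-p/4$), where $e_{r,h}^m$ is the analogous drift bias built from $B_{\mathbb P},B_r$ rather than the epoch budgets. The upper side gives $(\mathrm{R}_1)\le\sum_{m,h}\EE_{\pi^{\star,m},\mathbb{P}^m}[e_{r,h}^m]$, and the lower side gives $(\mathrm{R}_2)=-\sum_{m,h}\iota_{r,h}^m(x_h^m,a_h^m)\le\sum_{m,h}\big(2(\Gamma_h^m+\Gamma_{r,h}^m)(x_h^m,a_h^m)+e_{r,h}^m\big)$. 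The bonus sum is the only piece requiring the visited trajectory: applying the elliptical-potential / log-determinant lemma epoch-by-epoch bounds $\sum_{m\in\mathcal E}(\Gamma_h^m+\Gamma_{r,h}^m)(x_h^m,a_h^m)=\widetilde{\mathcal O}(\beta\sqrt{dW})$ per step, and summing over the $\ceil{M/W}$ epochs, the $H$ steps, and absorbing $\beta=\widetilde{\mathcal O}(\sqrt{dH^2})$ produces the first right-hand term $C_2dH^2MW^{-1/2}\sqrt{\log(dH^2W+1)\log(dW/p)}$. Summing $e_{r,h}^m$ over all $(m,h)$ telescopes the per-epoch drifts into the global budgets and yields $B_{\mathbb P}H^3d_1W\sqrt{d_1W}+B_rHW\sqrt{d_2W}$. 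A union bound over the two concentration events gives the claimed probability $1-p/2$.

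\textbf{Main obstacle.} The delicate part is the clean separation of the LSTD error into a concentration term and a drift term, which is absent in the stationary analysis of \cite{ding2021provably}: here the regression targets $V_{\diamond,h+1}^\tau(x_{h+1}^\tau)$ and $\diamond_h^\tau$ are generated under the \emph{episode-$\tau$} parameters $\theta_h^\tau,\theta_{\diamond,h}^\tau$, so the least squares fits a moving target. The crux is to write the gap between the episode-$m$ predictor and the true current-episode value as a deterministic drift piece $(\phi_{\diamond,h}^m)^\top(\Lambda_{\diamond,h}^m)^{-1}\sum_{\tau}\phi_{\diamond,h}^\tau\big((\mathbb{P}_h^\tau-\mathbb{P}_h^m)V_{\diamond,h+1}^\tau\big)(x_h^\tau,a_h^\tau)$ (together with the analogous reward-parameter piece) plus a mean-zero noise piece absorbed by the bonus, and then to bound the drift piece using $\lambda_{\min}(\Lambda_{\diamond,h}^m)\ge\lambda=1$, the feature bound $\norm{\int_{\mathcal S}\psi(x,a,x')V(x')dx'}\le\sqrt{d_1}H$ from Assumption~\ref{ass: linear fun approx}, and Cauchy--Schwarz against $\sum_\tau\norm{\phi_{\diamond,h}^\tau}_{(\Lambda_{\diamond,h}^m)^{-1}}^2$. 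Verifying that the within-epoch variations $\sum_{\tau}\norm{\theta_h^\tau-\theta_h^m}$ telescope into $B_{\mathbb P,\mathcal E}$, and globally into $B_{\mathbb P}$, while tracking the exact powers of $d$, $H$, and $W$ so that the drift bound matches the prescribed $LV$, is where essentially all of the bookkeeping lives.
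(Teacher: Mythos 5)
Your proposal is correct and follows essentially the same route as the paper: establish the one-sided optimism $\iota_{g,h}^m\le 0$ (so the $\mu^m$-weighted utility term is discarded) via the $LV$ offset absorbing the within-epoch drift, establish the two-sided reward bound with bonus plus drift, and then control the on-trajectory bonus sum epoch-by-epoch with the elliptical potential lemma before telescoping the local drifts into $B_{\mathbb P}$ and $B_r$. Your decomposition of the LSTD error into a self-normalized fluctuation and a deterministic drift piece bounded by Cauchy--Schwarz against $\sum_\tau\norm{\phi_{\diamond,h}^\tau}^2_{(\Lambda_{\diamond,h}^m)^{-1}}$ is exactly the mechanism used in the paper's supporting lemmas.
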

\begin{proof}
By Lemma \ref{lemma: bounds on model prediction error with local knowledge}, for every $(m, h) \in[M] \times[H]$ and $(x, a) \in \mathcal{S} \times \mathcal{A}$, the following inequality holds with probability at least 1-p/2:
\begin{align*}
-2\left(\Gamma_{h}^{m}+\Gamma_{r, h}^{m}\right)(x, a)-B_{\mathbb{P},\mathcal{E}} H^2 d_1\sqrt{d_1 W} -B_{r,\mathcal{E}}\sqrt{d_2 W}\\
\leq \iota_{r, h}^{m}(x, a) \leq B_{\mathbb{P},\mathcal{E}} H^2 d_1\sqrt{d_1 W} +B_{r,\mathcal{E}}\sqrt{d_2 W} .
\end{align*}

By the definition of $\iota_{r, h}^{m}(x, a)$, we have $\left|\iota_{r, h}^{m}(x, a)\right| \leq 2 H .$ Hence, it holds with probability at least $1-p / 2$ that
\begin{align*}
&\mathbb{E}_{\pi^{\star,m}, \mathbb{P}^{m}}\left[\iota_{r, h}^{m}\left(x_{h}, a_{h}\right)\right]-\iota_{r, h}^{m}(x, a)\\
\leq &2 \min \left(H, \left(\Gamma_{h}^{m}+\Gamma_{r, h}^{m}\right)(x, a)+B_{\mathbb{P},\mathcal{E}} H^2 d_1\sqrt{d_1 W} \right.\\
&\hspace{5cm}\left.+B_{r,\mathcal{E}}\sqrt{d_2 W}\right)
\end{align*}
for every $(m, h) \in[M] \times[H]$ and $(x, a) \in \mathcal{S} \times \mathcal{A}$, where $\Gamma_{h}^{m}(\cdot, \cdot)=\beta\left(\varphi(\cdot, \cdot)^{\top}\left(\Lambda_{h}^{m}\right)^{-1} \varphi(\cdot, \cdot)\right)^{1 / 2}$ and $\Gamma_{r, h}^{m}(\cdot, \cdot)=\beta\left(\phi_{r, h}^{m}(\cdot, \cdot)^{\top}\left(\Lambda_{r, h}^{m}\right)^{-1} \phi_{r, h}^{m}(\cdot, \cdot)\right)^{1 / 2}$. Therefore, we have
\begin{align*}
&\sum_{m=1}^{M} \sum_{h=1}^{H}\left(\mathbb{E}_{\pi^{\star,m}, \mathbb{P}^{m}}\left[\iota_{r, h}^{m}\left(x_{h}, a_{h}\right) \mid x_{1}\right]-\iota_{r, h}^{m}\left(x_{h}^{m}, a_{h}^{m}\right)\right) \\
\leq& 2 \sum_{m=1}^{M} \sum_{h=1}^{H} \min \left(H,\left(\Gamma_{h}^{m}+\Gamma_{r, h}^{m}\right)\left(x_{h}^{m}, a_{h}^{m}\right) \right.\\
&\left.+B_{\mathbb{P},\mathcal{E}} H^2 d_1\sqrt{d_1 W} +B_{r,\mathcal{E}}\sqrt{d_2 W}\right)\\
\leq& 2 \sum_{m=1}^{M} \sum_{h=1}^{H} \min \left(H,\left(\Gamma_{h}^{m}+\Gamma_{r, h}^{m}\right)\left(x_{h}^{m}, a_{h}^{m}\right)\right)  \\ 
&+ 2 \sum_{\mathcal{E}=1}^{\ceil{\frac{M}{W}}} \left(B_{\mathbb{P},\mathcal{E}} H^3 d_1 W\sqrt{d_1 W} +B_{r,\mathcal{E}}HW\sqrt{d_2 W}\right)\\
\leq& 2 \sum_{m=1}^{M} \sum_{h=1}^{H} \min \left(H,\left(\Gamma_{h}^{m}+\Gamma_{r, h}^{m}\right)\left(x_{h}^{m}, a_{h}^{m}\right)\right)\\
&+ 2B_{\mathbb{P}} H^3 d_1 W\sqrt{d_1 W} +2B_{r}HW\sqrt{d_2 W}
\end{align*}
where the last inequality follows from the definition of the variation budgets $B_{\mathbb{P}}\coloneqq \sum_{\mathcal{E}=1}^{\ceil{\frac{M}{W}}}B_{\mathbb{P},\mathcal{E}}$ and $B_{r}\coloneqq \sum_{\mathcal{E}=1}^{\ceil{\frac{M}{W}}} B_{r,\mathcal{E}}$. It results from the Cauchy-Schwartz inequality that
\begin{align}\label{eq: 31}
 \nonumber&\sum_{m=1}^{M} \sum_{h=1}^{H} \min \left(H,\left(\Gamma_{h}^{m}+\Gamma_{r, h}^{m}\right)\left(x_{h}^{m}, a_{h}^{m}\right)\right) \\
\nonumber &=  \sum_{\mathcal{E}=1}^{\ceil{\frac{M}{W}}}\sum_{m=(\mathcal{E}-1)W}^{\mathcal{E}W} \sum_{h=1}^{H} \min \left(H,\left(\Gamma_{h}^{m}+\Gamma_{r, h}^{m}\right)\left(x_{h}^{m}, a_{h}^{m}\right)\right) \\
 \nonumber&\leq \beta \sum_{\mathcal{E}=1}^{\ceil{\frac{M}{W}}}\sum_{m=(\mathcal{E}-1)W}^{\mathcal{E}W} \sum_{h=1}^{H} \min \left(H / \beta, \right.\\
&\left.\left(\varphi\left(x_{h}^{m}, a_{h}^{m}\right)^{\top}\left(\Lambda_{h}^{m}\right)^{-1} \varphi\left(x_{h}^{m}, a_{h}^{m}\right)\right)^{1 / 2}\right.\\
\nonumber &\left.+\left(\phi_{r, h}^{m}\left(x_{h}^{m}, a_{h}^{m}\right)^{\top}\left(\Lambda_{r, h}^{m}\right)^{-1} \phi_{r, h}^{m}\left(x_{h}^{m}, a_{h}^{m}\right)\right)^{1 / 2}\right)
\end{align}

Since we take $\beta=C_{1} \sqrt{d H^{2} \log (d W / p)}$ with $C_{1}>1$, we have $H / \beta \leq 1$. It remains to apply Lemma \ref{lemma: elliptical Potential Lemma}. First, for every $h \in[H]$ it holds that
\begin{align*}
&\sum_{m=1}^{M} \phi_{r, h}^{m}\left(x_{h}^{m}, a_{h}^{m}\right)^{\top}\left(\Lambda_{r, h}^{m}\right)^{-1} \phi_{r, h}^{m}\left(x_{h}^{m}, a_{h}^{m}\right) \\
&\leq 2 \log \left(\frac{\operatorname{det}\left(\Lambda_{r, h}^{M+1}\right)}{\operatorname{det}\left(\Lambda_{r, h}^{1}\right)}\right).
\end{align*}

Due to $\left\|\phi_{r, h}^{m}\right\| \leq \sqrt{d} H$ in Assumption \ref{ass: linear fun approx} and $\Lambda_{r, h}^{1}=\lambda I$ in Algorithm \ref{alg:algoirthm 2}, it is clear that for every $h \in[H]$,
$$
\Lambda_{r, h}^{M+1}=\sum_{m=1}^{M} \phi_{r, h}^{m}\left(x_{h}^{m}, a_{h}^{m}\right) \phi_{r, h}^{m}\left(x_{h}^{m}, a_{h}^{m}\right)^{\top}+\lambda I \preceq\left(d H^{2} M +\lambda\right) I.
$$

Thus,
\begin{align*}
\log \left(\frac{\operatorname{det}\left(\Lambda_{r, h}^{K+1}\right)}{\operatorname{det}\left(\Lambda_{r, h}^{1}\right)}\right)  &\leq \log \left(\frac{\operatorname{det}\left(\left(d H^{2} M +\lambda\right) I\right)}{\operatorname{det}(\lambda I)}\right) \\
&\leq d \log \left(\frac{d H^{2} K+\lambda}{\lambda}\right).
\end{align*}

Therefore,
\begin{align} \label{eq: 32}
\nonumber &\sum_{m=1}^{M} \phi_{r, h}^{m}\left(x_{h}^{m}, a_{h}^{m}\right)^{\top}\left(\Lambda_{r, h}^{m}\right)^{-1} \phi_{r, h}^{m}\left(x_{h}^{m}, a_{h}^{m}\right) \\
&\leq 2 d \log \left(\frac{d H^{2} K+\lambda}{\lambda}\right).
\end{align}

Similarly, one can show that
\begin{align} \label{eq: 33}
\sum_{m=1}^{M} \varphi\left(x_{h}^{m}, a_{h}^{m}\right)^{\top}\left(\Lambda_{h}^{m}\right)^{-1} \varphi\left(x_{h}^{m}, a_{h}^{m}\right) \leq 2 d \log \left(\frac{d K+\lambda}{\lambda}\right) .
\end{align}

Applying the Cauchy-Schwartz inequality and the inequalities \eqref{eq: 32} and \eqref{eq: 33} to \eqref{eq: 31} leads to
$$
\begin{aligned}
&\sum_{\mathcal{E}=1}^{\ceil{\frac{M}{W}}}\sum_{m=(\mathcal{E}-1)W}^{\mathcal{E}W} \sum_{h=1}^{H} \min \left(H,\left(\Gamma_{h}^{m}+\Gamma_{r, h}^{m}\right)\left(x_{h}^{m}, a_{h}^{m}\right)\right) \\
&\leq \beta \sum_{\mathcal{E}=1}^{\ceil{\frac{M}{W}}} \sum_{h=1}^{H} \min \left(W,\right.\\
&\left.\hspace{1cm}\sum_{m=(\mathcal{E}-1)W}^{\mathcal{E}W}\left(\varphi\left(x_{h}^{m}, a_{h}^{m}\right)^{\top}\left(\Lambda_{h}^{m}\right)^{-1} \varphi\left(x_{h}^{m}, a_{h}^{m}\right)\right)^{1 / 2}\right.\\
&\left.+\left(\phi_{r, h}^{m}\left(x_{h}^{m}, a_{h}^{m}\right)^{\top}\left(\Lambda_{r, h}^{m}\right)^{-1} \phi_{r, h}^{m}\left(x_{h}^{m}, a_{h}^{m}\right)\right)^{1 / 2}\right) \\
& \leq \beta \sum_{\mathcal{E}=1}^{\ceil{\frac{M}{W}}} \sum_{h=1}^{H}\\ 
& \left(\left(W\sum_{m=(\mathcal{E}-1)W}^{\mathcal{E}W} \varphi\left(x_{h}^{m}, a_{h}^{m}\right)^{\top}\left(\Lambda_{h}^{m}\right)^{-1} \varphi\left(x_{h}^{m}, a_{h}^{m}\right)\right)^{1 / 2}+ \right. \\
&\left.\left(W \sum_{m=(\mathcal{E}-1)W}^{\mathcal{E}W} \phi_{r, h}^{m}\left(x_{h}^{m}, a_{h}^{m}\right)^{\top}\left(\Lambda_{r, h}^{m}\right)^{-1} \phi_{r, h}^{m}\left(x_{h}^{m}, a_{h}^{m}\right)\right)^{1 / 2}\right) \\
&\leq \beta MW^{-\frac{1}{2}} H \left(\left(2 d \log \left(\frac{d W+\lambda}{\lambda}\right)\right)^{1 / 2} \right.\\
&\left.+\left(2 d \log \left(\frac{d H^{2} W+\lambda}{\lambda}\right)\right)^{1/2}\right).
\end{aligned}
$$
Therefore, by setting $\lambda=1$, we have
\begin{align*}
&\sum_{m=1}^{M} \sum_{h=1}^{H}\left(\mathbb{E}_{\pi^{\star,m}, \mathbb{P}^{m}}\left[\iota_{r, h}^{m}\left(x_{h}, a_{h}\right) \mid x_{1}\right]-\iota_{r, h}^{m}\left(x_{h}^{m}, a_{h}^{m}\right)\right) \\
\leq& C_2 d H^2 M W^{-\frac{1}{2}}  \sqrt{\log \left(dH^2W+1 \right) \log \left(\frac{dW}{p} \right)} \\
& + 2B_{\mathbb{P}} H^3 d_1 W\sqrt{d_1 W} + 2B_{r}HW\sqrt{d_2 W}
\end{align*}
where $C_2$ is some constant. In addition, by Lemma \ref{lemma: bounds on model prediction error with local knowledge}, for every $(m, h) \in[M] \times[H]$ and $(x, a) \in \mathcal{S} \times \mathcal{A}$, the following inequality holds with probability at least $1-p/2$:
\begin{align*}
&-2\left(\Gamma_{h}^{m}+\Gamma_{r, h}^{m}\right)(x, a)-2B_{\mathbb{P},\mathcal{E}} H^2 d_1\sqrt{d_1 W} -2B_{r,\mathcal{E}}\sqrt{d_2 W} \\
& \leq  \iota_{r, h}^{m}(x, a) \leq 0.
\end{align*}

Thus, it holds that 
\begin{align*}
&\sum_{m=1}^{M} \sum_{h=1}^{H}\mathbb{E}_{\pi^{\star,m}, \mathbb{P}^{m}}\left[\mu^m\iota_{g, h}^{m}\left(x_{h}, a_{h}\right)\right]
 \leq  0.
\end{align*}
Finally, by combining the above two inequalities, we obtain the desired result.
\end{proof}

\begin{lemma}[Martingale bound for dynamic regret]\label{lemma: Martingale Bound}
Fix $p \in(0,1)$. In Algorithm \ref{alg:algoirthm 1}, it holds with probability at least $1-p/2$ that
\begin{align}\label{eq: 34}
\left|S_{r, H, 2}^{M}\right| \leq 4 \sqrt{H^{2} T \log \left(\frac{4}{p}\right)}
\end{align}
where $T=H M$.
\end{lemma}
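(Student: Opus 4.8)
The plan is to recognize $S_{r,H,2}^M$ as the terminal value of a martingale whose increments are the two mean-zero ``sampling error'' terms arising at each $(m,h)$, and then to invoke the Azuma--Hoeffding inequality. Concretely, as constructed in Lemma~\ref{lemma: expansion of V m -V pi m}, the martingale $\{S_{r,h,k}^m\}$ accumulates, at the time indices $t(m,h,1)$ and $t(m,h,2)$ defined in \eqref{eq: definition t}, the two increments
\begin{align*}
D_{r,h,1}^m &= \inner{Q_{r,h}^m(x_h^m,\cdot)}{\pi_h^m(\cdot\mid x_h^m)} - Q_{r,h}^m(x_h^m,a_h^m),\\
D_{r,h,2}^m &= (\mathbb{P}_h^m V_{r,h+1}^m)(x_h^m,a_h^m) - V_{r,h+1}^m(x_{h+1}^m),
\end{align*}
so that $S_{r,H,2}^M = \sum_{m=1}^M \sum_{h=1}^H \left(D_{r,h,1}^m + D_{r,h,2}^m\right)$, a sum of $2T$ terms with $T=HM$.

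First I would confirm that these increments are genuine martingale differences with respect to $\{\mathcal{F}_{h,k}^m\}$. Since $a_h^m \sim \pi_h^m(\cdot\mid x_h^m)$ while $x_h^m$ is measurable with respect to the $\sigma$-algebra immediately preceding $t(m,h,1)$, the conditional expectation of $Q_{r,h}^m(x_h^m,a_h^m)$ equals $\inner{Q_{r,h}^m(x_h^m,\cdot)}{\pi_h^m(\cdot\mid x_h^m)}$, so $\EE[D_{r,h,1}^m\mid\cdot]=0$; likewise, because $x_{h+1}^m \sim \mathbb{P}_h^m(\cdot\mid x_h^m,a_h^m)$ and $(x_h^m,a_h^m)$ is $\mathcal{F}_{h,1}^m$-measurable, the conditional expectation of $V_{r,h+1}^m(x_{h+1}^m)$ given $\mathcal{F}_{h,1}^m$ equals $(\mathbb{P}_h^m V_{r,h+1}^m)(x_h^m,a_h^m)$, whence $\EE[D_{r,h,2}^m\mid \mathcal{F}_{h,1}^m]=0$. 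This mean-zero property, combined with the ordering induced by $t(\cdot,\cdot,\cdot)$, is exactly what makes $\{S_{r,h,k}^m\}$ a martingale adapted to the stated staggered filtration.

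Next I would bound each increment. Because $Q_{r,h}^m$ is produced through the truncation $\min(H-h+1,\cdot)_+$ in Algorithm~\ref{alg:algoirthm 2}, it takes values in $[0,H]$, and hence $V_{r,h+1}^m = \inner{Q_{r,h+1}^m}{\pi_{h+1}^m}\in[0,H]$ as well. By the triangle inequality each increment is a difference of two quantities in $[0,H]$, so $|D_{r,h,1}^m|\le 2H$ and $|D_{r,h,2}^m|\le 2H$ uniformly over $(m,h)$; thus every one of the $2T$ martingale increments is bounded in absolute value by $2H$.

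Finally, I would apply the Azuma--Hoeffding inequality: for any $\epsilon>0$,
\begin{align*}
\Prob\left(|S_{r,H,2}^M| > \epsilon\right) \le 2\exp\left(-\frac{\epsilon^2}{2\cdot 2T\cdot (2H)^2}\right) = 2\exp\left(-\frac{\epsilon^2}{16 H^2 T}\right).
\end{align*}
Setting the right-hand side equal to $p/2$ and solving for $\epsilon$ gives $\epsilon = 4\sqrt{H^2 T \log(4/p)}$, which is precisely the claimed bound \eqref{eq: 34}. I do not expect a genuine obstacle here, as this is a routine concentration step; the only points requiring care are the correct identification of the two increment types from Lemma~\ref{lemma: expansion of V m -V pi m}, the verification of adaptedness against the staggered filtration $\{\mathcal{F}_{h,k}^m\}$ (the action increment vanishing one ``half-step'' earlier than the transition increment), and the bookkeeping of the $2T$ terms together with the factor $2H$ so that the constants align with the stated factor of $4$.
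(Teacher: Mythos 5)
Your proposal follows the paper's proof essentially verbatim: identify $S_{r,H,2}^M$ as the terminal value of a martingale with $2T$ increments, bound each increment by $2H$, and apply Azuma--Hoeffding with $s=4H\sqrt{T\log(4/p)}$; the constants and the final bound match exactly. One correction, though: you have written the increments incorrectly. In Lemma~\ref{lemma: expansion of V m -V pi m} the martingale differences are the centered versions of the \emph{differences} between estimated and true quantities, namely $D_{r,h,1}^m=\left(\mathcal{I}_h^m\left(Q_{r,h}^m-Q_{r,h}^{\pi^m,m}\right)\right)(x_h^m)-\left(Q_{r,h}^m-Q_{r,h}^{\pi^m,m}\right)(x_h^m,a_h^m)$ and $D_{r,h,2}^m=\left(\mathbb{P}_h^m V_{r,h+1}^m-\mathbb{P}_h^m V_{r,h+1}^{\pi^m,m}\right)(x_h^m,a_h^m)-\left(V_{r,h+1}^m-V_{r,h+1}^{\pi^m,m}\right)(x_{h+1}^m)$, not the centered versions of $Q_{r,h}^m$ and $\mathbb{P}_h^m V_{r,h+1}^m$ alone; your increments would not telescope in the expansion of $V_{r,1}^m(x_1)-V_{r,1}^{\pi^m,m}(x_1)$, which is where this martingale comes from. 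The slip is immaterial for this particular lemma because the correct increments are differences of two $[-H,H]$-valued quantities and hence also satisfy $|D_{r,h,k}^m|\le 2H$ (this is in fact why the factor is $2H$ rather than the $H$ your increments would give), so the Azuma step and the stated bound go through unchanged.
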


\begin{proof}
In the expansion of the term (R.\rom{3}) in \eqref{eq: R.3} and Lemma \ref{lemma: expansion of V m -V pi m}, we introduce the following martingale:
$$
S_{r, H, 2}^{M}=\sum_{m=1}^{M} \sum_{h=1}^{H}\left(D_{r, h, 1}^{m}+D_{r, h, 2}^{m}\right)
$$
where
$$
\begin{aligned}
D_{r, h, 1}^{m}=&\left(\mathcal{I}_{h}^{m}\left(Q_{r, h}^{m}-Q_{r, h}^{\pi^{m}, m}\right)\right)\left(x_{h}^{m}\right)\\
&-\left(Q_{r, h}^{m}-Q_{r, h}^{\pi^{m}, m}\right)\left(x_{h}^{m}, a_{h}^{m}\right), \\
D_{r, h, 2}^{m}=&\left(\mathbb{P}^m_{h} V_{r, h+1}^{m}-\mathbb{P}_{h}^m V_{r, h+1}^{\pi^{m}, m}\right)\left(x_{h}^{m}, a_{h}^{m}\right)\\
&-\left(V_{r, h+1}^{m}-V_{r, h+1}^{\pi^{m}, m}\right)\left(x_{h+1}^{m}\right)
\end{aligned}
$$
and $\left(\mathcal{I}_{h}^{m} f\right)(x):=\left\langle f(x, \cdot), \pi_{h}^{m}(\cdot \mid x)\right\rangle$.
Due to the truncation in line 10 of Algorithm \ref{alg:algoirthm 2}, we know that $Q_{r, h}^{m}, Q_{r, h}^{\pi^{m},m}, V_{r, h+1}^{m}, V_{r, h+1}^{\pi^{m},m} \in[0, H]$. This shows that $\left|D_{r, h, 1}^{m}\right|\leq 2 H,\left|D_{r, h, 2}^{m}\right| \leq 2 H$ for all $(m, h) \in[M] \times[H]$. The Azuma-Hoeffding inequality yields that,
$$
P\left(\left|S_{r, H, 2}^{M}\right| \geq s\right) \leq 2 \exp \left(\frac{-s^{2}}{16 H^{2} T}\right) .
$$
For $p \in(0,1)$, if we set $s=4 H \sqrt{T \log (4 / p)}$, then the inequality \eqref{eq: 34} holds with probability at least $1-p / 2$.
\end{proof}

\subsubsection{Proof of dynamic regret in Theorem \ref{thm: Linear Kernal MDP under local budget}}
By combining Lemmas \ref{lemma: Bound for equation  R.2-1} and \ref{lemma: Bound for equation  R.2-2}, we can conclude that
\begin{align*}
&\operatorname{DR}(M)\\
\leq& \frac{1}{\alpha} H M L^{-1} \log{|\mathcal{A}|}+ \alpha H^{2} \sum_{m=1}^M(1+|\mu^{m}|^{2}) \\
&+H^2L B_{\star} + 2 H^2L \left(\sqrt{d_1} B_\mathbb{P}+  B_\star \right)-\sum_{m=1}^M \mu^m \sum_{h=1}^H \\
& \EE_{\pi^{\star,m}, \mathbb{P}^m} \left[ \inner{Q_{g,h}^m(x_h,\cdot)}{\pi_h^{\star,m}(\cdot\mid x_h) - \pi_h^m(\cdot\mid x_h)}\right] \\
& +\sum_{m=1}^M\sum_{h=1}^H \EE_{{\pi}^{\star,m}, \mathbb{P}^m} \left[ \iota_{r, h}^{m}(x_h,a_h) \right]\\
&-\sum_{m=1}^{M} \sum_{h=1}^{H} \iota_{r, h}^{m}\left(x_{h}^{m}, a_{h}^{m}\right)+S_{r, H, 2}^{M}.
\end{align*}
Then, by Lemma \ref{lemma: expansion of V star m -V m}, the above inequality further implies that
\begin{align} \label{eq: dynamic regret for constraint under slater condition}
\nonumber &\operatorname{DR}(M)\\
\nonumber \leq& \frac{1}{\alpha} H M L^{-1} \log{|\mathcal{A}|}+  \alpha H^{2} \sum_{m=1}^M(1+|\mu^{m}|^{2}) +H^2L B_{\star} \\
\nonumber &+ 2 H^2L \left(\sqrt{d_1} B_\mathbb{P}+  B_\star \right)\\
\nonumber & -\sum_{m=1}^{M} \mu^{m}\left(V_{g, 1}^{\pi^{\star,m},m}\left(x_{1}\right)-V_{g, 1}^{m}\left(x_{1}\right)\right)\\
\nonumber & +\sum_{m=1}^M\sum_{h=1}^H \EE_{{\pi}^{\star,m}, \mathbb{P}^m} \left[ \iota_{r, h}^{m}(x_h,a_h) +\mu^m \iota_{g, h}^{m}(x_h,a_h)\right] \\
&-\sum_{m=1}^{M} \sum_{h=1}^{H} \iota_{r, h}^{m}\left(x_{h}^{m}, a_{h}^{m}\right)+S_{r, H, 2}^{M}.
\end{align}
Due to the dual update in Lemma \ref{lemma: dual step}, we obtain
\begin{align}
 &\operatorname{DR}(M)\label{eq: for the regret under slater condition}\\
\nonumber\leq& \frac{1}{\alpha} H M L^{-1} \log{|\mathcal{A}|}+  \alpha H^{2} M +H^2L B_{\star}\\
\nonumber &+ 2  H^2L \left(\sqrt{d_1} B_\mathbb{P}+  B_\star \right) \\
\nonumber &+{\eta H^{2}(M+1)} +\sum_{m=1}^{M+1} ( \alpha H^{2}+\eta \xi^2-\xi) ||\mu^{m}|^{2} \\
\nonumber & +\sum_{m=1}^M\sum_{h=1}^H \EE_{{\pi}^{\star,m}, \mathbb{P}^m} \left[ \iota_{r, h}^{m}(x_h,a_h) +\mu^m \iota_{g, h}^{m}(x_h,a_h)\right] \\
\nonumber& -\sum_{m=1}^{M} \sum_{h=1}^{H} \iota_{r, h}^{m}\left(x_{h}^{m}, a_{h}^{m}\right)+S_{r, H, 2}^{M}.
\end{align}
Then, by controlling the model prediction error in Lemma \ref{lemma: Model prediction difference bound with local knowledge} and the martingale bound in Lemma \ref{lemma: Martingale Bound}, we have
\begin{align*}
&\operatorname{DR}(M)\\
\leq& \frac{1}{\alpha} H M L^{-1} \log{|\mathcal{A}|}+  \alpha H^{2} M + H^2L \left(2\sqrt{d_1} B_\mathbb{P}+  3B_\star \right)\\
&+{\eta H^{2}(M+1)}\\
&+\sum_{m=1}^{M+1} ( \alpha H^{2}+\eta \xi^2-\xi) ||\mu^{m}|^{2}+4 \sqrt{H^{2} T \log \left(\frac{4}{p}\right)}    \\
&+ C_2 d H^2 M W^{-\frac{1}{2}}  \sqrt{\log \left(dH^2W+1 \right) \log \left(\frac{dW}{p} \right)} \\
&+ B_{\mathbb{P}} H^3 d_1 W\sqrt{d_1 W} +B_{r}HW\sqrt{d_2 W}
\end{align*}
with probability at least $1-p$. Finally, by setting
$\alpha=  H^{-1} M^{-\frac{1}{2}} (\sqrt{d}B_\Delta+B_\star)^{\frac{1}{3}} $, $L= {M^{\frac{3}{4}}} (\sqrt{d}B_\Delta+B_\star)^{-\frac{2}{3}} $, $\eta=M^{-\frac{1}{2}}$, $\xi=2H (\sqrt{d}B_\Delta+B_\star)^{\frac{1}{3}} M^{-\frac{1}{2}}$,
$W=d^{-\frac{1}{4}} H^{-1} {M}^{\frac{1}{2}} B_\Delta^{-\frac{1}{2}}$,
it holds that 
\begin{align*}
    \text{D-Regret(M)} \leq & \widetilde{\mathcal{O}}\left( d^{\frac{9}{8}} H^{\frac{5}{2}} M^{\frac{3}{4}} (\sqrt{d}B_\Delta +B_\ast)^{\frac{1}{3}}\right)
\end{align*}
with probability at least $1-p$. This completes the proof.
\subsection{Proof of constraint violation in Theorem \ref{thm: Linear Kernal MDP under local budget}}\label{sec:appe-constraint}
Similarly, to analyze the constraint violation in \eqref{eq: d regret}, we introduce a useful decomposition weighted by the dual variable $\mu^m$.
\begin{lemma}[Constraint violation decomposition] \label{lemma: Constraint violation decomposition}
The constraint violation in \eqref{eq: d regret} is bounded by
\begin{align}
\nonumber & \sum_{m=1}^M \mu \left(b_m- V_{g,1}^{\pi^m,m} \right) 
= {\sum_{m=1}^M \mu \left(b_m-V_{g,1}^{m}(x_1) \right)}\\
\nonumber &-\sum_{m=1}^{M}\mu \sum_{h=1}^{H} \iota_{g, h}^{m}\left(x_{h}^{m}, a_{h}^{m}\right)+ \mu  S_{g, H, 2}^{M}.
\end{align}
where $\left\{S_{g, h, k}^{m, ,\mu}\right\}_{(m, h, k) \in[M] \times[H] \times[2]}$ is a martingale defined in Lemma \ref{lemma: expansion of V m -V pi m}.
\end{lemma}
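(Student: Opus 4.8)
The plan is to reduce this identity to the reward-side expansion already established in Lemma \ref{lemma: expansion of V m -V pi m}, applied now to the utility $g$. The one structural feature to handle is that the summand $b_m - V_{g,1}^{\pi^m,m}$ features the \emph{true} utility value of the executed policy, whereas the dual update in Algorithm \ref{alg:algoirthm 1} only has access to the optimistic estimate $V_{g,1}^m(x_1)$. I would therefore begin by inserting and subtracting this estimate, so that for any fixed $\mu\ge 0$
\begin{align*}
\sum_{m=1}^M \mu\left(b_m - V_{g,1}^{\pi^m,m}\right)
= \sum_{m=1}^M \mu\left(b_m - V_{g,1}^m(x_1)\right)
+ \mu\sum_{m=1}^M\left(V_{g,1}^m(x_1) - V_{g,1}^{\pi^m,m}\right).
\end{align*}
The first sum is precisely the quantity the dual update acts on, while the second is an estimation-error term that must be expanded.

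For the second sum I would invoke the $g$-version of Lemma \ref{lemma: expansion of V m -V pi m}. Concretely, writing the constraint model prediction error $\iota_{g,h}^m \coloneqq g_h^m + \mathbb{P}_h^m V_{g,h+1}^m - Q_{g,h}^m$ in analogy with \eqref{eq: model prediction error r} and subtracting the Bellman equation \eqref{eq: belmman equation} for $Q_{g,h}^{\pi^m,m}$, one obtains the recursion $(Q_{g,h}^m - Q_{g,h}^{\pi^m,m}) = \mathbb{P}_h^m(V_{g,h+1}^m - V_{g,h+1}^{\pi^m,m}) - \iota_{g,h}^m$; note that both value functions are formed with the common executed policy $\pi^m$, which is exactly what lets the telescoping close as it did for the reward. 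Telescoping along the realized trajectory $\{(x_h^m,a_h^m)\}_{h=1}^H$ and gathering the two centering terms into the martingale $S_{g,H,2}^M$ gives
\begin{align*}
\sum_{m=1}^M\left(V_{g,1}^m(x_1) - V_{g,1}^{\pi^m,m}\right)
= -\sum_{m=1}^M\sum_{h=1}^H \iota_{g,h}^m(x_h^m,a_h^m) + S_{g,H,2}^M,
\end{align*}
which is the utility analogue of \eqref{eq: R.3}. Substituting this identity into the split above and pulling out the scalar $\mu$ yields the claimed decomposition.

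There is essentially no obstacle in this lemma itself: it is a bookkeeping identity, and the only thing to verify is that the martingale structure of $S_{g,H,2}^M$ (its adaptedness to the filtration $\{\mathcal{F}_{h,k}^m\}$ with respect to the time index $t(m,h,k)$ in \eqref{eq: definition t}) transfers verbatim from the reward case, which it does because the sampled trajectory and the policy $\pi^m$ are shared between the two value functions. The genuinely delicate work is deferred to the subsequent estimates: controlling $\sum_m\sum_h \iota_{g,h}^m(x_h^m,a_h^m)$ through the optimistic-evaluation confidence widths and bounding $S_{g,H,2}^M$ by an Azuma-Hoeffding argument as in Lemma \ref{lemma: Martingale Bound}.
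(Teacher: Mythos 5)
Your proposal is correct and follows exactly the paper's own route: insert and subtract the optimistic estimate $V_{g,1}^{m}(x_1)$, then apply the utility-function analogue of Lemma \ref{lemma: expansion of V m -V pi m} to expand $\sum_m \bigl(V_{g,1}^{m}(x_1)-V_{g,1}^{\pi^m,m}(x_1)\bigr)$ into the model-prediction-error sum plus the martingale $S_{g,H,2}^{M}$. The extra remarks on why the telescoping closes (shared executed policy $\pi^m$) and on the adaptedness of the martingale are accurate and consistent with the paper's treatment.
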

\begin{proof}
We have
\begin{align*}
\nonumber & \sum_{m=1}^M \mu \left(b_m- V_{g,1}^{\pi^m,m} \right) \\
=&{\sum_{m=1}^M \mu \left(b_m-V_{g,1}^{m}(x_1) \right)} +{\sum_{m=1}^M\mu \left(V_{g,1}^{m}(x_1) -V_{g,1}^{\pi^m,m}(x_1)\right)}\\
=&{\sum_{m=1}^M \mu \left(b_m-V_{g,1}^{m}(x_1) \right)} -\sum_{m=1}^{M}\mu \sum_{h=1}^{H} \iota_{g, h}^{m}\left(x_{h}^{m}, a_{h}^{m}\right)+ \mu S_{g, H, 2}^{M}
\end{align*}
where the last inequality follows from Lemma \ref{lemma: expansion of V m -V pi m}.  
\end{proof}

\begin{lemma}[Primal step for constraint violation]\label{lemma: Primal step for constraint violation}
Let Assumption \ref{ass: linear fun approx} hold. For the primal update rule in line 10 of Algorithm \ref{alg:algoirthm 1}, we have
\begin{align*}
&\sum_{m=1}^{M} \mu^{m}\left(b_m-V_{g, 1}^{m}\left(x_{1}\right) \right)\\
\leq& HM + \frac{1}{\alpha} H M L^{-1} \log{|\mathcal{A}|}+  \alpha H^{2}M+H^2L B_{\star} \\
&+ 2  H^2L \left(\sqrt{d_1} B_\mathbb{P}+  B_\star \right)\\
\nonumber &+{\eta H^{2}(M+1)} +\sum_{m=1}^{M+1} ( \alpha H^{2}+\eta \xi^2-\xi) ||\mu^{m}|^{2} \\
\nonumber & +\sum_{m=1}^M\sum_{h=1}^H \EE_{{\pi}^{\ast,m}, \mathbb{P}^m} \left[ \iota_{r, h}^{m}(x_h,a_h) +\mu^m \iota_{g, h}^{m}(x_h,a_h)\right] \\
&-\sum_{m=1}^{M} \sum_{h=1}^{H} \iota_{r, h}^{m}\left(x_{h}^{m}, a_{h}^{m}\right)+S_{r, H, 2}^{M}.
\end{align*}
\end{lemma}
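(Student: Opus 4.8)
The plan is to recycle almost the entire dynamic-regret computation and pay one extra additive term of order $HM$. The structural hint is immediate: the right-hand side asserted here is exactly $HM$ plus the right-hand side of the dynamic-regret bound \eqref{eq: for the regret under slater condition}. So I would not re-run the mirror-descent/concentration machinery from scratch, but rather reuse the chain that already produced \eqref{eq: for the regret under slater condition}.

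Concretely, I would first reproduce the combined inequality \eqref{eq: dynamic regret for constraint under slater condition}, obtained by summing the primal mirror-descent step (Lemma \ref{lemma: policy Improvement: Primal-Dual Mirror Descent Step}) over episodes, adding the restart/telescoping estimates and the value-function expansion, and collecting all $\mu$-independent contributions, the reward model-prediction errors, and the martingale $S_{r,H,2}^{M}$. The key feature of that inequality is that the per-episode optimal policy $\pi^{\star,m}$ enters only through the single term $-\sum_{m=1}^{M}\mu^{m}\big(V_{g,1}^{\pi^{\star,m},m}(x_1)-V_{g,1}^{m}(x_1)\big)$.

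The one genuinely new ingredient is to use feasibility rather than the pure regret estimate. Under Assumption \ref{ass: local budget} every instance \eqref{eq: CMDP at episode m} is feasible, so $V_{g,1}^{\pi^{\star,m},m}(x_1)\ge b_m$ and, since $\mu^{m}\ge 0$,
\[
-\mu^{m}\big(V_{g,1}^{\pi^{\star,m},m}(x_1)-V_{g,1}^{m}(x_1)\big)\le-\mu^{m}\big(b_m-V_{g,1}^{m}(x_1)\big).
\]
Substituting this into \eqref{eq: dynamic regret for constraint under slater condition} moves the weighted constraint slack $\sum_{m}\mu^{m}(b_m-V_{g,1}^{m})$ onto the same side as $\operatorname{DR}(M)$. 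I then invoke the dual-update estimate (Lemma \ref{lemma: dual step}) on the residual optimal-policy term exactly as in the regret derivation, which is what replaces the bare primal coefficient $\alpha H^2$ by the combined coefficient $(\alpha H^2+\eta\xi^2-\xi)$ and introduces the $\eta H^2(M+1)$ contribution. Finally, since all value functions lie in $[0,H]$ we have $V_{r,1}^{\pi^{\star,m},m}-V_{r,1}^{\pi^{m},m}\ge -H$, i.e. $\operatorname{DR}(M)\ge-HM$, so dropping $\operatorname{DR}(M)$ costs precisely the advertised $+HM$ and yields the stated bound.

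The step I expect to be the main obstacle is the bookkeeping that makes the feasibility substitution and the dual regularization cooperate, so that the $\mu^{m}$-dependent remainder lands with coefficient $(\alpha H^2+\eta\xi^2-\xi)$ — this sign is exactly what the later parameter choice $\alpha H^2+\eta\xi^2-\xi\le 0$ will exploit to discard those terms. Everything else is routine: the constraint model-prediction errors $\iota_{g,h}^{m}$ and the martingale are kept in the $\EE_{\pi^{\star,m},\mathbb{P}^m}[\cdot]$ and $S_{r,H,2}^{M}$ form, and their conversion to the executed-policy value $V_{g,1}^{\pi^m,m}$ is deferred to the constraint-violation decomposition (Lemma \ref{lemma: Constraint violation decomposition}), so no new concentration or elliptical-potential estimates are required beyond those already established for the dynamic regret.
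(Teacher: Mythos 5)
Your route is the paper's route: the authors' entire proof of this lemma is the one sentence ``follows from the feasibility of $\pi^{\star,m}$, equation \eqref{eq: for the regret under slater condition}, and $V_{r,1}^{\pi^{\star,m},m}(x_1),V_{r,1}^{m}(x_1)\in[0,H]$,'' and you have correctly identified all three ingredients, the feasibility inequality $-\mu^m\bigl(V_{g,1}^{\pi^{\star,m},m}-V_{g,1}^m\bigr)\le-\mu^m\bigl(b_m-V_{g,1}^m\bigr)$, and the $+HM$ price of dropping the reward gap.

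However, the step you yourself flag as ``the main obstacle'' is a genuine gap, and your proposal does not close it. In \eqref{eq: dynamic regret for constraint under slater condition} there is exactly one copy of the term $-\sum_{m}\mu^{m}\bigl(V_{g,1}^{\pi^{\star,m},m}-V_{g,1}^{m}\bigr)$, and you use it twice: once via feasibility to deposit $\sum_{m}\mu^{m}\bigl(b_m-V_{g,1}^{m}\bigr)$ on the left-hand side, and once (``the residual optimal-policy term'') as the input to Lemma \ref{lemma: dual step}, which is what is supposed to generate $\eta H^{2}(M+1)+\sum_m(\eta\xi^2-\xi)|\mu^m|^2$ and turn the primal coefficient $\alpha H^2$ into $\alpha H^{2}+\eta\xi^{2}-\xi$. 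After the feasibility substitution there is no residual term left for the dual step to act on; conversely, if the dual step consumes it (as in the derivation of \eqref{eq: for the regret under slater condition}), the constraint slack disappears from the inequality entirely. The single-use derivation honestly yields $\sum_m\mu^m(b_m-V_{g,1}^m)\le HM+\frac{1}{\alpha}HML^{-1}\log|\mathcal{A}|+\alpha H^2\sum_m(1+|\mu^m|^2)+\cdots$, i.e.\ with coefficient $\alpha H^{2}>0$ on $\sum_m|\mu^m|^2$; and since $\eta H^{2}(M+1)+\sum_m(\eta\xi^2-\xi)|\mu^m|^2$ is not sign-definite (with the paper's $\xi,\eta$ and $\mu^m$ unbounded under $\chi=\infty$ it can be negative), the stated bound is not obtained by simply adding a nonnegative slack. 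This matters downstream, because the subsequent argument discards the quadratic term precisely on the grounds that $\alpha H^{2}+\eta\xi^{2}-\xi\le 0$. A clean repair is to invoke the dual-update recursion once more in its \emph{no-feasibility} form, $-\sum_m\mu^{m}\bigl(b_m-V_{g,1}^{m}\bigr)\le\eta H^{2}(M+1)+\sum_m(\eta\xi^2-\xi)|\mu^m|^2$ (this is exactly the chain in the proof of Lemma \ref{lemma: dual step} before $b_m$ is replaced by $V_{g,1}^{\pi^{\star,m},m}$), and add it to the feasibility bound written twice; this recovers the lemma with the combined coefficient $2\alpha H^{2}+\eta\xi^{2}-\xi$ and a factor of $2$ on the $\mu$-independent terms, which is harmless for Theorems \ref{thm: Linear Kernal MDP under local budget}--\ref{thm: Tabular Case MDP with local budget} but is not literally the displayed statement. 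To be fair, the paper's own one-line proof is elliptical on exactly this point, so you have faithfully reconstructed the intended argument, weak spot included; but a complete write-up must make the two uses of the dual recursion explicit rather than asserting that feasibility and the dual regularization ``cooperate.''
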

\begin{proof}
This result follows from the feasibility of the optimal policy $\pi^{*,m}$, equation \eqref{eq: for the regret under slater condition}, and $V_{r, 1}^{\pi^{*,m},m}\left(x_{1}\right), V_{r, 1}^{m}\left(x_{1}\right) \in[0,H]$.
\end{proof}

\begin{lemma}[Dual step for constraint violation] \label{lemma: Dual step for constraint violation}
Let Assumption \ref{ass: local budget} hold. In Algorithm \ref{alg:algoirthm 1}, if we set $\xi\eta\leq \frac{1}{2}$ and $\chi=\infty$, then we have 
\begin{align*}
 \sum_{m=1}^M (\mu-\mu^m) \left(b_m-V_{g,1}^{m}  \right)-(\frac{\xi M}{2}+\frac{1}{2\eta})|\mu|^2    \leq \eta  H^2 M
\end{align*}
for every $\mu\geq 0$.
\end{lemma}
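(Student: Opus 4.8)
The plan is to read the dual update in line~14 of Algorithm~\ref{alg:algoirthm 1} (with $\chi=\infty$) as \emph{online projected subgradient descent} on the sequence of per-episode dual objectives $f_m(\mu):=\mu\bigl(V_{g,1}^{m-1}(x_1)-b_m\bigr)+\tfrac{\xi}{2}\mu^2$, each of which is convex and in fact $\xi$-strongly convex in $\mu$. Since $f_m'(\mu^{m-1})=\bigl(V_{g,1}^{m-1}(x_1)-b_m\bigr)+\xi\mu^{m-1}$, the update reads $\mu^{m}=\mathrm{Proj}_{[0,\infty)}\bigl(\mu^{m-1}-\eta\, f_m'(\mu^{m-1})\bigr)$. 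Fixing any $\mu\ge 0$ and using that the Euclidean projection onto the convex set $[0,\infty)$ is nonexpansive, I would first record the one-step inequality
\[
\bigl(\mu^{m}-\mu\bigr)^2\le\bigl(\mu^{m-1}-\eta\,f_m'(\mu^{m-1})-\mu\bigr)^2 .
\]

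Next I would expand the right-hand square and invoke strong convexity in the form $f_m'(\mu^{m-1})(\mu^{m-1}-\mu)\ge f_m(\mu^{m-1})-f_m(\mu)+\tfrac{\xi}{2}(\mu^{m-1}-\mu)^2$, together with the elementary identity $f_m(\mu^{m-1})-f_m(\mu)=(\mu-\mu^{m-1})\bigl(b_m-V_{g,1}^{m-1}(x_1)\bigr)+\tfrac{\xi}{2}\bigl((\mu^{m-1})^2-\mu^2\bigr)$. Rearranging isolates the per-episode slack term $(\mu-\mu^{m-1})(b_m-V_{g,1}^{m-1})$ against a telescoping difference $(\mu^{m-1}-\mu)^2-(\mu^{m}-\mu)^2$, the squared-gradient term $\eta\bigl(f_m'(\mu^{m-1})\bigr)^2$, and strong-convexity remainders proportional to $(\mu^{m-1})^2$. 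Summing over $m=1,\dots,M$, telescoping with $\mu^0=0$ (and discarding the nonpositive $-(\mu^M-\mu)^2$) produces the $\tfrac{1}{2\eta}\mu^2$ contribution, while collecting the $\mu^2$ coefficients from the strong-convexity remainders yields the $\tfrac{\xi M}{2}\mu^2$ term; finally I would bound $\bigl(f_m'(\mu^{m-1})\bigr)^2\le 2\bigl(b_m-V_{g,1}^{m-1}\bigr)^2+2\xi^2(\mu^{m-1})^2\le 2H^2+2\xi^2(\mu^{m-1})^2$ using $|b_m-V_{g,1}^{m-1}(x_1)|\le H$.

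The step I expect to be the main obstacle is controlling the terms in $(\mu^{m-1})^2$: because $\chi=\infty$ and Assumption~\ref{ass: local budget} provides \emph{no} uniform bound on the dual iterates $\mu^{m-1}$, these quadratic terms cannot simply be estimated. The key observation is that the \emph{negative} strong-convexity contribution $-\eta\xi(\mu^{m-1})^2$ combines exactly with the \emph{positive} contribution $2\eta^2\xi^2(\mu^{m-1})^2$ coming from the squared-gradient bound, giving $\eta\xi\bigl(2\eta\xi-1\bigr)(\mu^{m-1})^2$, which is nonpositive precisely under the hypothesis $\xi\eta\le\tfrac12$. Dropping these nonpositive terms, the surviving right-hand side is $\mu^2+\eta\xi M\mu^2+2\eta^2 H^2 M$, and dividing through by $2\eta$ gives $\sum_{m=1}^M(\mu-\mu^{m-1})\bigl(b_m-V_{g,1}^{m-1}\bigr)-\bigl(\tfrac{\xi M}{2}+\tfrac{1}{2\eta}\bigr)\mu^2\le\eta H^2 M$, which is the claimed bound after the standard relabeling of the summation index used in the algorithm's bookkeeping. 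This cancellation is exactly why the hypothesis $\xi\eta\le\tfrac12$ is imposed, and it is what lets the argument go through despite the absence of an a priori bound on the dual variable.
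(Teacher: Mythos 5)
Your proposal is correct and follows essentially the same route as the paper's proof: both are the standard one-step drift analysis of projected gradient descent on the regularized dual objective, using the gradient (convexity) inequality, the bound $\left|b_m-V_{g,1}^{m-1}(x_1)\right|\leq H$ together with $(a+b)^2\leq 2a^2+2b^2$, and the hypothesis $\xi\eta\leq\tfrac{1}{2}$ to make the coefficient of $(\mu^{m-1})^2$ nonpositive before telescoping. The only differences are cosmetic — you invoke $\xi$-strong convexity where the paper uses plain convexity (the extra quadratic term is discarded anyway), you handle the projection explicitly via nonexpansiveness, and you telescope from $\mu^0=0$, all of which match or slightly tighten the paper's argument up to its own index relabeling.
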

\begin{proof}
Since $\widetilde{\mathcal{L}}_\xi^m(\pi^m,\cdot)$ is convex in $\mu$ for every $\mu\geq 0$, it holds that
\begin{align*}
\widetilde{\mathcal{L}}_\xi^m(\pi^m,\mu)\geq \widetilde{\mathcal{L}}_\xi^m(\pi^m,\mu^m)+ \nabla_\mu \widetilde{\mathcal{L}}_\xi^m(\pi^m,\mu^m)\left(\mu-  \mu^m \right)^\top,
\end{align*}
which is equivalent to 
\begin{align} \label{eq: dual convex inequality}
\nonumber &(\mu-\mu^m) \left(b_m-V_{g,1}^{m}  \right)-\frac{\xi}{2}|\mu|^2\\
\leq & -\frac{\xi}{2}|\mu^m|^2 + \nabla_\mu \widetilde{\mathcal{L}}_\xi^m(\pi^m,\mu^m)\left( \mu^m-\mu \right).
\end{align}
Based on the update of the dual variable in Algorithm \ref{alg:algoirthm 2}, we have 
$$
\begin{aligned}
(\mu^{m+1}-\mu)^{2} =&\left|\mu^{m}-\eta \nabla_\mu \widetilde{\mathcal{L}}_\xi^m(\pi^m,\mu^m) -\mu\right|^{2} \\
 \leq&\left(\mu^{m}-\mu\right)^{2}-2 \eta\nabla_\mu \widetilde{\mathcal{L}}_\xi^m(\pi^m,\mu^m) \left(\mu^{m}-\mu\right)\\
 & +\eta^{2} (\nabla_\mu \widetilde{\mathcal{L}}_\xi^m(\pi^m,\mu^m) )^{2}\\
\end{aligned}
$$
which is equivalent to 
\begin{align*}
& \nabla_\mu \widetilde{\mathcal{L}}_\xi^m(\pi^m,\mu^m) \left(\mu^{m}-\mu\right) \\
\leq&  \frac{1}{2\eta} \left( \left(\mu^{m}-\mu\right)^{2}-\left(\mu^{m+1}-\mu\right)^{2}  \right)+\frac{\eta}{2}(\nabla_\mu \widetilde{\mathcal{L}}_\xi^m(\pi^m,\mu^m) )^{2}.
\end{align*}

Substituting the above inequality into \eqref{eq: dual convex inequality} yields that
\begin{align*}
&(\mu-\mu^m) \left(b_m-V_{g,1}^{m}   \right)-\frac{\xi}{2}|\mu|^2 \\
\leq & -\frac{\xi}{2}|\mu^m|^2 +\frac{\eta}{2} (\nabla_\mu \widetilde{\mathcal{L}}_\xi^m(\pi^m,\mu^m) )^{2}\\
&+\frac{1}{2\eta} \left( \left(\mu^{m}-\mu\right)^{2}-\left(\mu^{m+1}-\mu\right)^{2}  \right)\\
\leq & -\frac{\xi}{2}|\mu^m|^2 +\frac{\eta}{2} (H +\xi |\mu^m|)^{2}\\
&+\frac{1}{2\eta} \left( \left(\mu^{m}-\mu\right)^{2}-\left(\mu^{m+1}-\mu\right)^{2}  \right)\\
\leq & -\frac{\xi}{2}|\mu^m|^2 +\eta (H ^{2}   +\xi^{2} |\mu^m|^{2})\\
&+\frac{1}{2\eta} \left( \left(\mu^{m}-\mu\right)^{2}-\left(\mu^{m+1}-\mu\right)^{2}  \right)\\
= & \eta H^2  +(\xi^2\eta - \frac{\xi}{2})|\mu^m|^{2}+\frac{1}{2\eta} \left( \left(\mu^{m}-\mu\right)^{2}-\left(\mu^{m+1}-\mu\right)^{2}  \right)\\
\leq & \eta H^2  +\frac{1}{2\eta} \left( \left(\mu^{m}-\mu\right)^{2}-\left(\mu^{m+1}-\mu\right)^{2}  \right).
\end{align*}
where the second inequality follows from
\begin{align*}
\left|\nabla_\mu \widetilde{\mathcal{L}}_\xi^m(\pi^m,\mu^m) \right|=& \left| \left(b_m-V_{g,1}^{m}  \right) +\xi\mu^m \right|\leq  (H +\xi |\mu^m|),
\end{align*}
and the last inequality follows from $\xi\eta\leq \frac{1}{2}$.
Taking the summation from $m=1$ to $M$, we obtain the desired results.
\end{proof}

\begin{lemma}(\textbf{Model prediction error bound for constraint violation}) \label{lemma: Model prediction difference bound for constraint violation}
Let  \ref{ass: linear fun approx} and Assumption \ref{ass: local budget} hold. Fix $p \in(0,1)$ and let $\mathcal{E}$ be the epoch that the episode $m$ belongs to.
If we set $\lambda=1$, $LV=B_{\mathbb{P},\mathcal{E}} H^2 d_1\sqrt{d_1 W} +B_{g,\mathcal{E}}\sqrt{d_2 W},$ and
\begin{align*}
&\Gamma_{h}^{m}(\cdot, \cdot)=\beta\left(\varphi(\cdot, \cdot)^{\top}\left(\Lambda_{h}^{m}\right)^{-1} \varphi(\cdot, \cdot)\right)^{1 / 2}, \\
&\Gamma_{r,h}^m =\beta \left( (\phi_{r,h}^m)^\top (\Lambda_{r, h}^m)^{-1} \phi_{r,h}^m \right)^{1/2}, \\
&\Gamma_{g,h}^m =\beta \left( (\phi_{g,h}^m)^\top (\Lambda_{g,h}^m)^{-1} \phi_{g,h}^m \right)^{1/2}
\end{align*}
with $\beta=$ $C_{1} \sqrt{d H^{2} \log (d W / p)}$ in Algorithm \ref{alg:algoirthm 2}, then with probability at least $1-p / 2$ it holds that
\begin{align*}
& \sum_{m=1}^{M} \sum_{h=1}^{H}\left(- \mu \iota_{g, h}^{m}\left(x_{h}^{m}, a_{h}^{m}\right)\right)\\
\leq  & C_3 \mu d H^2 M W^{-\frac{1}{2}}  \sqrt{\log \left(dH^2W+1 \right) \log \left(\frac{dW}{p} \right)} \\
&+ 2\mu B_{\mathbb{P}} H^3 d_1 W\sqrt{d_1 W} +  2\mu B_{g}HW\sqrt{d_2 W}
\end{align*}
for every $\mu>0$, where $C_{1}, C_3$ are some absolute constants.
\end{lemma}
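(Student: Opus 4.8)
The plan is to repeat the argument used for the dynamic-regret model-prediction-error bound (Lemma \ref{lemma: Model prediction difference bound with local knowledge}), now applied to the utility $g$ and specialized to the single sum over the \emph{visited} trajectory, weighted by $\mu$. First I would invoke Lemma \ref{lemma: bounds on model prediction error with local knowledge} for $\diamond=g$: with the stated choices $\lambda=1$, $\beta=C_1\sqrt{dH^2\log(dW/p)}$, the bonuses $\Gamma_h^m,\Gamma_{g,h}^m$, and $LV=B_{\mathbb{P},\mathcal{E}}H^2 d_1\sqrt{d_1 W}+B_{g,\mathcal{E}}\sqrt{d_2 W}$, one obtains with probability at least $1-p/2$, for every $(m,h)\in[M]\times[H]$ and $(x,a)\in\mathcal{S}\times\mathcal{A}$,
\begin{align*}
-2\left(\Gamma_h^m+\Gamma_{g,h}^m\right)(x,a)-2B_{\mathbb{P},\mathcal{E}}H^2 d_1\sqrt{d_1 W}-2B_{g,\mathcal{E}}\sqrt{d_2 W}\leq \iota_{g,h}^m(x,a)\leq 0 .
\end{align*}
The crucial upper bound $\iota_{g,h}^m\leq 0$ is exactly the optimism produced by adding the positive $LV$ term into $Q_{g,h}^m$, and this is where Assumption \ref{ass: local budget} enters.

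Next I would combine this with $|\iota_{g,h}^m|\leq 2H$ (immediate from the definition of $\iota_{g,h}^m$ and the truncation in the estimate) to write, at each visited pair,
\begin{align*}
-\iota_{g,h}^m(x_h^m,a_h^m)\leq 2\min\!\left(H,\ \left(\Gamma_h^m+\Gamma_{g,h}^m\right)(x_h^m,a_h^m)+B_{\mathbb{P},\mathcal{E}}H^2 d_1\sqrt{d_1 W}+B_{g,\mathcal{E}}\sqrt{d_2 W}\right),
\end{align*}
multiply by $\mu>0$, and sum over $m\in[M]$ and $h\in[H]$. I would then split the bound into a bonus part and a local-budget part. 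For the local-budget part, summing over the $H$ steps and the $W$ episodes inside each epoch $\mathcal{E}$, then over epochs, and using $\sum_{\mathcal{E}}B_{\mathbb{P},\mathcal{E}}\leq B_{\mathbb{P}}$ and $\sum_{\mathcal{E}}B_{g,\mathcal{E}}\leq B_g$, contributes exactly $2\mu B_{\mathbb{P}}H^3 d_1 W\sqrt{d_1 W}+2\mu B_g HW\sqrt{d_2 W}$.

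It then remains to bound $2\mu\sum_{m,h}\min(H,(\Gamma_h^m+\Gamma_{g,h}^m)(x_h^m,a_h^m))$, which is handled verbatim as in the reward case. Since $C_1>1$ gives $H/\beta\leq 1$, I would pull out $\beta$, apply Cauchy--Schwarz within each length-$W$ epoch, and then invoke the elliptical potential lemma (Lemma \ref{lemma: elliptical Potential Lemma}) together with the determinant bounds \eqref{eq: 32}--\eqref{eq: 33} (now with $\phi_{g,h}^m$ in place of $\phi_{r,h}^m$ and $\lambda=1$). This yields the term $C_3\mu dH^2 M W^{-1/2}\sqrt{\log(dH^2 W+1)\log(dW/p)}$, and adding the two parts gives the claim for all $\mu>0$.

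The only genuinely new ingredient relative to the unconstrained analysis is the one-sided optimism $\iota_{g,h}^m\leq 0$; everything else is the routine LSTD/elliptical-potential machinery already developed for the reward. Consequently the real obstacle does not lie in this lemma but in the auxiliary Lemma \ref{lemma: bounds on model prediction error with local knowledge}, namely in verifying that the local variation budgets entering $LV$ genuinely dominate the non-stationary drift of $\mathbb{P}_h^m V_{g,h+1}^m$ and $g_h^m$ across an epoch, so that a large dual variable $\mu$ cannot amplify the estimation error of the constraint.
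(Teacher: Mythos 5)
Your proposal is correct and follows essentially the same route as the paper: the paper's own proof simply invokes the pointwise two-sided bound on $\iota_{g,h}^{m}$ (with the one-sided optimism $\iota_{g,h}^{m}\leq 0$ coming from the $LV$ term) and then defers the epoch-wise Cauchy--Schwarz and elliptical-potential accounting to the argument of Lemma \ref{lemma: Model prediction difference bound with local knowledge}, which is exactly what you spell out. The only discrepancy is immaterial: the paper's proof cites the tabular auxiliary lemma where the linear-kernel one (Lemma \ref{lemma: bounds on model prediction error with local knowledge}) is clearly intended, and your citation is the correct one.
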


\begin{proof}
By Lemma \ref{lemma: bounds on model prediction error tabular}, for every $(m, h) \in[M] \times[H]$ and $(x, a) \in \mathcal{S} \times \mathcal{A}$, the inequality
\begin{align*}
 & -\iota_{g, h}^{m}(x, a) \leq 2\left(\Gamma_{h}^{m}+\Gamma_{g, h}^{m}\right)(x, a)\\
  &+2B_{\mathbb{P},\mathcal{E}} H^2 d_1\sqrt{d_1 W} +2B_{g,\mathcal{E}}\sqrt{d_2 W}
\end{align*}
holds with probability at least $1-p / 2$. The rest of the proof is similar to Lemma \ref{lemma: Model prediction difference bound with local knowledge} and is thus omitted.

\end{proof}

\begin{lemma}[Martingale bound for constraint violation]\label{lemma: Martingale Bound for constraint}
Fix $p \in(0,1)$. In Algorithm \ref{alg:algoirthm 1}, for every $\mu\geq 0$, it holds with probability at least $1-p/2$ that
\begin{align}\label{eq: 34 for constraint}
\left|\mu S_{g, H, 2}^{M}\right| \leq 4 H \mu\sqrt{ T \log \left(\frac{4}{p}\right)}
\end{align}
where $T=H M$.
\end{lemma}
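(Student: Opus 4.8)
The statement to prove is Lemma (Martingale bound for constraint violation), asserting that $|\mu S_{g,H,2}^M| \leq 4H\mu\sqrt{T\log(4/p)}$ with probability at least $1-p/2$, where $T = HM$. The plan is to mirror the argument already carried out in Lemma~\ref{lemma: Martingale Bound} (Martingale bound for dynamic regret), since the constraint-violation martingale $S_{g,H,2}^M$ is the exact analogue of $S_{r,H,2}^M$ with the utility value/action-value functions in place of the reward ones. First I would recall from Lemma~\ref{lemma: expansion of V m -V pi m} the explicit definition of $S_{g,H,2}^M$ as a sum of martingale differences
\[
S_{g, H, 2}^{M}=\sum_{m=1}^{M} \sum_{h=1}^{H}\left(D_{g, h, 1}^{m}+D_{g, h, 2}^{m}\right),
\]
where $D_{g,h,1}^m$ and $D_{g,h,2}^m$ are built from $Q_{g,h}^m - Q_{g,h}^{\pi^m,m}$ and $V_{g,h+1}^m - V_{g,h+1}^{\pi^m,m}$ in the same way as in the reward case.

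\textbf{Boundedness of the increments.}
The key observation is that, because of the truncation in line~10 of Algorithm~\ref{alg:algoirthm 2} (the $\min(H-h+1,\cdot)_+$ operation applied to $Q_{g,h}^m$), all four functions $Q_{g,h}^m, Q_{g,h}^{\pi^m,m}, V_{g,h+1}^m, V_{g,h+1}^{\pi^m,m}$ take values in $[0,H]$. Consequently the martingale differences satisfy $|D_{g,h,1}^m|\leq 2H$ and $|D_{g,h,2}^m|\leq 2H$ for every $(m,h)\in[M]\times[H]$, exactly as in Lemma~\ref{lemma: Martingale Bound}. This uniform bound is what makes the Azuma--Hoeffding inequality applicable, and verifying it is the only place where the structure of the utility estimator enters; since the truncation is identical to the reward case, this step is essentially immediate.

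\textbf{Applying Azuma--Hoeffding and folding in $\mu$.}
With the increments bounded by $2H$ and a total of $2T = 2HM$ terms in the double sum, the Azuma--Hoeffding inequality gives
\[
P\left(\left|S_{g, H, 2}^{M}\right| \geq s\right) \leq 2 \exp\left(\frac{-s^{2}}{16 H^{2} T}\right).
\]
Setting $s = 4H\sqrt{T\log(4/p)}$ makes the right-hand side equal to $p/2$, so that $|S_{g,H,2}^M|\leq 4H\sqrt{T\log(4/p)}$ holds with probability at least $1-p/2$. Since $\mu\geq 0$ is a fixed (deterministic) scalar, multiplying through by $\mu$ yields $|\mu S_{g,H,2}^M|\leq 4H\mu\sqrt{T\log(4/p)}$ on the same high-probability event, which is precisely the claim. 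The only subtlety worth flagging is that the inequality must hold \emph{simultaneously} for every $\mu\geq 0$; this is automatic because $\mu$ is an external multiplier applied to a single random quantity $S_{g,H,2}^M$, so the event $\{|S_{g,H,2}^M|\leq 4H\sqrt{T\log(4/p)}\}$ controls all $\mu$ at once. I do not anticipate any genuine obstacle here—the proof is a direct transcription of the reward-side martingale bound—so the main point is simply to confirm the range $[0,H]$ of the truncated utility value functions before invoking the concentration inequality.
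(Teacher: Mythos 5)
Your proof is correct and follows essentially the same route as the paper's: bound the increments by $2H$ using the truncation of the utility value functions, apply Azuma--Hoeffding, and choose $s$ so the failure probability is $p/2$. The only (minor, and arguably favorable) difference is that you apply the concentration inequality to $S_{g,H,2}^M$ itself and then multiply by the deterministic $\mu$, which makes the uniformity over all $\mu\geq 0$ explicit, whereas the paper applies Azuma directly to $\mu S_{g,H,2}^M$; the two are equivalent since the underlying high-probability event is the same.
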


\begin{proof}
In Lemmas \ref{lemma: Constraint violation decomposition} and \ref{lemma: expansion of V m -V pi m}, we introduce the following martingale:
$$
\mu S_{g, H, 2}^{M}= \mu \sum_{m=1}^{M} \sum_{h=1}^{H}\left(D_{r, h, 1}^{m}+D_{r, h, 2}^{m}\right)
$$
where
$$
\begin{aligned}
D_{r, h, 1}^{m}=&\left(\mathcal{I}_{h}^{m}\left(Q_{r, h}^{m}-Q_{r, h}^{\pi^{m}, m}\right)\right)\left(x_{h}^{m}\right)\\
&-\left(Q_{r, h}^{m}-Q_{r, h}^{\pi^{m}, m}\right)\left(x_{h}^{m}, a_{h}^{m}\right), \\
D_{r, h, 2}^{m}=&\left(\mathbb{P}^m_{h} V_{r, h+1}^{m}-\mathbb{P}_{h}^m V_{r, h+1}^{\pi^{m}, m}\right)\left(x_{h}^{m}, a_{h}^{m}\right)\\
&-\left(V_{r, h+1}^{m}-V_{r, h+1}^{\pi^{m}, m}\right)\left(x_{h+1}^{m}\right)
\end{aligned}
$$
and $\left(\mathcal{I}_{h}^{m} f\right)(x):=\left\langle f(x, \cdot), \pi_{h}^{m}(\cdot \mid x)\right\rangle$.
Due to the truncation in line 10 of Algorithm \ref{alg:algoirthm 2}, we know that $Q_{r, h}^{m}, Q_{r, h}^{\pi^{m},m}, V_{r, h+1}^{m}, V_{r, h+1}^{\pi^{m},m} \in[0, H]$. In addition, we have  $\left|\mu D_{r, h, 1}^{m}\right|\leq 2 H \mu,\left|\mu D_{r, h, 2}^{m}\right| \leq 2 H \mu$ for all $(m, h) \in[M] \times[H]$. The Azuma-Hoeffding inequality yields that
$$
P\left(\left| \mu S_{g, H, 2}^{M}\right| \geq s\right) \leq 2 \exp \left(\frac{-s^{2}}{16 H^{2} \mu^2T}\right) .
$$
For $p \in(0,1)$, if we set $s=4 H  \mu \sqrt{T \log (4 / p)}$, then the inequality \eqref{eq: 34 for constraint} holds with probability at least $1-p / 2$.
\end{proof}

\subsubsection{Proof of constraint violation in Theorem \ref{thm: Linear Kernal MDP under local budget}}
We are now ready to prove the desired constraint violation bound. 
By combining Lemmas \ref{lemma: Constraint violation decomposition}, \ref{lemma: Primal step for constraint violation} and \ref{lemma: Dual step for constraint violation}, one can conclude that
\begin{align} 
& \mu \sum_{m=1}^M  \left(b_m-V_{g, 1}^{\pi^m,m}\left(x_{1}\right) \right) -(\frac{\xi M}{2}+\frac{1}{2\eta})|\mu|^2  \label{eq: constraint vilation in linear MDP cited for tabular under local budget}\\
\nonumber\leq& HM + \frac{1}{\alpha} H M L^{-1} \log{|\mathcal{A}|}+ \alpha H^{2}M\\
\nonumber&+H^2L \left(2\sqrt{d_1} B_\mathbb{P}+  3B_\star \right)+{\eta H^{2}(M+1)}+   \eta H^2 M \\
\nonumber & +\sum_{m=1}^M\sum_{h=1}^H \EE_{{\pi}^{\ast,m}, \mathbb{P}^m} \left[ \iota_{r, h}^{m}(x_h,a_h) +\mu^m \iota_{g, h}^{m}(x_h,a_h)\right] \\
\nonumber&-\sum_{m=1}^{M} \sum_{h=1}^{H} \iota_{r, h}^{m}\left(x_{h}^{m}, a_{h}^{m}\right)\\
\nonumber&-\sum_{m=1}^{M}\mu \sum_{h=1}^{H} \iota_{g, h}^{m}\left(x_{h}^{m}, a_{h}^{m}\right)
+S_{r, H, 2}^{M}+ \mu S_{g, H, 2}^{M} .
\end{align}
Then, by controlling the model prediction errors in Lemmas \ref{lemma: Model prediction difference bound with local knowledge} and \ref{lemma: Model prediction difference bound for constraint violation} as well as the martingale bounds in Lemmas \ref{lemma: Martingale Bound} and \ref{lemma: Martingale Bound for constraint}, we have
\begin{align*}
& \mu \sum_{m=1}^M  \left(b_m-V_{g, 1}^{\pi^m,m}\left(x_{1}\right) \right) -(\frac{\xi M}{2}+\frac{1}{2\eta})|\mu|^2 \\
\leq& HM + \frac{1}{\alpha} H M L^{-1} \log{|\mathcal{A}|}+ \alpha H^{2}M\\
&+H^2L \left(2\sqrt{d_1} B_\mathbb{P}+  3B_\star \right)+{\eta H^{2}(M+1)}+   \eta H^2 M\\
&+ C_2 d H^2 M W^{-\frac{1}{2}}  \sqrt{\log \left(dH^2W+1 \right) \log \left(\frac{dW}{p} \right)}\\
&+ 2B_{\mathbb{P}} H^3 d_1 W\sqrt{d_1 W} +2B_{r}HW\sqrt{d_2 W}\\
& +C_3 \mu d H^2 M W^{-\frac{1}{2}}  \sqrt{\log \left(dH^2W+1 \right) \log \left(\frac{dW}{p} \right)} \\
& + 2\mu B_{\mathbb{P}} H^3 d_1 W\sqrt{d_1 W} +  2\mu B_{g}HW\sqrt{d_2 W}\\
& +4H \sqrt{ T \log \left(\frac{4}{p}\right)} +4 H \mu \sqrt{ T \log \left(\frac{4}{p}\right)}.
\end{align*}
Furthermore, by substituting the parameters $\alpha=  H^{-1} M^{-\frac{1}{2}} (\sqrt{d}B_\Delta+B_\star)^{\frac{1}{3}} $, $L= {M^{\frac{3}{4}}} (\sqrt{d}B_\Delta+B_\star)^{-\frac{2}{3}} $, $\eta=M^{-\frac{1}{2}}$, $\xi=2H (\sqrt{d}B_\Delta+B_\star)^{\frac{1}{3}} M^{-\frac{1}{2}}$,
$W=d^{-\frac{1}{4}} H^{-1} {M}^{\frac{1}{2}} B_\Delta^{-\frac{1}{2}}$, and rearranging all the terms related to $\mu$ to the left hand side of the inequality,
it holds that 
\begin{align*}
& \mu \sum_{m=1}^M \left[ \left(b_m-V_{g, 1}^{\pi^m,m}\left(x_{1}\right) \right) - \widetilde{O} \left(  d^{\frac{9}{8}} H^{\frac{5}{2}} M^{\frac{3}{4}} B_\Delta^{\frac{1}{4}} \right)\right]\\
&-H (\sqrt{d}B_\Delta+B_\star)^{\frac{1}{3}} M^{\frac{1}{2}}|\mu|^2 \\
\leq &  \widetilde{\mathcal{O}}\left( HM+ H^2 M^{\frac{3}{4}} (\sqrt{d}B_\Delta+B_\ast)^{\frac{1}{3}} + d^{\frac{9}{8}} H^{\frac{5}{2}} M^{\frac{3}{4}} B_\Delta^{\frac{1}{4}} \right).
\end{align*}

Note that the above inequality holds for every $\mu\geq 0$. By maximizing the both sides of above inequality over $\mu\geq 0$, i.e., by choosing 
$$\mu=\frac{\left[\sum_{m=1}^{M}\left(b_m-V_{g, 1}^{\pi^{m},m}\left(x_{1}\right) \right)- \widetilde{O} \left(  d^{\frac{9}{8}} H^{\frac{5}{2}} M^{\frac{3}{4}} B_\Delta^{\frac{1}{4}} \right)\right]_+}{2H (\sqrt{d}B_\Delta+B_\star)^{\frac{1}{3}} M^{\frac{1}{2}}},$$ 
we obtain
\begin{align*}
&\frac{\left( \left[\sum_{m=1}^{M}\left(b_m-V_{g, 1}^{\pi^{m},m}\left(x_{1}\right) \right)- \widetilde{O} \left(  d^{\frac{9}{8}} H^{\frac{5}{2}} M^{\frac{3}{4}} B_\Delta^{\frac{1}{4}} \right)\right]_+\right)^2}{4H (\sqrt{d}B_\Delta+B_\star)^{\frac{1}{3}} M^{\frac{1}{2}}} \\
\leq& \widetilde{\mathcal{O}}\left(HM+   H^3 M^{\frac{2}{3}} (\sqrt{d}B_\Delta +B_\ast)^{\frac{1}{3}} +  d^{\frac{9}{8}} H^{\frac{5}{2}} M^{\frac{3}{4}} B_\Delta^{\frac{1}{4}}\right).
\end{align*}

Finally, one can conclude that 
\begin{align*}
\left[\sum_{m=1}^{M}b_m- V_{g, 1}^{\pi^{m},m}\left(x_{1}\right)\right]_{+} 
\leq & \widetilde{\mathcal{O}}\left(d^{\frac{9}{8}} H^{\frac{5}{2}} M^{\frac{3}{4}} (\sqrt{d}B_\Delta +B_\ast)^{\frac{1}{3}}\right).
\end{align*}
This completes the proof.

\section{Proof for linear kernel CMDP case under Assumption \ref{ass: Feasibility}}\label{sec:appe-linear MDP under feasibility}

\subsection{Model prediction error}
\begin{lemma}(\textbf{Model prediction error bound for dynamic regret under uniform Slater condition}) \label{lemma: Model prediction difference bound under uniform Slater condition}
Let Assumption \ref{ass: linear fun approx} and \ref{ass: Feasibility} hold. Fix $p \in(0,1)$ and let $\mathcal{E}$ be the epoch that the episode $m$ belongs to.
If we set $\lambda=1$, $LV=0$ and
\begin{align*}
&\Gamma_{h}^{m}(\cdot, \cdot)=\beta\left(\varphi(\cdot, \cdot)^{\top}\left(\Lambda_{h}^{m}\right)^{-1} \varphi(\cdot, \cdot)\right)^{1 / 2}, \\
&\Gamma_{r,h}^m =\beta \left( (\phi_{r,h}^m)^\top (\Lambda_{r, h}^m)^{-1} \phi_{r,h}^m \right)^{1/2}, \\
&\Gamma_{g,h}^m =\beta \left( (\phi_{g,h}^m)^\top (\Lambda_{g, h}^m)^{-1} \phi_{g,h}^m \right)^{1/2},
\end{align*}
with $\beta=$ $C_{1} \sqrt{d H^{2} \log (d W / p)}$ in Algorithm \ref{alg:algoirthm 2}, then with probability at least $1-p/2$ it holds that
\begin{align*}
&\sum_{m=1}^{M} \sum_{h=1}^{H}\left(\mathbb{E}_{\pi^{\star,m}, \mathbb{P}^{m}}\left[\iota_{r, h}^{m}\left(x_{h}, a_{h}\right)+\mu^m\iota_{g, h}^{m}\left(x_{h}, a_{h}\right)\right]\right.\\
& \left.-\iota_{r, h}^{m}\left(x_{h}^{m}, a_{h}^{m}\right)\right)\\
\leq & C_2 d H^2 M W^{-\frac{1}{2}}  \sqrt{\log \left(dH^2W+1 \right) \log \left(\frac{dW}{p} \right)} \\
& + (2+\chi )B_{\mathbb{P}} H^3 d_1 W\sqrt{d_1 W} +(2B_{r}+\chi B_{g})HW\sqrt{d_2 W}
\end{align*}
where $C_{1}$ and $C_2$ are absolute constants and $\mu^m\leq \chi$.
\end{lemma}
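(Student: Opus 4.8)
The plan is to follow the proof of Lemma~\ref{lemma: Model prediction difference bound with local knowledge} essentially verbatim for the reward part, and to replace the ``non-positivity of $\iota_{g,h}^m$'' argument (which was available only because of the positive $LV$ term under Assumption~\ref{ass: local budget}) by a uniform estimate that exploits $\mu^m\le\chi$ from Lemma~\ref{lemma: lemma 1 in dongsheng}. The two ingredients I would reuse are the high-probability pointwise two-sided estimates on $\iota_{r,h}^m$ and $\iota_{g,h}^m$ furnished by the statistical bound behind Lemma~\ref{lemma: bounds on model prediction error with local knowledge} (these are independent of the value of $LV$, since $LV$ enters only the truncation defining $Q_{g,h}^m$), together with the elliptical-potential estimate of Lemma~\ref{lemma: elliptical Potential Lemma}.

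For the reward, since $Q_{r,h}^m$ carries no $LV$ correction, I obtain exactly as before that $\EE_{\pi^{\star,m},\mathbb{P}^m}[\iota_{r,h}^m]-\iota_{r,h}^m(x_h^m,a_h^m)\le 2\min\big(H,(\Gamma_h^m+\Gamma_{r,h}^m)(x_h^m,a_h^m)+B_{\mathbb{P},\mathcal{E}}H^2 d_1\sqrt{d_1 W}+B_{r,\mathcal{E}}\sqrt{d_2 W}\big)$, using also the crude bound $|\iota_{r,h}^m|\le 2H$. Summing over $(m,h)$ and splitting the minimum, the bonus part is controlled epoch-by-epoch through Cauchy--Schwarz and Lemma~\ref{lemma: elliptical Potential Lemma}, producing the term $C_2 d H^2 M W^{-1/2}\sqrt{\log(dH^2W+1)\log(dW/p)}$, while the variation part telescopes via $\sum_{\mathcal{E}}B_{\mathbb{P},\mathcal{E}}=B_{\mathbb{P}}$ and $\sum_{\mathcal{E}}B_{r,\mathcal{E}}=B_r$ (each epoch contributing a factor $HW$), yielding $2B_{\mathbb{P}}H^3 d_1 W\sqrt{d_1 W}+2B_r HW\sqrt{d_2 W}$.

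For the constraint, the new point is that with $LV=0$ I can no longer claim $\iota_{g,h}^m\le 0$. Instead I would use the pointwise upper bound $\iota_{g,h}^m(x,a)\le B_{\mathbb{P},\mathcal{E}}H^2 d_1\sqrt{d_1 W}+B_{g,\mathcal{E}}\sqrt{d_2 W}$ and the fact that $\mu^m$ is $\mathcal{F}_{1,1}^m$-measurable with $0\le\mu^m\le\chi$, so that $\EE_{\pi^{\star,m},\mathbb{P}^m}[\mu^m\iota_{g,h}^m]=\mu^m\,\EE_{\pi^{\star,m},\mathbb{P}^m}[\iota_{g,h}^m]\le\chi\big(B_{\mathbb{P},\mathcal{E}}H^2 d_1\sqrt{d_1 W}+B_{g,\mathcal{E}}\sqrt{d_2 W}\big)$. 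Summing over the $H$ steps and the $W$ episodes of each epoch and then over epochs (again using $\sum_{\mathcal{E}}B_{\mathbb{P},\mathcal{E}}=B_{\mathbb{P}}$ and $\sum_{\mathcal{E}}B_{g,\mathcal{E}}=B_g$) gives $\chi B_{\mathbb{P}}H^3 d_1 W\sqrt{d_1 W}+\chi B_g HW\sqrt{d_2 W}$. Adding this to the reward contribution merges the coefficients into $(2+\chi)B_{\mathbb{P}}H^3 d_1 W\sqrt{d_1 W}$ and $(2B_r+\chi B_g)HW\sqrt{d_2 W}$, which is exactly the claimed right-hand side.

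The main obstacle is not the reward sum (a copy of the earlier argument) but ensuring the constraint term carries \emph{no} elliptical-potential contribution: this hinges on the upper estimate for $\iota_{g,h}^m$ containing only the non-stationary drift and no bonus term, which in turn requires the optimism built into $Q_{g,h}^m$ to survive the drift once $LV$ is removed. Checking that the removal of $LV$ is compensated precisely by the $\chi$ prefactor---rather than by a non-positivity that no longer holds---and confirming that the $\mu^m$-weighted sum inflates only the variation-budget terms and never the $\widetilde{\mathcal{O}}$ statistical term is the delicate bookkeeping step; everything else is the same Cauchy--Schwarz plus elliptical-potential routine as in Lemma~\ref{lemma: Model prediction difference bound with local knowledge}.
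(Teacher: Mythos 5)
Your proposal matches the paper's proof: the paper simply invokes its $LV=0$ counterpart of the pointwise error bounds (Lemma \ref{lemma: bounds on model prediction error}), which gives exactly the two-sided estimate on $\iota_{r,h}^{m}$ and the drift-only, bonus-free upper bound on $\iota_{g,h}^{m}$ that you use, and then repeats the elliptical-potential and variation-budget bookkeeping of Lemma \ref{lemma: Model prediction difference bound with local knowledge}, with the $\mu^m\le\chi$ weighting producing the $(2+\chi)$ and $(2B_r+\chi B_g)$ coefficients. The only cosmetic difference is that you cite the $LV>0$ lemma and argue its underlying concentration transfers, whereas the paper states the $LV=0$ bounds as a separate lemma; the content is identical.
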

\begin{proof}
By Lemma \ref{lemma: bounds on model prediction error}, for every $(m, h) \in[M] \times[H]$ and $(x, a) \in \mathcal{S} \times \mathcal{A}$, the following inequality holds with probability at least 1-p/2:
\begin{align*}
&-2\left(\Gamma_{h}^{m}+\Gamma_{\diamond, h}^{m}\right)(x, a)-B_{\mathbb{P},\mathcal{E}} H^2 d_1\sqrt{d_1 W} -B_{\diamond,\mathcal{E}}\sqrt{d_2 W} \\
&\leq \iota_{\diamond, h}^{m}(x, a) \leq B_{\mathbb{P},\mathcal{E}} H^2 d_1\sqrt{d_1 W} +B_{\diamond,\mathcal{E}}\sqrt{d_2 W} .
\end{align*}

for $\diamond=r$ or $g$.
The rest of the proof is similar to Lemma \ref{lemma: Model prediction difference bound with local knowledge} and is thus omitted.
\end{proof}
\subsection{Proof of dynamic regret in Theorem \ref{thm: Linear Kernal MDP under feasible}}
From equation \eqref{eq: for the regret under slater condition}, we have
\begin{align}
& \nonumber \operatorname{DR}(M) \\
\nonumber \leq& \frac{1}{\alpha} H M L^{-1} \log{|\mathcal{A}|}+ \alpha H^{2} \sum_{m=1}^M(1+|\mu^{m}|^{2}) \\
\nonumber & +  H^2L \left(2\sqrt{d_1} B_\mathbb{P}+  3B_\star \right)  \\
\nonumber &+{\eta H^{2}(M+1)} +\sum_{m=1}^{M+1} (\eta \xi^2-\xi) | \mu^{m-1}|^2 \\
\nonumber & +\sum_{m=1}^M\sum_{h=1}^H \EE_{{\pi}^{\star,m}, \mathbb{P}^m} \left[ \iota_{r, h}^{m}(x_h,a_h) +\mu^m \iota_{g, h}^{m}(x_h,a_h)\right] \\
\nonumber &-\sum_{m=1}^{M} \sum_{h=1}^{H} \iota_{r, h}^{m}\left(x_{h}^{m}, a_{h}^{m}\right)+S_{r, H, 2}^{M}\\
 \leq& \frac{1}{\alpha} H M L^{-1} \log{|\mathcal{A}|}+ \alpha H^{2} \sum_{m=1}^M(1+\chi ^{2}) \label{eq: dynamic regret in linear under feasibility for tabular} \\
\nonumber  &+  H^2L \left(2\sqrt{d_1} B_\mathbb{P}+  3B_\star \right) \\
\nonumber &+{\eta H^{2}(M+1)} +\sum_{m=1}^{M+1} (\eta \xi^2-\xi) \chi ^2 \\
\nonumber & +\sum_{m=1}^M\sum_{h=1}^H \EE_{{\pi}^{\star,m}, \mathbb{P}^m} \left[ \iota_{r, h}^{m}(x_h,a_h) +\mu^m \iota_{g, h}^{m}(x_h,a_h)\right]\\
\nonumber &-\sum_{m=1}^{M} \sum_{h=1}^{H} \iota_{r, h}^{m}\left(x_{h}^{m}, a_{h}^{m}\right)+S_{r, H, 2}^{M}
\end{align}
where the second inequality follows from the fact $\mu^m \leq \chi $ for all $m\in[M]$ under the uniform Slater condition. Then, by Lemma \ref{lemma: Model prediction difference bound under uniform Slater condition}, it holds that
\begin{align}
 & \operatorname{DR}(M)\\
\nonumber \leq& \frac{1}{\alpha} H M L^{-1} \log{|\mathcal{A}|}+ \alpha H^{2} \sum_{m=1}^M(1+\chi ^{2})  \\
\nonumber & +   H^2L \left(2\sqrt{d_1} B_\mathbb{P}+  3B_\star \right)  \\
\nonumber &+{\eta H^{2}(M+1)} +\sum_{m=1}^{M+1} (\eta \xi^2-\xi) \chi ^2 \\
\nonumber & + C_2 d H^2 M W^{-\frac{1}{2}}  \sqrt{\log \left(dH^2W+1 \right) \log \left(\frac{dW}{p} \right)} \\
\nonumber &+ (2+\chi )B_{\mathbb{P}} H^3 d_1 W\sqrt{d_1 W} \\
\nonumber&+(2B_{r}+\chi B_{g})HW\sqrt{d_2 W}.
\end{align}

Furthermore, by substituting the parameters $\alpha=\gamma H^{-\frac{3}{2}} M^{-\frac{1}{3}} (\sqrt{d}B_\Delta+B_\star)^{\frac{1}{3}} $, $L= {M^{\frac{2}{3}}} (\sqrt{d}B_\Delta+B_\star)^{-\frac{2}{3}} $, $\eta=M^{-\frac{1}{2}}$, $\xi=0$,
$W=d^{-\frac{1}{4}} H^{-1} {M}^{\frac{1}{2}} B_\Delta^{-\frac{1}{2}}$, we obtain
\begin{align}
\operatorname{DR}(M) \leq& \widetilde{\mathcal{O}} \left(\gamma^{-1} d^{\frac{9}{8}} H^{\frac{5}{2}}M^{\frac{3}{4}} (\sqrt{d}B_\Delta +B_\ast)^{\frac{1}{3}}\right).
\end{align}
This completes the proof.

\subsection{Proof of constraint violation in Theorem \ref{thm: Linear Kernal MDP under feasible}}

By the dual update in line 14 in Algorithm \ref{alg:algoirthm 1} and $\xi=0$ , for any $\mu \in[0, \chi]$ we have
$$
\begin{aligned}
&\left|\mu^{m+1}-\mu\right|^{2} \\
&=\left|\operatorname{Proj}_{[0, \chi]}\left(\mu^{m}+\eta\left(b_m-V_{g, 1}^{m}\left(x_{1}\right)\right)\right)-\operatorname{Proj}_{[0, \chi]}(\mu)\right|^{2} \\
& \leq\left|\mu^{m}+\eta\left(b_m-V_{g, 1}^{m}\left(x_{1}\right)\right)-\mu\right|^{2} \\
& \leq\left(\mu^{m}-\mu\right)^{2}+2 \eta\left(b_m -V_{g, 1}^{m}\left(x_{1}\right)\right)\left(\mu^{m}-\mu\right)+\eta^{2} H^{2}
\end{aligned}
$$
where we apply the non-expansiveness of projection in the first inequality and $\left|b_m-V_{g, 1}^{m}\left(x_{1}\right)\right| \leq H$ for the last inequality. By summing the above inequality from $m=1$ to $m=M$, we have
$$
\begin{aligned}
&0 \leq\left|\mu^{M+1}-\mu\right|^{2}=\left|\mu^{1}-\mu\right|^{2}\\
&+2 \eta \sum_{m=1}^{M}\left(b_m-V_{g, 1}^{m}\left(x_{1}\right)\right)\left(\mu^{m}-\mu \right)+\eta^{2} H^{2} M
\end{aligned}
$$
which implies that
\begin{align} \label{eq: change of mu}
\nonumber \sum_{m=1}^{M}\left(b_m-V_{g, 1}^{m}\left(x_{1}\right)\right)\left(\mu-\mu^{m}\right) &\leq \frac{1}{2 \eta}\left|\mu^{1}-\mu\right|^{2}+\frac{\eta}{2} H^{2} M \\
&\leq \frac{1}{2 \eta}\mu^2 +\frac{\eta}{2} H^{2} M.
\end{align}

In addition, from  equation \eqref{eq: dynamic regret for constraint under slater condition}, we obtain
\begin{align}
\nonumber & \sum_{m=1}^{M} \left(V_{r, 1}^{\pi^{\star,m},m}\left(x_{1}\right)-V_{r, 1}^{m}\left(x_{1}\right)\right) +\sum_{m=1}^{M} \mu^{m}\left(b_m-V_{g, 1}^{m}\left(x_{1}\right)\right)\\
\nonumber \leq& \frac{1}{\alpha} H M L^{-1} \log{|\mathcal{A}|} \\
& +  \alpha H^{2} \sum_{m=1}^M(1+|\mu^{m}|^{2})  + H^2L \left(2\sqrt{d_1} B_\mathbb{P}+  3B_\star \right) \label{eq: constraint violation in linear under feasibility for tabular}\\
\nonumber & +\sum_{m=1}^M\sum_{h=1}^H \EE_{{\pi}^{\star,m}, \mathbb{P}^m} \left[ \iota_{r, h}^{m}(x_h,a_h) +\mu^m \iota_{g, h}^{m}(x_h,a_h)\right] \\
\nonumber &-\sum_{m=1}^{M} \sum_{h=1}^{H} \iota_{r, h}^{m}\left(x_{h}^{m}, a_{h}^{m}\right)+S_{r, H, 2}^{M}\\
\nonumber \leq & \frac{1}{\alpha} H M L^{-1} \log{|\mathcal{A}|} \\
\nonumber  &+ \alpha H^{2} \sum_{m=1}^M(1+\chi ^{2}) +   H^2L \left(2\sqrt{d_1} B_\mathbb{P}+  3B_\star \right)  \\
\nonumber & + C_2 d H^2 M W^{-\frac{1}{2}}  \sqrt{\log \left(dH^2W+1 \right) \log \left(\frac{dW}{p} \right)} \\
\nonumber &+ (2+\chi )B_{\mathbb{P}} H^3 d_1 W\sqrt{d_1 W}\\ \nonumber &+(2B_{r}+\chi B_{g})HW\sqrt{d_2 W}\\
\nonumber \leq & \widetilde{\mathcal{O}} \left(\gamma^{-1}  d^{\frac{9}{8}} H^{\frac{5}{2}}M^{\frac{3}{4}} (\sqrt{d}B_\Delta +B_\ast)^{\frac{1}{3}}\right),
\end{align}
where the second inequality follow from Lemma  \ref{lemma: Model prediction difference bound under uniform Slater condition} and the last inequality follows  by substituting the parameters $\alpha=\gamma H^{-\frac{3}{2}} M^{-\frac{1}{3}} (\sqrt{d}B_\Delta+B_\star)^{\frac{1}{3}} $, $L= {M^{\frac{2}{3}}} (\sqrt{d}B_\Delta+B_\star)^{-\frac{2}{3}} $, $\eta=M^{-\frac{1}{2}}$, $\xi=0$, $W=d^{-\frac{1}{4}} H^{-1} {M}^{\frac{1}{2}} B_\Delta^{-\frac{1}{2}}$. 
Then, by combining the above inequality with \eqref{eq: change of mu} and setting $\mu=\chi$, it holds that
\begin{align}
\nonumber & \sum_{m=1}^{M} \left(V_{r, 1}^{\pi^{\star,m},m}\left(x_{1}\right)-V_{r, 1}^{m}\left(x_{1}\right)\right) +\sum_{m=1}^{M} \chi \left(b_m-V_{g, 1}^{m}\left(x_{1}\right)\right)\\
\nonumber \leq & \widetilde{\mathcal{O}} \left(\gamma^{-1} d^{\frac{9}{8}} H^{\frac{5}{2}}M^{\frac{3}{4}} (\sqrt{d}B_\Delta +B_\ast)^{\frac{1}{3}} + M^{\frac{1}{2}} \chi^2\right).
\end{align}
Finally, by Corollary \ref{corollary: Constraint Violation under Uniform Slater Condition}, we obtain
\begin{align*}
&\left[\sum_{m=1}^{M}b_m- V_{g, 1}^{\pi^{m},m}\left(x_{1}\right)\right]_{+} \\
\leq & \widetilde{\mathcal{O}}\left(\gamma^{-1}  d^{\frac{9}{8}} H^{\frac{5}{2}} M^{\frac{3}{4}} (\sqrt{d}B_\Delta +B_\ast)^{\frac{1}{3}}\right).
\end{align*}
This completes the proof.

\section{Proof for tabular CMDP case under Assumption \ref{ass: local budget}}\label{sec:appe-tabular MDP under local budget}
The proof is similar to that of Theorem \ref{thm: Linear Kernal MDP under local budget}, and we will first prove the dynamic regret bound.
Since we only change the policy evaluation, all previous policy improvement results still hold.
\subsection{Model prediction error}
\begin{lemma}(\textbf{Model prediction error bound in tabular case under Assumption \ref{ass: local budget}}) \label{lemma: Model prediction difference bound tabular under local budget}
Let  Assumption  \ref{ass: local budget} hold. If we set $\lambda=1$, $LV=B_{\mathbb{P},\mathcal{E}} H  +B_{g,\mathcal{E}}$ and
\begin{align*}
\Gamma_{h}^m =\beta\left(n_{h}^m(x, a)+\lambda\right)^{-1 / 2},
\end{align*}
with $\beta=C_{4} H \sqrt{|\mathcal{S}| \log (|\mathcal{S}||\mathcal{A}| W / p)}$ in Algorithm \ref{alg:algoirthm 3}, then with probability at least $1-p / 2$ it holds that
\begin{align*}
&\sum_{m=1}^{M} \sum_{h=1}^{H}\left(\mathbb{E}_{\pi^{\star,m}, \mathbb{P}^{m}}\left[\iota_{r, h}^{m}\left(x_{h}, a_{h}\right)+\mu^m\iota_{g, h}^{m}\left(x_{h}, a_{h}\right)\right] \right.\\
& \left.-\iota_{r, h}^{m}\left(x_{h}^{m}, a_{h}^{m}\right)\right)\\
\leq & C_5 MH^2 |\mathcal{S}|\sqrt{ |\mathcal{A}| W^{-1}} \sqrt{\log (|\mathcal{S}||\mathcal{A}| W / p) \log \left(M+1\right)}\\
& +2B_{\mathbb{P}} H W + 2B_{r}W \sum_{m=1}^{M} \sum_{h=1}^{H}\left(- \mu\iota_{g, h}^{m}\left(x_{h}^{m}, a_{h}^{m}\right)\right)\\
\leq  & C_6\mu MH^2 |\mathcal{S}|\sqrt{ |\mathcal{A}| W^{-1}} \sqrt{\log (|\mathcal{S}||\mathcal{A}| W / p) \log \left(M+1\right)}\\
& + \mu B_{\mathbb{P}} H W + \mu B_g W
\end{align*}
where $C_{4}$, $C_5$ and $C_6$ are some absolute constants, and $\mu, \mu^m\geq0$.
\end{lemma}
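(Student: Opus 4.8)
The plan is to mirror the linear-kernel arguments of Lemma~\ref{lemma: Model prediction difference bound with local knowledge} and Lemma~\ref{lemma: Model prediction difference bound for constraint violation}, replacing the LSTD/elliptical-potential machinery with a counter-based concentration bound and a pigeonhole estimate tailored to the tabular estimator of Algorithm~\ref{alg:algoirthm 3}. First I would invoke the pointwise model-prediction-error bound (the tabular analogue, cf.\ Lemma~\ref{lemma: bounds on model prediction error tabular}), which under Assumption~\ref{ass: local budget} with $\lambda=1$, $LV=B_{\mathbb{P},\mathcal{E}}H+B_{g,\mathcal{E}}$ and $\Gamma_h^m=\beta(n_h^m(x,a)+\lambda)^{-1/2}$ yields, with probability $1-p/2$, the two-sided control $-2\Gamma_h^m(x,a)-2(B_{\mathbb{P},\mathcal{E}}H+B_{r,\mathcal{E}})\le\iota_{r,h}^m(x,a)\le B_{\mathbb{P},\mathcal{E}}H+B_{r,\mathcal{E}}$ for the reward error, and the one-sided control $-\iota_{g,h}^m(x,a)\le 2\Gamma_h^m(x,a)+2(B_{\mathbb{P},\mathcal{E}}H+B_{g,\mathcal{E}})$ for the utility error. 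The role of the added $LV$ offset and the truncation in Algorithm~\ref{alg:algoirthm 3} is precisely to keep these estimators optimistic despite the non-stationarity.

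Second, I would combine optimism with the crude bound $|\iota_{r,h}^m|\le 2H$ exactly as in the linear case: the upper side of the pointwise bound forces the expected reward error to be nonpositive up to the epoch variation, and the $LV$ correction in $Q_{g,h}^m$ forces $\mathbb{E}_{\pi^{\star,m},\mathbb{P}^m}[\mu^m\iota_{g,h}^m]\le 0$. Consequently the left-hand side of the first inequality collapses to a sum of the form $2\sum_{m,h}\min\!\big(H,\ \Gamma_h^m(x_h^m,a_h^m)+B_{\mathbb{P},\mathcal{E}}H+B_{r,\mathcal{E}}\big)$, which I split into a bonus part and a variation part.

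The heart of the proof is the bonus part $\sum_{m=1}^M\sum_{h=1}^H\min(H,\Gamma_h^m(x_h^m,a_h^m))$. Since $\beta=C_4 H\sqrt{|\mathcal{S}|\log(|\mathcal{S}||\mathcal{A}|W/p)}>H$, this equals $\beta\sum_{m,h}\min(H/\beta,\ (n_h^m+1)^{-1/2})$ with $H/\beta\le 1$. Here the restart is essential: the counters $n_h^m$ reset every $W$ episodes, so I would group the episodes into the $\ceil{M/W}$ evaluation windows and, within one window and one fixed step $h$, bound $\sum_{m}(n_h^m(x_h^m,a_h^m)+1)^{-1/2}\le\sum_{(x,a)}\sum_{j=1}^{N_{x,a}}j^{-1/2}\le 2\sum_{(x,a)}\sqrt{N_{x,a}}\le 2\sqrt{|\mathcal{S}||\mathcal{A}|\,W}$, using $\sum_{j\le N}j^{-1/2}\le 2\sqrt N$ and then Cauchy--Schwarz over the $|\mathcal{S}||\mathcal{A}|$ pairs together with $\sum_{(x,a)}N_{x,a}\le W$. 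Summing over the $\ceil{M/W}$ windows and the $H$ steps and multiplying by $\beta$ gives $\widetilde{\mathcal{O}}\big(\beta H M\sqrt{|\mathcal{S}||\mathcal{A}|W^{-1}}\big)=\widetilde{\mathcal{O}}\big(H^2 M|\mathcal{S}|\sqrt{|\mathcal{A}|W^{-1}}\big)$, which is the stated leading term once the logarithmic factors are reinstated.

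Finally, for the variation part I would note that each epoch contributes at most $W$ summands, so $\sum_{m,h}(B_{\mathbb{P},\mathcal{E}}H+B_{r,\mathcal{E}})\le W\sum_{\mathcal{E}}(B_{\mathbb{P},\mathcal{E}}H+B_{r,\mathcal{E}})=W(B_{\mathbb{P}}H+B_r)$ via the budget decompositions $\sum_{\mathcal{E}}B_{\mathbb{P},\mathcal{E}}=B_{\mathbb{P}}$ and $\sum_{\mathcal{E}}B_{r,\mathcal{E}}=B_r$, producing the $2B_{\mathbb{P}}HW+2B_rW$ terms; the second (constraint-violation) inequality follows verbatim with $g$ in place of $r$ and an overall factor $\mu$, using the one-sided utility bound. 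The main obstacle I anticipate is the bookkeeping between the restart windows and the non-stationarity: I must ensure that the window length used for the counter reset (yielding the per-window $\sqrt{|\mathcal{S}||\mathcal{A}|W}$ bound) is the \emph{same} period $W$ used to localize the variation budgets $B_{\mathbb{P},\mathcal{E}},B_{g,\mathcal{E}}$, and that optimism — i.e.\ the nonpositivity of the expected reward and $\mu^m$-weighted utility errors — survives both the $(\cdot)_+$ truncation and the $LV$ correction, so that only the empirical bonus and the variation terms remain.
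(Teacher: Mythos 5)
Your proposal follows essentially the same route as the paper's (largely omitted) proof: invoke the pointwise tabular error bounds of Lemma~\ref{lemma: bounds on model prediction error tabular under local budget} (optimism plus the $LV$ offset giving $\iota_{g,h}^m\leq 0$, hence $\mathbb{E}[\mu^m\iota_{g,h}^m]\leq 0$), then sum the bonus and local-variation terms over the restart windows exactly as in Lemma~\ref{lemma: Model prediction difference bound with local knowledge}. The only cosmetic difference is that you control the counter-based bonus sum by a direct pigeonhole/Cauchy--Schwarz argument ($\sum_{j\leq N}j^{-1/2}\leq 2\sqrt{N}$ over the $|\mathcal{S}||\mathcal{A}|$ pairs per window) rather than transplanting the elliptical-potential computation, which yields the same $\widetilde{\mathcal{O}}\bigl(MH^2|\mathcal{S}|\sqrt{|\mathcal{A}|W^{-1}}\bigr)$ leading term.
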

\begin{proof}
By Lemma \ref{lemma: bounds on model prediction error tabular under local budget}, for every $(m, h) \in[M] \times[H]$ and $(x, a) \in \mathcal{S} \times \mathcal{A}$, the following inequality holds with probability at least 1-p/2:
\begin{align*}
&-4\Gamma_{h}^{m}(x, a)-B_{\mathbb{P},\mathcal{E}} H -B_{r,\mathcal{E}}\leq \iota_{r, h}^{m}(x, a) \leq B_{\mathbb{P},\mathcal{E}} H +B_{r,\mathcal{E}}\\    
&-4\Gamma_{h}^{m}(x, a)-2B_{\mathbb{P},\mathcal{E}} H -2B_{g,\mathcal{E}}\leq \iota_{g, h}^{m}(x, a) \leq 0.
\end{align*}
The rest of the proof is similar to Lemma \ref{lemma: Model prediction difference bound with local knowledge} and is thus omitted.
\end{proof}

\subsection{Proof of dynamic regret in Theorem \ref{thm: Tabular Case MDP with local budget}}
From equation \eqref{eq: for the regret under slater condition} and Lemma \ref{lemma: probability difference in expectation tabular}, we obtain
\begin{align}
 &\operatorname{DR}(M) \\
\nonumber \leq& \frac{1}{\alpha} H M L^{-1} \log{|\mathcal{A}|}+  \alpha H^{2} M +H^2L B_{\star} + 2  H^2L \left(B_\mathbb{P}+  B_\star \right) \\
\nonumber &+{\eta H^{2}(M+1)} +\sum_{m=1}^{M+1} (\alpha H^2+\eta \xi^2-\xi) | \mu^{m-1}|^2 \\
\nonumber & +\sum_{m=1}^M\sum_{h=1}^H \EE_{{\pi}^{\star,m}, \mathbb{P}^m} \left[ \iota_{r, h}^{m}(x_h,a_h) +\mu^m \iota_{g, h}^{m}(x_h,a_h)\right] \\
\nonumber & -\sum_{m=1}^{M} \sum_{h=1}^{H} \iota_{r, h}^{m}\left(x_{h}^{m}, a_{h}^{m}\right)+S_{r, H, 2}^{M}.
\end{align}
Then, by controlling the model prediction error in Lemma \ref{lemma: Model prediction difference bound tabular under local budget} and the martingale bound in Lemma \ref{lemma: Martingale Bound}, we have
\begin{align*}
&\operatorname{DR}(M)\\
\leq& \frac{1}{\alpha} H M L^{-1} \log{|\mathcal{A}|}+  \alpha H^{2} M + H^2L \left(2 B_\mathbb{P}+  3B_\star \right)\\
&+{\eta H^{2}(M+1)} +\sum_{m=1}^{M+1} (\alpha H^2+\eta \xi^2-\xi) | \mu^{m-1}|^2\\
&+4 \sqrt{H^{2} T \log \left(\frac{4}{p}\right)}+2B_{\mathbb{P}} H W + 2B_{r}W\\
& + C_5 MH^2 |\mathcal{S}|\sqrt{ |\mathcal{A}| W^{-1}} \sqrt{\log (|\mathcal{S}||\mathcal{A}| W / p) \log \left(M+1\right)}
\end{align*}
with probability at least $1-p$. Finally, by setting
$\alpha=  H^{-\frac{1}{3}} M^{-\rho} (B_\Delta+B_\star)^{\frac{1}{3}} $, $L= H^{-\frac{1}{3}} {M^{\frac{1+\rho}{2}}} (B_\Delta+B_\star)^{-\frac{2}{3}} $, $\eta=H^{-\frac{1}{3}} M^{-\frac{1}{2}}$, $\xi=2H^{\frac{5}{3}} (B_\Delta+B_\star)^{\frac{1}{3}} M^{-\rho}$,
$W= H^{\frac{2}{3}}|\mathcal{S}|^{\frac{2}{3}} |\mathcal{A}|^{\frac{1}{3}} \left(\frac{M}{B_\Delta}\right)^{\frac{2}{3}}$ with $\rho\in[\frac{1}{3},\frac{1}{2}]$, it holds that 
\begin{align*}
    \operatorname{DR}(M) \leq & \widetilde{\mathcal{O}}\left( H^{\frac{5}{3}}|\mathcal{S}|^{\frac{2}{3}} |\mathcal{A}|^{\frac{1}{3}} M^{\frac{1+\rho}{2}} (B_\Delta +B_\ast)^{\frac{1}{3}}\right)
\end{align*}
with probability at least $1-p$. This completes the proof.

\subsection{Proof of constraint violation in Theorem \ref{thm: Tabular Case MDP with local budget}}
From equation \eqref{eq: constraint vilation in linear MDP cited for tabular under local budget} and Lemma \ref{lemma: probability difference in expectation tabular}, one can conclude that
\begin{align} \label{eq: constraint violation before model prediction error}
& \mu \sum_{m=1}^M  \left(b_m-V_{g, 1}^{\pi^m,m}\left(x_{1}\right) \right) -(\frac{\xi M}{2}+\frac{1}{2\eta})|\mu|^2 \\
\nonumber\leq& HM + \frac{1}{\alpha} H M L^{-1} \log{|\mathcal{A}|}+ \alpha H^{2}M+H^2L \left(2 B_\mathbb{P}+  3B_\star \right)\\
\nonumber &+{\eta H^{2}(M+1)}+   \eta H^2 M \\
\nonumber & +\sum_{m=1}^M\sum_{h=1}^H \EE_{{\pi}^{\ast,m}, \mathbb{P}^m} \left[ \iota_{r, h}^{m}(x_h,a_h) +\mu^m \iota_{g, h}^{m}(x_h,a_h)\right]\\
\nonumber&-\sum_{m=1}^{M} \sum_{h=1}^{H} \iota_{r, h}^{m}\left(x_{h}^{m}, a_{h}^{m}\right)\\
\nonumber &-\sum_{m=1}^{M}\mu \sum_{h=1}^{H} \iota_{g, h}^{m}\left(x_{h}^{m}, a_{h}^{m}\right)
+S_{r, H, 2}^{M}+ \mu S_{g, H, 2}^{M} .
\end{align}
Then, by controlling the model prediction errors in Lemmas \ref{lemma: Model prediction difference bound tabular under local budget} as well as the martingale bounds in Lemmas \ref{lemma: Martingale Bound} and \ref{lemma: Martingale Bound for constraint}, we have
\begin{align*}
& \mu \sum_{m=1}^M  \left(b_m-V_{g, 1}^{\pi^m,m}\left(x_{1}\right) \right) -(\frac{\xi M}{2}+\frac{1}{2\eta})|\mu|^2 \\
\leq& HM + \frac{1}{\alpha} H M L^{-1} \log{|\mathcal{A}|}+ \alpha H^{2}M+H^2L \left(2 B_\mathbb{P}+  3B_\star \right)\\
& +{\eta H^{2}(M+1)}+   \eta H^2 M\\
&+  C_5 MH^2 |\mathcal{S}|\sqrt{ |\mathcal{A}| W^{-1}} \sqrt{\log (|\mathcal{S}||\mathcal{A}| W / p) \log \left(M+1\right)}\\
&+2B_{\mathbb{P}} H W + 2B_{r}W\\
& + C_6\mu MH^2 |\mathcal{S}|\sqrt{ |\mathcal{A}| W^{-1}} \sqrt{\log (|\mathcal{S}||\mathcal{A}| W / p) \log \left(M+1\right)}\\
&+ \mu B_{\mathbb{P}} H W + \mu B_g W \\
& +4H \sqrt{ T \log \left(\frac{4}{p}\right)} +4 H \mu \sqrt{ T \log \left(\frac{4}{p}\right)}.
\end{align*}
Furthermore, by substituting the parameters $\alpha=  H^{-\frac{1}{3}} M^{-\rho} (B_\Delta+B_\star)^{\frac{1}{3}} $, $L= H^{-\frac{1}{3}} {M^{\frac{1+\rho}{2}}} (B_\Delta+B_\star)^{-\frac{2}{3}} $, $\eta=H^{-\frac{1}{3}} M^{-\frac{1}{2}}$, $\xi=2H^{\frac{5}{3}} (B_\Delta+B_\star)^{\frac{1}{3}} M^{-\rho}$,
$W= H^{\frac{2}{3}}|\mathcal{S}|^{\frac{2}{3}} |\mathcal{A}|^{\frac{1}{3}} \left(\frac{M}{B_\Delta}\right)^{\frac{2}{3}}$ with $\rho\in[\frac{1}{3},\frac{1}{2}]$, and rearranging all the terms related to $\mu$ to the left hand side of the inequality,
it holds that 
\begin{align*}
& \mu \sum_{m=1}^M \left[ \left(b_m-V_{g, 1}^{\pi^m,m}\left(x_{1}\right) \right) - \widetilde{O} \left(  H^{\frac{5}{3}}|\mathcal{S}|^{\frac{2}{3}} |\mathcal{A}|^{\frac{1}{3}} M^{\frac{2}{3}} B_\Delta^{\frac{1}{3}}  \right)\right]\\
&-H^{\frac{5}{3}} (B_\Delta+B_\star)^{\frac{1}{3}} M^{1-\rho}|\mu|^2 \\
\leq &  \widetilde{\mathcal{O}}\left( HM+ H^{\frac{5}{3}}|\mathcal{S}|^{\frac{2}{3}} |\mathcal{A}|^{\frac{1}{3}} M^{\frac{1+\rho}{2}} (B_\Delta +B_\ast)^{\frac{1}{3}} \right).
\end{align*}

Note that the above inequality holds for every $\mu\geq 0$. By maximizing the both sides of above inequality over $\mu\geq 0$, i.e., by choosing 
$$\mu=\frac{\left[\sum_{m=1}^{M}\left(b_m-V_{g, 1}^{\pi^{m},m}\left(x_{1}\right) \right)- \widetilde{O} \left(  H^{\frac{5}{3}}|\mathcal{S}|^{\frac{2}{3}} |\mathcal{A}|^{\frac{1}{3}} M^{\frac{2}{3}} B_\Delta^{\frac{1}{3}}  \right)\right]_+}{2H^{\frac{5}{3}} (B_\Delta+B_\star)^{\frac{1}{3}} M^{1-\rho}},$$ 
we obtain
\begin{align*}
&\frac{\left( \left[\sum_{m=1}^{M}\left(b_m-V_{g, 1}^{\pi^{m},m}\left(x_{1}\right) \right)- \widetilde{O} \left(   H^{\frac{5}{3}}|\mathcal{S}|^{\frac{2}{3}} |\mathcal{A}|^{\frac{1}{3}} M^{\frac{2}{3}} B_\Delta^{\frac{1}{3}}  \right)\right]_+\right)^2}{4H^{\frac{5}{3}} ( B_\Delta+B_\star)^{\frac{1}{3}} M^{1-\rho}} \\
&\leq \widetilde{\mathcal{O}}\left( HM+ H^{\frac{5}{3}}|\mathcal{S}|^{\frac{2}{3}} |\mathcal{A}|^{\frac{1}{3}} M^{\frac{1+\rho}{2}} (B_\Delta +B_\ast)^{\frac{1}{3}} \right).
\end{align*}

Finally, one can conclude that 
\begin{align*}
\left[\sum_{m=1}^{M}b_m- V_{g, 1}^{\pi^{m},m}\left(x_{1}\right)\right]_{+} 
\leq & \widetilde{\mathcal{O}}\left(H^{\frac{5}{3}}|\mathcal{S}|^{\frac{2}{3}} |\mathcal{A}|^{\frac{1}{3}} M^{\frac{2-\rho}{2}}  (B_\Delta +B_\ast)^{\frac{1}{3}}\right)
\end{align*}
for $\rho\in[\frac{1}{3},\frac{1}{2}]$. This completes the proof.

\section{Proof for tabular CMDP case under Assumption \ref{ass: Feasibility}}\label{sec:appe-tabular MDP under Feasibility}
The proof is similar to that of Theorem \ref{thm: Linear Kernal MDP under feasible}, and we will first prove the dynamic regret bound.
Since we only change the policy evaluation, all previous policy improvement results still hold.
\subsection{Model prediction error in the tabular case}
\begin{lemma}[Model prediction error bound in tabular case] \label{lemma: Model prediction difference bound tabular}
Let Assumption \ref{ass: Feasibility} hold. Fix $p \in(0,1)$ and let $\mathcal{E}$ be the epoch that the episode $m$ belongs to.
If we set $\lambda=1$, $LV=0$ and
\begin{align*}
\Gamma_{h}^m =\beta\left(n_{h}^m(x, a)+\lambda\right)^{-1 / 2},
\end{align*}
with $\beta=C_{4} H \sqrt{|\mathcal{S}| \log (|\mathcal{S}||\mathcal{A}| W / p)}$ in Algorithm \ref{alg:algoirthm 3} , then with probability at least $1-p / 2$ it holds that
\begin{align*}
&\sum_{m=1}^{M} \sum_{h=1}^{H}\left(\mathbb{E}_{\pi^{\star,m}, \mathbb{P}^{m}}\left[\iota_{r, h}^{m}\left(x_{h}, a_{h}\right)+\mu^m\iota_{g, h}^{m}\left(x_{h}, a_{h}\right)\right] \right.\\
&\left.-\iota_{r, h}^{m}\left(x_{h}^{m}, a_{h}^{m}\right)\right)\\
\leq & C_5 MH^2 |\mathcal{S}|\sqrt{ |\mathcal{A}| W^{-1}} \sqrt{\log (|\mathcal{S}||\mathcal{A}| W / p) \log \left(M+1\right)}\\
&+(2+\chi  )B_{\mathbb{P}} H W + (2B_{r}+\chi  B_g)W,
\end{align*}
where $C_{4}$, $C_5$ are some absolute constants, and $\mu^m\leq \chi$.
\end{lemma}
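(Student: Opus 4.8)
The plan is to follow the template of the analogous linear-kernel result under uniform strict feasibility (Lemma~\ref{lemma: Model prediction difference bound under uniform Slater condition}), replacing the elliptical-potential machinery by its counter-based tabular analogue, exactly as Lemma~\ref{lemma: Model prediction difference bound tabular under local budget} did for the local-budget case. First I would invoke the pointwise model-prediction-error bound for the tabular setting (the $LV=0$ counterpart of the bound used inside Lemma~\ref{lemma: Model prediction difference bound tabular under local budget}), which asserts that, with probability at least $1-p/2$, for all $(m,h)\in[M]\times[H]$ and $(x,a)\in\mathcal{S}\times\mathcal{A}$,
\[
-4\Gamma_h^m(x,a)-B_{\mathbb{P},\mathcal{E}}H-B_{\diamond,\mathcal{E}}\ \le\ \iota_{\diamond,h}^m(x,a)\ \le\ B_{\mathbb{P},\mathcal{E}}H+B_{\diamond,\mathcal{E}},\qquad \diamond=r\text{ or }g,
\]
where $\Gamma_h^m=\beta(n_h^m(x,a)+\lambda)^{-1/2}$ absorbs the statistical estimation error of $\widehat{\mathbb{P}}_h^m,\widehat r_h^m,\widehat g_h^m$ inside the current restart window, while the local-budget terms absorb the bias from the drift of $(\mathbb{P},r,g)$ across the $\le W$ episodes of epoch $\mathcal{E}$. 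Note that, unlike the local-budget proof, here I need only the \emph{two-sided} bound (not $\iota_{g,h}^m\le 0$), because Assumption~\ref{ass: Feasibility} already guarantees $0\le\mu^m\le\chi$, so a bounded dual variable cannot amplify the constraint estimation error even with $LV=0$.

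Second, I would convert this pointwise bound into a bound on each summand. For the reward part I combine the upper bound $\mathbb{E}_{\pi^{\star,m},\mathbb{P}^m}[\iota_{r,h}^m]\le B_{\mathbb{P},\mathcal{E}}H+B_{r,\mathcal{E}}$ with the lower bound on $\iota_{r,h}^m(x_h^m,a_h^m)$, yielding a contribution of at most $4\Gamma_h^m(x_h^m,a_h^m)+2B_{\mathbb{P},\mathcal{E}}H+2B_{r,\mathcal{E}}$; using $|\iota_{r,h}^m|\le 2H$ I then replace $\Gamma_h^m$ by $\min(H,\Gamma_h^m)$. For the constraint part I bound $\mu^m\,\mathbb{E}_{\pi^{\star,m},\mathbb{P}^m}[\iota_{g,h}^m]\le\chi\,(B_{\mathbb{P},\mathcal{E}}H+B_{g,\mathcal{E}})$ using $0\le\mu^m\le\chi$ together with the pointwise upper bound (the contribution being nonpositive whenever $\mathbb{E}_{\pi^{\star,m},\mathbb{P}^m}[\iota_{g,h}^m]<0$).

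Third, I would sum over $(m,h)$. The bonus sum $\sum_{m,h}\min(H,\Gamma_h^m(x_h^m,a_h^m))$ is handled by splitting the horizon into the $\ceil{M/W}$ restart epochs, applying Cauchy--Schwarz inside each epoch, and invoking the standard counter-based potential bound $\sum_{\tau}(n_h^\tau(x,a)+\lambda)^{-1}=\mathcal{O}(|\mathcal{S}||\mathcal{A}|\log(\cdots))$ per epoch; combined with $\beta=C_4 H\sqrt{|\mathcal{S}|\log(|\mathcal{S}||\mathcal{A}|W/p)}$ this produces the leading term $C_5 MH^2|\mathcal{S}|\sqrt{|\mathcal{A}|W^{-1}}\sqrt{\log(|\mathcal{S}||\mathcal{A}|W/p)\log(M+1)}$. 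The local-budget terms accumulate across epochs via $\sum_{\mathcal{E}}B_{\mathbb{P},\mathcal{E}}\le B_{\mathbb{P}}$ and $\sum_{\mathcal{E}}B_{\diamond,\mathcal{E}}\le B_{\diamond}$ (each epoch containing $\le W$ episodes), giving the stated $(2+\chi)B_{\mathbb{P}}HW+(2B_r+\chi B_g)W$ contribution.

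The main obstacle is really contained in the first step: establishing the two-sided pointwise control of $\iota_{\diamond,h}^m$ that simultaneously (i) captures the statistical estimation error through the counter-based UCB bonus with $\beta\propto H\sqrt{|\mathcal{S}|\cdots}$, and (ii) isolates the non-stationarity bias into the local-budget terms over a single restart window, so that the bounded dual variable $\mu^m\le\chi$ cannot inflate the constraint error. Once that bound is in hand, the remaining telescoping and potential-summation steps are routine and mirror Lemma~\ref{lemma: Model prediction difference bound with local knowledge} almost verbatim, which is precisely why the formal write-up defers them.
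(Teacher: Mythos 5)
Your proposal follows the paper's own route exactly: the paper's proof of this lemma simply invokes the two-sided pointwise bound $-4\Gamma_h^m(x,a)-B_{\mathbb{P},\mathcal{E}}H-B_{\diamond,\mathcal{E}}\le\iota_{\diamond,h}^m(x,a)\le B_{\mathbb{P},\mathcal{E}}H+B_{\diamond,\mathcal{E}}$ (Lemma~\ref{lemma: bounds on model prediction error tabular}, the $LV=0$ tabular analogue you describe), uses $0\le\mu^m\le\chi$ to control the constraint term, and then defers the epoch-wise counter-based potential summation to the argument of Lemma~\ref{lemma: Model prediction difference bound with local knowledge}, which is precisely your second and third steps. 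Your write-up is correct and, if anything, spells out more of the telescoping and bonus-summation details than the paper does.
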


\begin{proof}
By Lemma \ref{lemma: bounds on model prediction error tabular}, for every $(m, h) \in[M] \times[H]$ and $(x, a) \in \mathcal{S} \times \mathcal{A}$, the following inequality holds with probability at least 1-p/2:
$$
-4\Gamma_{h}^{m}(x, a)-B_{\mathbb{P},\mathcal{E}} H -B_{\diamond,\mathcal{E}}\leq \iota_{\diamond, h}^{m}(x, a) \leq B_{\mathbb{P},\mathcal{E}} H  +B_{\diamond,\mathcal{E}}
$$
for $\diamond=r$ or $g$.
The rest of the proof is similar to Lemma \ref{lemma: Model prediction difference bound with local knowledge} and is thus omitted.
\end{proof}

\subsection{Proof of dynamic regret in Theorem \ref{thm: Tabular Case MDP under feasible}}
From equation \eqref{eq: dynamic regret in linear under feasibility for tabular} and Lemma \ref{lemma: probability difference in expectation tabular}, we have
\begin{align}
&\nonumber \operatorname{DR}(M) \\
\nonumber\leq& \frac{1}{\alpha} H M L^{-1} \log{|\mathcal{A}|}+ \alpha H^{2} \sum_{m=1}^M(1+\chi ^{2}) \\
\nonumber&+ 2  H^2L \left( 2B_\mathbb{P}+  3B_\star \right)  
+{\eta H^{2}(M+1)} \\
\nonumber&+\sum_{m=1}^{M+1} (\eta \xi^2-\xi) \chi ^2  \\
\nonumber &+\sum_{m=1}^M\sum_{h=1}^H \EE_{{\pi}^{\star,m}, \mathbb{P}^m} \left[ \iota_{r, h}^{m}(x_h,a_h) +\mu^m \iota_{g, h}^{m}(x_h,a_h)\right] \\
\nonumber&-\sum_{m=1}^{M} \sum_{h=1}^{H} \iota_{r, h}^{m}\left(x_{h}^{m}, a_{h}^{m}\right)+S_{r, H, 2}^{M}.
\end{align}
 Then, by Lemma \ref{lemma: Model prediction difference bound tabular}, it holds that
\begin{align}
&\nonumber \operatorname{DR}(M)\\
\nonumber\leq& \frac{1}{\alpha} H M L^{-1} \log{|\mathcal{A}|}+ \alpha H^{2} \sum_{m=1}^M(1+\chi ^{2})\\
\nonumber&+ H^2L \left( 2B_\mathbb{P}+  3B_\star \right) +{\eta H^{2}(M+1)} +\sum_{m=1}^{M+1} (\eta \xi^2-\xi) \chi ^2 \\
\nonumber & + C_5 MH^2 |\mathcal{S}|\sqrt{ |\mathcal{A}| W^{-1}} \sqrt{\log (|\mathcal{S}||\mathcal{A}| W / p) \log \left(M+1\right)}\\
\nonumber &+(2+\chi  )B_{\mathbb{P}} H W + (2B_{r}+\chi  B_g)W,
\end{align}
where $\chi=\frac{H}{\gamma}$.
Furthermore, by substituting the parameters $\alpha= \gamma H^{-\frac{3}{2}} M^{-\frac{1}{3}} (B_\Delta+B_\star)^{\frac{1}{3}} $, $L= {M^{\frac{2}{3}}} (B_\Delta+B_\star)^{-\frac{2}{3}} $, $\eta=M^{-\frac{1}{2}}$, $\xi=0$,
$W= |\mathcal{S}|^{\frac{2}{3}} |\mathcal{A}|^{\frac{1}{3}} \left(\frac{M}{B_\Delta}\right)^{\frac{2}{3}}$, we obtain
\begin{align*}
\operatorname{D-Regret}(M)\leq  \widetilde{\mathcal{O}}\left(\gamma^{-1} |\mathcal{S}|^{\frac{2}{3}} |\mathcal{A}|^{\frac{1}{3}}   H^{\frac{5}{2}} M^{\frac{2}{3}} (B_\Delta+B_\star)^{\frac{1}{3}} \right)
\end{align*}
with probability at least $1-p$.
Thus, we conclude the desired regret bound.

\subsection{Proof of constraint violation in Theorem \ref{thm: Tabular Case MDP under feasible}}
From equation \eqref{eq: constraint violation in linear under feasibility for tabular} and Lemma \ref{lemma: probability difference in expectation tabular}, it holds that
\begin{align}
\nonumber & \sum_{m=1}^{M} \left(V_{r, 1}^{\pi^{\star,m},m}\left(x_{1}\right)-V_{r, 1}^{m}\left(x_{1}\right)\right) +\sum_{m=1}^{M} \mu^{m}\left(b_m-V_{g, 1}^{m}\left(x_{1}\right)\right)\\
\nonumber\leq& \frac{1}{\alpha} H M L^{-1} \log{|\mathcal{A}|}+  \alpha H^{2} \sum_{m=1}^M(1+|\mu^{m}|^{2})  + H^2L \left(2 B_\mathbb{P}+  3B_\star \right) \\
\nonumber & +\sum_{m=1}^M\sum_{h=1}^H \EE_{{\pi}^{\star,m}, \mathbb{P}^m} \left[ \iota_{r, h}^{m}(x_h,a_h) +\mu^m \iota_{g, h}^{m}(x_h,a_h)\right]\\
\nonumber &-\sum_{m=1}^{M} \sum_{h=1}^{H} \iota_{r, h}^{m}\left(x_{h}^{m}, a_{h}^{m}\right)+S_{r, H, 2}^{M}.
\end{align}
Then, by controlling the model prediction errors in Lemmas \ref{lemma: Model prediction difference bound tabular}, and the martingale bounds in Lemmas \ref{lemma: Martingale Bound} and \ref{lemma: Martingale Bound for constraint}, we have
\begin{align}
& \sum_{m=1}^{M} \left(V_{r, 1}^{\pi^{\star,m},m}\left(x_{1}\right)-V_{r, 1}^{m}\left(x_{1}\right)\right) +\sum_{m=1}^{M} \mu^{m}\left(b_m-V_{g, 1}^{m}\left(x_{1}\right)\right) \\
\nonumber\leq& \frac{1}{\alpha} H M L^{-1} \log{|\mathcal{A}|}+ {\alpha(1+\chi^{2}) H^{3} M} +  H^2L \left( 2B_\mathbb{P}+ 3B_\star \right) \\
\nonumber & +C_5 MH^2 |\mathcal{S}| \sqrt{ |\mathcal{A}| W^{-1}} \sqrt{\log (|\mathcal{S}||\mathcal{A}| W / p) \log \left(M+1\right)}\\
\nonumber &+(2+\chi  )B_{\mathbb{P}} H W\\
\nonumber &+ (2B_{r}+\chi  B_g)W+4 H \sqrt{ HM \log (4 / p)}\\
\nonumber\leq&  \widetilde{\mathcal{O}}\left(\gamma^{-1} |\mathcal{S}|^{\frac{2}{3}} |\mathcal{A}|^{\frac{1}{3}}   H^{\frac{5}{2}} M^{\frac{2}{3}} (B_\Delta+B_\star)^{\frac{1}{3}} \right).
\end{align}
where the last inequality follows by substituting the parameters $\alpha=\gamma H^{-\frac{3}{2}} M^{-\frac{1}{3}} (\sqrt{d}B_\Delta+B_\star)^{\frac{1}{3}} $, $L= {M^{\frac{2}{3}}} (\sqrt{d}B_\Delta+B_\star)^{-\frac{2}{3}} $, $\eta=M^{-\frac{1}{2}}$, $\xi=0$, $W=d^{-\frac{1}{4}} H^{-1} {M}^{\frac{1}{2}} B_\Delta^{-\frac{1}{2}}$. 
Then, by combining the above inequality with \eqref{eq: change of mu} and setting $\mu=\chi$, it holds that
\begin{align}
\nonumber & \sum_{m=1}^{M} \left(V_{r, 1}^{\pi^{\star,m},m}\left(x_{1}\right)-V_{r, 1}^{m}\left(x_{1}\right)\right) +\sum_{m=1}^{M} \chi \left(b_m-V_{g, 1}^{m}\left(x_{1}\right)\right)\\
\nonumber& \leq  \widetilde{\mathcal{O}} \left(\gamma^{-1} |\mathcal{S}|^{\frac{2}{3}} |\mathcal{A}|^{\frac{1}{3}}   H^{\frac{5}{2}} M^{\frac{2}{3}} (B_\Delta+B_\star)^{\frac{1}{3}} + M^{\frac{1}{2}} \chi^2\right).
\end{align}
Finally, by Corollary \ref{corollary: Constraint Violation under Uniform Slater Condition}, we obtain
\begin{align*}
& \left[\sum_{m=1}^{M}b_m- V_{g, 1}^{\pi^{m},m}\left(x_{1}\right)\right]_{+} \\
\leq & \widetilde{\mathcal{O}}\left(\gamma^{-1} |\mathcal{S}|^{\frac{2}{3}} |\mathcal{A}|^{\frac{1}{3}}   H^{\frac{5}{2}} M^{\frac{2}{3}} (B_\Delta+B_\star)^{\frac{1}{3}}\right).
\end{align*}
This completes the proof.

\section{Model prediction error}\label{sec:model predic error}
We first show that the prediction error in the value function can be expanded as the summation of the model prediction error and a martingale.
\begin{lemma}[Value prediction error expansion] \label{lemma: expansion of V m -V pi m}
It holds that
\begin{align*}
&   \sum_{m=1}^M \left(V_{r, 1}^{m}\left(x_{1}\right)-V_{r, 1}^{\pi^{m},m}\left(x_{1}\right) \right)\\
=&-\sum_{m=1}^M \sum_{h=1}^{H} \iota_{r, h}^{m}\left(x_{h}^{m}, a_{h}^{m}\right) + S_{r,H,2}^M.
\end{align*}
\end{lemma}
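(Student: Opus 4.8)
The plan is to derive a one-step recursion for the per-episode prediction error $V_{r,h}^m(x_h^m)-V_{r,h}^{\pi^m,m}(x_h^m)$, telescope it over $h$, and then sum over $m$. Along the way the model prediction errors $\iota_{r,h}^m$ accumulate evaluated at the visited pairs $(x_h^m,a_h^m)$, while the two steps that convert an inner product over actions into the value at the realized action, and the transition expectation into the value at the realized next state, supply exactly the martingale differences $D_{r,h,1}^m$ and $D_{r,h,2}^m$ introduced in Lemma \ref{lemma: Martingale Bound}.

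Concretely, I would fix $m$ and set $\Delta_h := V_{r,h}^m(x_h^m)-V_{r,h}^{\pi^m,m}(x_h^m)$, noting $\Delta_{H+1}=0$ since both value functions vanish at the terminal state. Because $V_{r,h}^m$ and $V_{r,h}^{\pi^m,m}$ are inner products of the respective $Q$-functions against $\pi_h^m(\cdot\mid x_h^m)$, one writes $\Delta_h = (\mathcal{I}_h^m(Q_{r,h}^m-Q_{r,h}^{\pi^m,m}))(x_h^m)$ with $(\mathcal{I}_h^m f)(x)=\inner{f(x,\cdot)}{\pi_h^m(\cdot\mid x)}$; adding and subtracting the value at the realized action gives $\Delta_h=(Q_{r,h}^m-Q_{r,h}^{\pi^m,m})(x_h^m,a_h^m)+D_{r,h,1}^m$. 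Combining the definition of the model prediction error $\iota_{r,h}^m=r_h^m+\mathbb{P}_h^m V_{r,h+1}^m-Q_{r,h}^m$ in \eqref{eq: model prediction error r} with the Bellman equation $Q_{r,h}^{\pi^m,m}=r_h^m+\mathbb{P}_h^m V_{r,h+1}^{\pi^m,m}$ cancels the reward and yields $(Q_{r,h}^m-Q_{r,h}^{\pi^m,m})(x_h^m,a_h^m)=\mathbb{P}_h^m(V_{r,h+1}^m-V_{r,h+1}^{\pi^m,m})(x_h^m,a_h^m)-\iota_{r,h}^m(x_h^m,a_h^m)$. Replacing the transition expectation by the realized next state introduces $D_{r,h,2}^m$ and the term $\Delta_{h+1}$, so that
\begin{align*}
\Delta_h = \Delta_{h+1}-\iota_{r,h}^m(x_h^m,a_h^m)+D_{r,h,1}^m+D_{r,h,2}^m.
\end{align*}

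Telescoping this identity over $h=1,\dots,H$ with $\Delta_{H+1}=0$ and $x_1^m=x_1$ gives $V_{r,1}^m(x_1)-V_{r,1}^{\pi^m,m}(x_1)=\sum_{h=1}^H(-\iota_{r,h}^m(x_h^m,a_h^m)+D_{r,h,1}^m+D_{r,h,2}^m)$, and summing over $m\in[M]$ while setting $S_{r,H,2}^M := \sum_{m=1}^M\sum_{h=1}^H(D_{r,h,1}^m+D_{r,h,2}^m)$ yields the stated identity. The one point requiring care is to verify that $\{S_{r,h,k}^m\}$ is a martingale adapted to $\{\mathcal{F}_{h,k}^m\}$ ordered by the index $t(m,h,k)$ in \eqref{eq: definition t}: conditioned on $\mathcal{F}_{h,1}^m$ the action $a_h^m\sim\pi_h^m(\cdot\mid x_h^m)$ forces $\EE[D_{r,h,1}^m\mid \mathcal{F}_{h,1}^m]=0$, and conditioned on $\mathcal{F}_{h,2}^m$ the next state $x_{h+1}^m\sim\mathbb{P}_h^m(\cdot\mid x_h^m,a_h^m)$ forces $\EE[D_{r,h,2}^m\mid \mathcal{F}_{h,2}^m]=0$, which is precisely what the two-per-step filtration indexing is designed to capture. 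I expect this filtration bookkeeping — matching each difference to the correct mean-zero conditioning — to be the only delicate part, the algebra itself being a direct unrolling of the Bellman recursion.
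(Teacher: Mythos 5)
Your proposal is correct and follows essentially the same route as the paper's own proof: the identical one-step recursion $\Delta_h = \Delta_{h+1} - \iota_{r,h}^m(x_h^m,a_h^m) + D_{r,h,1}^m + D_{r,h,2}^m$ derived from the Bellman equation and the definition of $\iota_{r,h}^m$, telescoped over $h$ with $\Delta_{H+1}=0$ and summed over $m$, with $S_{r,H,2}^M$ collecting the differences $D_{r,h,1}^m+D_{r,h,2}^m$. The only discrepancy is an off-by-one in your filtration labels -- the paper conditions $D_{r,h,1}^m$ on $\mathcal{F}_{h-1,2}^m$ and $D_{r,h,2}^m$ on $\mathcal{F}_{h,1}^m$, i.e., on the $\sigma$-algebra generated strictly before the random draw being averaged, whereas you condition on $\mathcal{F}_{h,1}^m$ and $\mathcal{F}_{h,2}^m$ respectively (with respect to which those differences are already measurable) -- but since you correctly identify which randomness each term averages over, this is a labeling slip rather than a gap.
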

\begin{proof}
We recall the definition of $V_{r, h}^{\pi^{m},m}$ and define the operator $\mathcal{I}_{h}^{m}$ for function $f: \mathcal{S} \times \mathcal{A} \rightarrow \mathbb{R}$:
\begin{align*}
V_{r, h}^{\pi^{m},m}(x)&=\left\langle Q_{h}^{\pi^{m},m}(x, \cdot), \pi_{h}^{m}(\cdot \mid x)\right\rangle, \\
\left(\mathcal{I}_{h}^{m} f\right)(x)&=\left\langle f(x, \cdot), \pi_{h}^{m}(\cdot \mid x)\right\rangle .
\end{align*}
We expand the model prediction error $\iota_{r, h}^{m}$ into,
$$
\begin{aligned}
&\iota_{r, h}^{m}\left(x_{h}^{m}, a_{h}^{m}\right) \\
&=r_{h}^m\left(x_{h}^{m}, a_{h}^{m}\right)+\left(\mathbb{P}_{h}^m V_{r, h+1}^{m}\right)\left(x_{h}^{m}, a_{h}^{m}\right)-Q_{r, h}^{m}\left(x_{h}^{m}, a_{h}^{m}\right) \\
&=\left(r_{h}^m\left(x_{h}^{m}, a_{h}^{m}\right)+\left(\mathbb{P}_{h}^m V_{r, h+1}^{m}\right)\left(x_{h}^{m}, a_{h}^{m}\right)-Q_{r, h}^{\pi^m,m}\left(x_{h}^{m}, a_{h}^{m}\right)\right)\\
&\hspace{0.4cm}+\left(Q_{r, h}^{\pi^m,m}\left(x_{h}^{m}, a_{h}^{m}\right)-Q_{r, h}^{m}\left(x_{h}^{m}, a_{h}^{m}\right)\right) \\
&=\left(\mathbb{P}_{h}^m V_{r, h+1}^{m}-\mathbb{P}_{h}^m V_{r, h+1}^{\pi^m,m}\right)\left(x_{h}^{m}, a_{h}^{m}\right)\\
&+\left(Q_{r, h}^{\pi^{m},m}\left(x_{h}^{m}, a_{h}^{m}\right)-Q_{r, h}^{m}\left(x_{h}^{m}, a_{h}^{m}\right)\right),
\end{aligned}
$$
where we have used the Bellman equation $Q_{r, h}^{\pi^{m},m}\left(x_{h}^{m}, a_{h}^{m}\right)=r_{h}^m\left(x_{h}^{m}, a_{h}^{m}\right)+\left(\mathbb{P}_{h}^m V_{r, h+1}^{\pi^{m},m}\right)\left(x_{h}^{m}, a_{h}^{m}\right)$ in the last equality. With the above formula, we expand the difference $V_{r, 1}^{m}\left(x_{1}\right)-V_{r, 1}^{\pi^{m},m}\left(x_{1}\right)$ into
$$
\begin{aligned}
&V_{r, h}^{m}\left(x_{h}^{m}\right)-V_{r, h}^{\pi^m, m}\left(x_{h}^{m}\right)\\
=&\left(\mathcal{I}_{h}^{m}\left(Q_{r, h}^{m}-Q_{r, h}^{\pi^{m},m}\right)\right)\left(x_{h}^{m}\right)\\
=&\left(\mathcal{I}_{h}^{m}\left(Q_{r, h}^{m}-Q_{r, h}^{\pi^{m},m}\right)\right)\left(x_{h}^{m}\right)-\iota_{r, h}^{m}\left(x_{h}^{m}, a_{h}^{m}\right) \\
&+\left(\mathbb{P}_{h}^m V_{r, h+1}^{m}-\mathbb{P}_{h}^m V_{r, h+1}^{\pi^{m},m}\right)\left(x_{h}^{m}, a_{h}^{m}\right)\\
&+\left(Q_{r, h}^{\pi^{m},m}-Q_{r, h}^{m}\right)\left(x_{h}^{m}, a_{h}^{m}\right).
\end{aligned}
$$
Let
$$
\begin{aligned}
D_{r, h, 1}^{m}:=&\left(\mathcal{I}_{h}^{m}\left(Q_{r, h}^{m}-Q_{r, h}^{\pi^{m},m}\right)\right)\left(x_{h}^{m}\right)\\
&-\left(Q_{r, h}^{m}-Q_{r, h}^{\pi^m,m}\right)\left(x_{h}^{m}, a_{h}^{m}\right), \\
D_{r, h, 2}^{m}:=&\left(\mathbb{P}_{h}^m V_{r, h+1}^{m}-\mathbb{P}_{h}^m V_{r, h+1}^{\pi^{m},m}\right)\left(x_{h}^{m}, a_{h}^{m}\right)\\
&-\left(V_{r, h+1}^{m}-V_{r, h+1}^{\pi^{m},m}\right)\left(x_{h+1}^{m}\right)
\end{aligned}
$$
Therefore, we have the following recursive formula over $h$:
$$
\begin{aligned}
&V_{r, h}^{m}\left(x_{h}^{m}\right)-V_{r, h}^{\pi^{m},m}\left(x_{h}^{m}\right)\\
=&D_{r, h, 1}^{m}+D_{r, h, 2}^{m}+\left(V_{r, h+1}^{m}-V_{r, h+1}^{\pi^{m},m}\right)\left(x_{h+1}^{m}\right)-\iota_{r, h}^{m}\left(x_{h}^{m}, a_{h}^{m}\right) .
\end{aligned}
$$
Notice that $V_{r, H+1}^{\pi^{m},m}=V_{r, H+1}^{m}=0 .$ Summing the above equality over $h \in[H]$ yields that
\begin{align}\label{eq: 43}
\nonumber &V_{r, 1}^{m}\left(x_{1}\right)-V_{r, 1}^{\pi^{m},m}\left(x_{1}\right)\\
=&\sum_{h=1}^{H}\left(D_{r, h, 1}^{m}+D_{r, h, 2}^{m}\right)-\sum_{h=1}^{H} \iota_{r, h}^{m}\left(x_{h}^{m}, a_{h}^{m}\right) .
\end{align}

Following the definitions of $\mathcal{F}_{h, 1}^{m}$ and $\mathcal{F}_{h, 2}^{m}$, we know that $D_{r, h, 1}^{m} \in \mathcal{F}_{h, 1}^{m}$ and $D_{r, h, 2}^{m} \in \mathcal{F}_{h, 2}^{m}$. Thus, for every $(m, h) \in$ $[M] \times[H]$,
$$
\mathbb{E}\left[D_{r, h, 1}^{m} \mid \mathcal{F}_{h-1,2}^{m}\right]=0 \text { and } \mathbb{E}\left[D_{r, h, 2}^{m} \mid \mathcal{F}_{h, 1}^{m}\right]=0
$$
Notice that $t(m, 0,2)=t(m-1, H, 2)=2 H(m-1) .$ Clearly, $\mathcal{F}_{0,2}^{m}=\mathcal{F}_{H, 2}^{m-1}$ for all $m \geq 2.$ Let $\mathcal{F}_{0,2}^{1}$ be empty. We define the martingale sequence:
$$
\begin{aligned}
&S_{r, h, k}^{m}\\
&=\sum_{ \tau=1}^{m-1} \sum_{i=1}^{H}\left(D_{r, i, 1}^{\tau}+D_{r, i, 2}^{\tau}\right)+\sum_{i=1}^{h-1}\left(D_{r, i, 1}^{m}+D_{r, i, 2}^{m}\right)+\sum_{j=1}^{m} D_{r, h, \ell}^{m} \\
&=\sum_{(\tau, i, j) \in[M] \times[H] \times[2], t(\tau, i, j) \leq t(m, h, k)} D_{r, i, \ell}^{\tau}
\end{aligned}
$$
where $t(m, h, k):=2(m-1) H+2(h-1)+k$ is the time index. Clearly, this martingale is adapted to the filtration $\left\{\mathcal{F}_{h, k}^{m}\right\}_{(m, h, k) \in[M] \times[H] \times[2]}$, and particularly,
$$
\sum_{m=1}^{M} \sum_{h=1}^{H}\left(D_{r, h, 1}^{m}+D_{r, h, 2}^{m}\right)=S_{r, H, 2}^{M} .
$$
Finally, we combine the above martingale with \eqref{eq: 43} to obtain the desired result.

\end{proof}

\subsection{Linear Kernel MDP case}
\begin{lemma} \label{lemma: concentrarion of V-PV}
Let $\lambda=1$ in Algorithm \ref{alg:algoirthm 2}. Fix $p \in(0,1)$. Then, for every $(m, h) \in[M] \times[H]$ it holds for $\diamond=r$ or $g$ that
$$
\begin{aligned}
& \left\|\sum_{\tau=\ell^m_Q}^{m-1} \phi_{\diamond, h}^{\tau}\left(x_{h}^{\tau}, a_{h}^{\tau}\right)^{\top}\left(V_{\diamond, h+1}^{\tau}\left(x_{h+1}^{\tau}\right) \right.\right.\\ &\hspace{3cm} \left.\left.-\left(\mathbb{P}_{h}^{\tau} V_{\diamond, h+1}^{\tau}\right)\left(x_{h}^{\tau}, a_{h}^{\tau}\right)\right)\right\|_{\left(\Lambda_{\diamond, h}^{m}\right)^{-1}} \\
\leq& C_1 \sqrt{d_1 H^{2} \log \left(\frac{d_1 W}{p}\right)}
\end{aligned}
$$
with probability at least $1-p / 2$, where $C_1>0$ is an absolute constant.
\end{lemma}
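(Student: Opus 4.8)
The plan is to recognize this as a self-normalized martingale concentration bound in the \emph{value-targeted regression} style and to apply the self-normalized concentration inequality for vector-valued martingales (Abbasi-Yadkori, P\'al and Szepesv\'ari) directly, without resorting to a covering-number argument. The first step is to set up the correct filtration indexed by the episode $\tau$: for each $\tau$ in the restart window $\{\ell_Q^m,\ldots,m-1\}$, let $\mathcal{G}_\tau$ be the $\sigma$-algebra generated by all trajectories of episodes strictly before $\tau$, together with the pair $(x_h^\tau,a_h^\tau)$ observed at step $h$ of episode $\tau$, i.e. everything available just before the transition to $x_{h+1}^\tau$ is drawn.

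The key structural observation I would establish next is predictability. Since $V_{\diamond,h+1}^\tau$ is produced by Algorithm~\ref{alg:algoirthm 2} using only the historical trajectories of episodes $\ell_Q^\tau,\ldots,\tau-1$, the function $V_{\diamond,h+1}^\tau$ — and hence the weight $\phi_{\diamond,h}^\tau(x_h^\tau,a_h^\tau)=\int_\mathcal{S}\psi(x_h^\tau,a_h^\tau,x')V_{\diamond,h+1}^\tau(x')\,dx'$ — is $\mathcal{G}_\tau$-measurable. This is exactly the virtue of the value-targeted regression in line~4 of Algorithm~\ref{alg:algoirthm 2}: both the regressor $\phi_{\diamond,h}^\tau$ and the target $V_{\diamond,h+1}^\tau(x_{h+1}^\tau)$ carry the episode-$\tau$ value function, so no future information leaks into the feature. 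I would then define the scalar noise
\begin{align*}
\eta_\tau \coloneqq V_{\diamond,h+1}^\tau(x_{h+1}^\tau)-\bigl(\mathbb{P}_h^\tau V_{\diamond,h+1}^\tau\bigr)(x_h^\tau,a_h^\tau),
\end{align*}
and verify its two required properties: conditional mean zero, $\mathbb{E}[\eta_\tau\mid\mathcal{G}_\tau]=0$, because $x_{h+1}^\tau\sim\mathbb{P}_h^\tau(\cdot\mid x_h^\tau,a_h^\tau)$ and hence the conditional expectation of $V_{\diamond,h+1}^\tau(x_{h+1}^\tau)$ equals $(\mathbb{P}_h^\tau V_{\diamond,h+1}^\tau)(x_h^\tau,a_h^\tau)$; and boundedness $|\eta_\tau|\le H$, since the truncation in line~10 of Algorithm~\ref{alg:algoirthm 2} forces $V_{\diamond,h+1}^\tau\in[0,H]$. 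A bounded centered variable is $H$-sub-Gaussian, so $\{\eta_\tau\}$ is a conditionally $H$-sub-Gaussian martingale-difference sequence adapted to $\{\mathcal{G}_\tau\}$ with predictable vector weights $\{\phi_{\diamond,h}^\tau(x_h^\tau,a_h^\tau)\}$.

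With these ingredients in place, I would invoke the self-normalized bound. Taking $\lambda I=I$, the matrix $\Lambda_{\diamond,h}^m=\sum_{\tau=\ell_Q^m}^{m-1}\phi_{\diamond,h}^\tau\phi_{\diamond,h}^{\tau\top}+\lambda I$ is exactly the self-normalizing matrix, and the inequality yields, with probability at least $1-p/2$,
\begin{align*}
\left\|\sum_{\tau=\ell_Q^m}^{m-1}\phi_{\diamond,h}^\tau\bigl(x_h^\tau,a_h^\tau\bigr)\,\eta_\tau\right\|_{(\Lambda_{\diamond,h}^m)^{-1}}^2
\le 2H^2\log\!\left(\frac{\det(\Lambda_{\diamond,h}^m)^{1/2}}{\det(\lambda I)^{1/2}}\cdot\frac{1}{\delta}\right),
\end{align*}
where I would choose $\delta$ to absorb a union bound over $h\in[H]$, over $\diamond\in\{r,g\}$, and over the restart epochs; since all these factors are polynomial, their logarithm stays $O(\log(d_1W/p))$. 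To finish, I would control the determinant using Assumption~\ref{ass: linear fun approx}(3), namely $\|\phi_{\diamond,h}^\tau\|\le\sqrt{d_1}H$: because the window contains at most $W$ episodes, $\Lambda_{\diamond,h}^m\preceq(1+Wd_1H^2)I$, so $\det(\Lambda_{\diamond,h}^m)/\det(\lambda I)\le(1+Wd_1H^2)^{d_1}$ and $\log\det(\Lambda_{\diamond,h}^m)\le d_1\log(1+Wd_1H^2)$. Substituting this and the union factor gives the claimed $C_1\sqrt{d_1H^2\log(d_1W/p)}$.

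The step I expect to be the crux is the predictability argument of the second paragraph: it is exactly here that the value-targeted-regression structure matters, and it is what allows this proof to avoid the $\varepsilon$-net / covering-number argument that is unavoidable in the linear-MDP formulation, where the current-episode value function is applied to all past transitions and is therefore \emph{not} measurable with respect to the past. I would also double-check that the restart mechanism (indices running only from $\ell_Q^m$) does not disturb the filtration, and that the logarithmic determinant term depends on the window length $W$ rather than on the full horizon $M$, which is precisely what produces the $\log W$ (not $\log M$) factor advertised in the statement.
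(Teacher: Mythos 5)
Your proof is correct and follows essentially the same route as the paper, whose own proof simply defers to Lemma D.1 of \cite{cai2020provably}: the crux you identify — that $\phi_{\diamond,h}^{\tau}$ and the target both carry the \emph{episode-$\tau$} value function, so the noise is a genuine martingale difference and the self-normalized bound (Lemma \ref{lemma: Concentration of Self-normalized Processes}) applies directly with a determinant bound over the restart window, with no covering argument — is exactly the content of that cited proof. The only cosmetic slack is that your union bound over $h$, $\diamond$, and the epochs (and the $H^2W$ inside the determinant) puts extra polynomial factors inside the logarithm beyond the advertised $\log(d_1W/p)$, a bookkeeping looseness already present in the paper's citation of \cite{cai2020provably}, whose corresponding bound carries $\log(dT/p)$.
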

\begin{proof}
The lemma is slightly different than \cite{cai2020provably}[Lemma D.1], since they assume that MDPs are stationary and $\mathbb{P}_{h}$ is fixed over different episodes. It can be verified that the proof for the stationary case still holds in our non-stationary case without any modifications since the results in \cite{cai2020provably} holds for the least-squares value iteration for all value functions that are determined by $Q^\tau_{\diamond,h+1}$ and $\pi^\tau_{h+1}$, which are further determined by the historical data in $\mathcal{F}^\tau_{h,1}$.
\end{proof}

\begin{lemma} \label{lemma: bound Gamma diamond}
If we set $C_1>1$, $\lambda=1$, $\beta=C_1\sqrt{d H^{2} \log (d W / p)}$ and
$\Gamma_{\diamond, h}^{m}(\cdot, \cdot)=\beta\left(\phi_{\diamond, h}^{m}(\cdot, \cdot)^{\top}\left(\Lambda_{\diamond, h}^{m}\right)^{-1} \phi_{\diamond, h}^{m}(\cdot, \cdot)\right)^{1 / 2}$  in line 6 of Algorithm \ref{alg:algoirthm 2}, it holds that
$$
\begin{aligned}
&\left|\phi_{\diamond, h}^{m}(x, a)^{\top} w_{\diamond, h}^{m}-\left(\mathbb{P}_{h}^m V_{\diamond, h+1}^{m}\right)(x, a)\right| \\
&\leq \Gamma_{\diamond, h}^{m}(x, a)+B_{\mathbb{P},\mathcal{E}} H^2 d_1\sqrt{d_1 W} 
\end{aligned}
$$
with probability at least $1-p / 2$ for all $(m, h) \in[M] \times[H]$ and $(x, a) \in \mathcal{S} \times \mathcal{A}$, where the symbol $\diamond$ is equal to $r$ or $g$. 
\end{lemma}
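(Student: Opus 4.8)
The starting point is the identity $(\mathbb{P}_h^m V_{\diamond,h+1}^m)(x,a)=\phi_{\diamond,h}^m(x,a)^\top\theta_h^m$, which follows from the linear kernel structure $\mathbb{P}_h^m(x'\mid x,a)=\langle\psi(x,a,x'),\theta_h^m\rangle$ in Assumption \ref{ass: linear fun approx} and the definition $\phi_{\diamond,h}^m(x,a)=\int_{\mathcal{S}}\psi(x,a,x')V_{\diamond,h+1}^m(x')\,dx'$. Hence it suffices to control $\phi_{\diamond,h}^m(x,a)^\top(w_{\diamond,h}^m-\theta_h^m)$. Writing $\phi_\tau:=\phi_{\diamond,h}^\tau(x_h^\tau,a_h^\tau)$ and substituting $V_{\diamond,h+1}^\tau(x_{h+1}^\tau)=\phi_\tau^\top\theta_h^\tau+\varepsilon_\tau$, with the martingale noise $\varepsilon_\tau:=V_{\diamond,h+1}^\tau(x_{h+1}^\tau)-(\mathbb{P}_h^\tau V_{\diamond,h+1}^\tau)(x_h^\tau,a_h^\tau)$, into the definition of $w_{\diamond,h}^m$, and using $\sum_\tau\phi_\tau\phi_\tau^\top=\Lambda_{\diamond,h}^m-\lambda I$, I would obtain the decomposition
\begin{align*}
w_{\diamond,h}^m-\theta_h^m=&-\lambda(\Lambda_{\diamond,h}^m)^{-1}\theta_h^m+(\Lambda_{\diamond,h}^m)^{-1}\sum_{\tau=\ell_Q^m}^{m-1}\phi_\tau\phi_\tau^\top(\theta_h^\tau-\theta_h^m)\\
&+(\Lambda_{\diamond,h}^m)^{-1}\sum_{\tau=\ell_Q^m}^{m-1}\phi_\tau\varepsilon_\tau,
\end{align*}
that is, a regularization term, a non-stationarity term, and a martingale term.

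The regularization and martingale terms are absorbed into the bonus $\Gamma_{\diamond,h}^m=\beta\,\norm{\phi_{\diamond,h}^m}_{(\Lambda_{\diamond,h}^m)^{-1}}$. For the regularization term I would use Cauchy--Schwarz in the $(\Lambda_{\diamond,h}^m)^{-1}$ norm together with $\norm{\theta_h^m}_2\le\sqrt{d_1}$ and $\lambda_{\min}(\Lambda_{\diamond,h}^m)\ge\lambda=1$, giving a bound $\sqrt{d_1}\,\norm{\phi_{\diamond,h}^m}_{(\Lambda_{\diamond,h}^m)^{-1}}$. For the martingale term I would apply Cauchy--Schwarz and invoke Lemma \ref{lemma: concentrarion of V-PV}, which yields $\norm{\sum_\tau\phi_\tau\varepsilon_\tau}_{(\Lambda_{\diamond,h}^m)^{-1}}\le C_1\sqrt{d_1H^2\log(d_1W/p)}$ with probability at least $1-p/2$; this is the only place where randomness enters the argument. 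Since $d\ge d_1$, $H\ge1$ and $C_1>1$, the sum of these two contributions is at most $\beta\,\norm{\phi_{\diamond,h}^m}_{(\Lambda_{\diamond,h}^m)^{-1}}=\Gamma_{\diamond,h}^m$.

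The crux, and the main obstacle, is the non-stationarity term $T:=\phi_{\diamond,h}^m(x,a)^\top(\Lambda_{\diamond,h}^m)^{-1}\sum_\tau\phi_\tau\phi_\tau^\top(\theta_h^\tau-\theta_h^m)$, which must be shown to be at most $B_{\mathbb{P},\mathcal{E}}H^2d_1\sqrt{d_1W}$; a crude operator-norm estimate overshoots by a factor of $H\sqrt{W}$. The sharp route is a two-fold Cauchy--Schwarz. First I would factor out $\norm{\phi_{\diamond,h}^m}_{(\Lambda_{\diamond,h}^m)^{-1}}\le\sqrt{d_1}H$ and bound $T\le\norm{\phi_{\diamond,h}^m}_{(\Lambda_{\diamond,h}^m)^{-1}}\sum_\tau\norm{\phi_\tau}_{(\Lambda_{\diamond,h}^m)^{-1}}\,|\phi_\tau^\top(\theta_h^\tau-\theta_h^m)|$, then split the $\tau$-sum as $(\sum_\tau\norm{\phi_\tau}_{(\Lambda_{\diamond,h}^m)^{-1}}^2)^{1/2}(\sum_\tau|\phi_\tau^\top(\theta_h^\tau-\theta_h^m)|^2)^{1/2}$. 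The first factor is bounded by $\sqrt{d_1}$ through the trace identity $\sum_\tau\phi_\tau^\top(\Lambda_{\diamond,h}^m)^{-1}\phi_\tau=\operatorname{tr}\!\big(I-\lambda(\Lambda_{\diamond,h}^m)^{-1}\big)\le d_1$; the second, using $\norm{\phi_\tau}_2\le\sqrt{d_1}H$ together with $\sum_\tau\norm{\theta_h^\tau-\theta_h^m}_2^2\le(\max_\tau\norm{\theta_h^\tau-\theta_h^m}_2)\sum_\tau\norm{\theta_h^\tau-\theta_h^m}_2\le WB_{\mathbb{P},\mathcal{E}}^2$ (each per-episode deviation across the length-$W$ window being at most the local budget $B_{\mathbb{P},\mathcal{E}}$), is bounded by $\sqrt{d_1}H\sqrt{W}\,B_{\mathbb{P},\mathcal{E}}$. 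Multiplying the three factors gives exactly $d_1^{3/2}H^2\sqrt{W}\,B_{\mathbb{P},\mathcal{E}}=B_{\mathbb{P},\mathcal{E}}H^2d_1\sqrt{d_1W}$.

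Combining the three bounds then yields $|\phi_{\diamond,h}^m(x,a)^\top w_{\diamond,h}^m-(\mathbb{P}_h^m V_{\diamond,h+1}^m)(x,a)|\le\Gamma_{\diamond,h}^m(x,a)+B_{\mathbb{P},\mathcal{E}}H^2d_1\sqrt{d_1W}$, uniformly over $(x,a)\in\mathcal{S}\times\mathcal{A}$ and $(m,h)\in[M]\times[H]$, on the event of probability at least $1-p/2$ supplied by Lemma \ref{lemma: concentrarion of V-PV}. The essential difficulty peculiar to the non-stationary setting is that $\sum_\tau\phi_\tau\phi_\tau^\top\theta_h^\tau$ is no longer $(\Lambda_{\diamond,h}^m-\lambda I)\theta_h^m$, so a residual term survives; recovering the sharp $\sqrt{W}$ (rather than $W$) dependence is precisely what forces the coupling of the trace identity with the quadratic local-budget control of $\sum_\tau\norm{\theta_h^\tau-\theta_h^m}_2^2$ instead of a direct triangle-inequality estimate.
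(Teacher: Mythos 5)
Your proposal is correct and follows essentially the same route as the paper's proof: both decompose the error into a regularization term, a drift term $\sum_\tau\phi_\tau\phi_\tau^\top(\theta_h^\tau-\theta_h^m)$, and a self-normalized martingale term handled by Lemma \ref{lemma: concentrarion of V-PV}, and both control the drift term by a double Cauchy--Schwarz combined with the trace bound $\sum_\tau\phi_\tau^\top(\Lambda_{\diamond,h}^m)^{-1}\phi_\tau\le d_1$ and the per-episode bound $\norm{\theta_h^\tau-\theta_h^m}_2\le B_{\mathbb{P},\mathcal{E}}$ within the restart window. The only cosmetic difference is where the $\sqrt{W}$ factor is extracted (you take it from $\sum_\tau\norm{\theta_h^\tau-\theta_h^m}_2^2\le WB_{\mathbb{P},\mathcal{E}}^2$, the paper from $\sum_\tau\norm{\phi_{\diamond,h}^m}_{(\Lambda_{\diamond,h}^m)^{-1}}^2$), which yields the identical final bound.
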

\begin{proof}
We recall the definition of the feature map $\phi_{r, h}^{m}$:
$$
\phi_{r, h}^{m}(x, a)=\int_{\mathcal{S}} \psi\left(x, a, x^{\prime}\right) V_{r, h+1}^{m}\left(x^{\prime}\right) d x^{\prime}
$$
for all $(m, h) \in[M] \times[H]$ and $(x, a) \in \mathcal{S} \times \mathcal{A}$. By Assumption 2, we have
\begin{align}
&\nonumber \left(\mathbb{P}_{h}^m V_{r, h+1}^{m}\right)(x, a) \\
\nonumber &=\int_{\mathcal{S}} \psi\left(x, a, x^{\prime}\right)^{\top} \theta_{h}^m \cdot V_{r, h+1}^{m}\left(x^{\prime}\right) d x^{\prime} \\
\nonumber &=\phi_{r, h}^{m}(x, a)^{\top} \theta_{h}^m \\
\nonumber &=\phi_{r, h}^{m}(x, a)^{\top}\left(\Lambda_{r, h}^{m}\right)^{-1} \Lambda_{r, h}^{m} \theta_{h}^m \\
\nonumber &=\phi_{r, h}^{m}(x, a)^{\top}\left(\Lambda_{r, h}^{m}\right)^{-1} \\
\nonumber&\hspace{2cm}\left(\sum_{\tau=\ell^m_Q}^{m-1} \phi_{r, h}^{\tau}\left(x_{h}^{\tau}, a_{h}^{\tau}\right) \phi_{r, h}^{\tau}\left(x_{h}^{\tau}, a_{h}^{\tau}\right)^{\top} \theta_{h}^m+\lambda \theta_{h}^m\right) \\
\nonumber &=\phi_{r, h}^{m}(x, a)^{\top}\left(\Lambda_{r, h}^{m}\right)^{-1} \\
\nonumber&\hspace{2cm}\left(\sum_{\tau=\ell^m_Q}^{m-1} \phi_{r, h}^{\tau}\left(x_{h}^{\tau}, a_{h}^{\tau}\right) \phi_{r, h}^{\tau}\left(x_{h}^{\tau}, a_{h}^{\tau}\right)^{\top} \theta_{h}^\tau+\lambda \theta_{h}^m\right) \\
\nonumber &+\phi_{r, h}^{m}(x, a)^{\top}\left(\Lambda_{r, h}^{m}\right)^{-1}\\
\nonumber&\hspace{2cm}\left(\sum_{\tau=\ell^m_Q}^{m-1} \phi_{r, h}^{\tau}\left(x_{h}^{\tau}, a_{h}^{\tau}\right) \phi_{r, h}^{\tau}\left(x_{h}^{\tau}, a_{h}^{\tau}\right)^{\top} \left(\theta_{h}^m-\theta_h^\tau\right)\right) \\
\nonumber &=\phi_{r, h}^{m}(x, a)^{\top}\left(\Lambda_{r, h}^{m}\right)^{-1}\\
\nonumber&\hspace{1.5cm}\left(\sum_{\tau=\ell^m_Q}^{m-1} \phi_{r, h}^{\tau}\left(x_{h}^{\tau}, a_{h}^{\tau}\right) \cdot\left(\mathbb{P}_{h}^\tau V_{r, h+1}^{\tau}\right)\left(x_{h}^{\tau}, a_{h}^{\tau}\right)+\lambda \theta_{h}^m\right)\\
\nonumber &+\phi_{r, h}^{m}(x, a)^{\top}\left(\Lambda_{r, h}^{m}\right)^{-1}\\
&\hspace{2cm} \left(\sum_{\tau=\ell^m_Q}^{m-1} \phi_{r, h}^{\tau}\left(x_{h}^{\tau}, a_{h}^{\tau}\right) \phi_{r, h}^{\tau}\left(x_{h}^{\tau}, a_{h}^{\tau}\right)^{\top} \left(\theta_{h}^m-\theta_h^\tau\right)\right) \label{eq: variation P term}
\end{align}
where the second equality is due to the definition of $\phi_{r, h}^{m}$, we use $\Lambda_{r, h}^{m}=\sum_{\tau=\ell^m_Q}^{m-1} \phi_{r, h}^{\tau}\left(x_{h}^{\tau}, a_{h}^{\tau}\right) \phi_{r, h}^{\tau}\left(x_{h}^{\tau}, a_{h}^{\tau}\right)^{\top}+\lambda I$ from line 3
of Algorithm \ref{alg:algoirthm 2} in the fourth equality, and we recursively replace $\phi_{r, h}^{\tau}\left(x_{h}^{\tau}, a_{h}^{\tau}\right)^{\top} \theta_{h}^{\tau}$ by $\left(\mathbb{P}_{h}^{\tau} V_{r, h+1}^{\tau}\right)\left(x_{h}^{\tau}, a_{h}^{\tau}\right)$ for all $\tau \in[\ell^m_Q, m-1]$ in the last equality. For the term in \eqref{eq: variation P term}, it holds that

\begin{align*}
&\left|\phi_{r, h}^{m}(x, a)^{\top}\left(\Lambda_{r, h}^{m}\right)^{-1} \right.\\
&\hspace{1cm}\left.\left(\sum_{\tau=\ell^m_Q}^{m-1} \phi_{r, h}^{\tau}\left(x_{h}^{\tau}, a_{h}^{\tau}\right) \phi_{r, h}^{\tau}\left(x_{h}^{\tau}, a_{h}^{\tau}\right)^{\top} \left(\theta_{h}^m-\theta_h^\tau\right)\right)\right|\\
\leq &\sum_{\tau=\ell^m_Q}^{m-1} \left|\phi_{r, h}^{m}(x, a)^{\top}\left(\Lambda_{r, h}^{m}\right)^{-1} \phi_{r, h}^{\tau}\left(x_{h}^{\tau}, a_{h}^{\tau}\right) \right|\\
&\hspace{3cm}\left| \phi_{r, h}^{\tau}\left(x_{h}^{\tau}, a_{h}^{\tau}\right)^{\top} \left(\theta_{h}^m-\theta_h^\tau\right)\right|\\
\leq &B_{\mathbb{P},\mathcal{E}} \sqrt{d_1}H \sum_{\tau=\ell^m_Q}^{m-1} \left|\phi_{r, h}^{m}(x, a)^{\top}\left(\Lambda_{r, h}^{m}\right)^{-1} \phi_{r, h}^{\tau}\left(x_{h}^{\tau}, a_{h}^{\tau}\right) \right|\\
\leq &B_{\mathbb{P},\mathcal{E}} \sqrt{d_1}H \sqrt{\sum_{\tau=\ell^m_Q}^{m-1} \norm{\phi_{r, h}^{m}}^2_{(\Lambda_{r, h}^{m})^{-1}}} \\
&\hspace{3cm}\sqrt{\sum_{\tau=\ell^m_Q}^{m-1} (\phi_{r, h}^{\tau})^\top (\Lambda_{r, h}^{m})^{-1} \phi_{r, h}^{\tau}}\\
\leq & d_1 \sqrt{m-\ell^m_Q} B_{\mathbb{P},\mathcal{E}} H \norm{\phi_{r, h}^{m}}_{(\Lambda_{r, h}^{m})^{-1}}\\
\leq & B_{\mathbb{P},\mathcal{E}} H^2 d_1\sqrt{d_1W/\lambda} 
\end{align*}
where the first and third inequalities are due to Cauchy-Schwarz inequality, the second inequality is due to the boundedness of $\phi_{r, h}^{\tau}$ and the definition of variation budget of $B_{\mathbb{P},\mathcal{E}}$, the fourth inequality is due to Lemma \ref{lemma: lemma D1 in Jin2020}, and the last inequality is due to $\left\|\phi_{r, h}^{m}\right\|_{\left(\Lambda_{r, h}^{m}\right)^{-1}} \leq \sqrt{d_1/\lambda}H$ by noticing that $\Lambda_{r, h}^{m} \succeq \lambda I$ and $\left\|\phi_{r, h}^{m}\right\| \leq \sqrt{d_1}H$. 

We recall the update $w_{r, h}^{m}=\left(\Lambda_{r, h}^{m}\right)^{-1} \sum_{\tau=\ell^m_Q}^{m-1} \phi_{r, h}^{\tau}\left(x_{h}^{\tau}, a_{h}^{\tau}\right) V_{r, h+1}^{\tau}\left(x_{h+1}^{\tau}\right)$ from line 4 of Algorithm \ref{alg:algoirthm 2}. Therefore,
$$
\begin{aligned}
&\left|\phi_{r, h}^{m}(x, a)^{\top} w_{r, h}^{m}-\left(\mathbb{P}_{h}^m V_{r, h+1}^{m}\right)(x, a)\right|  \\
\leq&\left|\phi_{r, h}^{m}(x, a)^{\top}\left(\Lambda_{r, h}^{m}\right)^{-1} \sum_{\tau=\ell^m_Q}^{m-1} \phi_{r, h}^{\tau}\left(x_{h}^{\tau}, a_{h}^{\tau}\right) \right.\\
&\hspace{2cm}\left.\cdot\left(V_{r, h+1}^{\tau}\left(x_{h+1}^{\tau}\right)-\left(\mathbb{P}_{h}^\tau V_{r, h+1}^{\tau}\right)\left(x_{h}^{\tau}, a_{h}^{\tau}\right)\right)\right| \\
&+\left|\lambda \cdot \phi_{r, h}^{m}(x, a)^{\top}\left(\Lambda_{r, h}^{m}\right)^{-1} \theta_{h}^m\right|\\
&+ d_1 \sqrt{m-\ell^m_Q} B_{\mathbb{P},\mathcal{E}} H \norm{\phi_{r, h}^{m}}_{(\Lambda_{r, h}^{m})^{-1}} \\
\leq &\norm{\phi_{r, h}^{m}}_{(\Lambda_{r, h}^{m})^{-1}}\left\|\sum_{\tau=\ell^m_Q}^{m-1} \phi_{r, h}^{\tau}\left(x_{h}^{\tau}, a_{h}^{\tau}\right) \right.\\
& \left.\cdot\left(V_{r, h+1}^{\tau}\left(x_{h+1}^{\tau}\right)-\left(\mathbb{P}_{h}^\tau V_{r, h+1}^{\tau}\right)\left(x_{h}^{\tau}, a_{h}^{\tau}\right)\right)\right\|_{\left(\Lambda_{r, h}^{m}\right)^{-1}} \\
&+\lambda \norm{\phi_{r, h}^{m}}_{(\Lambda_{r, h}^{m})^{-1}} \left\|\theta_{h}^m\right\|_{\left(\Lambda_{r, h}^{m}\right)-1}+ B_{\mathbb{P},\mathcal{E}} H^2 d_1\sqrt{d_1W/\lambda} 
\end{aligned}
$$
for all $(m, h) \in[M] \times[H]$ and $(x, a) \in \mathcal{S} \times \mathcal{A}$, where we apply the Cauchy-Schwarz inequality twice in the inequality. By Lemma \ref{lemma: concentrarion of V-PV}, by setting $\lambda=1$, with probability at least $1-p / 2$ it holds that
\begin{align*}
&\left\|\sum_{\tau=\ell^m_Q}^{m-1} \phi_{r, h}^{\tau}\left(x_{h}^{\tau}, a_{h}^{\tau}\right) \right.\\
&\hspace{1cm}\left.\cdot\left(V_{r, h+1}^{\tau}\left(x_{h+1}^{\tau}\right)-\left(\mathbb{P}_{h}^\tau V_{r, h+1}^{\tau}\right)\left(x_{h}^{\tau}, a_{h}^{\tau}\right)\right)\right\|_{\left(\Lambda_{r, h}^{m}\right)^{-1}}\\
&\leq C_1 \sqrt{d_1 H^{2} \log \left(\frac{d_1 W}{p}\right)} .
\end{align*}

Moreover, notice that $\Lambda_{r, h}^{m} \succeq \lambda I$ and $\left\|\theta_{h}^m\right\| \leq \sqrt{d_1}$. Thus, $\left\|\theta_{h}^m\right\|_{\left(\Lambda_{r, h}^{m}\right)^{-1}} \leq \sqrt{d_1/\lambda}$. In addition, by taking an appropriate absolute constant $C_1$, we obtain that
\begin{align*}
&\left|\phi_{r, h}^{m}(x, a)^{\top} w_{r, h}^{m}-\left(\mathbb{P}^m_{h} V_{r, h+1}^{m}\right)(x, a)\right| \\
&\leq C_1 \sqrt{d_1 H^{2} \log \left(\frac{d_1 W}{p}\right)} \norm{\phi_{r, h}^{m}}_{(\Lambda_{r, h}^{m})^{-1}} + B_{\mathbb{P},\mathcal{E}} H^2 d_1\sqrt{d_1W}
\end{align*}

for all $(m, h) \in[M] \times[H]$ and $(x, a) \in \mathcal{S} \times \mathcal{A}$ under the event of Lemma 12.
We now set $C_1>1$ and $\beta=C_1\sqrt{d H^{2} \log \left(\frac{d W}{p}\right)} $. By the exploration bonus $\Gamma_{r, h}^{m}$ in line 6 of Algorithm \ref{alg:algoirthm 2}, with probability at least $1-p / 2$ it holds that
\begin{align*}
&\left|\phi_{r, h}^{m}(x, a)^{\top} w_{r, h}^{m}-\left(\mathbb{P}_{h}^m V_{r, h+1}^{m}\right)(x, a)\right| \\
\leq &\Gamma_{r, h}^{m}(x, a)+B_{\mathbb{P},\mathcal{E}} H^2 d_1\sqrt{d_1W} 
\end{align*}

for all $(m, h) \in[M] \times[H]$ and $(x, a) \in \mathcal{S} \times \mathcal{A}$. Similarly, one can derive the inequality $\left|\phi_{g, h}^{m}(x, a)^{\top} w_{g, h}^{m}-\left(\mathbb{P}_{h}^m V_{g, h+1}^{m}\right)(x, a)\right| \leq \Gamma_{g, h}^{m}(x, a)+B_{\mathbb{P},\mathcal{E}} H^2 d_1\sqrt{d_1 W}$.
\end{proof}

\begin{lemma}\label{lemma: bound Gamma}
If we set $C_1>1$, $\lambda=1$, $\beta=C_1\sqrt{d H^{2} \log (d W / p)}$ and $\Gamma_{h}^{m}(\cdot, \cdot)=\beta\left(\varphi(\cdot, \cdot)^{\top}\left(\Lambda_{h}^{m}\right)^{-1} \varphi(\cdot, \cdot)\right)^{1 / 2}$ in line 9 of Algorithm \ref{alg:algoirthm 2},  then it holds that
$$
\left|\varphi(x, a)^{\top} u_{\diamond, h}^{m}-\diamond_{h}^{m}(x, a)\right| \leq \Gamma_{h}^{m}(x, a) +B_{r,\mathcal{E}}\sqrt{d_2 W}
$$
with probability at least $1-p / 2$ for every $(m, h) \in[M] \times[H]$ and $(x, a) \in \mathcal{S} \times \mathcal{A}$ , where the symbol $\diamond$ is equal to $r$  or $g$. 
\end{lemma}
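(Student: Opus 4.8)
The plan is to follow the proof of Lemma~\ref{lemma: bound Gamma diamond} almost verbatim, but with one crucial simplification: here the target $\diamond_h^m(x,a)$ is an \emph{exactly observed} quantity rather than a one-step rollout corrupted by transition noise. Since the reward/utility functions are deterministic, the regression target in \eqref{eq: regression u} satisfies $\diamond_h^\tau(x_h^\tau,a_h^\tau)=\varphi(x_h^\tau,a_h^\tau)^\top\theta_{\diamond,h}^\tau$ identically by Assumption~\ref{ass: linear fun approx}, so no concentration argument (the analogue of Lemma~\ref{lemma: concentrarion of V-PV}) is needed, and the only two error sources are the ridge penalty and the drift of $\theta_{\diamond,h}^\tau$ inside the evaluation window $[\ell_Q^m,m-1]$. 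I state the argument for $\diamond=r$; the case $\diamond=g$ is identical, with $\theta_{g,h}^\tau$ and $B_{g,\mathcal{E}}$ replacing $\theta_{r,h}^\tau$ and $B_{r,\mathcal{E}}$.

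First I would insert $(\Lambda_h^m)^{-1}\Lambda_h^m$ into $r_h^m(x,a)=\varphi(x,a)^\top\theta_{r,h}^m$ and use the closed form $u_{r,h}^m=(\Lambda_h^m)^{-1}\sum_{\tau=\ell_Q^m}^{m-1}\varphi(x_h^\tau,a_h^\tau)\varphi(x_h^\tau,a_h^\tau)^\top\theta_{r,h}^\tau$ (from line~9 of Algorithm~\ref{alg:algoirthm 2} together with $\diamond_h^\tau=\varphi^\top\theta_{r,h}^\tau$) to obtain the decomposition
\begin{align*}
\varphi(x,a)^\top u_{r,h}^m-r_h^m(x,a)
={}& \varphi(x,a)^\top(\Lambda_h^m)^{-1}\sum_{\tau=\ell_Q^m}^{m-1}\varphi(x_h^\tau,a_h^\tau)\varphi(x_h^\tau,a_h^\tau)^\top(\theta_{r,h}^\tau-\theta_{r,h}^m)\\
&-\lambda\,\varphi(x,a)^\top(\Lambda_h^m)^{-1}\theta_{r,h}^m.
\end{align*}
The second (regularization) term is bounded by Cauchy--Schwarz together with $\norm{\theta_{r,h}^m}\le\sqrt{d_2}$ and $\Lambda_h^m\succeq\lambda I$, giving $\lambda\norm{\varphi(x,a)}_{(\Lambda_h^m)^{-1}}\sqrt{d_2/\lambda}=\sqrt{d_2}\,\norm{\varphi(x,a)}_{(\Lambda_h^m)^{-1}}$ at $\lambda=1$; since $\beta=C_1\sqrt{dH^2\log(dW/p)}\ge\sqrt{d_2}$ (as $d\ge d_2$, $C_1>1$), this is at most $\Gamma_h^m(x,a)$.

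For the first (non-stationarity) term I would use the epoch-wise telescoping bound $\norm{\theta_{r,h}^\tau-\theta_{r,h}^m}\le B_{r,\mathcal{E}}$, valid for $\tau\in[\ell_Q^m,m-1]\subset\mathcal{E}$, together with the standard feature normalization $\norm{\varphi}\le1$, so that $|\varphi(x_h^\tau,a_h^\tau)^\top(\theta_{r,h}^\tau-\theta_{r,h}^m)|\le B_{r,\mathcal{E}}$. Then two Cauchy--Schwarz steps, exactly as in Lemma~\ref{lemma: bound Gamma diamond}, bound the term by $B_{r,\mathcal{E}}\sqrt{m-\ell_Q^m}\,\norm{\varphi(x,a)}_{(\Lambda_h^m)^{-1}}\big(\sum_{\tau=\ell_Q^m}^{m-1}\varphi(x_h^\tau,a_h^\tau)^\top(\Lambda_h^m)^{-1}\varphi(x_h^\tau,a_h^\tau)\big)^{1/2}$. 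Invoking the elliptical-potential estimate (Lemma~\ref{lemma: lemma D1 in Jin2020}), i.e. $\sum_\tau\varphi(x_h^\tau,a_h^\tau)^\top(\Lambda_h^m)^{-1}\varphi(x_h^\tau,a_h^\tau)\le d_2$, using $m-\ell_Q^m\le W$, and $\norm{\varphi(x,a)}_{(\Lambda_h^m)^{-1}}\le1$ (since $\lambda=1$) yields $B_{r,\mathcal{E}}\sqrt{d_2 W}$. Adding the two bounds gives $\Gamma_h^m(x,a)+B_{r,\mathcal{E}}\sqrt{d_2 W}$, and the identical computation with $\theta_{g,h}$ settles $\diamond=g$.

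The argument is largely mechanical, so the part I would be most careful about is the non-stationarity term: extracting the favourable $\sqrt{W}$ (rather than $W$) scaling relies on applying Cauchy--Schwarz in $\tau$ \emph{before} the elliptical-potential step and on the telescoping bound $\norm{\theta_{r,h}^\tau-\theta_{r,h}^m}\le B_{r,\mathcal{E}}$. I would also remark that, unlike Lemma~\ref{lemma: bound Gamma diamond}, the present estimate in fact holds \emph{deterministically}; the phrase ``with probability at least $1-p/2$'' is retained only so that this bound can be combined with the value-function bound of Lemma~\ref{lemma: bound Gamma diamond} under a single event in the downstream model-prediction-error lemmas.
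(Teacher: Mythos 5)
Your proposal is correct and follows essentially the same route as the paper: the same insertion of $(\Lambda_h^m)^{-1}\Lambda_h^m$, the same split into a drift term (bounded via $\|\theta_{r,h}^\tau-\theta_{r,h}^m\|\le B_{r,\mathcal{E}}$, Cauchy--Schwarz, and Lemma \ref{lemma: lemma D1 in Jin2020}) and a ridge term absorbed into $\Gamma_h^m$ using $\beta\ge\sqrt{d_2}$. Your side remarks — that the bound is in fact deterministic since the reward/utility feedback is noiseless, and that the $\diamond=g$ case should carry $B_{g,\mathcal{E}}$ rather than $B_{r,\mathcal{E}}$ — are both accurate (the paper's closing line for $\diamond=g$ appears to contain a typo on this point).
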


\begin{proof}
We note that the reward functions vary over episodes, namely, $r_{h}^\tau\left(x_{h}^{\tau}, a_{h}^{\tau}\right):=\varphi\left(x_{h}^{\tau}, a_{h}^{\tau}\right)^{\top} \theta_{r, h}^\tau$. For the difference $\varphi(x, a)^{\top} u_{r, h}^{m}-r^m_{h}(x, a)$, we have
$$
\begin{aligned}
&\left|\varphi(x, a)^{\top} u_{r, h}^{m}-r_{h}^m(x, a)\right| \\
&=\left|\varphi(x, a)^{\top} u_{r, h}^{m}-\varphi(x, a)^{\top} \theta^m_{r, h}\right| \\
&=\left|\varphi(x, a)^{\top}\left(\Lambda_{h}^{m}\right)^{-1} \right.\\
&\hspace{2cm} \left.\left(\sum_{ \tau=\ell^m_Q}^{m-1} \varphi\left(x_{h}^{\tau}, a_{h}^{\tau}\right) r^\tau_{h}\left(x_{h}^{\tau}, a_{h}^{\tau}\right)-\Lambda_{h}^{m} \theta_{r, h}^m\right)\right| \\
&=\left|\varphi(x, a)^{\top}\left(\Lambda_{h}^{m}\right)^{-1} \right.\\
&\left.\left(\sum_{ \tau=\ell^m_Q}^{m-1} \varphi\left(x_{h}^{\tau}, a_{h}^{\tau}\right)\left(r_{h}^\tau\left(x_{h}^{\tau}, a_{h}^{\tau}\right)-\varphi\left(x_{h}^{\tau}, a_{h}^{\tau}\right)^{\top} \theta_{r, h}^m\right)-\lambda \theta_{r, h}^m\right)\right| \\
&=\left|\varphi(x, a)^{\top}\left(\Lambda_{h}^{m}\right)^{-1}\right.\\
&\left.\left(\sum_{ \tau=\ell^m_Q}^{m-1} \varphi\left(x_{h}^{\tau}, a_{h}^{\tau}\right)\left(r_{h}^\tau\left(x_{h}^{\tau}, a_{h}^{\tau}\right)-r_{h}^m \left(x_{h}^{\tau}, a_{h}^{\tau}\right)\right)-\lambda \theta_{r, h}^m\right)\right| \\
&\leq \sum_{ \tau=\ell^m_Q}^{m-1} \left|\varphi(x, a)^{\top}\left(\Lambda_{h}^{m}\right)^{-1} \varphi\left(x_{h}^{\tau}, a_{h}^{\tau}\right)\right| \\
&\hspace{0.5cm}\left|r_{h}^m\left(x_{h}^{\tau}, a_{h}^{\tau}\right)-r_{h}^\tau \left(x_{h}^{\tau}, a_{h}^{\tau}\right)\right|+\lambda\left|\varphi(x, a)^{\top}\left(\Lambda_{h}^{m}\right)^{-1} \theta_{r, h}^m\right| \\
&\leq B_{r,\mathcal{E}}\sum_{ \tau=\ell^m_Q}^{m-1} \left|\varphi(x, a)^{\top}\left(\Lambda_{h}^{m}\right)^{-1} \varphi\left(x_{h}^{\tau}, a_{h}^{\tau}\right)\right| \\
&\hspace{2cm}+\lambda\left|\varphi(x, a)^{\top}\left(\Lambda_{h}^{m}\right)^{-1} \theta_{r, h}^m\right| \\
&\leq B_{r,\mathcal{E}} \sqrt{\sum_{ \tau=\ell^m_Q}^{m-1} \norm{\varphi(x, a)}^2_{\left(\Lambda_{h}^{m}\right)^{-1}}} \sqrt{\sum_{ \tau=\ell^m_Q}^{m-1} \norm{\varphi(x_h^\tau, a_h^\tau)}^2_{\left(\Lambda_{h}^{m}\right)^{-1}}} \\
&\hspace{1cm} + \lambda\left\|\theta^m_{r, h}\right\|_{\left(\Lambda_{h}^{m}\right)^{-1}} \norm{\varphi(x, a)}_{\left(\Lambda_{h}^{m}\right)^{-1}}\\
&\leq  B_{r,\mathcal{E}}\sqrt{d_2 W}\norm{\varphi(x, a)}_{\left(\Lambda_{h}^{m}\right)^{-1}} +\\
&\hspace{1cm}\lambda\left\|\theta^m_{r, h}\right\|_{\left(\Lambda_{h}^{m}\right)^{-1}} \norm{\varphi(x, a)}_{\left(\Lambda_{h}^{m}\right)^{-1}}
\end{aligned}
$$
where we use the definition of $B_{r,\mathcal{E}}$ in the second inequality, the Cauchy-Schwartz inequality in the third inequality, and Lemma \ref{lemma: lemma D1 in Jin2020} in the last inequality. 

Notice that $\Lambda_{h}^{m} \succeq \lambda I$, $\left\|\varphi(\cdot,\cdot)\right\| \leq 1$ and $\left\|\theta_{r, h}\right\| \leq \sqrt{d_2}$. Thus, $\norm{\varphi(\cdot, \cdot)}_{\left(\Lambda_{h}^{m}\right)^{-1}}\leq \sqrt{1/\lambda}$ and $\left\|\theta_{r, h}\right\|_{\left(\Lambda_{h}^{m}\right)^{-1}} \leq \sqrt{d_2/\lambda }$. Hence, if we set $\lambda=1$ and $\beta=C_1\sqrt{d H^{2} \log (d W / p)}$, then every $(m, h) \in[M] \times[H]$ and $(x, a) \in \mathcal{S} \times \mathcal{A}$, we have
$\left|\varphi(x, a)^{\top} u_{r, h}^{m}-r_{h}^{m}(x, a)\right| \leq \Gamma_{h}^{m}(x, a)+B_{r,\mathcal{E}}\sqrt{d_2 W}.$
Similarly,one can derive the inequality $\left|\varphi(x, a)^{\top} u_{g, h}^{m}-g_{h}^{m}(x, a)\right| \leq \Gamma_{h}^{m}(x, a)+B_{r,\mathcal{E}}\sqrt{d_2 W}$.
\end{proof}

\begin{lemma}\label{lemma: bounds on model prediction error}
Let   Assumption \ref{ass: linear fun approx} hold. Fix $p \in(0,1)$ and let $\mathcal{E}$ be the epoch that the episode $m$ belongs to.
If we set $\lambda=1$, $LV=0$ and
\begin{align*}
&\Gamma_{h}^{m}(\cdot, \cdot)=\beta\left(\varphi(\cdot, \cdot)^{\top}\left(\Lambda_{h}^{m}\right)^{-1} \varphi(\cdot, \cdot)\right)^{1 / 2}, \\
&\Gamma_{r,h}^m =\beta \left( (\phi_{r,h}^m)^\top (\Lambda_{r, h}^m)^{-1} \phi_{r,h}^m \right)^{1/2}, \\
&\Gamma_{g,h}^m =\beta \left( (\phi_{g,h}^m)^\top (\Lambda_{g, h}^m)^{-1} \phi_{g,h}^m \right)^{1/2},
\end{align*}
with $\beta=$ $C_{1} \sqrt{d H^{2} \log (d W / p)}$ in Algorithm \ref{alg:algoirthm 2}, then it holds that
$$
\begin{aligned}
&-2\left(\Gamma_{h}^{m}+\Gamma_{\diamond, h}^{m}\right)(x, a)-B_{\mathbb{P},\mathcal{E}} H^2 d_1\sqrt{d_1 W} -B_{\diamond,\mathcal{E}}\sqrt{d_2 W}\\
&\leq \iota_{\diamond, h}^{m}(x, a) \leq B_{\mathbb{P},\mathcal{E}} H^2 d_1\sqrt{d_1 W} +B_{\diamond,\mathcal{E}}\sqrt{d_2 W}
\end{aligned}
$$
with probability at least $1-p / 2$ for every $(m, h) \in[M] \times[H]$ and $(x, a) \in \mathcal{S} \times \mathcal{A}$, where the symbol $\diamond$ is equal to $r$ or $g$. 
\end{lemma}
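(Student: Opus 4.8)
The plan is to combine the two per-step concentration estimates already proved in Lemmas \ref{lemma: bound Gamma diamond} and \ref{lemma: bound Gamma} with a careful bookkeeping of the truncation operator that defines $Q_{\diamond,h}^m$. First I would record that, on a common event of probability at least $1-p/2$ (the event of Lemma \ref{lemma: concentrarion of V-PV}, on which the deterministic reward/utility bound also holds), both
\[
\left|\varphi(x,a)^\top u_{\diamond,h}^m - \diamond_h^m(x,a)\right| \leq \Gamma_h^m(x,a) + B_{\diamond,\mathcal{E}}\sqrt{d_2 W}
\]
and
\[
\left|\phi_{\diamond,h}^m(x,a)^\top w_{\diamond,h}^m - (\mathbb{P}_h^m V_{\diamond,h+1}^m)(x,a)\right| \leq \Gamma_{\diamond,h}^m(x,a) + B_{\mathbb{P},\mathcal{E}}H^2 d_1\sqrt{d_1 W}
\]
hold simultaneously for all $(m,h)$ and $(x,a)$. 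Writing $\widetilde Q_{\diamond,h}^m \coloneqq \varphi^\top u_{\diamond,h}^m + (\phi_{\diamond,h}^m)^\top w_{\diamond,h}^m + \Gamma_h^m + \Gamma_{\diamond,h}^m$ for the pre-truncation estimate, so that $Q_{\diamond,h}^m = \min(H-h+1,\widetilde Q_{\diamond,h}^m)_+$ since $LV=0$ under Assumption \ref{ass: Feasibility}, I would add the two displays to obtain
\[
\diamond_h^m + \mathbb{P}_h^m V_{\diamond,h+1}^m - E \leq \widetilde Q_{\diamond,h}^m \leq \diamond_h^m + \mathbb{P}_h^m V_{\diamond,h+1}^m + 2(\Gamma_h^m + \Gamma_{\diamond,h}^m) + E,
\]
where $E \coloneqq B_{\mathbb{P},\mathcal{E}}H^2 d_1\sqrt{d_1 W} + B_{\diamond,\mathcal{E}}\sqrt{d_2 W}$; the additive bonuses inside $\widetilde Q_{\diamond,h}^m$ exactly cancel the bonuses on the lower side and double them on the upper side.

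Next I would translate these bounds on $\widetilde Q_{\diamond,h}^m$ into the claimed two-sided bound on $\iota_{\diamond,h}^m = \diamond_h^m + \mathbb{P}_h^m V_{\diamond,h+1}^m - Q_{\diamond,h}^m$, which is the step requiring care with the truncation. The key auxiliary fact is $0 \leq \diamond_h^m + \mathbb{P}_h^m V_{\diamond,h+1}^m \leq H-h+1$, which follows from $\diamond_h^m \in [0,1]$ and from the cap at level $H-h$ built into $V_{\diamond,h+1}^m = \langle Q_{\diamond,h+1}^m, \pi_{h+1}^m \rangle_{\mathcal{A}}^{+} \in [0,H-h]$. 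For the upper estimate $\iota_{\diamond,h}^m \leq E$ it is enough to show $Q_{\diamond,h}^m \geq \diamond_h^m + \mathbb{P}_h^m V_{\diamond,h+1}^m - E$: when the right-hand side is nonpositive this is immediate from $Q_{\diamond,h}^m \geq 0$, and otherwise the lower bound on $\widetilde Q_{\diamond,h}^m$ combined with $\diamond_h^m + \mathbb{P}_h^m V_{\diamond,h+1}^m - E \leq H-h+1$ shows that neither the cap nor the positive part destroys the inequality. For the lower estimate $\iota_{\diamond,h}^m \geq -2(\Gamma_h^m + \Gamma_{\diamond,h}^m) - E$ it is enough to show $Q_{\diamond,h}^m \leq \diamond_h^m + \mathbb{P}_h^m V_{\diamond,h+1}^m + 2(\Gamma_h^m + \Gamma_{\diamond,h}^m) + E$: if $\widetilde Q_{\diamond,h}^m \geq 0$ then $Q_{\diamond,h}^m \leq \widetilde Q_{\diamond,h}^m$ and I apply the upper bound on $\widetilde Q_{\diamond,h}^m$, while if $\widetilde Q_{\diamond,h}^m < 0$ then $Q_{\diamond,h}^m = 0$ and the right-hand side is nonnegative.

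I expect the main obstacle to be purely this case analysis around the $\min(H-h+1,\cdot)_+$ operator — verifying that thresholding at $0$ and capping at $H-h+1$ never violate either inequality — rather than anything probabilistic, since all the randomness is already absorbed into Lemmas \ref{lemma: bound Gamma} and \ref{lemma: bound Gamma diamond}. Finally, the argument is written for a generic $\diamond \in \{r,g\}$ at once: under Assumption \ref{ass: Feasibility} one has $LV=0$, so the utility action-value $Q_{g,h}^m$ has the same functional form as $Q_{r,h}^m$, and because the two concentration events coincide no extra union bound is needed, leaving the overall probability at $1-p/2$.
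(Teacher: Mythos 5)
Your proposal is correct and follows essentially the same route as the paper's proof: it combines the transition-part concentration of Lemma \ref{lemma: bound Gamma diamond} with the reward/utility bound of Lemma \ref{lemma: bound Gamma} (both living on the single probabilistic event of Lemma \ref{lemma: concentrarion of V-PV}), and then resolves the $\min(H-h+1,\cdot)_+$ truncation via the observation that $0 \leq \diamond_h^m + \mathbb{P}_h^m V_{\diamond,h+1}^m \leq H-h+1$. Your case analysis around the truncation is in fact somewhat more explicit than the paper's, which states the corresponding inequalities directly, but the argument is the same.
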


\begin{proof}
We recall the model prediction error $\iota_{r, h}^{m}:=r_{h}^m+\mathbb{P}_{h}^m V_{r, h+1}^{m}-Q_{r, h}^{m}$ and the estimated state-action value function $Q_{r, h}^{m}$ in line 10 of Algorithm \ref{alg:algoirthm 2}:
$$
\begin{aligned}
Q_{r, h}^{m}(x, a)=&\min \left(\varphi(x, a)^{\top} u_{r, h}^{m}+\phi_{r, h}^{m}(x, a)^{\top} w_{r, h}^{m}\right. \\
\hspace{2cm}&\left.+\left(\Gamma_{h}^{m}+\Gamma_{r, h}^{m}\right)(x, a), H-h+1\right)_{+}
\end{aligned}
$$
for all $(m, h) \in[M] \times[H]$ and $(x, a) \in \mathcal{S} \times \mathcal{A}$. 
Then, for every $(m, h) \in[M] \times[H]$ and $(x, a) \in \mathcal{S} \times \mathcal{A}$, one can write that
$$
\begin{aligned}
&-\iota_{r, h}^{m}(x, a) \\
&=Q_{r, h}^{m}(x, a)-\left(r_{h}^m+\mathbb{P}_{h}^m V_{r, h+1}^{m}\right)(x, a) \\
&\leq \varphi(x, a)^{\top} u_{r, h}^{m}+\phi_{r, h}^{m}(x, a)^{\top} w_{r, h}^{m}+\left(\Gamma_{h}^{m}+\Gamma_{r, h}^{m}\right)(x, a)\\
&-\left(r_{h}^{m}+\mathbb{P}_{h}^m V_{r, h+1}^{m}\right)(x, a) \\
&=\left(\varphi(x, a)^{\top} u_{r, h}^{m }-r_{h}^m(x, a)\right)\\
&+ \left(\phi_{r, h}^{m}(x, a)^{\top} w_{r, h}^{m}-\mathbb{P}_{h}^m V_{r, h+1}^{m}(x,a) \right)+\Gamma_{h}^{m}(x, a)+\Gamma_{r, h}^{m}(x, a)\\
&\leq 2\Gamma_{h}^{m}(x, a)+2\Gamma_{r, h}^{m}(x, a)+B_{\mathbb{P},\mathcal{E}} H^2 d_1\sqrt{d_1 W} +B_{r,\mathcal{E}}\sqrt{d_2 W}
\end{aligned}
$$
where the last inequality holds due to Lemmas \ref{lemma: bound Gamma diamond} and \ref{lemma: bound Gamma}.

On the other hand, notice that $\left(r_{h}^{m}+\mathbb{P}_{h}^{m} V_{r, h+1}^{m}\right)(x, a) \leq H-h+1$ Thus, for every $(m, h) \in[M] \times[H]$ and $(x, a) \in \mathcal{S} \times \mathcal{A}$, it holds that
$$
\begin{aligned}
&\iota_{r, h}^{m}(x, a) \\
&=\left(r_{h}^{m}+\mathbb{P}_{h}^{m} V_{r, h+1}^{m}\right)(x, a)-Q_{r, h}^{m}(x, a) \\
&=\left(r_{h}^{m}+\mathbb{P}_{h}^{m} V_{r, h+1}^{m}\right)(x, a)-\min \left(\varphi(x, a)^{\top} u_{r, h}^{m}\right.\\
&\left.+\phi_{r, h}^{m}(x, a)^{\top} w_{r, h}^{m}+\left(\Gamma_{h}^{m}+\Gamma_{r, h}^{m}\right)(x, a), H-h+1\right)^{+} \\
&\leq \max \left(r_{h}^m(x, a)-\varphi(x, a)^{\top} u_{r, h}^{m}-\Gamma_{h}^{m}(x, a)\right.\\
&\left.+\left(\mathbb{P}_{h}^m V_{r, h+1}^{m}\right)(x, a)-\phi_{r, h}^{m}(x, a)^{\top} w_{r, h}^{m}-\Gamma_{r, h}^{m}(x, a), 0\right)^{+} \\
&\leq B_{\mathbb{P},\mathcal{E}} H^2 d_1\sqrt{d_1 W} +B_{r,\mathcal{E}}\sqrt{d_2 W}
\end{aligned}
$$
where the last inequality holds due to Lemmas \ref{lemma: bound Gamma diamond} and \ref{lemma: bound Gamma}.
Therefore, we have proved that with probability at least $1-p / 2$ it holds that
$$
\begin{aligned}
&-2\left(\Gamma_{h}^{m}+\Gamma_{r, h}^{m}\right)(x, a)-B_{\mathbb{P},\mathcal{E}} H^2 d_1\sqrt{d_1 W} -B_{r,\mathcal{E}}\sqrt{d_2 W}\\
&\leq \iota_{r, h}^{m}(x, a) \leq B_{\mathbb{P},\mathcal{E}} H^2 d_1\sqrt{d_1 W} +B_{r,\mathcal{E}}\sqrt{d_2 W}
\end{aligned}
$$
for all $(m, h) \in[M] \times[H]$ and $(x, a) \in \mathcal{S} \times \mathcal{A}$.
Similarly, it can be shown that 
$$
\begin{aligned}
&-2\left(\Gamma_{h}^{m}+\Gamma_{g, h}^{m}\right)(x, a)-B_{\mathbb{P},\mathcal{E}} H^2 d_1\sqrt{d_1 W} -B_{g,\mathcal{E}}\sqrt{d_2 W}\\
&\leq \iota_{g, h}^{m}(x, a) \leq B_{\mathbb{P},\mathcal{E}} H^2 d_1\sqrt{d_1 W} +B_{g,\mathcal{E}}\sqrt{d_2 W}.
\end{aligned}
$$
\end{proof}

\begin{lemma}\label{lemma: bounds on model prediction error with local knowledge}
Let   Assumptions \ref{ass: linear fun approx}  and \ref{ass: local budget} hold. Fix $p \in(0,1)$ and let $\mathcal{E}$ be the epoch that the episode $m$ belongs to.
If we set $\lambda=1$, $LV=B_{\mathbb{P},\mathcal{E}} H^2 d_1\sqrt{d_1 W} +B_{g,\mathcal{E}}\sqrt{d_2 W},$ and
\begin{align*}
&\Gamma_{h}^{m}(\cdot, \cdot)=\beta\left(\varphi(\cdot, \cdot)^{\top}\left(\Lambda_{h}^{m}\right)^{-1} \varphi(\cdot, \cdot)\right)^{1 / 2}, \\
&\Gamma_{r,h}^m =\beta \left( (\phi_{r,h}^m)^\top (\Lambda_{r, h}^m)^{-1} \phi_{r,h}^m \right)^{1/2}, \\
&\Gamma_{g,h}^m =\beta \left( (\phi_{g,h}^m)^\top (\Lambda_{g,h}^m)^{-1} \phi_{g,h}^m \right)^{1/2}
\end{align*}
with $\beta=$ $C_{1} \sqrt{d H^{2} \log (d W / p)}$ in Algorithm \ref{alg:algoirthm 2}, then it holds that
\begin{align*}
  &-2\left(\Gamma_{h}^{m}+\Gamma_{r, h}^{m}\right)(x, a)-B_{\mathbb{P},\mathcal{E}} H^2 d_1\sqrt{d_1 W} - B_{r,\mathcal{E}}\sqrt{d_2 W} \\
  &\leq \iota_{r, h}^{m}(x, a) \leq B_{\mathbb{P},\mathcal{E}} H^2 d_1\sqrt{d_1 W} + B_{r,\mathcal{E}}\sqrt{d_2 W}\\
  &-2\left(\Gamma_{h}^{m}+\Gamma_{g, h}^{m}\right)(x, a)-2 B_{\mathbb{P},\mathcal{E}} H^2 d_1\sqrt{d_1 W} -2 B_{g,\mathcal{E}}\sqrt{d_2 W} \\
  &\leq \iota_{g, h}^{m}(x, a) \leq 0
\end{align*}

with probability at least $1-p / 2$ for every $(m, h) \in[M] \times[H]$ and $(x, a) \in \mathcal{S} \times \mathcal{A}$.
\end{lemma}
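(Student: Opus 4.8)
The plan is to mirror the proof of Lemma~\ref{lemma: bounds on model prediction error} but to exploit the strictly positive correction $LV = B_{\mathbb{P},\mathcal{E}} H^2 d_1\sqrt{d_1 W} + B_{g,\mathcal{E}}\sqrt{d_2 W}$, which now appears \emph{only} inside $Q_{g,h}^m$. Because the definition of $Q_{r,h}^m$ in Algorithm~\ref{alg:algoirthm 2} is unaffected by $LV$, the reward error $\iota_{r,h}^m$ obeys exactly the two-sided bound already established in Lemma~\ref{lemma: bounds on model prediction error}; I would simply repeat that argument verbatim, invoking the per-coordinate estimates of Lemmas~\ref{lemma: bound Gamma diamond} and~\ref{lemma: bound Gamma} together with the truncation $\min(H-h+1,\cdot)_+$, to obtain the first displayed chain. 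The whole point of the lemma therefore lies in the utility error $\iota_{g,h}^m = g_h^m + \mathbb{P}_h^m V_{g,h+1}^m - Q_{g,h}^m$, whose upper bound sharpens from the $\widetilde{\mathcal{O}}(\sqrt{W})$-type quantity of Lemma~\ref{lemma: bounds on model prediction error} all the way down to $0$.

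For the upper bound $\iota_{g,h}^m \le 0$ I would combine the \emph{lower} estimates in Lemmas~\ref{lemma: bound Gamma diamond} and~\ref{lemma: bound Gamma}, namely $\varphi^\top u_{g,h}^m \ge g_h^m - \Gamma_h^m - B_{g,\mathcal{E}}\sqrt{d_2 W}$ and $\phi_{g,h}^{m\top} w_{g,h}^m \ge \mathbb{P}_h^m V_{g,h+1}^m - \Gamma_{g,h}^m - B_{\mathbb{P},\mathcal{E}} H^2 d_1\sqrt{d_1 W}$. Adding these, adding the bonus $\Gamma_h^m + \Gamma_{g,h}^m$, and adding $LV$, the two variation-budget terms are cancelled \emph{exactly} by $LV$, so the pre-truncation quantity $\varphi^\top u_{g,h}^m + \phi_{g,h}^{m\top} w_{g,h}^m + (\Gamma_h^m+\Gamma_{g,h}^m) + LV$ is at least $g_h^m + \mathbb{P}_h^m V_{g,h+1}^m$. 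Since $0 \le g_h^m + \mathbb{P}_h^m V_{g,h+1}^m \le H-h+1$, both $\min(H-h+1,\cdot)$ and $(\cdot)_+$ preserve this lower bound, giving $Q_{g,h}^m \ge g_h^m + \mathbb{P}_h^m V_{g,h+1}^m$ and hence $\iota_{g,h}^m \le 0$.

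For the lower bound on $\iota_{g,h}^m$ I would use the \emph{upper} estimates from the same two lemmas together with $Q_{g,h}^m \le \varphi^\top u_{g,h}^m + \phi_{g,h}^{m\top} w_{g,h}^m + (\Gamma_h^m + \Gamma_{g,h}^m) + LV$ (the clipped case $Q_{g,h}^m=0$ is trivial, as then $-\iota_{g,h}^m = -(g_h^m+\mathbb{P}_h^m V_{g,h+1}^m)\le 0$). This yields $-\iota_{g,h}^m \le 2(\Gamma_h^m + \Gamma_{g,h}^m) + B_{\mathbb{P},\mathcal{E}} H^2 d_1\sqrt{d_1 W} + B_{g,\mathcal{E}}\sqrt{d_2 W} + LV = 2(\Gamma_h^m + \Gamma_{g,h}^m) + 2B_{\mathbb{P},\mathcal{E}} H^2 d_1\sqrt{d_1 W} + 2B_{g,\mathcal{E}}\sqrt{d_2 W}$, which is exactly the claimed lower bound. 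All estimates hold on the single high-probability event of Lemma~\ref{lemma: concentrarion of V-PV}, so the union over $(m,h)$ and $(x,a)$ costs nothing extra and the stated probability $1-p/2$ carries through.

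I expect the only real subtlety, rather than a genuine obstacle, to be the bookkeeping around the truncation: one must verify that inserting $LV$ does not break the \emph{lower} inequality on $Q_{g,h}^m$ after applying $\min(H-h+1,\cdot)_+$, which relies on the elementary fact that $g_h^m + \mathbb{P}_h^m V_{g,h+1}^m \in [0, H-h+1]$ so that neither clipping is active against it. The exact (not merely approximate) cancellation of the two variation terms by the specific choice of $LV$ is the conceptual crux, and it is what forces the utility model-prediction error to be one-sided, which is precisely the property the main regret and constraint-violation analyses invoke to keep a large dual variable from amplifying the constraint estimation error.
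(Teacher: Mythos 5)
Your proposal is correct and follows exactly the route the paper intends: the paper itself omits this proof, deferring to the argument of Lemma~\ref{lemma: bounds on model prediction error}, and your write-up supplies precisely the missing details — the reward bound is unchanged since $LV$ enters only $Q_{g,h}^m$, the added $LV$ exactly cancels the one-sided estimation error so that $Q_{g,h}^m \geq g_h^m + \mathbb{P}_h^m V_{g,h+1}^m$ survives the truncation (using $g_h^m + \mathbb{P}_h^m V_{g,h+1}^m \in [0,H-h+1]$), and the lower bound on $\iota_{g,h}^m$ picks up the extra $LV$ term, doubling the variation-budget constants. No gaps.
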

\begin{proof}
The proof is similar to Lemma \ref{lemma: bounds on model prediction error} and thus is omitted.
\end{proof}

\begin{lemma}[Lemma D.1 in \cite{jin2020provably}] \label{lemma: lemma D1 in Jin2020}
Let $\Lambda_{t}=\lambda \mathbf{I}+\sum_{i=1}^{t} \phi_{i} \phi_{i}^{\top}$, where $\phi_{i} \in \mathbb{R}^{d}$ and $\lambda>0 .$ Then,
$$
\sum_{i=1}^{t} \phi_{i}^{\top}\left(\Lambda_{t}\right)^{-1} \phi_{i} \leq d.
$$
\end{lemma}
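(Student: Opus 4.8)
The plan is to reduce the sum of quadratic forms to a single trace identity, after which the bound follows immediately from positive definiteness. First I would rewrite each scalar $\phi_i^\top (\Lambda_t)^{-1} \phi_i$ using the cyclic property of the trace as $\Tr\big((\Lambda_t)^{-1} \phi_i \phi_i^\top\big)$, and then sum over $i$, pulling the summation inside the trace by linearity:
\[
\sum_{i=1}^t \phi_i^\top (\Lambda_t)^{-1} \phi_i = \Tr\!\left((\Lambda_t)^{-1} \sum_{i=1}^t \phi_i \phi_i^\top\right).
\]

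Next I would substitute the definition $\Lambda_t = \lambda I + \sum_{i=1}^t \phi_i \phi_i^\top$, which rearranges to $\sum_{i=1}^t \phi_i \phi_i^\top = \Lambda_t - \lambda I$. Plugging this in and using linearity of the trace together with $\Tr(I) = d$ yields
\[
\sum_{i=1}^t \phi_i^\top (\Lambda_t)^{-1} \phi_i = \Tr\!\big((\Lambda_t)^{-1}(\Lambda_t - \lambda I)\big) = d - \lambda\, \Tr\!\big((\Lambda_t)^{-1}\big).
\]

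Finally I would observe that since $\lambda > 0$, the matrix $\Lambda_t = \lambda I + \sum_i \phi_i \phi_i^\top \succeq \lambda I \succ 0$ is positive definite and hence invertible, so its inverse is also positive definite and $\Tr\big((\Lambda_t)^{-1}\big) > 0$. Therefore the discarded term satisfies $\lambda\, \Tr\big((\Lambda_t)^{-1}\big) \ge 0$, and the displayed equality gives the claimed bound $\sum_{i=1}^t \phi_i^\top (\Lambda_t)^{-1} \phi_i \le d$.

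There is essentially no serious obstacle here, as the result is a direct linear-algebra identity rather than a probabilistic estimate. The only points requiring care are the invertibility of $\Lambda_t$, guaranteed by $\lambda > 0$, and the nonnegativity of the dropped term $\lambda\,\Tr((\Lambda_t)^{-1})$, both of which follow immediately from positive definiteness. No concentration argument or elliptical-potential telescoping is needed for this particular bound.
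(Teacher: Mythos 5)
Your proof is correct, and since the paper simply cites this as Lemma D.1 of \cite{jin2020provably} without reproving it, there is nothing to diverge from: the trace-cyclicity identity $\sum_{i}\phi_i^\top\Lambda_t^{-1}\phi_i=\Tr\big(\Lambda_t^{-1}(\Lambda_t-\lambda I)\big)=d-\lambda\Tr(\Lambda_t^{-1})\le d$ is exactly the standard argument used in that reference. The two points you flag (invertibility of $\Lambda_t$ and nonnegativity of the dropped term) are handled correctly via $\Lambda_t\succeq\lambda I\succ 0$.
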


\begin{lemma}(\textbf{Elliptical Potential Lemma, Lemma D.2 in \cite{jin2020provably} or \cite{cai2020provably}})
\label{lemma: elliptical Potential Lemma}
Let $\left\{\phi_{t}\right\}_{t=1}^{\infty}$ be a sequence of functions in $\mathbb{R}^{d}$ and $\Lambda_{0} \in \mathbb{R}^{d \times d}$ be a positive definite matrix. Let $\Lambda_{t}=\Lambda_{0}+\sum_{i=1}^{t-1} \phi_{i} \phi_{i}^{\top} .$ Assume that $\left\|\phi_{t}\right\|_{2} \leq 1$ and $\lambda_{\min }\left(\Lambda_{0}\right) \geq 1 .$ For every $t \geq 1$, it holds that
$$
\log \left(\frac{\operatorname{det}\left(\Lambda_{t+1}\right)}{\operatorname{det}\left(\Lambda_{1}\right)}\right) \leq \sum_{i=1}^{t} \phi_{i}^{\top} \Lambda_{i}^{-1} \phi_{i} \leq 2 \log \left(\frac{\operatorname{det}\left(\Lambda_{t+1}\right)}{\operatorname{det}\left(\Lambda_{1}\right)}\right).
$$
\end{lemma}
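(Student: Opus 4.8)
The plan is to use the classical telescoping-determinant argument, which reduces the matrix inequality to a pair of scalar comparisons. First I would exploit the rank-one recursion $\Lambda_{i+1}=\Lambda_i+\phi_i\phi_i^\top$ together with the matrix determinant lemma to write $\det(\Lambda_{i+1})=\det(\Lambda_i)\bigl(1+\phi_i^\top\Lambda_i^{-1}\phi_i\bigr)$. Abbreviating $w_i:=\phi_i^\top\Lambda_i^{-1}\phi_i\ge 0$, this gives $\det(\Lambda_{i+1})/\det(\Lambda_i)=1+w_i$, so telescoping the product over $i=1,\dots,t$ and taking logarithms yields the key identity $\log\bigl(\det(\Lambda_{t+1})/\det(\Lambda_1)\bigr)=\sum_{i=1}^{t}\log(1+w_i)$. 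The desired two-sided bound then becomes the scalar statement $\sum_{i=1}^{t}\log(1+w_i)\le\sum_{i=1}^{t}w_i\le 2\sum_{i=1}^{t}\log(1+w_i)$.

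Next I would dispatch the two directions. The left inequality is immediate from $\log(1+x)\le x$, valid for all $x\ge 0$, applied termwise. For the right inequality I need each $w_i$ to lie in $[0,1]$, and this is precisely where the hypotheses $\lambda_{\min}(\Lambda_0)\ge 1$ and $\|\phi_i\|_2\le 1$ are used: since $\Lambda_i=\Lambda_0+\sum_{j<i}\phi_j\phi_j^\top\succeq\Lambda_0\succeq I$, one has $\Lambda_i^{-1}\preceq I$, whence $w_i=\phi_i^\top\Lambda_i^{-1}\phi_i\le\|\phi_i\|_2^2\le 1$. On the interval $[0,1]$ the elementary inequality $x\le 2\log(1+x)$ holds, because the function $x\mapsto 2\log(1+x)-x$ vanishes at $x=0$ and has derivative $(1-x)/(1+x)\ge 0$ there, hence is nonnegative; applying it termwise gives $\sum_i w_i\le 2\sum_i\log(1+w_i)$. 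Combining both directions with the determinant identity completes the proof.

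The main obstacle is the uniform bound $w_i\le 1$: the lower-bound direction works for any nonnegative increments, but the upper bound relies on the elementary inequality $x\le 2\log(1+x)$, which fails for large $x$ and is only available on $[0,1]$. Thus the crux of the argument is verifying $\Lambda_i\succeq I$ from the assumptions, after which everything reduces to routine scalar calculus; no additional regularity or concentration is needed.
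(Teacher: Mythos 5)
Your proof is correct and is exactly the standard determinant--telescoping argument: the rank-one update identity $\det(\Lambda_{i+1})=\det(\Lambda_i)\bigl(1+\phi_i^\top\Lambda_i^{-1}\phi_i\bigr)$, the bound $w_i=\phi_i^\top\Lambda_i^{-1}\phi_i\le 1$ from $\Lambda_i\succeq\Lambda_0\succeq I$, and the scalar inequalities $\log(1+x)\le x\le 2\log(1+x)$ on $[0,1]$. The paper does not prove this lemma itself but imports it from the cited references, where the proof given is essentially the same as yours, so there is nothing to add.
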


\subsubsection{Tabular MDP case}
\begin{lemma} \label{lemma: bound Gamma diamond tabular}
If we set $C_4>1$, $\lambda=1$, $\beta=C_4 H \sqrt{|\mathcal{S}| \log (|\mathcal{S}||\mathcal{A}| W / p)}$ and $\Gamma_{h}^{m}=\beta\left(n_{h}^{m}(x, a)+\lambda\right)^{-1 / 2}$ in line 5 of Algorithm \ref{alg:algoirthm 3}, then  it holds that
$$
\begin{aligned}
&\left|\sum_{x^{\prime} \in \mathcal{S}}\left(\widehat{\mathbb{P}}_{h}^{m}\left(x^{\prime} \mid x, a\right) V\left(x^{\prime}\right)-\mathbb{P}_{h}^{m}\left(x^{\prime} \mid x, a\right) V\left(x^{\prime}\right)\right)\right|\\
&\leq \Gamma_{h}^{m}(x, a) + B_{\mathbb{P},\mathcal{E}} H
\end{aligned}
$$
with probability at least $1-p / 2$ for every $(m, h) \in[M] \times[H]$ and $(x, a) \in \mathcal{S} \times \mathcal{A}$.
\end{lemma}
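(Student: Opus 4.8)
The plan is to mirror the linear-kernel argument of Lemma~\ref{lemma: bound Gamma diamond}, specialized to the counter-based estimator $\widehat{\mathbb{P}}_h^m$ of Algorithm~\ref{alg:algoirthm 3}. First I would rewrite the target quantity using the explicit form of the empirical kernel. Since $\sum_{x'}\widehat{\mathbb{P}}_h^m(x'|x,a)V(x') = (n_h^m(x,a)+\lambda)^{-1}\sum_{\tau=\ell_Q^m}^{m-1}\mathbb{1}\{(x,a)=(x_h^\tau,a_h^\tau)\}\,V(x_{h+1}^\tau)$, I would add and subtract the one-step conditional means $(\mathbb{P}_h^\tau V)(x,a)$ and $(\mathbb{P}_h^m V)(x,a)$ to split the error into three pieces: a martingale (sampling) term, a non-stationarity bias term, and a regularization bias term.

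Second, I would bound the martingale term. Conditioned on the filtration $\mathcal{F}_{h,1}^\tau$, the next state $x_{h+1}^\tau$ is drawn from $\mathbb{P}_h^\tau(\cdot|x_h^\tau,a_h^\tau)$, so $\sum_\tau \mathbb{1}\{(x,a)=(x_h^\tau,a_h^\tau)\}\,(V(x_{h+1}^\tau)-(\mathbb{P}_h^\tau V)(x,a))$ is a sum of martingale differences bounded by $H$. An Azuma--Hoeffding bound for a fixed $V$ gives deviation $O(H\sqrt{n_h^m(x,a)})$; to make this uniform over the data-dependent value functions $V_{\diamond,h+1}^m\in[0,H]^{|\mathcal{S}|}$ and over all indices $(m,h,x,a)$, I would take a union bound over an $\varepsilon$-net of $[0,H]^{|\mathcal{S}|}$ (covering number $(H/\varepsilon)^{|\mathcal{S}|}$) together with the $MH|\mathcal{S}||\mathcal{A}|$ indices (using $|m-\ell_Q^m|\leq W$). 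After dividing by $n_h^m(x,a)+\lambda$, this produces the claimed form $\beta(n_h^m(x,a)+\lambda)^{-1/2}=\Gamma_h^m(x,a)$ with $\beta=C_4 H\sqrt{|\mathcal{S}|\log(|\mathcal{S}||\mathcal{A}|W/p)}$; the $\sqrt{|\mathcal{S}|}$ is exactly the covering exponent and the $\log$ factor is the union-bound cost.

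Third, I would control the two bias terms. For the non-stationarity bias I would use $|(\mathbb{P}_h^\tau V-\mathbb{P}_h^m V)(x,a)|\leq H\norm{\mathbb{P}_h^\tau(\cdot|x,a)-\mathbb{P}_h^m(\cdot|x,a)}_1$ and telescope over consecutive episodes in the window; since every $\tau$ in the window lies in epoch $\mathcal{E}$, the total per-$(x,a)$ $\ell_1$ variation is charged to the local budget $B_{\mathbb{P},\mathcal{E}}$, and after the averaging factor $(n_h^m(x,a)+\lambda)^{-1}$ (at most unity) this contributes at most $B_{\mathbb{P},\mathcal{E}}H$. For the regularization term $\tfrac{\lambda}{n_h^m(x,a)+\lambda}(\mathbb{P}_h^m V)(x,a)$, I would bound it by $\tfrac{\lambda H}{n_h^m(x,a)+\lambda}\leq \tfrac{\sqrt{\lambda}\,H}{\sqrt{n_h^m(x,a)+\lambda}}$ and note that with $\lambda=1$ and $\beta>H$ it is dominated by $\Gamma_h^m(x,a)$. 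Summing the three contributions yields the stated inequality.

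The main obstacle is the uniform-over-$V$ concentration: because $V_{\diamond,h+1}^m$ is computed from the very window data used to form $\widehat{\mathbb{P}}_h^m$, one cannot treat $V$ as fixed, so the covering argument must be set up carefully (either a net plus a continuity bound in $V$, or an appeal to the backward-recursion filtration structure as in the stationary analysis of \cite{cai2020provably}) while simultaneously matching the exact $\sqrt{|\mathcal{S}|}$ and $\log(|\mathcal{S}||\mathcal{A}|W/p)$ scaling that fixes $\beta$. A secondary subtlety is charging the telescoped transition variation cleanly to $B_{\mathbb{P},\mathcal{E}}$, i.e. converting the per-slice $\ell_1$ variation into the $\theta$-variation that defines the budget.
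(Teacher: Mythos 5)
Your proposal matches the paper's proof in all essentials: the same three-way split of the empirical-kernel error into a martingale term, a within-epoch non-stationarity bias charged to $B_{\mathbb{P},\mathcal{E}}$, and a $\lambda$-regularization bias absorbed into $\Gamma_h^m$, followed by the same $\epsilon$-net over $\{V:\mathcal{S}\to[0,H]\}$ with a union bound that produces the $\sqrt{|\mathcal{S}|}$ and $\log(|\mathcal{S}||\mathcal{A}|W/p)$ factors in $\beta$. The only cosmetic difference is that the paper invokes the self-normalized concentration bound of Abbasi-Yadkori et al.\ where you use Azuma--Hoeffding, which for the scalar indicator features $X_\tau=\mathbb{1}\{(x,a)=(x_h^\tau,a_h^\tau)\}$ amounts to the same estimate.
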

\begin{proof}
Let $\mathcal{V}=\{V: \mathcal{S} \rightarrow[0, H]\}$ be a set of bounded functions on $\mathcal{S}$. For every $V \in \mathcal{V}$, we consider the difference between $\sum_{x^{\prime} \in \mathcal{S}} \widehat{\mathbb{P}}_{h}^{m}\left(x^{\prime} \mid \cdot, \cdot\right) V\left(x^{\prime}\right)$ and $\sum_{x^{\prime} \in \mathcal{S}} \mathbb{P}_{h}^{m}\left(x^{\prime} \mid \cdot, \cdot\right) V\left(x^{\prime}\right)$ as follows:
\begin{align} 
\nonumber&\left(n_{h}^{m}(x, a)+\lambda\right)^{1 / 2}\\
&\left|\sum_{x^{\prime} \in \mathcal{S}}\left(\widehat{\mathbb{P}}_{h}^{m}\left(x^{\prime} \mid x, a\right) V\left(x^{\prime}\right)-\mathbb{P}_{h}^{m}\left(x^{\prime} \mid x, a\right) V\left(x^{\prime}\right)\right)\right|  \label{eq: hat P V -P V tabular}\\
\nonumber =&\left(n_{h}^{m}(x, a)+\lambda\right)^{-1 / 2}\\
\nonumber &\left|\sum_{x^{\prime} \in \mathcal{S}} n_{h}^{m}\left(x, a, x^{\prime}\right) V\left(x^{\prime}\right)-\left(n_{h}^{m}(x, a)+\lambda\right)\left(\mathbb{P}_{h}^{m} V\right)(x, a)\right| \\
\nonumber \leq &\left(n_{h}^{m}(x, a)+\lambda\right)^{-1 / 2}\\
\nonumber&\left|\sum_{x^{\prime} \in \mathcal{S}} n_{h}^{m}\left(x, a, x^{\prime}\right) V\left(x^{\prime}\right)-n_{h}^{m}(x, a)\left(\mathbb{P}_{h}^{m} V\right)(x, a)\right| \\
\nonumber &+\left(n_{h}^{m}(x, a)+\lambda\right)^{-1 / 2}\left|\lambda\left(\mathbb{P}_{h}^{m} V\right)(x, a)\right| \\
\nonumber =&\left(n_{h}^{m}(x, a)+\lambda\right)^{-1 / 2}\\
&\left|\sum_{\tau=\ell^m_Q}^{m-1} 1\left\{(x, a)=\left(x_{h}^{\tau}, a_{h}^{\tau}\right)\right\}\left(V\left(x_{h+1}^{\tau}\right)-\left(\mathbb{P}_{h}^{m} V\right)(x, a)\right)\right| \\
\nonumber &+\left(n_{h}^{m}(x, a)+\lambda\right)^{-1 / 2}\left|\lambda\left(\mathbb{P}_{h}^{m} V\right)(x, a)\right|\\
\nonumber\leq&\left(n_{h}^{m}(x, a)+\lambda\right)^{-1 / 2}\\
&\left|\sum_{\tau=\ell^m_Q}^{m-1} 1\left\{(x, a)=\left(x_{h}^{\tau}, a_{h}^{\tau}\right)\right\}\left(V\left(x_{h+1}^{\tau}\right)-\left(\mathbb{P}_{h}^{\tau} V\right)(x, a)\right)\right| \label{eq: hat P V -P V tabular 1}\\ 
&+\left(n_{h}^{m}(x, a)+\lambda\right)^{-1 / 2} \label{eq: hat P V -P V tabular 2} \\
\nonumber&\left|\sum_{\tau=\ell^m_Q}^{m-1} 1\left\{(x, a)=\left(x_{h}^{\tau}, a_{h}^{\tau}\right)\right\}\left(\left(\mathbb{P}_{h}^{\tau} V\right)(x, a)-\left(\mathbb{P}_{h}^{m} V\right)(x, a)\right)\right| \\
&+\left(n_{h}^{m}(x, a)+\lambda\right)^{-1 / 2}\left|\lambda\left(\mathbb{P}_{h}^{m} V\right)(x, a)\right| \label{eq: hat P V -P V tabular 3}
\end{align}
for every $(m, h) \in[M] \times[H]$ and $(x, a) \in \mathcal{S} \times \mathcal{A}$.

To analyze the term in \eqref{eq: hat P V -P V tabular 1}, we let $\eta_{h}^{\tau}:=V\left(x_{h+1}^{\tau}\right)-\left(\mathbb{P}_{h}^{\tau} V\right)\left(x_{h}^{\tau}, a_{h}^{\tau}\right) .$ Conditioning on the filtration $\mathcal{F}_{h, 1}^{m}$, the term $\eta_{h}^{\tau}$ is a zero-mean and $H / 2$-sub-Gaussian random variable. By Lemma \ref{lemma: Concentration of Self-normalized Processes}, we use $Y=\lambda I$ and $X_{\tau}=1\left\{(x, a)=\left(x_{h}^{\tau}, a_{h}^{\tau}\right)\right\}$ and thus with probability at least $1-\delta$ it holds that
$$
\begin{aligned}
&\left(n_{h}^{m}(x, a)+\lambda\right)^{-1 / 2}\\
&\left|\sum_{\tau=\ell^m_Q}^{m-1} 1\left\{(x, a)=\left(x_{h}^{\tau}, a_{h}^{\tau}\right)\right\}\left(V\left(x_{h+1}^{\tau}\right)-\left(\mathbb{P}_{h}^{\tau} V\right)(x, a)\right)\right| \\
&\leq \sqrt{\frac{H^{2}}{2} \log \left(\frac{\left(n_{h}^{m}(x, a)+\lambda\right)^{1 / 2} \lambda^{-1 / 2}}{\delta / H}\right)} \\
&\leq \sqrt{\frac{H^{2}}{2} \log \left(\frac{W}{\delta}\right)}
\end{aligned}
$$
for every $(m, h) \in[M] \times[H] .$ 

For the term in \eqref{eq: hat P V -P V tabular 2}, by the definition of $B_{\mathbb{P},\mathcal{E}}$ and $n_h^m$, we have
\begin{align*}
&\left(n_{h}^{m}(x, a)+\lambda\right)^{-1 / 2}\\
&\left|\sum_{\tau=\ell^m_Q}^{m-1} 1\left\{(x, a)=\left(x_{h}^{\tau}, a_{h}^{\tau}\right)\right\}\left(\left(\mathbb{P}_{h}^{\tau} V\right)(x, a)-\left(\mathbb{P}_{h}^{m} V\right)(x, a)\right)\right| \\
\leq & \left|\left(n_{h}^{m}(x, a)+\lambda\right)^{-1 / 2} \sum_{\tau=\ell^m_Q}^{m-1} 1\left\{(x, a)=\left(x_{h}^{\tau}, a_{h}^{\tau}\right)\right\}\right|  B_{\mathbb{P},\mathcal{E}} H \\
\leq & \left(n_{h}^{m}(x, a)+\lambda\right)^{1 / 2}  B_{\mathbb{P},\mathcal{E}} H.
\end{align*}

For the term in \eqref{eq: hat P V -P V tabular 3}, since $0 \leq V \leq H$, we have
$$
\left(n_{h}^{m}(x, a)+\lambda\right)^{-1 / 2}\left|\lambda\left(\mathbb{P}_{h}^{m} V\right)(x, a)\right| \leq \sqrt{\lambda} H .
$$

By returning to \eqref{eq: hat P V -P V tabular} and setting $\lambda=1$, with probability at least $1-\delta$ it holds that
\begin{align*}
&\left(n_{h}^{m}(x, a)+\lambda\right)^{\frac{1}{2}}\\
&\left|\sum_{x^{\prime} \in \mathcal{S}}\left(\widehat{\mathbb{P}}_{h}^{m}\left(x^{\prime} \mid x, a\right) V\left(x^{\prime}\right)-\mathbb{P}_{h}^{m}\left(x^{\prime} \mid x, a\right) V\left(x^{\prime}\right)\right)\right| \\
\leq &\sqrt{H^{2}\left(\log \left(\frac{W}{\delta}\right)+2\right)} + \left(n_{h}^{m}(x, a)+\lambda\right)^{1 / 2}  B_{\mathbb{P},\mathcal{E}} H
\end{align*}
for all $m \geq 1$.
Let $d\left(V, V^{\prime}\right)=\max _{x \in \mathcal{S}}\left|V(x)-V^{\prime}(x)\right|$ be a distance on $\mathcal{V} .$ For every $\epsilon$, an $\epsilon$-covering $\mathcal{V}_{\epsilon}$ of $\mathcal{V}$ with respect to distance $d(\cdot, \cdot)$ satisfies
$$
\left|\mathcal{V}_{\epsilon}\right| \leq\left(1+\frac{2 \sqrt{|\mathcal{S}|} H}{\epsilon}\right)^{|\mathcal{S}|}
$$
Thus, for every $V \in \mathcal{V}$, there exists $V^{\prime} \in \mathcal{V}_{\epsilon}$ such that $\max _{x \in \mathcal{S}}\left|V(x)-V^{\prime}(x)\right| \leq \epsilon$. By the triangle inequality, we have
$$
\begin{aligned}
&\left(n_{h}^{m}(x, a)+\lambda\right)^{1 / 2}\\
&\left|\sum_{x^{\prime} \in \mathcal{S}}\left(\widehat{\mathbb{P}}_{h}^{m}\left(x^{\prime} \mid x, a\right) V\left(x^{\prime}\right)-\mathbb{P}_{h}^{m}\left(x^{\prime} \mid x, a\right) V\left(x^{\prime}\right)\right)\right| \\
=&\left(n_{h}^{m}(x, a)+\lambda\right)^{1 / 2}\\
&\left|\sum_{x^{\prime} \in \mathcal{S}}\left(\widehat{\mathbb{P}}_{h}^{m}\left(x^{\prime} \mid x, a\right) V^{\prime}\left(x^{\prime}\right)-\mathbb{P}_{h}^{m}\left(x^{\prime} \mid x, a\right) V^{\prime}\left(x^{\prime}\right)\right)\right| \\
&+\left(n_{h}^{m}(x, a)+\lambda\right)^{1 / 2}\\
&\left|\sum_{x^{\prime} \in \mathcal{S}}\left(\widehat{\mathbb{P}}_{h}^{m}\left(x^{\prime} \mid x, a\right)\left(V\left(x^{\prime}\right)-V^{\prime}\left(x^{\prime}\right)\right)\right.\right.\\
&\left.\left.-\mathbb{P}_{h}^{m}\left(x^{\prime} \mid x, a\right)\left(V\left(x^{\prime}\right)-V^{\prime}\left(x^{\prime}\right)\right)\right)\right| \\
\leq &\left(n_{h}^{m}(x, a)+\lambda\right)^{1 / 2}\\
&\left|\sum_{x^{\prime} \in \mathcal{S}}\left(\widehat{\mathbb{P}}_{h}^{m}\left(x^{\prime} \mid x, a\right) V^{\prime}\left(x^{\prime}\right)-\mathbb{P}_{h}^{m}\left(x^{\prime} \mid x, a\right) V^{\prime}\left(x^{\prime}\right)\right)\right| \\
&+2\left(n_{h}^{m}(x, a)+\lambda\right)^{-1 / 2} \epsilon
\end{aligned}
$$
Furthermore, we choose $\delta=(p / 3) /\left(\left|\mathcal{V}_{\epsilon}\|\mathcal{S}\| \mathcal{A}\right|\right)$ and take a union bound over $V \in \mathcal{V}_{\epsilon}$ and $(x, a) \in \mathcal{S} \times \mathcal{A}$. By $(48)$, with probability at least $1-p / 2$ it holds that
$$
\begin{aligned}
&\sup _{V \in \mathcal{V}}\left\{\left(n_{h}^{m}(x, a)+\lambda\right)^{1 / 2} \right.\\
&\left.\left|\sum_{x^{\prime} \in \mathcal{S}}\left(\widehat{\mathbb{P}}_{h}^{m}\left(x^{\prime} \mid x, a\right) V\left(x^{\prime}\right)-\mathbb{P}_{h}^{m}\left(x^{\prime} \mid x, a\right) V\left(x^{\prime}\right)\right)\right|\right\} \\
&\leq \sqrt{H^{2}\left(\log \left(\frac{W}{\delta}\right)+2\right)} + \left(n_{h}^{m}(x, a)+\lambda\right)^{1 / 2}  B_{\mathbb{P},\mathcal{E}} H\\
&+2\left(n_{h}^{m}(x, a)+\lambda\right)^{-1 / 2} \frac{H}{K} \\
&\leq \sqrt{2 H^{2}\left(\log \left|\mathcal{V}_{\epsilon}\right|+\log \left(\frac{2|\mathcal{S}||\mathcal{A}| W}{p}\right)+2\right)}\\
&+ \left(n_{h}^{m}(x, a)+\lambda\right)^{1 / 2}  B_{\mathbb{P},\mathcal{E}} H+2\left(n_{h}^{m}(x, a)+\lambda\right)^{-1 / 2} \frac{H}{K} \\
&\leq C_4 H \sqrt{|\mathcal{S}| \log \left(\frac{|\mathcal{S}||\mathcal{A}| W}{p}\right)}+ \left(n_{h}^{m}(x, a)+\lambda\right)^{1 / 2}  B_{\mathbb{P},\mathcal{E}} H
\end{aligned}
$$
for every $(m, h)$ and $(x, a)$, where $C_4$ is an absolute constant. We recall our choice of $\Gamma_{h}^{m}$ and $\beta$. Hence, with probability at least $1-p / 2$ it holds that
$$
\begin{aligned}
&\left|\sum_{x^{\prime} \in \mathcal{S}}\left(\widehat{\mathbb{P}}_{h}^{m}\left(x^{\prime} \mid x, a\right) V\left(x^{\prime}\right)-\mathbb{P}_{h}^{m}\left(x^{\prime} \mid x, a\right) V\left(x^{\prime}\right)\right)\right| \\
&\leq \beta\left(n_{h}^{m}(x, a)+\lambda\right)^{-1 / 2}+  B_{\mathbb{P},\mathcal{E}} H
\end{aligned}
$$
for any $(m, h) \in[M] \times[H]$ and $(x, a) \in|\mathcal{S}| \times|\mathcal{A}|$, where $\beta:=C_{3} H \sqrt{|\mathcal{S}| \log (|\mathcal{S}||\mathcal{A}| W / p)}$.
\end{proof}

\begin{lemma}\label{lemma: bound Gamma tabular}
If we set $C_4>1$, $\lambda=1$ and $\beta:=C_{4} H \sqrt{|\mathcal{S}| \log (|\mathcal{S}||\mathcal{A}| W / p)}$ and $\Gamma_{h}^{m}=\beta\left(n_{h}^{m}(x, a)+\lambda\right)^{-1 / 2}$  in line 5 of Algorithm \ref{alg:algoirthm 3},  then it holds that
$$
\left|\widehat{\diamond}_{h}^{m}(x, a)-\diamond_{h}^{m}(x, a)\right| \leq \Gamma_{h}^{m}(x, a) +B_{\diamond,\mathcal{E}}.
$$
with probability at least $1-p / 2$ for every $(m, h) \in[M] \times[H]$ and $(x, a) \in \mathcal{S} \times \mathcal{A}$, where the symbol $\diamond$ is equal to $r$ or $g$ and $\widehat{\diamond}_{h}^{m}(x, a)$ is defined in  \eqref{eq: hat diamond}. 
\end{lemma}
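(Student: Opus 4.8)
The plan is to exploit that the reward and utility functions are deterministic, so that—unlike the transition estimate treated in Lemma \ref{lemma: bound Gamma diamond tabular}—the estimate $\widehat{\diamond}_{h}^m$ carries no stochastic noise and the entire error stems from the non-stationarity alone. Consequently the claimed bound will hold \emph{deterministically}, hence a fortiori with probability at least $1-p/2$ (the probabilistic phrasing is inherited for consistency with the random-reward generalization, where a self-normalized concentration as in Lemma \ref{lemma: bound Gamma diamond tabular} would instead supply the $\beta$ factor). First I would note that \eqref{eq: hat diamond} is the tabular specialization of the least-squares estimate $\varphi(x,a)^\top u_{\diamond,h}^m$ of Lemma \ref{lemma: bound Gamma}: with $\varphi(x,a)=\boldsymbol e_{x,a}$ the matrix $\Lambda_h^m$ is diagonal with $(x,a)$-entry $n_h^m(x,a)+\lambda$, so $\widehat{\diamond}_h^m(x,a)=\varphi(x,a)^\top u_{\diamond,h}^m$. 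Using $\diamond_h^\tau(x_h^\tau,a_h^\tau)=\diamond_h^\tau(x,a)$ on the event $(x,a)=(x_h^\tau,a_h^\tau)$, I would decompose
\begin{align*}
&\widehat{\diamond}_h^m(x,a)-\diamond_h^m(x,a)\\
&=\frac{\sum_{\tau=\ell_Q^m}^{m-1} 1\{(x,a)=(x_h^\tau,a_h^\tau)\}\left(\diamond_h^\tau(x,a)-\diamond_h^m(x,a)\right)}{n_h^m(x,a)+\lambda}-\frac{\lambda\,\diamond_h^m(x,a)}{n_h^m(x,a)+\lambda},
\end{align*}
splitting the error into a non-stationarity term and a regularization term.

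For the non-stationarity term I would control each summand by telescoping. Every $\tau$ in the evaluation window $[\ell_Q^m,m-1]$ lies in the same epoch $\mathcal{E}$ as $m$, so writing $\diamond_h^m(x,a)-\diamond_h^\tau(x,a)=\sum_{j=\tau+1}^m\left(\diamond_h^j(x,a)-\diamond_h^{j-1}(x,a)\right)$ and using $\diamond_h^j(x,a)=\langle\boldsymbol e_{x,a},\theta_{\diamond,h}^j\rangle$ with $\|\boldsymbol e_{x,a}\|_2=1$ gives
\begin{align*}
\left|\diamond_h^\tau(x,a)-\diamond_h^m(x,a)\right|\le\sum_{j=\tau+1}^m\left\|\theta_{\diamond,h}^j-\theta_{\diamond,h}^{j-1}\right\|_2\le B_{\diamond,\mathcal{E}},
\end{align*}
since this single-$h$ partial sum is dominated by the local budget $B_{\diamond,\mathcal{E}}$. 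Summing over $\tau$ then yields $\tfrac{n_h^m(x,a)}{n_h^m(x,a)+\lambda}\,B_{\diamond,\mathcal{E}}\le B_{\diamond,\mathcal{E}}$.

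For the regularization term I would use $\diamond_h^m(x,a)\in[0,1]$ and $\lambda=1$ to obtain $\tfrac{\lambda|\diamond_h^m(x,a)|}{n_h^m(x,a)+\lambda}\le\tfrac{\sqrt{\lambda}}{\sqrt{n_h^m(x,a)+\lambda}}\le\beta\left(n_h^m(x,a)+\lambda\right)^{-1/2}=\Gamma_h^m(x,a)$, where the last step uses $\beta=C_4H\sqrt{|\mathcal{S}|\log(|\mathcal{S}||\mathcal{A}|W/p)}>1$. The triangle inequality then gives the claim. The only point requiring care is the epoch bookkeeping: one must verify that the whole window $[\ell_Q^m,m-1]$ together with $m$ sits inside a single epoch, so that the telescoped variation is genuinely captured by $B_{\diamond,\mathcal{E}}$ rather than by the global budget $B_\diamond$. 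Everything else is an elementary computation, and—crucially—no concentration argument is needed here because the observed reward/utility values are exact; this is precisely why the tabular bound improves on the linear-kernel bound $\Gamma_h^m(x,a)+B_{r,\mathcal{E}}\sqrt{d_2W}$ of Lemma \ref{lemma: bound Gamma}, replacing the $\sqrt{d_2W}$ factor by the sharper $1$.
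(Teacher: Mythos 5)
Your proposal is correct and follows essentially the same route as the paper's proof: the identical decomposition of $\widehat{\diamond}_h^m(x,a)-\diamond_h^m(x,a)$ into a non-stationarity term (bounded by $B_{\diamond,\mathcal{E}}$ via the local variation within the restart window) and a regularization term (bounded by $\lambda(n_h^m(x,a)+\lambda)^{-1}\le(n_h^m(x,a)+\lambda)^{-1/2}\le\Gamma_h^m(x,a)$ since $\beta>1$). The only difference is that you spell out the telescoping step and the observation that the bound is deterministic, both of which the paper leaves implicit.
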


\begin{proof}
We recall the definition $r_{h}^m(x, a)=\mathbf{e}_{(x, a)}^{\top} \theta^m_{r, h}$. By our estimation $\widehat{r}_{h}^{m}(x, a)$ in Algorithm 3 , we have
$$
\widehat{r}_{h}^{m}(x, a)=\frac{1}{n_{h}^{m}(x, a)+\lambda} \sum_{\tau=\ell^m_Q}^{m-1} 1\left\{(x, a)=\left(x_{h}^{\tau}, a_{h}^{\tau}\right)\right\}\left[\theta^\tau_{r, h}\right]_{\left(x_{h}^{\tau}, a_{h}^{\tau}\right)}
$$
and thus
$$
\begin{aligned}
&\left|\widehat{r}_{h}^{m}(x, a)-r^m_{h}(x, a)\right| \\
&=\left|\widehat{r}_{h}^{m}(x, a)-\left[\theta^m_{r, h}\right]_{(x, a)}\right| \\
&=\left(n_{h}^{m}(x, a)+\lambda\right)^{-1}\left|\sum_{\tau=\ell^m_Q}^{m-1} 1\left\{(x, a)=\left(x_{h}^{\tau}, a_{h}^{\tau}\right)\right\} \right.\\
&\hspace{1.5cm}\left.\left(\left[\theta^\tau_{r, h}\right]_{\left(x_{h}^{\tau}, a_{h}^{\tau}\right)}-\left[\theta^m_{r, h}\right]_{(x, a)}\right)-\lambda\left[\theta^m_{r, h}\right]_{(x, a)}\right| \\
&\leq B_{r,\mathcal{E}}+\left(n_{h}^{m}(x, a)+\lambda\right)^{-1}\left|\lambda\left[\theta^m_{r, h}\right]_{(x, a)}\right| \\
&\leq B_{r,\mathcal{E}}+\left(n_{h}^{m}(x, a)+\lambda\right)^{-1} \lambda \\
&\leq B_{r,\mathcal{E}}+\left(n_{h}^{m}(x, a)+\lambda\right)^{-1 / 2} \lambda \\
&\leq B_{r,\mathcal{E}}+\Gamma_{h}^{m}(x, a)
\end{aligned}
$$
where we utilize $\lambda=1$ in the last inequality. This completes the proof.
\end{proof}

\begin{lemma}\label{lemma: bounds on model prediction error tabular}
Fix $p \in(0,1)$ and let $\mathcal{E}$ be the epoch that the episode $m$ belongs to.
If we set $C_4>1$, $\lambda=1$, $LV=0$ and
\begin{align*}
\Gamma_{h}^m =\beta\left(n_{h}^m(x, a)+\lambda\right)^{-1 / 2},
\end{align*}
with $\beta:=C_{3} H \sqrt{|\mathcal{S}| \log (|\mathcal{S}||\mathcal{A}| W / p)}$ in Algorithm \ref{alg:algoirthm 3}, then  we have
$$
-4\Gamma_{h}^{m}(x, a)-B_{\mathbb{P},\mathcal{E}} H -B_{\diamond,\mathcal{E}}\leq \iota_{\diamond, h}^{m}(x, a) \leq B_{\mathbb{P},\mathcal{E}} H  +B_{\diamond,\mathcal{E}}
$$
with probability at least $1-p / 2$ for every $(m, h) \in[M] \times[H]$ and $(x, a) \in \mathcal{S} \times \mathcal{A}$, where the symbol $\diamond$ is equal to $r$ or $g$. 
\end{lemma}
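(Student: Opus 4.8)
The plan is to read off the two-sided bound directly from the definition $\iota_{\diamond,h}^m = \diamond_h^m + \mathbb{P}_h^m V_{\diamond,h+1}^m - Q_{\diamond,h}^m$ (cf. \eqref{eq: model prediction error r}) combined with the two tabular concentration estimates already in hand: Lemma \ref{lemma: bound Gamma diamond tabular}, which gives $|\widehat{\mathbb{P}}_h^m V_{\diamond,h+1}^m - \mathbb{P}_h^m V_{\diamond,h+1}^m| \leq \Gamma_h^m + B_{\mathbb{P},\mathcal{E}}H$, and Lemma \ref{lemma: bound Gamma tabular}, which gives $|\widehat{\diamond}_h^m - \diamond_h^m| \leq \Gamma_h^m + B_{\diamond,\mathcal{E}}$. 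The whole argument parallels the linear-kernel analogue, Lemma \ref{lemma: bounds on model prediction error}; the only structural change is that in Algorithm \ref{alg:algoirthm 3} a single counter-based bonus $\Gamma_h^m$ absorbs both the reward/utility and the transition estimation error, so $Q_{\diamond,h}^m$ carries the doubled term $2\Gamma_h^m$, and $LV=0$ here (Assumption \ref{ass: Feasibility}).

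For the upper bound I would first note that $(\diamond_h^m + \mathbb{P}_h^m V_{\diamond,h+1}^m)(x,a) \leq H-h+1$, since $\diamond_h^m \in [0,1]$ and $V_{\diamond,h+1}^m \leq H-h$. Because $Q_{\diamond,h}^m = \min\!\big(H-h+1,\ \widehat{\diamond}_h^m + \widehat{\mathbb{P}}_h^m V_{\diamond,h+1}^m + 2\Gamma_h^m\big)_+$, subtracting $Q_{\diamond,h}^m$ and handling the $\min(\cdot,\cdot)_+$ exactly as in Lemma \ref{lemma: bounds on model prediction error} yields $\iota_{\diamond,h}^m \leq \max\!\big(0,\ (\diamond_h^m - \widehat{\diamond}_h^m) + (\mathbb{P}_h^m V_{\diamond,h+1}^m - \widehat{\mathbb{P}}_h^m V_{\diamond,h+1}^m) - 2\Gamma_h^m\big)$; the two lemmas bound the bracket by $B_{\diamond,\mathcal{E}} + B_{\mathbb{P},\mathcal{E}}H$, giving the right-hand inequality. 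For the lower bound I would use $Q_{\diamond,h}^m \leq \widehat{\diamond}_h^m + \widehat{\mathbb{P}}_h^m V_{\diamond,h+1}^m + 2\Gamma_h^m$ (the truncation only lowers the value, and the argument is nonnegative), so that $-\iota_{\diamond,h}^m = Q_{\diamond,h}^m - (\diamond_h^m + \mathbb{P}_h^m V_{\diamond,h+1}^m) \leq (\widehat{\diamond}_h^m - \diamond_h^m) + (\widehat{\mathbb{P}}_h^m V_{\diamond,h+1}^m - \mathbb{P}_h^m V_{\diamond,h+1}^m) + 2\Gamma_h^m \leq 4\Gamma_h^m + B_{\diamond,\mathcal{E}} + B_{\mathbb{P},\mathcal{E}}H$, which is the claimed left-hand inequality. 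Both chains hold on the intersection of the two high-probability events.

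This lemma is a short deduction once the two concentration lemmas are granted, so there is no deep obstacle; the two points requiring care are purely bookkeeping. First, the truncation operator $\min(\cdot, H-h+1)_+$ must be dispatched in both directions — upward it can only force $\iota_{\diamond,h}^m \leq 0$ and downward it can only decrease $Q_{\diamond,h}^m$ — which is the mechanical case analysis I would cite from Lemma \ref{lemma: bounds on model prediction error} rather than repeat. Second, each building-block lemma is stated at confidence $1-p/2$, so a naive union bound gives $1-p$; I would instead allocate failure probability $p/4$ to each concentration event (absorbing the resulting constant shift in $\log(\cdot/p)$ into $C_4$), after which the intersection has probability at least $1-p/2$ as stated. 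Since the final guarantees are reported in $\widetilde{\mathcal{O}}(\cdot)$, this reallocation is immaterial.
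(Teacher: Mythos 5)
Your proposal is correct and follows essentially the same route as the paper, which omits this proof with a pointer to the linear-kernel analogue (Lemma \ref{lemma: bounds on model prediction error}): you combine Lemmas \ref{lemma: bound Gamma diamond tabular} and \ref{lemma: bound Gamma tabular} with the truncated form of $Q_{\diamond,h}^m$ in Algorithm \ref{alg:algoirthm 3}, and the doubled bonus $2\Gamma_h^m$ correctly yields the $4\Gamma_h^m$ in the lower bound. Your worry about the union bound is in fact moot — the estimate in Lemma \ref{lemma: bound Gamma tabular} is deterministic (its proof uses no concentration), so only the single $1-p/2$ event from Lemma \ref{lemma: bound Gamma diamond tabular} is needed — but your reallocation to $p/4$ is also harmless.
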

\begin{proof}
The proof is similar to Lemma \ref{lemma: bounds on model prediction error} and thus is omitted.
\end{proof}

\begin{lemma}\label{lemma: bounds on model prediction error tabular under local budget}
Fix $p \in(0,1)$ and let $\mathcal{E}$ be the epoch that the episode $m$ belongs to.
If we set $C_4>1$, $\lambda=1$, $LV=B_{\mathbb{P},\mathcal{E}} H  +B_{g,\mathcal{E}}$ and
\begin{align*}
\Gamma_{h}^m =\beta\left(n_{h}^m(x, a)+\lambda\right)^{-1 / 2},
\end{align*}
with $\beta:=C_{4} H \sqrt{|\mathcal{S}| \log (|\mathcal{S}||\mathcal{A}| W / p)}$ in Algorithm \ref{alg:algoirthm 3}, then  we have
\begin{align*}
&-4\Gamma_{h}^{m}(x, a)-B_{\mathbb{P},\mathcal{E}} H -B_{r,\mathcal{E}}\leq \iota_{r, h}^{m}(x, a) \leq B_{\mathbb{P},\mathcal{E}} H +B_{r,\mathcal{E}}\\    
&-4\Gamma_{h}^{m}(x, a)-2B_{\mathbb{P},\mathcal{E}} H -2B_{g,\mathcal{E}}\leq \iota_{g, h}^{m}(x, a) \leq 0 
\end{align*}
with probability at least $1-p / 2$ for every $(m, h) \in[M] \times[H]$ and $(x, a) \in \mathcal{S} \times \mathcal{A}$. 
\end{lemma}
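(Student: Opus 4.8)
The plan is to follow the structure of the proof of Lemma~\ref{lemma: bounds on model prediction error}, but to replace the two linear-kernel concentration estimates by their tabular counterparts, namely Lemma~\ref{lemma: bound Gamma diamond tabular} (which controls the transition estimation error $|\widehat{\mathbb{P}}_h^m V - \mathbb{P}_h^m V|$ by $\Gamma_h^m + B_{\mathbb{P},\mathcal{E}} H$) and Lemma~\ref{lemma: bound Gamma tabular} (which controls the reward/utility estimation error $|\widehat{\diamond}_h^m - \diamond_h^m|$ by $\Gamma_h^m + B_{\diamond,\mathcal{E}}$). First I would recall $\iota_{\diamond,h}^m = \diamond_h^m + \mathbb{P}_h^m V_{\diamond,h+1}^m - Q_{\diamond,h}^m$ together with the tabular estimator of line~7 of Algorithm~\ref{alg:algoirthm 3}, in which the exploration bonus enters with weight $2\Gamma_h^m$; this doubled weight is the source of the $4\Gamma_h^m$ appearing in the lower bounds. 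Throughout I condition on the event of probability at least $1-p/2$ on which both concentration lemmas hold simultaneously.

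For the reward ($LV=0$ in the $Q_{r,h}^m$ formula) the argument is identical to the one in Lemma~\ref{lemma: bounds on model prediction error}, so the interesting work lies in the two one-sided utility bounds. I would split according to whether the truncation at $H-h+1$ is active. Since $\widehat{g}_h^m$, $\widehat{\mathbb{P}}_h^m V_{g,h+1}^m$, $\Gamma_h^m$ and $LV$ are all nonnegative, the outer $(\cdot)_+$ is never active, so $Q_{g,h}^m = \min(H-h+1,\; \widehat{g}_h^m + \widehat{\mathbb{P}}_h^m V_{g,h+1}^m + 2\Gamma_h^m + LV)$. When the minimum equals $H-h+1$, nonnegativity of utilities together with the truncation $V_{g,h+1}^m \le H-h$ gives $g_h^m + \mathbb{P}_h^m V_{g,h+1}^m \le H-h+1 = Q_{g,h}^m$, hence $\iota_{g,h}^m \le 0$. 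When the minimum equals the bonus-inflated estimate, I would add the two lower concentration bounds, $\widehat{g}_h^m - g_h^m \ge -(\Gamma_h^m + B_{g,\mathcal{E}})$ and $\widehat{\mathbb{P}}_h^m V_{g,h+1}^m - \mathbb{P}_h^m V_{g,h+1}^m \ge -(\Gamma_h^m + B_{\mathbb{P},\mathcal{E}} H)$, to get $Q_{g,h}^m - (g_h^m + \mathbb{P}_h^m V_{g,h+1}^m) \ge 2\Gamma_h^m + LV - (2\Gamma_h^m + B_{g,\mathcal{E}} + B_{\mathbb{P},\mathcal{E}} H) = LV - B_{g,\mathcal{E}} - B_{\mathbb{P},\mathcal{E}} H = 0$, again giving $\iota_{g,h}^m \le 0$.

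The matching lower bound is obtained by dropping the outer minimum, $Q_{g,h}^m \le \widehat{g}_h^m + \widehat{\mathbb{P}}_h^m V_{g,h+1}^m + 2\Gamma_h^m + LV$, and then applying the upper concentration bounds $\widehat{g}_h^m - g_h^m \le \Gamma_h^m + B_{g,\mathcal{E}}$ and $\widehat{\mathbb{P}}_h^m V_{g,h+1}^m - \mathbb{P}_h^m V_{g,h+1}^m \le \Gamma_h^m + B_{\mathbb{P},\mathcal{E}} H$, which yields $-\iota_{g,h}^m \le 4\Gamma_h^m + 2 B_{\mathbb{P},\mathcal{E}} H + 2 B_{g,\mathcal{E}}$ after substituting $LV = B_{\mathbb{P},\mathcal{E}} H + B_{g,\mathcal{E}}$. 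The reward bounds follow from the same two displays with $LV$ replaced by $0$, producing $-(4\Gamma_h^m + B_{\mathbb{P},\mathcal{E}} H + B_{r,\mathcal{E}}) \le \iota_{r,h}^m \le B_{\mathbb{P},\mathcal{E}} H + B_{r,\mathcal{E}}$.

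I expect the only delicate point to be the exact bookkeeping that makes the utility upper bound vanish: the choice $LV = B_{\mathbb{P},\mathcal{E}} H + B_{g,\mathcal{E}}$ is calibrated precisely to cancel the worst-case downward bias $B_{g,\mathcal{E}} + B_{\mathbb{P},\mathcal{E}} H$ coming from the non-stationarity terms in Lemmas~\ref{lemma: bound Gamma diamond tabular} and~\ref{lemma: bound Gamma tabular}, while the doubled statistical bonus $2\Gamma_h^m$ absorbs the statistical fluctuation. This non-positivity $\iota_{g,h}^m \le 0$ is exactly what is needed downstream, so that the large and possibly unbounded dual variable $\mu^m$ cannot amplify the constraint model-prediction error. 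Keeping the constants consistent with the single $2\Gamma_h^m$ bonus of Algorithm~\ref{alg:algoirthm 3} (rather than the $\Gamma_h^m + \Gamma_{\diamond,h}^m$ splitting used in the linear-kernel case) is the one place where care is required.
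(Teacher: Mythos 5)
Your proposal is correct and follows exactly the route the paper intends: the paper omits this proof with a pointer to the linear-kernel analogue (Lemma~\ref{lemma: bounds on model prediction error}), and you carry out precisely that adaptation, substituting the tabular concentration bounds of Lemmas~\ref{lemma: bound Gamma diamond tabular} and~\ref{lemma: bound Gamma tabular}, accounting for the single $2\Gamma_h^m$ bonus of Algorithm~\ref{alg:algoirthm 3}, and verifying that $LV=B_{\mathbb{P},\mathcal{E}}H+B_{g,\mathcal{E}}$ exactly cancels the non-stationarity bias so that $\iota_{g,h}^m\le 0$. All constants in your bookkeeping match the stated bounds.
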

\begin{proof}
The proof is similar to Lemma \ref{lemma: bounds on model prediction error} and thus is omitted.
\end{proof}
\section{Auxiliary lemmas}\label{sec:appe-useful lemmas}

\subsection{Performance difference lemmas}
We first introduce a ``one-step descent'' result.

\begin{lemma}(\textbf{One-step descent lemma, Lemma 3.3 in \cite{cai2020provably}})
\label{lemma: One-step descent lemma}
For every two distributions $\pi^\star$ and $\pi$ supported on $\mathcal{A}$, state $s\in\mathcal{S}$ and function $Q: \mathcal{S}\times\mathcal{A} \rightarrow [0,H]$, it holds that for a distribution $\pi^\prime$ supported on $\mathcal{A}$ with $\pi^\prime (\cdot) \propto \pi (\cdot) \cdot \exp\{\alpha Q(s,\cdot)\}$ we have
\begin{align*}
&\inner{Q(s,\cdot)}{\pi^\star(\cdot) - \pi(\cdot)}\\
&\leq \frac{1}{2}\alpha H^2 +\frac{1}{\alpha} \left[D(\pi^\star (\cdot)\mid  \pi(\cdot)) -D(\pi^\star (\cdot)\mid \pi^\prime (\cdot) ) \right].
\end{align*}
\end{lemma}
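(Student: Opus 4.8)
The plan is to read the update $\pi'(\cdot)\propto \pi(\cdot)\exp\{\alpha Q(s,\cdot)\}$ as a single step of online mirror descent with the negative-entropy mirror map (equivalently, one exponential-weights / Hedge update), and then to run the textbook three-step telescoping argument. First I would fix the state $s$, write $Z:=\sum_{a\in\mathcal{A}}\pi(a)\exp\{\alpha Q(s,a)\}$ for the normalizing constant so that $\pi'(a)=\pi(a)\exp\{\alpha Q(s,a)\}/Z$, and treat the left-hand side $\inner{Q(s,\cdot)}{\pi^\star-\pi}$ as a linear functional to be controlled by the change in KL divergence between $\pi$ and $\pi'$ measured against the comparator $\pi^\star$.

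The key identity is obtained by expanding the divergence difference directly. Since $D(\pi^\star\mid\pi)-D(\pi^\star\mid\pi')=\sum_{a}\pi^\star(a)\log\frac{\pi'(a)}{\pi(a)}$ and $\log\frac{\pi'(a)}{\pi(a)}=\alpha Q(s,a)-\log Z$, summing against $\pi^\star$ (which has unit mass) gives the exact identity $\frac{1}{\alpha}\bigl[D(\pi^\star\mid\pi)-D(\pi^\star\mid\pi')\bigr]=\inner{Q(s,\cdot)}{\pi^\star}-\frac{1}{\alpha}\log Z$. Substituting this into the claimed bound reduces everything to the single scalar inequality $\log Z\le \alpha\inner{Q(s,\cdot)}{\pi}+\half\alpha^2H^2$: once this holds, rearranging yields precisely $\inner{Q(s,\cdot)}{\pi^\star-\pi}\le \half\alpha H^2+\frac{1}{\alpha}[D(\pi^\star\mid\pi)-D(\pi^\star\mid\pi')]$, since the $\inner{Q(s,\cdot)}{\pi^\star}$ terms cancel and $-\inner{Q(s,\cdot)}{\pi}$ is produced by the $\log Z$ bound.

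To establish $\log Z\le \alpha\inner{Q(s,\cdot)}{\pi}+\half\alpha^2H^2$ I would invoke Hoeffding's lemma applied to the random variable $X=Q(s,a)$ with $a\sim\pi$: because $X\in[0,H]$ almost surely, $\log\mathbb{E}_{a\sim\pi}[e^{\alpha X}]\le \alpha\,\mathbb{E}_{a\sim\pi}[X]+\frac{\alpha^2H^2}{8}$, and $\frac{\alpha^2H^2}{8}\le\half\alpha^2H^2$ delivers the required form (alternatively, one may avoid Hoeffding and use $\log Z\le Z-1$ together with an elementary second-order bound on $e^{\alpha Q}-1$ on the range $[0,\alpha H]$). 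This boundedness step is the only substantive point: it is exactly where the hypothesis $Q(s,\cdot)\in[0,H]$ enters, converting the exponential normalizer into a quadratic penalty in $\alpha$. The main obstacle, such as it is, is therefore purely this moment-generating-function estimate for the bounded variable $Q$; the remainder is an exact algebraic identity, so no martingale, no concentration over episodes, and no non-stationarity considerations are needed in this lemma. Assembling the identity of the second paragraph with this inequality completes the proof.
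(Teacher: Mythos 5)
Your proposal is correct. The paper does not prove this lemma itself but imports it by citation from \cite{cai2020provably}, and your argument is exactly the standard exponential-weights/KL-telescoping proof underlying that reference: the identity $\frac{1}{\alpha}\bigl[D(\pi^\star\mid\pi)-D(\pi^\star\mid\pi')\bigr]=\inner{Q(s,\cdot)}{\pi^\star}-\frac{1}{\alpha}\log Z$ reduces the claim to the log-moment-generating-function bound $\log Z\leq \alpha\inner{Q(s,\cdot)}{\pi}+\frac{1}{2}\alpha^2H^2$ (equivalently $D(\pi\mid\pi')\leq\frac{1}{2}\alpha^2H^2$), and Hoeffding's lemma for the bounded variable $Q(s,\cdot)\in[0,H]$ even yields the sharper constant $\alpha^2H^2/8$.
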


We then introduce a variation of the performance difference lemma with the model prediction error.

\begin{lemma}(\textbf{Performance difference lemma with model prediction error}) \label{lemma: expansion of V star m -V m}
For $\diamond =r$ or $g$, it holds that
\begin{align*}
& V_{\diamond,1}^{\pi^{\star,m}}(x_1)-V_{\diamond,1}^{m}(x_1)\\
 =&\sum_{h=1}^{H} \mathbb{E}_{\pi^{\star,m},\mathbb{P}^m}\left[\left\langle Q_{\diamond, h}^{m}\left(x_{h}, \cdot\right), \pi_{h}^{\star,m}\left(\cdot \mid x_{h}\right)-\pi_{h}^{m}\left(\cdot \mid x_{h}\right)\right\rangle\right]\\
 &+ \sum_{h=1}^{H} \mathbb{E}_{\pi^{\star,m},\mathbb{P}^m}\left[\iota_{\diamond, h}^{m}\left(x_{h}, a_{h}\right)\right].
\end{align*}
\end{lemma}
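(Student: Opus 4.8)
The plan is to prove this identity by establishing a backward one-step recursion for the per-step value gap and then telescoping it along a trajectory rolled out under $\pi^{\star,m}$ and the true kernel $\mathbb{P}^m$. Fix the episode index $m$ and the symbol $\diamond\in\{r,g\}$, and define the step-$h$ gap $\Delta_h(x):=V_{\diamond,h}^{\pi^{\star,m},m}(x)-V_{\diamond,h}^{m}(x)$, where the estimated value satisfies $V_{\diamond,h}^{m}(x)=\inner{Q_{\diamond,h}^{m}(x,\cdot)}{\pi_h^{m}(\cdot\mid x)}$ by the last line of Algorithm \ref{alg:algoirthm 2} and the true value obeys the Bellman equation \eqref{eq: belmman equation}. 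The target quantity is precisely $\Delta_1(x_1)$, so it suffices to derive a clean recursion for $\Delta_h$ and unroll it.

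First I would rewrite $\Delta_h(x)$ by inserting the cross term $\inner{Q_{\diamond,h}^{m}(x,\cdot)}{\pi_h^{\star,m}(\cdot\mid x)}$, which splits the gap into a policy-difference piece and a $Q$-difference piece:
\begin{align*}
\Delta_h(x)=\inner{Q_{\diamond,h}^{m}(x,\cdot)}{\pi_h^{\star,m}(\cdot\mid x)-\pi_h^{m}(\cdot\mid x)}+\inner{Q_{\diamond,h}^{\pi^{\star,m},m}(x,\cdot)-Q_{\diamond,h}^{m}(x,\cdot)}{\pi_h^{\star,m}(\cdot\mid x)}.
\end{align*}
For the second inner product I would combine the exact Bellman equation \eqref{eq: belmman equation} for $Q_{\diamond,h}^{\pi^{\star,m},m}$ with the definition of the model prediction error (cf.\ \eqref{eq: model prediction error r}), namely $\iota_{\diamond,h}^{m}=\diamond_h^m+\mathbb{P}_h^m V_{\diamond,h+1}^{m}-Q_{\diamond,h}^{m}$. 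The reward/utility terms cancel and the remaining transition terms collapse into $\mathbb{P}_h^m(V_{\diamond,h+1}^{\pi^{\star,m},m}-V_{\diamond,h+1}^{m})=\mathbb{P}_h^m\Delta_{h+1}$, giving the pointwise identity $Q_{\diamond,h}^{\pi^{\star,m},m}(x,a)-Q_{\diamond,h}^{m}(x,a)=\iota_{\diamond,h}^{m}(x,a)+(\mathbb{P}_h^m\Delta_{h+1})(x,a)$. Substituting this back yields the one-step recursion
\begin{align*}
\Delta_h(x)=\inner{Q_{\diamond,h}^{m}(x,\cdot)}{\pi_h^{\star,m}(\cdot\mid x)-\pi_h^{m}(\cdot\mid x)}+\EE_{a\sim\pi_h^{\star,m}(\cdot\mid x)}\big[\iota_{\diamond,h}^{m}(x,a)\big]+\EE_{a\sim\pi_h^{\star,m}(\cdot\mid x)}\big[(\mathbb{P}_h^m\Delta_{h+1})(x,a)\big].
\end{align*}

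Finally I would take the expectation $\EE_{\pi^{\star,m},\mathbb{P}^m}$ over the trajectory initialized at $x_1$ and unroll this recursion from $h=1$ to $H$. The boundary condition $V_{\diamond,H+1}^{\pi^{\star,m},m}=V_{\diamond,H+1}^{m}=0$ gives $\Delta_{H+1}\equiv 0$, so the chain collapses and delivers exactly the two stated sums. The hard part will be the measure-theoretic bookkeeping: one must justify that iterating the tower property along $(x_h,a_h,x_{h+1})$ turns the trailing term $\EE_{a\sim\pi^{\star,m}}[(\mathbb{P}_h^m\Delta_{h+1})(x_h,a)]$ into $\EE_{\pi^{\star,m},\mathbb{P}^m}[\Delta_{h+1}(x_{h+1})]$ under the $\pi^{\star,m}$-induced visitation measure, which is what converts the pointwise recursion into the trajectory expectations appearing in the claim. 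This is the step I would write out most carefully; everything else is algebraic cancellation, and the argument is verbatim identical for $\diamond=r$ and $\diamond=g$.
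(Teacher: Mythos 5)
Your proposal is correct and follows essentially the same route as the paper's proof: the same insertion of the cross term $\langle Q_{\diamond,h}^{m},\pi_h^{\star,m}\rangle$ to split the gap into a policy-difference piece and a $Q$-difference piece, the same identity $Q_{\diamond,h}^{\pi^{\star,m},m}-Q_{\diamond,h}^{m}=\iota_{\diamond,h}^{m}+\mathbb{P}_h^m\bigl(V_{\diamond,h+1}^{\pi^{\star,m},m}-V_{\diamond,h+1}^{m}\bigr)$, and the same backward telescoping with boundary condition $\Delta_{H+1}\equiv 0$. The only difference is presentational: the paper carries out the unrolling with explicit composed operators $\widetilde{\mathcal{I}}_h^{\star,m}\mathbb{P}_h^m$, whereas you phrase the same bookkeeping as iterated tower-property expectations under $\mathbb{E}_{\pi^{\star,m},\mathbb{P}^m}$.
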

\begin{proof}
For every $(h,m) \in[H]\times[M]$, we recall the definition of $V_{r,h}^{\pi^{\star,m},m}$ in the Bellman equation \eqref{eq: belmman equation} and the definition of $V_{r,h}^m$:
\begin{align*}
    &V_{r,h}^{\pi^{\star,m},m}(x)=\inner{Q_{r,h}^{\pi^{\star,m},m} (x,\cdot)}{\pi_h^{\star,m}(\cdot\mid x)}, \\
    &V_{r,h}^{m}(x)=\inner{Q_{r,h}^{m} (x,\cdot)}{\pi_h^m(\cdot\mid x)}.
\end{align*}
We can expand the difference $V_{r,h}^{\pi^{\star,m},m}(x)-V_{r,h}^{m}(x)$ as
\begin{align} \label{eq: V*-Vm}
&\nonumber V_{r,h}^{\pi^{\star,m},m}(x)-V_{r,h}^{m}(x)\\
=& \inner{Q_{r,h}^{\pi^{\star,m},m} (x,\cdot)}{\pi_h^{\star,m}(\cdot\mid x)} -\inner{Q_{r,h}^{m} (x,\cdot)}{\pi_h^m(\cdot\mid x)}\\
\nonumber=& \inner{Q_{r,h}^{\pi^{\star,m},m} (x,\cdot)-Q_{r,h}^{m} (x,\cdot)}{\pi_h^{\star,m}(\cdot\mid x)} \\
\nonumber &+ \inner{Q_{r,h}^{m} (x,\cdot)}{\pi_h^{\star,m}(\cdot\mid x)-\pi_h^m(\cdot\mid x)}\\
 =& \inner{Q_{r,h}^{\pi^{\star,m},m} (x,\cdot)-Q_{r,h}^{m} (x,\cdot)}{\pi_h^{\star,m}(\cdot\mid x)} + \xi_h^m(x),
\end{align}
where $\xi_h^m(x)\coloneqq \inner{Q_{r,h}^{m} (x,\cdot)}{\pi_h^{\star,m}(\cdot\mid x)-\pi_h^m(\cdot\mid x)}$. As a result of the Bellman equation \eqref{eq: belmman equation} and the definition of the model prediction error, we have
\begin{align*}
&Q_{r,h}^{\pi^{\star,m},m}(x,a)=r_h^m(x,a) + \mathbb{P}_h^m V_{r,h+1}^{\pi^{\star,m},m}(x,a), \\ &\iota_{r,h}^m=r_h^m+\mathbb{P}_h^m V_{r,h+1}^{m}-Q_{r,h}^{m}.
\end{align*}
As a result, we have
\begin{align} \label{eq: Q*-Qm}
Q_{r,h}^{\pi^{\star,m},m}-Q_{r,h}^{m}=\iota_{r, h}^{m}+ \mathbb{P}_h^m\left( V_{r,h+1}^{\pi^{\star,m},m}-V_{r,h+1}^m \right).
\end{align}

Substituting \eqref{eq: Q*-Qm} into the right-hand side of \eqref{eq: V*-Vm} yields that
\begin{align*}
 &V_{r,h}^{\pi^{\star,m},m}(x)-V_{r,h}^{m}(x)\\
 =&\inner{\mathbb{P}_h^m\left( V_{r,h+1}^{\pi^{\star,m},m}-V_{r,h+1}^m \right)(x,\cdot)}{\pi_h^{\star,m}(\cdot\mid x)}\\
 &+ \inner{\iota_{r, h}^{m}(x,\cdot)}{\pi_h^{\star,m}(\cdot\mid x)} + \xi_h^m(x).
\end{align*}
Using the above formula and expanding $V_{r,1}^{\pi^{\star,m}}(x)-V_{r,1}^{m}$ recursively at $x_1$, one can obtain
\begin{align} \label{eq: recursive formula 1}
\nonumber &V_{r,1}^{\pi^{\star,m}}(x_1)-V_{r,1}^{m}(x_1)\\
\nonumber=&\inner{\mathbb{P}_1^m\left( V_{r,2}^{\pi^{\star,m},m}-V_{r,2}^m \right)(x_1,\cdot)}{\pi_1^{\star,m}(\cdot\mid x_1)}\\
\nonumber&+ \inner{\iota_{r, 1}^{m}(x_1,\cdot)}{\pi_1^{\star,m}(\cdot\mid x_1)} + \xi_1^m(x_1)\\
\nonumber =&\inner{\mathbb{P}_1^m \inner{\mathbb{P}_2^m\left( V_{r,3}^{\pi^{\star,m},m}-V_{r,3}^m \right) (x_2,\cdot) }{\pi_2^{\star,m}(\cdot\mid x_2)}}{\pi_1^{\star,m}(\cdot\mid x_1) }\\
\nonumber & + \inner{\iota_{r, 1}^{m}(x_1,\cdot)}{\pi_1^{\star,m}(\cdot\mid x_1)} \\
\nonumber & + \inner{\mathbb{P}_1^m \inner{\iota_{r, 2}^{m}(x_2,\cdot) }{\pi_2^{\star,m}(\cdot\mid x_2)}}{\pi_1^{\star,m}(\cdot\mid x_1) } \\
&+ \inner{\mathbb{P}_1^m\xi_2^m (x_1,\cdot)}{\pi_1^{\star,m}(\cdot\mid x_1)}+ \xi_1^m(x_1).
\end{align}
For notational simplicity, for every $(m, h) \in[M] \times[H]$, we define an operator $\widetilde{\mathcal{I}}_{h}^{\star,m}$ for function $f: \mathcal{S} \times \mathcal{A} \rightarrow \mathbb{R}$:
$$\left(\widetilde{\mathcal{I}}_{h}^{\star,m} f\right)(x)=\left\langle f(x, \cdot), \pi_{h}^{\star,m}(\cdot \mid x)\right\rangle .$$

Repeating the recursion \eqref{eq: recursive formula 1} over $h\in[H]$ yields that
\begin{align*}
 &V_{r,1}^{\pi^{\star,m}}(x_1)-V_{r,1}^{m}(x_1)\\
 =&  \widetilde{\mathcal{I}}_1^{\star,m} \mathbb{P}_1^m \widetilde{\mathcal{I}}_2^{\star,m} \mathbb{P}_2^m \left(V_{r,3}^{\pi^{\star,m},m}-V_{r,3}^m\right) \\
 &+  \widetilde{\mathcal{I}}_1^{\star,m} \mathbb{P}_1^m \widetilde{\mathcal{I}}_2^{\star,m} \iota_{r,2}^m + \widetilde{\mathcal{I}}_1^{\star,m}\iota_{r,1}^m+ \widetilde{\mathcal{I}}_1^{\star,m} \mathbb{P}_1^m\xi_{2}^m + \xi_1^m\\
=& \widetilde{\mathcal{I}}_1^{\star,m} \mathbb{P}_1^m \widetilde{\mathcal{I}}_2^{\star,m} \mathbb{P}_2^m \widetilde{\mathcal{I}}_3^{\star,m} \mathbb{P}_3^m \left(V_{r,4}^{\pi^{\star,m},m}-V_{r,4}^m\right) \\
&+ \widetilde{\mathcal{I}}_1^{\star,m} \mathbb{P}_1^m \widetilde{\mathcal{I}}_2^{\star,m} \iota_{r,2}^m + \widetilde{\mathcal{I}}_1^{\star,m} \mathbb{P}_1^m \widetilde{\mathcal{I}}_2^{\star,m} \mathbb{P}_2^m \widetilde{\mathcal{I}}_3^{\star,m} \iota_{r,3}^m\\
&+ \widetilde{\mathcal{I}}_1^{\star,m}\iota_{r,1}^m+ \widetilde{\mathcal{I}}_1^{\star,m}\mathbb{P}_1^m \widetilde{\mathcal{I}}_2^{\star,m} \mathbb{P}_2^m\xi_{3}^m + \widetilde{\mathcal{I}}_1^{\star,m} \mathbb{P}_1^m\xi_{2}^m + \xi_1^m\\
&\ldots\\
=& \left( \prod_{h=1}^H \widetilde{\mathcal{I}}_h^{\star,m} \mathbb{P}^m_h\right) \left(V_{r,H+1}^{\pi^{\star,m},m}-V_{r,H+1}^m \right) \\
& + \sum_{h=1}^H \left(\prod_{i=1}^{h-1} \widetilde{\mathcal{I}}_i^{\star,m}\mathbb{P}_i^m \right) \widetilde{\mathcal{I}}_h^{\star,m} \iota_{r,h}^m +\sum_{h=1}^H \left(\prod_{h=1}^{h-1} \widetilde{\mathcal{I}}_i^{\star,m} \mathbb{P}_i^m \right) \xi_h^m\\
=& \sum_{h=1}^H \left(\prod_{i=1}^{h-1} \widetilde{\mathcal{I}}_i^{\star,m}\mathbb{P}_i^m \right) \widetilde{\mathcal{I}}_h^{\star,m} \iota_{r,h}^m +\sum_{h=1}^H \left(\prod_{h=1}^{h-1} \widetilde{\mathcal{I}}_i^{\star,m} \mathbb{P}_i^m \right) \xi_h^m.
\end{align*}
where the last inequality is due to $V_{r,H+1}^{\pi^{\star,m},m}-V_{r,H+1}^m =0$. Finally, the proof is completed by the definitions of $\mathbb{P}_h^m$ and $\widetilde{\mathcal{I}}_{h}^{\star,m}$. Similarly, we can also use the same argument to expand $V_{g,1}^{\pi^{\star,m},m}(x_1)-V_{g,1}^{m}(x_1)$.
\end{proof}

\subsection{Smoothness property for the visitation measure}
We recall the operator 
$\left(\widetilde{\mathcal{I}}_{h}^{\star,m} f\right)(x)=\left\langle f(x, \cdot), \pi_{h}^{\star,m}(\cdot \mid x)\right\rangle$ and note that $\left(\widetilde{\mathcal{I}}_{h}^{\star,m} \mathbb{P}_h^m\right)(x^\prime\mid x)= \sum_{a \in \mathcal{A}} \mathbb{P}_h^m(x^\prime \mid x,a) \pi_h(a\mid x) $
is the transition kernel in step $h$ under policy $\pi$ at the episode $m$. We fix $h\in[H]$. Under policies $\{\pi_h^m\}_{h=1}^H$, the distribution of $x_h$ conditional on $x_1$ is given by 
\begin{align*}
&\widetilde{\mathcal{I}}_1^{\star,m} \mathbb{P}_1^m \widetilde{\mathcal{I}}_2^{\star,m} \mathbb{P}_2^m \cdots \widetilde{\mathcal{I}}_{h-1}^{\star,m} \mathbb{P}_{h-1}^m (x_h\mid x_1)\\
\coloneqq & \sum_{x_2,\ldots,x_{h-1}} \prod_{i\in [h-1]} \left(\widetilde{\mathcal{I}}_i^{\star,m} \mathbb{P}_i^m \right)(x_{i+1}\mid x_i).
\end{align*}

We have the following smoothness property for the visitation measure $\widetilde{\mathcal{I}}_1^{\star,m} \mathbb{P}_1^m \widetilde{\mathcal{I}}_2^{\star,m} \mathbb{P}_2^m \cdots \widetilde{\mathcal{I}}_{h-1}^{\star,m} \mathbb{P}_{h-1}^m (x_h\mid x_1)$.

\begin{lemma} \label{lemma: prob difference in pi}
Under Assumption \eqref{ass: linear fun approx}, for every $h \in[H], j \in[h-1], x_{h}, x_{1} \in \mathcal{S}$, and policies $\left\{\pi^m_{i}\right\}_{i \in[H]} \cup\left\{\pi^{m-1}_{j}\right\}$, we have
\begin{align*}
& \left|\widetilde{\mathcal{I}}_1^{\star,m} \mathbb{P}_1^m \cdots \widetilde{\mathcal{I}}_j^{\star,m} \mathbb{P}_j^m \cdots \widetilde{\mathcal{I}}_{h-1}^{\star,m} \mathbb{P}_{h-1}^m (x_h\mid x_1)  \right.\\
& \left.- \widetilde{\mathcal{I}}_1^{\star,m} \mathbb{P}_1^m \cdots \widetilde{\mathcal{I}}_j^{\star,m-1} \mathbb{P}_j^{m} \cdots \widetilde{\mathcal{I}}_{h-1}^{\star,m} \mathbb{P}_{h-1}^m (x_h\mid x_1)\right| \\
\leq &\max_{x\in\mathcal{S}}\norm{\pi_j^{\star,m}(\cdot \mid x) -\pi_j^{\star,m-1}(\cdot \mid x)}_1,\\
& \left|\widetilde{\mathcal{I}}_1^{\star,m} \mathbb{P}_1^m \cdots \widetilde{\mathcal{I}}_j^{\star,m} \mathbb{P}_j^m \cdots \widetilde{\mathcal{I}}_{h-1}^{\star,m} \mathbb{P}_{h-1}^m(x_h\mid x_1) \right. \\
&\left.- \widetilde{\mathcal{I}}_1^{\star,m} \mathbb{P}_1^m \cdots \widetilde{\mathcal{I}}_j^{\star,m} \mathbb{P}_j^{m-1} \cdots \widetilde{\mathcal{I}}_{h-1}^{\star,m} \mathbb{P}_{h-1}^m (x_h\mid x_1) \right| \\
\leq& \sqrt{d_1} \norm{\theta_j^m-\theta_j^{m-1}}_2 
\end{align*}
where $d_1>0$ is the constant defined in Assumption \ref{ass: linear fun approx}.
\end{lemma}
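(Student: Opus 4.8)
The plan is to view each visitation measure as a composition of one-step Markov kernels and to isolate the single factor in which the two products differ. Writing $P_i^{m} \coloneqq \widetilde{\mathcal{I}}_i^{\star,m}\mathbb{P}_i^m$ for the one-step transition kernel at stage $i$, so that $(P_i^m)(x'\mid x) = \sum_{a\in\mathcal{A}}\mathbb{P}_i^m(x'\mid x,a)\pi_i^{\star,m}(a\mid x)$, the visitation measure in the statement is the product $P_1^m P_2^m \cdots P_{h-1}^m(x_h\mid x_1)$. In each of the two claims only the $j$-th factor is altered: in the first, $P_j^m=\widetilde{\mathcal{I}}_j^{\star,m}\mathbb{P}_j^m$ is replaced by $\tilde P_j \coloneqq \widetilde{\mathcal{I}}_j^{\star,m-1}\mathbb{P}_j^m$, and in the second by $\tilde P_j\coloneqq\widetilde{\mathcal{I}}_j^{\star,m}\mathbb{P}_j^{m-1}$. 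Since all other factors coincide, the difference of the two products telescopes to $(P_1^m\cdots P_{j-1}^m)(P_j^m-\tilde P_j)(P_{j+1}^m\cdots P_{h-1}^m)$ evaluated at $(x_h\mid x_1)$.

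Next I would factor this expression through the intermediate states $x_j$ and $x_{j+1}$. Setting $\alpha(x_j)\coloneqq (P_1^m\cdots P_{j-1}^m)(x_j\mid x_1)$ and $\beta(x_{j+1})\coloneqq(P_{j+1}^m\cdots P_{h-1}^m)(x_h\mid x_{j+1})$, the difference equals $\sum_{x_j}\alpha(x_j)\sum_{x_{j+1}}(P_j^m-\tilde P_j)(x_{j+1}\mid x_j)\,\beta(x_{j+1})$. The two crucial structural facts are that $\alpha$ is a probability distribution over $x_j$ (each $P_i^m$ is stochastic, so $\sum_{x_j}\alpha(x_j)=1$) and that $\beta$, being a transition probability into the single state $x_h$, satisfies $0\le\beta\le 1$. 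Consequently it suffices to bound the inner sum uniformly in $x_j$, after which the prefix $\alpha$ merely averages that bound away.

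For the policy-perturbation claim I would write $(P_j^m-\tilde P_j)(x_{j+1}\mid x_j)=\sum_a\mathbb{P}_j^m(x_{j+1}\mid x_j,a)\bigl(\pi_j^{\star,m}(a\mid x_j)-\pi_j^{\star,m-1}(a\mid x_j)\bigr)$ and bound the inner sum by $\norm{(P_j^m-\tilde P_j)(\cdot\mid x_j)}_1\cdot\norm{\beta}_\infty$; using that $\mathbb{P}_j^m(\cdot\mid x_j,a)$ sums to one collapses the kernel $\ell_1$ difference into $\norm{\pi_j^{\star,m}(\cdot\mid x_j)-\pi_j^{\star,m-1}(\cdot\mid x_j)}_1$, which is at most the stated maximum over states. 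For the transition-perturbation claim I would instead exploit the linear-kernel structure of Assumption~\ref{ass: linear fun approx}(1): $(\mathbb{P}_j^m-\mathbb{P}_j^{m-1})(x_{j+1}\mid x_j,a)=\inner{\psi(x_j,a,x_{j+1})}{\theta_j^m-\theta_j^{m-1}}$, so the inner $x_{j+1}$-sum becomes $\inner{\int_{\mathcal{S}}\psi(x_j,a,x')\beta(x')\,dx'}{\theta_j^m-\theta_j^{m-1}}$, and Cauchy--Schwarz produces a bound in terms of $\norm{\theta_j^m-\theta_j^{m-1}}_2$.

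The step I expect to require the most care is controlling the feature integral $\int_{\mathcal{S}}\psi(x_j,a,x')\beta(x')\,dx'$ so as to obtain the clean constant $\sqrt{d_1}$ rather than $\sqrt{d_1}H$. The bound in Assumption~\ref{ass: linear fun approx}(3) is stated for value functions ranging in $[0,H]$; since here $\beta$ ranges only in $[0,1]$, I would apply that bound to $H\beta\in[0,H]$ and divide by $H$, yielding $\norm{\int_{\mathcal{S}}\psi(x_j,a,x')\beta(x')\,dx'}\le\sqrt{d_1}$. Averaging over $a$ against $\pi_j^{\star,m}(\cdot\mid x_j)$ (which sums to one) and then over $x_j$ against $\alpha$ then produces exactly $\sqrt{d_1}\norm{\theta_j^m-\theta_j^{m-1}}_2$, completing the plan; the identical recursion applies verbatim to the $g$-visitation measure.
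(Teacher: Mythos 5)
Your proposal is correct, and the overall skeleton is the same as the paper's: isolate the single perturbed factor at stage $j$, note the prefix $P_1^m\cdots P_{j-1}^m(\cdot\mid x_1)$ is a probability distribution, and bound the middle difference uniformly in $x_j$. Where you genuinely diverge is in how the suffix is handled, and this matters for the second inequality. The paper takes $\max_{x_{j+1}}$ of the suffix product (bounding it by $1$) and is then left with $\max_{x_j}\sum_{x_{j+1}}\bigl|(P_j^m-\tilde P_j)(x_{j+1}\mid x_j)\bigr|$; for the policy perturbation this collapses cleanly to the $\ell_1$ policy distance, but for the transition perturbation the paper only establishes the pointwise bound $\bigl|\langle\psi(x_j,a,x_{j+1}),\theta_j^m-\theta_j^{m-1}\rangle\bigr|\le\sqrt{d_1}\norm{\theta_j^m-\theta_j^{m-1}}_2$ (implicitly assuming $\norm{\psi}_2\le\sqrt{d_1}$, which Assumption \ref{ass: linear fun approx} does not state) and does not explicitly control the sum over $x_{j+1}$. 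Your route instead keeps the suffix $\beta(x_{j+1})\in[0,1]$ inside the inner product, reduces the transition case to $\bigl\langle\int_{\mathcal{S}}\psi(x_j,a,x')\beta(x')\,dx',\,\theta_j^m-\theta_j^{m-1}\bigr\rangle$, and invokes Assumption \ref{ass: linear fun approx}(3) applied to $H\beta\in[0,H]$ to get $\norm{\int\psi\beta}\le\sqrt{d_1}$ before Cauchy--Schwarz. This buys you a derivation that uses only the hypotheses actually stated in the assumption and avoids the summation-over-$x_{j+1}$ issue entirely, at the cost of no extra work; it is the cleaner of the two arguments for the second claim, and your first claim matches the paper's essentially verbatim.
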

\begin{proof}
From Holder's inequality, we know
\begin{align} \label{eq: prob difference in pi}
 \nonumber  & \left|\widetilde{\mathcal{I}}_1^{\star,m} \mathbb{P}_1^m \cdots \widetilde{\mathcal{I}}_j^{\star,m} \mathbb{P}_j^m \cdots \widetilde{\mathcal{I}}_{h-1}^{\star,m} \mathbb{P}_{h-1}^m (x_h\mid x_1) \right.\\
 \nonumber &\left.- \widetilde{\mathcal{I}}_1^{\star,m} \mathbb{P}_1^m \cdots \widetilde{\mathcal{I}}_j^{\star,m-1} \mathbb{P}_j^{m} \cdots \widetilde{\mathcal{I}}_{h-1}^{\star,m} \mathbb{P}_{h-1}^m (x_h\mid x_1) \right| \\
 \nonumber    \leq & \sum_{x_{2}, x_{3}, \ldots, x_{h-1}} \left|\left(\widetilde{\mathcal{I}}_j^{\star,m} \mathbb{P}_j^m \right)\left(x_{j+1} \mid x_{j}\right) -\left(\widetilde{\mathcal{I}}_j^{\star,m-1} \mathbb{P}_j^m \right) \left(x_{j+1} \mid x_{j}\right)\right| \\
 \nonumber & \cdot \prod_{i \in[h-1] \backslash\{j\}} \widetilde{\mathcal{I}}_i^{\star,m} \mathbb{P}_i^m \left(x_{i+1} \mid x_{i}\right) \\
 \nonumber    \leq & \sum_{x_{2}, \ldots, x_{j},x_{j+2},\ldots, x_{h-1}} \max_{x_{j+1}\in\mathcal{S}} \prod_{i \in[h-1] \backslash\{j\}} \widetilde{\mathcal{I}}_i^{\star,m} \mathbb{P}_i^m \left(x_{i+1} \mid x_{i}\right) \\
 \nonumber &\cdot \sum_{x_{j+1}} \left|\left(\widetilde{\mathcal{I}}_j^{\star,m} \mathbb{P}_j^m \right)\left(x_{j+1} \mid x_{j}\right) -\left(\widetilde{\mathcal{I}}_j^{\star,m-1} \mathbb{P}_j^m \right) \left(x_{j+1} \mid x_{j}\right)\right|  \\
\leq & \sum_{x_{2}, \ldots, x_{j},x_{j+2},\ldots, x_{h-1}} \max_{x_{j+1}\in\mathcal{S}} \prod_{i \in[h-1] \backslash\{j\}} \widetilde{\mathcal{I}}_i^{\star,m} \mathbb{P}_i^m \left(x_{i+1} \mid x_{i}\right) \\
 \nonumber & \cdot \max_{x_j \in \mathcal{S}} \sum_{x_{j+1}} \left|\left(\widetilde{\mathcal{I}}_j^{\star,m} \mathbb{P}_j^m \right)\left(x_{j+1} \mid x_{j}\right) -\left(\widetilde{\mathcal{I}}_j^{\star,m-1} \mathbb{P}_j^m \right) \left(x_{j+1} \mid x_{j}\right)\right|.
\end{align}

By the definition of $ \mathbb{P}_j^m$ and the boundedness of $\psi(x_j,\cdot,x_{j+1})$ and $\theta_j^m$ in Assumption \ref{ass: linear fun approx}, for every $x_{j+1} \in \mathcal{S}$, it holds that
\begin{align*}
&\left|\left(\widetilde{\mathcal{I}}_j^{\star,m} \mathbb{P}_j^m \right)\left(x_{j+1} \mid x_{j}\right) -\left(\widetilde{\mathcal{I}}_j^{\star,m-1} \mathbb{P}_j^m \right) \left(x_{j+1} \mid x_{j}\right)\right|\\
=  & \inner{\mathbb{P}_j^m (x_{j+1} \mid x_j, \cdot)}{\pi_j^{\star,m}(\cdot\mid x_j) - \pi_j^{\star,m-1}(\cdot\mid x_j)}\\
\leq & \norm{\mathbb{P}_j^m (x_{j+1} \mid x_j, \cdot)}_\infty \norm{\pi_j^{\star,m}(\cdot\mid x_j) - \pi_j^{\star,m-1}(\cdot\mid x_j)}_1 \\
\leq &  \norm{\pi_j^{\star,m}(\cdot\mid x_j) - \pi_j^{\star,m-1}(\cdot\mid x_j)}_1.
\end{align*}
Thus, by applying the above inequality to \eqref{eq: prob difference in pi}, we obtain
\begin{align*}
& \left|\widetilde{\mathcal{I}}_1^{\star,m} \mathbb{P}_1^m \cdots \widetilde{\mathcal{I}}_j^{\star,m} \mathbb{P}_j^m \cdots \widetilde{\mathcal{I}}_{h-1}^{\star,m} \mathbb{P}_{h-1}^m (x_h\mid x_1) \right.\\
&\left.- \widetilde{\mathcal{I}}_1^{\star,m} \mathbb{P}_1^m \cdots \widetilde{\mathcal{I}}_j^{\star,m-1} \mathbb{P}_j^{m} \cdots \widetilde{\mathcal{I}}_{h-1}^{\star,m} \mathbb{P}_{h-1}^m (x_h\mid x_1) \right| \\
\leq & \max_{x\in\mathcal{S}}\norm{\pi_j^{\star,m}(\cdot \mid x) -\pi_j^{\star,m-1}(\cdot \mid x)}_1 \\
&\cdot \sum_{x_{2}, \ldots, x_{j},x_{j+2},\ldots, x_{h-1}} \max_{x_{j+1}\in\mathcal{S}} \prod_{i \in[h-1] \backslash\{j\}} \widetilde{\mathcal{I}}_i^{\star,m} \mathbb{P}_i^m \left(x_{i+1} \mid x_{i}\right) \\
=& \max_{x\in\mathcal{S}}\norm{\pi_j^{\star,m}(\cdot \mid x) -\pi_j^{\star,m-1}(\cdot \mid x)}_1 \\
 \nonumber &\cdot \underbrace{\sum_{x_{j+2}, \ldots, x_{h-1}} \max _{x_{j+1} \in \mathcal{S}} \prod_{i=j+1}^{h-1}  \widetilde{\mathcal{I}}_i^{\star,m} \mathbb{P}_i^m \left(x_{i+1} \mid x_{i}\right)}_{\leq 1} \\
 &\cdot \underbrace{\sum_{x_{2}, \ldots, x_{j}} \prod_{i=1}^{j-1}  \widetilde{\mathcal{I}}_i^{\star,m} \mathbb{P}_i^m\left(x_{i+1} \mid x_{i}\right)}_{=1}\\
\leq & \max_{x\in\mathcal{S}}\norm{\pi_j^{\star,m}(\cdot \mid x) -\pi_j^{\star,m-1}(\cdot \mid x)}_1.
\end{align*}

Similarly, by noting that 
\begin{align*}
&\left|\left(\widetilde{\mathcal{I}}_j^{\star,m} \mathbb{P}_j^m \right)\left(x_{j+1} \mid x_{j}\right) -\left(\widetilde{\mathcal{I}}_j^{\star,m} \mathbb{P}_j^{m-1} \right) \left(x_{j+1} \mid x_{j}\right)\right|\\
= &\left| \sum_{a\in\mathcal{A}} \pi^{\star,m}(a\mid x_j) \left( {\mathbb{P}_j^m (x_{j+1} \mid x_j, a)-\mathbb{P}_j^{m-1} (x_{j+1} \mid x_j, a)} \right) \right|\\
\leq & \max_{a\in\mathcal{A}} \left| {\mathbb{P}_j^m (x_{j+1} \mid x_j, a)-\mathbb{P}_j^{m-1} (x_{j+1} \mid x_j, a)} \right|\\
\leq & \max_{a\in\mathcal{A}} \inner{\psi(x_j,a,x_{j+1})}{\theta_j^m-\theta_j^{m-1}}\\
\leq & \max_{a\in\mathcal{A}} \norm{\psi(x_j,a,x_{j+1})}_2\norm{\theta_j^m-\theta_j^{m-1}}_2\\
\leq & \sqrt{d_1} \norm{\theta_j^m-\theta_j^{m-1}}_2
\end{align*}
holds for every $x_{j+1} \in \mathcal{S}$, 
one can conclude that  
\begin{align*}
&\left|\widetilde{\mathcal{I}}_1^{\star,m} \mathbb{P}_1^m \cdots \widetilde{\mathcal{I}}_j^{\star,m} \mathbb{P}_j^m \cdots \widetilde{\mathcal{I}}_{h-1}^{\star,m} \mathbb{P}_{h-1}^m(x_h\mid x_1) \right.\\
&\left.- \widetilde{\mathcal{I}}_1^{\star,m} \mathbb{P}_1^m \cdots \widetilde{\mathcal{I}}_j^{\star,m} \mathbb{P}_j^{m-1} \cdots \widetilde{\mathcal{I}}_{h-1}^{\star,m} \mathbb{P}_{h-1}^m (x_h\mid x_1) \right|\\
&\leq \sqrt{d_1} \norm{\theta_j^m-\theta_j^{m-1}}_2.
\end{align*}
This completes the proof.
\end{proof}

\begin{lemma} \label{lemma: probability difference in expectation}
Under Assumption \ref{ass: linear fun approx}, it holds that
\begin{align*}
\sum_{m=2}^M \sum_{h=1}^{H}  \left( \mathbb{E}_{\pi^{\star,m}, \mathbb{P}^{m}}- \mathbb{E}_{\pi^{\star,m-1}, \mathbb{P}^{m-1}} \right) \left[\mathbb{1}(x_h)\right] \leq & H \left(\sqrt{d_1} B_\mathbb{P}+  B_\star \right),
\end{align*}
where $ \mathbb{1}(x_h)$ denotes the indicator function for the state $x_h$ and $d_1>0$ is a constant defined in Assumption \ref{ass: linear fun approx}.
\end{lemma}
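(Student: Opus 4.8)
The plan is to recognize that $\mathbb{E}_{\pi^{\star,m},\mathbb{P}^m}[\mathbb{1}(x_h)]$ is nothing but the step-$h$ visitation probability of $x_h$ started from $x_1$, i.e. the quantity $\widetilde{\mathcal{I}}_1^{\star,m}\mathbb{P}_1^m\widetilde{\mathcal{I}}_2^{\star,m}\mathbb{P}_2^m\cdots\widetilde{\mathcal{I}}_{h-1}^{\star,m}\mathbb{P}_{h-1}^m(x_h\mid x_1)$ defined just before Lemma \ref{lemma: prob difference in pi}. Thus, for each fixed pair $(m,h)$, the summand $\big(\mathbb{E}_{\pi^{\star,m},\mathbb{P}^m}-\mathbb{E}_{\pi^{\star,m-1},\mathbb{P}^{m-1}}\big)[\mathbb{1}(x_h)]$ is the difference of two such visitation measures, one built entirely from $(\pi^{\star,m},\mathbb{P}^m)$ and one built entirely from $(\pi^{\star,m-1},\mathbb{P}^{m-1})$, and the goal reduces to controlling this difference uniformly in $x_h$.

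First I would set up a hybrid (telescoping) decomposition along the $2(h-1)$ factors of the product. Starting from the measure assembled purely from $(\pi^{\star,m-1},\mathbb{P}^{m-1})$, I replace one factor at a time, sweeping $j$ from $1$ to $h-1$ and, at each $j$, first swapping the transition $\mathbb{P}_j^{m-1}\to\mathbb{P}_j^{m}$ and then the policy operator $\widetilde{\mathcal{I}}_j^{\star,m-1}\to\widetilde{\mathcal{I}}_j^{\star,m}$, so that any two consecutive hybrids differ in exactly one factor. The telescoping identity then bounds the total difference by the sum of the $2(h-1)$ single-factor differences.

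Next, each single-factor difference is estimated by Lemma \ref{lemma: prob difference in pi}: a policy swap at step $j$ contributes at most $\max_{x\in\mathcal{S}}\|\pi_j^{\star,m}(\cdot\mid x)-\pi_j^{\star,m-1}(\cdot\mid x)\|_1$, while a transition swap at step $j$ contributes at most $\sqrt{d_1}\,\|\theta_j^m-\theta_j^{m-1}\|_2$. The one point needing care, which I expect to be the main (though minor) obstacle, is that Lemma \ref{lemma: prob difference in pi} is literally stated with all the untouched factors sitting at level $m$, whereas inside the telescope some surrounding factors are still at level $m-1$. Inspecting its proof shows that the estimate only uses that the untouched factors are valid Markov kernels, so that the product of the remaining factors, summed over the intermediate states, is bounded by $1$; hence the single-factor bound transfers verbatim to every hybrid step irrespective of the episode indices carried by the surrounding factors.

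Finally I would assemble the double sum. Combining the above gives, for each fixed $(m,h)$, the bound $\sum_{j=1}^{h-1}\big(\max_{x}\|\pi_j^{\star,m}(\cdot\mid x)-\pi_j^{\star,m-1}(\cdot\mid x)\|_1+\sqrt{d_1}\,\|\theta_j^m-\theta_j^{m-1}\|_2\big)$. Summing over $h=1,\dots,H$ and noting that each index $j$ appears exactly $H-j\le H$ times yields a factor of at most $H$ in front of $\sum_{j=1}^{H}\big(\max_{x}\|\pi_j^{\star,m}(\cdot\mid x)-\pi_j^{\star,m-1}(\cdot\mid x)\|_1+\sqrt{d_1}\,\|\theta_j^m-\theta_j^{m-1}\|_2\big)$. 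Summing over $m=2,\dots,M$ and invoking the definitions $B_\star=\sum_{m=2}^M\sum_{h=1}^H\max_{x}\|\pi_h^{\star,m}(\cdot\mid x)-\pi_h^{\star,m-1}(\cdot\mid x)\|_1$ and $B_{\mathbb{P}}=\sum_{m=2}^M\sum_{h=1}^H\|\theta_h^m-\theta_h^{m-1}\|_2$ then collapses the expression to $H\big(\sqrt{d_1}\,B_{\mathbb{P}}+B_\star\big)$, which is exactly the claimed bound.
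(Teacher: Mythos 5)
Your proposal is correct and follows essentially the same route as the paper: a telescoping (hybrid) decomposition of the visitation-measure difference into single-factor swaps, each bounded via Lemma \ref{lemma: prob difference in pi}, followed by summation over $h$ and $m$ using the definitions of $B_{\mathbb{P}}$ and $B_\star$. Your explicit remark that the single-factor bound transfers to hybrids with mixed episode indices (because the untouched factors are only used as Markov kernels summing to one) is a point the paper glosses over but handles identically in substance.
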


\begin{proof}
 For every $(m,h) \in [M] \text{ and } [H]$, we have
\begin{align} \label{eq: expectation difference 2}
\nonumber&\left|\left(\mathbb{E}_{\pi^{\star,m}, \mathbb{P}^{m}} - \mathbb{E}_{\pi^{\star,m-1}, \mathbb{P}^{m-1}}\right) \left[ \mathbb{1}(x_h)\right]\right|\\
\nonumber\leq & \left|\widetilde{\mathcal{I}}_1^{\star,m} \mathbb{P}_1^m  \widetilde{\mathcal{I}}_2^{\star,m} \mathbb{P}_2^m \cdots \widetilde{\mathcal{I}}_{h-1}^{\star,m} \mathbb{P}_{h-1}^m (x_h\mid x_1) \right.\\
\nonumber &\left.- \widetilde{\mathcal{I}}_1^{\star,m-1} \mathbb{P}_1^{m-1} \widetilde{\mathcal{I}}_2^{\star,m-1} \mathbb{P}_2^{m-1} \cdots \widetilde{\mathcal{I}}_{h-1}^{\star,m-1} \mathbb{P}_{h-1}^{m-1} (x_h\mid x_1)\right| \\
\nonumber\leq & \left|\widetilde{\mathcal{I}}_1^{\star,m} \mathbb{P}_1^m  \widetilde{\mathcal{I}}_2^{\star,m} \mathbb{P}_2^m \cdots \widetilde{\mathcal{I}}_{h-1}^{\star,m} \mathbb{P}_{h-1}^m (x_h\mid x_1)  \right.\\
\nonumber&\left.- \widetilde{\mathcal{I}}_1^{\star,m} \mathbb{P}_1^{m-1} \widetilde{\mathcal{I}}_2^{\star,m-1} \mathbb{P}_2^{m-1} \cdots \widetilde{\mathcal{I}}_{h-1}^{\star,m-1} \mathbb{P}_{h-1}^{m-1} (x_h\mid x_1)\right| +\\
\nonumber&\left| \widetilde{\mathcal{I}}_1^{\star,m} \mathbb{P}_1^{m-1} \widetilde{\mathcal{I}}_2^{\star,m-1} \mathbb{P}_2^{m-1} \cdots \widetilde{\mathcal{I}}_{h-1}^{\star,m-1} \mathbb{P}_{h-1}^{m-1} (x_h\mid x_1)\right.\\
\nonumber &\left.-\widetilde{\mathcal{I}}_1^{\star,m-1} \mathbb{P}_1^{m-1} \widetilde{\mathcal{I}}_2^{\star,m-1} \mathbb{P}_2^{m-1} \cdots \widetilde{\mathcal{I}}_{h-1}^{\star,m-1} \mathbb{P}_{h-1}^{m-1} (x_h\mid x_1)\right| \\
\nonumber \leq & \sqrt{d_1} \sum_{j=1}^h\norm{\theta_j^m-\theta_j^{m-1}}_2  \\
&+ \sum_{j=1}^h \max_{x\in\mathcal{S}}\norm{\pi_j^{\star,m}(\cdot \mid x) -\pi_j^{\star,m-1}(\cdot \mid x)}_1
\end{align}
where the last inequality follows from further telescoping the first term in the second inequality and then applying Lemma \ref{lemma: prob difference in pi}. Taking the summation of \eqref{eq: expectation difference 2} over $(m,h)\in[M]\times [H]$, we obtain the desired result.
\end{proof}

\begin{lemma} \label{lemma: probability difference in expectation tabular}
In the tabular setting,  it holds that
\begin{align*}
\sum_{m=2}^M \sum_{h=1}^{H}  \left( \mathbb{E}_{\pi^{\star,m}, \mathbb{P}^{m}}- \mathbb{E}_{\pi^{\star,m-1}, \mathbb{P}^{m-1}} \right) \left[\mathbb{1}(x_h)\right] \leq & H \left(B_\mathbb{P}+  B_\star \right),
\end{align*}
where $ \mathbb{1}(x_h)$ denotes the indicator function for the state $x_h$.
\end{lemma}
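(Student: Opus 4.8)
The plan is to reproduce the argument of Lemma~\ref{lemma: probability difference in expectation} (its linear-kernel counterpart), since the tabular model is the special case of Assumption~\ref{ass: linear fun approx} with $\psi(x,a,x')=\boldsymbol e_{x,a,x'}$. For each fixed pair $(m,h)$, I would first observe that $\bigl(\mathbb E_{\pi^{\star,m},\mathbb P^m}-\mathbb E_{\pi^{\star,m-1},\mathbb P^{m-1}}\bigr)[\mathbb 1(x_h)]$ is exactly the difference of the two $h$-step state-visitation measures at $x_h$ induced, respectively, by $(\pi^{\star,m},\mathbb P^m)$ and $(\pi^{\star,m-1},\mathbb P^{m-1})$ started from $x_1$. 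I would then telescope this difference layer by layer, replacing one policy factor $\widetilde{\mathcal I}_j^{\star,m}$ or one kernel factor $\mathbb P_j^m$ by its episode-$(m-1)$ analogue at a time, so that the total variation is bounded by a sum over $j\in[h-1]$ of single-layer perturbations.

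The second step is to establish the tabular analogue of the smoothness lemma, Lemma~\ref{lemma: prob difference in pi}. The policy-perturbation bound $\max_{x}\norm{\pi_j^{\star,m}(\cdot\mid x)-\pi_j^{\star,m-1}(\cdot\mid x)}_1$ carries over verbatim, because its derivation only used $\norm{\mathbb P_j^m(x_{j+1}\mid x_j,\cdot)}_\infty\le 1$, which holds for any transition kernel. The only genuine change is in the kernel-perturbation bound: specializing $\psi=\boldsymbol e_{x,a,x'}$ gives $\norm{\boldsymbol e_{x_j,a,x_{j+1}}}_2=1$ instead of the generic $\norm{\psi}_2\le\sqrt{d_1}$, so the one-step change $\max_{a}\bigl|\inner{\boldsymbol e_{x_j,a,x_{j+1}}}{\theta_j^m-\theta_j^{m-1}}\bigr|$ is controlled by $\norm{\theta_j^m-\theta_j^{m-1}}_2$ rather than $\sqrt{d_1}\,\norm{\theta_j^m-\theta_j^{m-1}}_2$. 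This is precisely what turns the $\sqrt{d_1}B_{\mathbb P}$ of the linear bound into $B_{\mathbb P}$ in the tabular case.

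Finally, I would assemble the pieces exactly as in Lemma~\ref{lemma: probability difference in expectation}: summing the single-layer bounds over $j\le h$, then over $h\in[H]$ (which contributes the factor $H$), and then over $m=2,\dots,M$, and identifying $\sum_{m=2}^M\sum_{j}\norm{\theta_j^m-\theta_j^{m-1}}_2=B_{\mathbb P}$ and $\sum_{m=2}^M\sum_{j}\max_x\norm{\pi_j^{\star,m}(\cdot\mid x)-\pi_j^{\star,m-1}(\cdot\mid x)}_1=B_{\star}$, to obtain the claimed bound $H\bigl(B_{\mathbb P}+B_{\star}\bigr)$.

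The step I expect to be most delicate is the kernel-perturbation term. When a single layer's kernel is perturbed, its effect must be propagated through the downstream policy-kernel products without accumulating spurious factors of $|\mathcal S|$: I would arrange the telescoping so that the downstream product of $\widetilde{\mathcal I}_i^{\star,m}\mathbb P_i^m$ (a sub-probability over the remaining path to the fixed $x_h$) is bounded by $1$ and the upstream product collapses to $1$ as a genuine probability measure, leaving only the one-step total-variation change at layer $j$. Verifying that this one-step change is governed by $\norm{\theta_j^m-\theta_j^{m-1}}_2$ under the canonical-basis features---the tabular stand-in for the feature-integrability condition in Assumption~\ref{ass: linear fun approx}---is the crux, and everything else is the routine bookkeeping already carried out in the linear-kernel proof.
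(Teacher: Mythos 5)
Your proposal is correct and follows essentially the same route as the paper: the paper's proof simply instantiates Lemma \ref{lemma: probability difference in expectation} with the tabular features $\psi(x,a,x')=\boldsymbol e_{x,a,x'}$, and the removal of the $\sqrt{d_1}$ factor comes precisely from $\norm{\boldsymbol e_{x,a,x'}}_2=1$ in the one-step kernel-perturbation bound of Lemma \ref{lemma: prob difference in pi}, exactly as you identify. The telescoping, the sub-probability bound on the downstream product, and the final summation over $j$, $h$, and $m$ are all the same bookkeeping the paper relies on.
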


\begin{proof}
Note that in Section \nameref{sec:appe_PE}, we define 
\begin{align*}
   & d_1=|\mathcal{S}|^2|\mathcal{A}|, \ \psi(x,a,x^\prime)=\boldsymbol{e}_{x,a,x'} \in \mathbb{R}^{d_1}, \ \theta_h^m =\mathbb{P}_h^m(\cdot,\cdot,\cdot) \in \mathbb{R}^{d_1}
\end{align*}
for the tabular case. The rest of the proof follows from that of \ref{lemma: probability difference in expectation}.

\end{proof}

\subsection{Concentration of Self-normalized Processes}
\begin{lemma}(\textbf{Theorem 1 in \cite{abbasi2011improved}}) \label{lemma: Concentration of Self-normalized Processes}
Let $\left\{\mathcal{F}_{t}\right\}_{t=0}^{\infty}$ be a filtration and $\left\{\eta_{t}\right\}_{t=1}^{\infty}$ be a $\mathbb{R}$-valued stochastic process such that $\eta_{t}$ is $\mathcal{F}_{t}$-measurable for every $t \geq 0$. Assume that for every $t \geq 0$, conditioning on $\mathcal{F}_{t}, \eta_{t}$ is a zero-mean and $\sigma$-subGaussian random variable with the variance proxy $\sigma^{2}>0$, i.e., $\mathbb{E}\left[e^{\lambda \eta_{t}} \mid \mathcal{F}_{t}\right] \leq e^{\lambda^{2} \sigma^{2} / 2}$ for every $\lambda \in \mathbb{R}$. Let $\left\{X_{t}\right\}_{t=1}^{\infty}$ be an $\mathbb{R}^{d}$-valued stochastic process such that $X_{t}$ is $\mathcal{F}_{t}$-measurable for every $t \geq 0$. Let $Y \in \mathbb{R}^{d \times d}$ be a deterministic and positive-definite matrix. For every $t \geq 0$, we define
$$
\bar{Y}_{t}:=Y+\sum_{\tau=1}^{t} X_{\tau} X_{\tau}^{\top} \text { and } S_{t}=\sum_{\tau=1}^{t} \eta_{\tau} X_{\tau} .
$$
Then, for every fixed $\delta \in(0,1)$, it holds with probability at least $1-\delta$ that
$$
\left\|S_{t}\right\|_{\left(\bar{Y}_{t}\right)^{-1}}^{2} \leq 2 \sigma^{2} \log \left(\frac{\operatorname{det}\left(\bar{Y}_{t}\right)^{1 / 2} \operatorname{det}(Y)^{-1 / 2}}{\delta}\right)
$$
for every $t \geq 0$.
\end{lemma}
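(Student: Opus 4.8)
The plan is to establish this self-normalized tail bound by the \emph{method of mixtures} (pseudo-maximization), which is the standard route and avoids any diverging union bound over $t$. The first step is to build, for each fixed $\lambda \in \mathbb{R}^d$, a scalar exponential supermartingale. Writing $V_t := \sum_{\tau=1}^t X_\tau X_\tau^\top$ so that $\bar{Y}_t = Y + V_t$, I would set
\begin{align*}
M_t^\lambda := \exp\left( \frac{\lambda^\top S_t}{\sigma} - \frac{1}{2}\lambda^\top V_t \lambda \right).
\end{align*}
The one-step ratio is $M_t^\lambda/M_{t-1}^\lambda = \exp\left( \eta_t \lambda^\top X_t/\sigma - \frac{1}{2}(\lambda^\top X_t)^2 \right)$, and since $X_t$ is measurable with respect to the conditioning information while $\eta_t$ is conditionally zero-mean and $\sigma$-sub-Gaussian, applying the sub-Gaussian bound with parameter $s = \lambda^\top X_t/\sigma$ yields $\mathbb{E}[ M_t^\lambda/M_{t-1}^\lambda \mid \mathcal{F}_{t-1}] \leq 1$. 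Hence $M_t^\lambda$ is a nonnegative supermartingale with $M_0^\lambda = 1$ and $\mathbb{E}[M_t^\lambda] \leq 1$.

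The second step is to remove the dependence on $\lambda$ by integrating against the Gaussian prior $\lambda \sim \mathcal{N}(0, Y^{-1})$. Defining $\bar{M}_t := \int_{\mathbb{R}^d} M_t^\lambda \, h(\lambda)\, d\lambda$, where $h$ is the $\mathcal{N}(0, Y^{-1})$ density, Fubini's theorem (justified by nonnegativity) gives $\mathbb{E}[\bar{M}_t] = \int \mathbb{E}[M_t^\lambda]\, h(\lambda)\, d\lambda \leq 1$, and the same argument shows $\bar{M}_t$ is itself a nonnegative supermartingale. The key computation is that this mixture is a Gaussian integral available in closed form: completing the square with $b = S_t/\sigma$ and $A = \bar{Y}_t$ in $\int \exp(b^\top \lambda - \frac{1}{2}\lambda^\top A \lambda)\, d\lambda = (2\pi)^{d/2}\det(A)^{-1/2}\exp(\frac{1}{2}b^\top A^{-1} b)$ produces
\begin{align*}
\bar{M}_t = \left( \frac{\det(Y)}{\det(\bar{Y}_t)} \right)^{1/2} \exp\left( \frac{1}{2\sigma^2} \left\lVert S_t \right\rVert_{(\bar{Y}_t)^{-1}}^2 \right),
\end{align*}
and in particular $\bar{M}_0 = 1$ since $\bar{Y}_0 = Y$ and $S_0 = 0$.

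The final step is to convert the supermartingale property into the uniform-in-$t$ high-probability statement. Applying the maximal inequality for nonnegative supermartingales (Ville's inequality), $\mathbb{P}(\sup_{t \geq 0} \bar{M}_t \geq 1/\delta) \leq \delta\, \mathbb{E}[\bar{M}_0] \leq \delta$. On the complementary event, of probability at least $1-\delta$, one has $\bar{M}_t < 1/\delta$ for \emph{every} $t$; substituting the closed form, taking logarithms, and rearranging gives exactly
\begin{align*}
\left\lVert S_t \right\rVert_{(\bar{Y}_t)^{-1}}^2 \leq 2\sigma^2 \log\left( \frac{\det(\bar{Y}_t)^{1/2}\det(Y)^{-1/2}}{\delta} \right)
\end{align*}
for all $t \geq 0$.

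I expect the main obstacle to be this last step's \emph{uniformity} over all $t$: a fixed-$t$ Chernoff/Markov bound would be immediate, but the ``for every $t \geq 0$'' guarantee is precisely what forces the supermartingale maximal-inequality machinery, and one must justify the Fubini interchange and integrability that make $\bar{M}_t$ a bona fide supermartingale (a stopped-process/optional-stopping argument handles the possibly unbounded horizon). The conceptual subtlety worth flagging is that the naive quantity $\sup_\lambda M_t^\lambda = \exp(\frac{1}{2\sigma^2}\lVert S_t\rVert^2_{V_t^{-1}})$ is \emph{not} a supermartingale, whereas its Gaussian mixture is; this mixing is exactly the device that tames the self-normalization and delivers the $\det(\bar{Y}_t)^{1/2}\det(Y)^{-1/2}$ factor.
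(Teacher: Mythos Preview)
Your proof is correct and is precisely the method-of-mixtures argument from \cite{abbasi2011improved}. However, the paper does not give its own proof of this lemma: it is stated as a citation (Theorem~1 of \cite{abbasi2011improved}) and used as a black box, so there is no paper-proof to compare against. Your write-up is a faithful reconstruction of the original proof in that reference, including the key points (the exponential supermartingale $M_t^\lambda$, the Gaussian mixture producing the determinant ratio, and Ville's inequality for uniformity in $t$). One small remark: the lemma as transcribed in the paper has slightly garbled measurability indices (it says $\eta_t$ is $\mathcal{F}_t$-measurable \emph{and} conditionally sub-Gaussian given $\mathcal{F}_t$, which would make $\eta_t$ deterministic); your proof implicitly uses the intended and correct hypothesis that $X_t$ is predictable (i.e., $\mathcal{F}_{t-1}$-measurable) and $\eta_t$ is conditionally sub-Gaussian given $\mathcal{F}_{t-1}$, which is what the original theorem requires and what makes the one-step supermartingale computation go through.
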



\subsection{Constraints violation under uniform Slater condition}


\begin{lemma}(\textbf{Boundedness of Sublevel Sets of the Dual Function  \citep[Section 8.5]{beck2017first}}). Let the Slater condition hold. Fix $C \in \mathbb{R}$. For every $\mu^m \in\{\mu \geq 0 \mid \mathcal{D}^m(\mu) \leq C\}$, it holds that
$$
\mu^m \leq \frac{1}{\gamma^m}\left(C-V_{r, 1}^{\bar{\pi},m}\left(x_{1}\right)\right)
$$
\end{lemma}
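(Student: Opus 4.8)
The plan is to exploit the fact that the dual function $\mathcal{D}^m$ is, by its very definition, a pointwise maximum of the Lagrangian over all policies, and then to test this maximum against the single Slater point $\bar\pi$ supplied by the uniform strict feasibility assumption. The first step is to record the elementary lower bound obtained by substituting $\bar\pi$ into the maximization defining $\mathcal{D}^m$:
$$
\mathcal{D}^m(\mu) \;=\; \max_{\pi \in \Delta(\mathcal{A}\mid\mathcal{S},H)} \mathcal{L}^m(\pi,\mu) \;\geq\; \mathcal{L}^m(\bar\pi,\mu) \;=\; V_{r,1}^{\bar\pi,m}(x_1) + \mu\bigl(V_{g,1}^{\bar\pi,m}(x_1) - b_m\bigr),
$$
which is valid for every $\mu \geq 0$ because $\bar\pi$ is simply one admissible competitor in that maximization.

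Next I would feed in the two remaining ingredients. On the left I use the defining hypothesis of the sublevel set, $\mathcal{D}^m(\mu) \leq C$; on the right I use the Slater margin, namely that $\bar\pi$ satisfies $V_{g,1}^{\bar\pi,m}(x_1) - b_m \geq \gamma^m > 0$. Since $\mu \geq 0$, multiplying this nonnegative margin by $\mu$ preserves the inequality, so the previous display yields
$$
C \;\geq\; V_{r,1}^{\bar\pi,m}(x_1) + \mu\,\gamma^m .
$$
Because $\gamma^m > 0$, this rearranges directly into $\mu \leq \bigl(C - V_{r,1}^{\bar\pi,m}(x_1)\bigr)/\gamma^m$, which is exactly the claimed bound.

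The argument uses nothing beyond the interpretation of $\mathcal{D}^m$ as a supremum and the sign of $\mu$, so there is no genuine obstacle here; the only point deserving the slightest care is that the constraint slack at the Slater point is bounded strictly below by $\gamma^m$ rather than merely being nonnegative, since it is this strict positivity that makes the final division legitimate and the resulting bound finite. In effect this lemma just records, in the present CMDP notation, the standard boundedness-of-sublevel-sets property of Lagrangian dual functions under a Slater condition.
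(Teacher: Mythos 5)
Your proof is correct: lower-bounding $\mathcal{D}^m(\mu)$ by evaluating the Lagrangian at the Slater point $\bar{\pi}$, using the margin $V_{g,1}^{\bar{\pi},m}(x_1)-b_m\geq \gamma^m$ together with $\mu\geq 0$, and then rearranging is exactly the standard argument. The paper itself states this lemma without proof, citing \citet[Section 8.5]{beck2017first}, and your derivation is precisely the one that citation refers to, so there is nothing to add.
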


\begin{corollary}[Boundedness of $\mu^{\star,m}$]
If we take $C^m=V_{r, 1}^{\pi^{\star,m},m}\left(x_{1}\right)=\mathcal{D}^m\left(\mu^{\star,m}\right)$, then $\Lambda^{\star,m}=\{\mu \geq 0 \mid \mathcal{D}^m(\mu) \leq C^m\}$. Thus, for every $\mu^{\star,m} \in \Lambda^{\star,m}$, we have
$$
\mu^{\star,m} \leq \frac{1}{\gamma}\left(V_{r, 1}^{\pi^{\star,m},m}\left(x_{1}\right)-V_{r, 1}^{\bar{\pi},m}\left(x_{1}\right)\right) .
$$
\end{corollary}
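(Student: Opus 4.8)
The plan is to obtain the corollary as a direct specialization of the preceding sublevel-set boundedness lemma with the particular choice $C = C^m$. The first thing I would check is that $\mu^{\star,m}$ genuinely lies in the set $\Lambda^{\star,m} = \{\mu \geq 0 \mid \mathcal{D}^m(\mu) \leq C^m\}$. By construction $C^m = \mathcal{D}^m(\mu^{\star,m})$, and since $\mu^{\star,m} = \operatorname{argmin}_{\mu \geq 0} \mathcal{D}^m(\mu)$ is the minimizer of the dual function, certainly $\mathcal{D}^m(\mu^{\star,m}) \leq C^m$, so $\mu^{\star,m} \in \Lambda^{\star,m}$. In fact, because $\mathcal{D}^m(\mu) \geq \mathcal{D}^m(\mu^{\star,m}) = C^m$ for every $\mu \geq 0$, the sublevel set $\Lambda^{\star,m}$ coincides exactly with the set of optimal dual variables; hence the bound I derive will hold for any optimal dual solution, as the statement asserts.

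Next I would use strong duality to identify $C^m$ with the primal optimum. Under Assumption \ref{ass: Feasibility}, Lemma \ref{lemma: lemma 1 in dongsheng} guarantees $V_{r, 1}^{\pi^{\star,m},m}(x_1) = \mathcal{D}^m(\mu^{\star,m}) = C^m$, which is the step that lets the right-hand side of the lemma read $V_{r, 1}^{\pi^{\star,m},m}(x_1) - V_{r, 1}^{\bar{\pi},m}(x_1)$ rather than an abstract $C$. Applying the sublevel-set lemma with $C = C^m$ to any $\mu^{\star,m} \in \Lambda^{\star,m}$ then yields $\mu^{\star,m} \leq \frac{1}{\gamma^m}\left(C^m - V_{r, 1}^{\bar{\pi},m}(x_1)\right)$, where $\gamma^m = V_{g, 1}^{\bar{\pi}^m,m}(x_1) - b_m$ is the Slater slack attained in episode $m$.

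The last step replaces the per-episode slack $\gamma^m$ by the uniform threshold $\gamma$. Assumption \ref{ass: Feasibility} furnishes a single $\gamma > 0$ with $V_{g, 1}^{\bar{\pi}^m,m}(x_1) \geq b_m + \gamma$ for all $m$, so $\gamma^m \geq \gamma$ and therefore $\frac{1}{\gamma^m} \leq \frac{1}{\gamma}$. Since evaluating the dual at the Slater point gives $C^m = \mathcal{D}^m(\mu^{\star,m}) \geq \mathcal{L}^m(\bar{\pi}^m, \mu^{\star,m}) \geq V_{r, 1}^{\bar{\pi},m}(x_1)$ (using $\mu^{\star,m} \geq 0$ and positivity of the slack), the factor $C^m - V_{r, 1}^{\bar{\pi},m}(x_1)$ is nonnegative, so this substitution only weakens the inequality and produces the claimed $\mu^{\star,m} \leq \frac{1}{\gamma}\left(V_{r, 1}^{\pi^{\star,m},m}(x_1) - V_{r, 1}^{\bar{\pi},m}(x_1)\right)$. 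There is no substantive obstacle, as the corollary is a bookkeeping consequence of the lemma; the only point requiring care is the invocation of strong duality, which is what legitimizes setting $C^m = V_{r, 1}^{\pi^{\star,m},m}(x_1)$ and guarantees that $\Lambda^{\star,m}$ is nonempty and contains $\mu^{\star,m}$.
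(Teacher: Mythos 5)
Your proposal is correct and follows exactly the route the paper intends: the corollary is a direct specialization of the preceding sublevel-set lemma with $C = C^m = \mathcal{D}^m(\mu^{\star,m})$, combined with strong duality (Lemma \ref{lemma: lemma 1 in dongsheng}) to rewrite $C^m$ as $V_{r,1}^{\pi^{\star,m},m}(x_1)$ and the uniform bound $\gamma^m \geq \gamma$ from Assumption \ref{ass: Feasibility}. The additional checks you supply (that $\mu^{\star,m} \in \Lambda^{\star,m}$ and that $C^m - V_{r,1}^{\bar{\pi},m}(x_1) \geq 0$) are correct and only make explicit what the paper leaves implicit.
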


By slightly extending  \cite[Proposition 3.60]{beck2017first} and \cite[Lemma 10]{ding2021provably}, the constraint violation $b-V_{g, 1}^{\pi}\left(x_{1}\right)$ can be bounded under the Slater condition as follows.

\begin{lemma}[Constraint Violation under Slater Condition] \label{lemma: Constraint Violation under Slater Condition}
Let the Slater condition hold and $\mu^{\star,m} \in \Lambda^{\star,m}$. Let $C^{\star,m} \geq 2 \mu^{\star,m}$. Assume that $\pi^m \in \Delta(\mathcal{A} \mid \mathcal{S}, H)$ satisfies
\begin{align} \label{eq: Constraint Violation under Slater Condition}
V_{r, 1}^{\pi^{\star,m},m}\left(x_{1}\right)-V_{r, 1}^{\pi,m}\left(x_{1}\right)+C^{\star,m}\left(b_m-V_{g, 1}^{\pi^m,m}\left(x_{1}\right)\right) \leq \delta.
\end{align}
Then,
$$
b_m-V_{g, 1}^{\pi^m,m}\left(x_{1}\right) \leq \frac{2 \delta}{C^{\star,m}}.
$$
\end{lemma}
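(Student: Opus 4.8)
The plan is to leverage strong duality together with the boundedness condition $C^{\star,m} \ge 2\mu^{\star,m}$, reducing the whole statement to a one-line convex-duality manipulation. First I would recall that under the Slater condition the strong duality established earlier (cf. Lemma \ref{lemma: lemma 1 in dongsheng}) yields the identity $V_{r,1}^{\pi^{\star,m},m}(x_1) = \mathcal{D}^m(\mu^{\star,m})$, where $\mathcal{D}^m(\mu)=\max_{\pi}\mathcal{L}^m(\pi,\mu)$ is the dual function and $\mu^{\star,m}\ge 0$ is an optimal multiplier lying in $\Lambda^{\star,m}$.

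Second, since $\pi^m$ is merely one feasible competitor in the inner maximization defining $\mathcal{D}^m$, I would invoke the trivial lower bound
$$\mathcal{D}^m(\mu^{\star,m}) = \max_{\pi}\mathcal{L}^m(\pi,\mu^{\star,m}) \ge \mathcal{L}^m(\pi^m,\mu^{\star,m}) = V_{r,1}^{\pi^m,m}(x_1) + \mu^{\star,m}\bigl(V_{g,1}^{\pi^m,m}(x_1)-b_m\bigr).$$
Combining this with the strong-duality identity and rearranging gives the key inequality
$$V_{r,1}^{\pi^{\star,m},m}(x_1) - V_{r,1}^{\pi^m,m}(x_1) \ge -\mu^{\star,m}\bigl(b_m - V_{g,1}^{\pi^m,m}(x_1)\bigr),$$
which ties the reward gap to the signed constraint violation through the optimal multiplier.

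Third, I would feed this lower bound into the hypothesis \eqref{eq: Constraint Violation under Slater Condition}. Writing $v := b_m - V_{g,1}^{\pi^m,m}(x_1)$ for the signed violation, the hypothesis reads $V_{r,1}^{\pi^{\star,m},m}(x_1) - V_{r,1}^{\pi^m,m}(x_1) + C^{\star,m} v \le \delta$, and substituting the first difference by its lower bound $-\mu^{\star,m} v$ yields $(C^{\star,m}-\mu^{\star,m})\,v \le \delta$. Finally, using $C^{\star,m} \ge 2\mu^{\star,m}$ so that $C^{\star,m}-\mu^{\star,m} \ge C^{\star,m}/2 \ge 0$, in the regime $v\ge 0$ the nonnegativity of $(C^{\star,m}/2 - \mu^{\star,m})\,v$ gives $\tfrac{C^{\star,m}}{2}\,v \le (C^{\star,m}-\mu^{\star,m})\,v \le \delta$, hence $v \le 2\delta/C^{\star,m}$, while for $v<0$ the claimed bound is vacuous.

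There is no deep obstacle here; the argument is a routine Lagrangian estimate, and the only point requiring genuine care is the bookkeeping of signs. The Lagrangian $\mathcal{L}^m$ is written with the term $+\mu(V_{g,1}^{\pi,m}-b_m)$, whereas the violation of interest is $b_m - V_{g,1}^{\pi^m,m}$, so I must track that the multiplier enters with the correct sign and that $\mu^{\star,m}\ge 0$ — this is precisely what validates the final step $(C^{\star,m}/2 - \mu^{\star,m})\,v \ge 0$. A secondary subtlety is that the displayed conclusion is stated for the signed quantity; I would remark that the proof delivers the stated bound whenever the violation is nonnegative and holds trivially otherwise, which is exactly the form needed when the lemma is later applied to the cumulative violation $\bigl[\sum_{m}(b_m - V_{g,1}^{\pi^m,m})\bigr]_+$.
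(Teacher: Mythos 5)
Your proof is correct, but it reaches the key inequality by a more direct route than the paper. The paper follows \cite[Proposition 3.60]{beck2017first}: it introduces the perturbation function $v^m(\tau)=\max_{\pi}\{V_{r,1}^{\pi,m}(x_1):V_{g,1}^{\pi,m}(x_1)\ge b_m+\tau\}$, uses its concavity and the fact that $-\mu^{\star,m}$ is a supergradient at $\tau=0$ to get $v^m(\tau)-v^m(0)\le-\tau\mu^{\star,m}$, and then evaluates at $\bar\tau=-(b_m-V_{g,1}^{\pi^m,m}(x_1))$ to obtain $V_{r,1}^{\pi^m,m}(x_1)-V_{r,1}^{\pi^{\star,m},m}(x_1)\le \mu^{\star,m}\,(b_m-V_{g,1}^{\pi^m,m}(x_1))$. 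You obtain exactly this inequality in one line from strong duality plus the trivial bound $\mathcal{D}^m(\mu^{\star,m})\ge\mathcal{L}^m(\pi^m,\mu^{\star,m})$, bypassing the perturbation-function machinery entirely. From that point both arguments coincide: combining with the hypothesis gives $(C^{\star,m}-\mu^{\star,m})\,v\le\delta$ with $v=b_m-V_{g,1}^{\pi^m,m}(x_1)$, and $C^{\star,m}\ge 2\mu^{\star,m}\ge 0$ yields $v\le 2\delta/C^{\star,m}$. Your version is shorter, and your explicit case split on the sign of $v$ is actually slightly more careful than the paper's final division, which tacitly assumes $\delta\ge 0$ and $v>0$; as you note, the sign issue is immaterial in the downstream application to $\bigl[\sum_m(b_m-V_{g,1}^{\pi^m,m})\bigr]_+$. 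What the paper's longer detour buys is a statement of the intermediate sensitivity bound $v^m(\tau)-v^m(0)\le-\tau\mu^{\star,m}$ in a form that parallels the cited reference, but it is not needed for this lemma.
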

\begin{proof}
Let
$$
\begin{aligned}
&v^m(\tau)\\
&=\operatorname{maximize}_{\pi \in \Delta(\mathcal{A} \mid \mathcal{S}, H)}\left\{V_{r, 1}^{\pi^m,m}\left(x_{1}\right) \mid V_{g, 1}^{\pi^m,m}\left(x_{1}\right) \geq b_m+\tau\right\}
\end{aligned}
$$
By definition, $v^m(0)=V_{r, 1}^{\pi^{\star,m},m}\left(x_{1}\right)$. It has been shown as a special case of \cite[Proposition 1]{paternain2019safe} that $v(\tau)$ is concave. By the Lagrangian and the strong duality,
\begin{align*}
\mathcal{L}^m\left(\pi, \mu^{\star,m}\right) \leq& \operatorname{maximize}_{\pi \in \Delta(\mathcal{A} \mid \mathcal{S}, H)} \mathcal{L}^m\left(\pi, \mu^{\star,m}\right)\\
=&\mathcal{D}^m\left(\mu^{\star,m}\right)\\
=&V_{r, 1}^{\pi^{\star,m}, m}\left(x_{1}\right)\\
=&v^m(0), \quad \text { for all } \pi \in \Delta(\mathcal{A} \mid \mathcal{S}, H).
\end{align*}

For every $\pi \in\left\{\pi \in \Delta(\mathcal{A} \mid \mathcal{S}, H) \mid V_{g, 1}^{\pi,m}\left(x_{1}\right) \geq b_m+\tau\right\}$
$$
\begin{aligned}
v^m(0)-\tau \mu^{\star,m} & \geq \mathcal{L}^m\left(\pi, \mu^{\star,m}\right)-\tau \mu^{\star,m} \\
&=V_{r, 1}^{\pi,m}\left(x_{1}\right)+\mu^{\star,m}\left(V_{g, 1}^{\pi, m}\left(x_{1}\right)-b_m\right)-\tau \mu^{\star,m} \\
&=V_{r, 1}^{\pi, m}\left(x_{1}\right)+\mu^{\star, m}\left(V_{g, 1}^{\pi, m}\left(x_{1}\right)-b_m-\tau\right) \\
& \geq V_{r, 1}^{\pi, m}\left(x_{1}\right) .
\end{aligned}
$$
If we maximize the right-hand side of above inequality over $\pi \in\left\{\pi \in \Delta(\mathcal{A} \mid \mathcal{S}, H) \mid V_{g, 1}^{\pi}\left(x_{1}\right) \geq b+\tau\right\}$, then
$$
v^m(\tau)- v^m(0) \leq  -\tau \mu^{\star, m}.
$$
On the other hand, if we take $\tau=\bar{\tau}:=-\left(b_m-V_{g, 1}^{\bar{\pi},m}\left(x_{1}\right)\right)$, then
$$
V_{r, 1}^{\bar{\pi},m}\left(x_{1}\right) \leq  v^m(\bar{\tau}).
$$
Combing the above two yields
$$
V_{r, 1}^{\bar{\pi}, m}\left(x_{1}\right)-V_{r, 1}^{\pi^{\star,m}, m}\left(x_{1}\right) \leq v^m(\bar{\tau})- v^m(0)\leq -\bar{\tau} \mu^{\star, m}.
$$

Then,
$$
\begin{aligned}
&\left(C^{\star, m}-\mu^{\star, m}\right)(-\bar{\tau})\\
&=\bar{\tau} \mu^{\star, m}-C^{\star, m}\bar{\tau} \\
& \leq V_{r, 1}^{\pi^{\star,m}, m}\left(x_{1}\right)-V_{r, 1}^{\bar{\pi}, m}\left(x_{1}\right)+C^{\star, m}\left(b_m-V_{g, 1}^{\bar{\pi}, m}\left(x_{1}\right)\right)\\
& \leq \delta.
\end{aligned}
$$
Thus,
$$
b_m-V_{g, 1}^{\bar{\pi},m}\left(x_{1}\right) \leq \frac{\delta}{C^{\star,m}-\mu^{\star, m}} \leq \frac{2 \delta}{C^{\star, m}}.
$$
\end{proof}

\begin{lemma}(\textbf{Constraint Violation under Uniform Slater Condition}) \label{corollary: Constraint Violation under Uniform Slater Condition}
Let the uniform Slater condition hold and $\mu^{\star,m} \in \Lambda^{\star,m}$. Let $\bar{C}^{\star} \geq 2 \max_{m \in [M]}\mu^{\star,m}$. Assume that $\{\pi^m\}_{m=1}^M$ satisfies
$$
\sum_{m=1}^M V_{r, 1}^{\pi^{\star,m},m}\left(x_{1}\right)-V_{r, 1}^{\pi^m,m}\left(x_{1}\right)+\bar{C}^{\star} \sum_{m=1}^M \left(b_m-V_{g, 1}^{\pi^m,m}\left(x_{1}\right)\right) \leq \delta.
$$
Then,
$$
\sum_{m=1}^M \left(b_m-V_{g, 1}^{\pi^m,m}\left(x_{1}\right) \right) \leq \frac{2 \delta}{\bar{C}^{\star}}.
$$
\end{lemma}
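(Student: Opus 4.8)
The plan is to obtain the statement as the episode-summed analogue of Lemma~\ref{lemma: Constraint Violation under Slater Condition}, whose proof already carries out the single-episode sensitivity argument. Write $c_m := b_m - V_{g,1}^{\pi^m,m}(x_1)$ for the per-episode violation, so that the target becomes $\sum_{m=1}^M c_m \le 2\delta/\bar C^\star$ while the hypothesis reads $\sum_{m=1}^M \big(V_{r,1}^{\pi^{\star,m},m}(x_1)-V_{r,1}^{\pi^m,m}(x_1)\big) + \bar C^\star \sum_{m=1}^M c_m \le \delta$. The first step is, for each fixed $m$, to reintroduce the perturbation (value) function $v^m(\tau):=\max_{\pi}\{V_{r,1}^{\pi,m}(x_1): V_{g,1}^{\pi,m}(x_1)\ge b_m+\tau\}$, which is concave by \cite[Proposition 1]{paternain2019safe}, satisfies $v^m(0)=V_{r,1}^{\pi^{\star,m},m}(x_1)$, and — through strong duality guaranteed by Assumption~\ref{ass: Feasibility} and Lemma~\ref{lemma: lemma 1 in dongsheng} — admits $-\mu^{\star,m}$ as a supergradient at $\tau=0$, i.e. $v^m(\tau)\le v^m(0)-\tau\mu^{\star,m}$ for all $\tau$.

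Next I would evaluate this at $\tau=\bar\tau_m:=V_{g,1}^{\pi^m,m}(x_1)-b_m=-c_m$. Because $\pi^m$ meets the $\bar\tau_m$-perturbed constraint $V_{g,1}^{\pi,m}(x_1)\ge b_m+\bar\tau_m=V_{g,1}^{\pi^m,m}(x_1)$ with equality, it is feasible for that perturbed episode, so $V_{r,1}^{\pi^m,m}(x_1)\le v^m(\bar\tau_m)\le v^m(0)+\mu^{\star,m}c_m$, which rearranges to the per-episode inequality $V_{r,1}^{\pi^{\star,m},m}(x_1)-V_{r,1}^{\pi^m,m}(x_1)\ge -\mu^{\star,m}c_m$. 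This bound holds regardless of the sign of $c_m$, so over-satisfied episodes are automatically included. Summing over $m$ and inserting the hypothesis (exactly the aggregate of the chain producing $(C^{\star,m}-\mu^{\star,m})(-\bar\tau)\le\delta$ in Lemma~\ref{lemma: Constraint Violation under Slater Condition}) yields $\sum_{m=1}^M(\bar C^\star-\mu^{\star,m})\,c_m \le \delta$.

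The concluding step is to pass from this $\mu^{\star,m}$-weighted sum to the unweighted $\sum_m c_m$, using $\bar C^\star\ge 2\max_{m}\mu^{\star,m}$ so that every coefficient $\bar C^\star-\mu^{\star,m}$ lies in $[\bar C^\star/2,\ \bar C^\star]$; for nonnegative terms this gives $\tfrac{\bar C^\star}{2}c_m\le(\bar C^\star-\mu^{\star,m})c_m$ termwise, from which $\tfrac{\bar C^\star}{2}\sum_m c_m\le\delta$ and hence the claim. I expect the delicate point, and the main obstacle, to be precisely this last passage, since the dual optima $\mu^{\star,m}$ are heterogeneous across episodes and some $c_m$ may be negative, so the replacement $\bar C^\star-\mu^{\star,m}\mapsto\bar C^\star/2$ is only favorable on the episodes with $c_m\ge 0$. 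I would handle this by splitting the sum at the sign of $c_m$: on $\{c_m\ge0\}$ the termwise bound is used directly, while the over-satisfied episodes are absorbed by noting that it is ultimately only the positive part $\operatorname{CV}(M)=[\sum_m c_m]_+$ that the theorems require, allowing a reduction to the nonnegative case after discarding the (favorable) negative contributions.

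This route mirrors, at the level of sums, the single-episode division by $C^{\star,m}-\mu^{\star,m}>0$ performed in Lemma~\ref{lemma: Constraint Violation under Slater Condition}, and it reuses only ingredients already available in the excerpt: concavity of the value-perturbation function from \cite{paternain2019safe}, the boundedness $0\le\mu^{\star,m}\le H/\gamma$ from Lemma~\ref{lemma: lemma 1 in dongsheng}, and the generalization of \cite[Proposition 3.60]{beck2017first} and \cite[Lemma 10]{ding2021provably} to the non-stationary, per-episode setting. No new concentration or optimization machinery is needed; the entire content is the aggregation of the strong-duality sensitivity inequality and the uniform control of the dual variables.
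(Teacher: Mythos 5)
Your route is essentially the paper's own: the paper proves the single-episode Lemma~\ref{lemma: Constraint Violation under Slater Condition}, applies it episode-by-episode with $\delta=\delta^m:=V_{r,1}^{\pi^{\star,m},m}(x_1)-V_{r,1}^{\pi^m,m}(x_1)+\bar{C}^{\star}c_m$ to get $c_m\le 2\delta^m/\bar{C}^{\star}$, and sums the conclusions; you instead re-derive the underlying sensitivity inequality $V_{r,1}^{\pi^{\star,m},m}(x_1)-V_{r,1}^{\pi^m,m}(x_1)\ge-\mu^{\star,m}c_m$ from concavity of $v^m$ and strong duality, sum first to obtain $\sum_{m}(\bar{C}^{\star}-\mu^{\star,m})c_m\le\delta$, and divide once at the aggregate level. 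Everything up to that point is correct and is exactly the aggregate of the paper's per-episode chain.

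The gap is in your final passage, and the repair you sketch does not close it. To go from $\sum_{m}(\bar{C}^{\star}-\mu^{\star,m})c_m\le\delta$ to $\tfrac{\bar{C}^{\star}}{2}\sum_m c_m\le\delta$ you need $\sum_m(\tfrac{\bar{C}^{\star}}{2}-\mu^{\star,m})c_m\ge 0$, which holds termwise only where $c_m\ge0$. Splitting by sign and ``discarding the favorable negative contributions'' removes them from the left-hand side but also from the available bound: with $S_\pm$ the two index sets you only get $\sum_{m\in S_+}(\bar{C}^{\star}-\mu^{\star,m})c_m\le\delta-\sum_{m\in S_-}(\bar{C}^{\star}-\mu^{\star,m})c_m$, and the subtracted term is nonpositive, so the right-hand side is $\ge\delta$ rather than $\le\delta$. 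A concrete obstruction: $M=2$, $\bar{C}^{\star}=2$, $\mu^{\star,1}=0$, $\mu^{\star,2}=1$, $c_1=-1$, $c_2=2$, with the sensitivity inequalities tight, satisfies the hypothesis with $\delta=0$ yet $\sum_m c_m=1$ and even $[\sum_m c_m]_+=1>0=2\delta/\bar{C}^{\star}$, so retreating to the positive part does not rescue the step either. To be fair, the paper's proof carries the same latent issue: the division $-\bar\tau\le\delta/(C^{\star,m}-\mu^{\star,m})\le 2\delta/C^{\star,m}$ at the end of Lemma~\ref{lemma: Constraint Violation under Slater Condition} is only valid for $\delta\ge0$, and the corollary invokes it with $\delta=\delta^m$, which can be negative on over-satisfied episodes. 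You correctly isolated the delicate point, but the sign-splitting is an assertion rather than a proof of it; closing it genuinely requires an extra ingredient, e.g.\ nonnegativity of each $\delta^m$ or a dual variable that is uniform across episodes.
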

\begin{proof}
Denote   $\delta^m \coloneqq V_{r, 1}^{\pi^{\star,m}, m}\left(x_{1}\right)-V_{r, 1}^{{\pi^m}, m}\left(x_{1}\right)+\bar{C}^{\star}\left(b_m-V_{g, 1}^{{\pi^m}, m}\left(x_{1}\right)\right)$ for all $m\in[M]$.
From Lemma \ref{lemma: Constraint Violation under Slater Condition}, we know that 
\[b_m-V_{g, 1}^{\pi^m,m}\left(x_{1}\right)  \leq \frac{2 \delta^m}{\bar{C}^{\star}}.\]
By taking the summation over $m\in[M]$ on both sides of the above inequality, we obtain 
\begin{align*}
 \sum_{m=1}^M \left(b_m-V_{g, 1}^{\pi^m,m}\left(x_{1}\right) \right) \leq \frac{2 \sum_{m=1}^M \delta^m}{\bar{C}^{\star}} \leq \frac{2 \delta}{\bar{C}^{\star}}.
\end{align*}
\end{proof}


\end{document}